\newcommand{\lyxmathsym}[1]{\ifmmode\begingroup\def\b@ld{bold}
  \text{\ifx\math@version\b@ld\bfseries\fi#1}\endgroup\else#1\fi}
\providecommand{\tabularnewline}{\\}
\newcommand{\lyxdot}{.}
\providecommand{\algorithmname}{Algorithm}
  \theoremstyle{plain}
  \newtheorem*{thm*}{\protect\theoremname}
  \theoremstyle{plain}
  \newtheorem*{lem*}{\protect\lemmaname}
  \theoremstyle{plain}
  \newtheorem*{cor*}{\protect\corollaryname}
  \providecommand{\corollaryname}{Corollary}
  \providecommand{\lemmaname}{Lemma}
  \providecommand{\theoremname}{Theorem}
\begin{document}

\title{Learning Graphical Model Parameters with Approximate Marginal Inference }

\author{Justin Domke, NICTA \& Australia National University}
\maketitle
\begin{abstract}
Likelihood based-learning of graphical models faces challenges of
computational-complexity and robustness to model mis-specification.
This paper studies methods that fit parameters directly to maximize
a measure of the accuracy of predicted marginals, taking into account
both model and inference approximations at training time. Experiments
on imaging problems suggest marginalization-based learning performs
better than likelihood-based approximations on difficult problems
where the model being fit is approximate in nature.
\end{abstract}
\begin{keywords}
Graphical Models, Conditional Random Fields, Machine Learning, Inference, Segmentation.
\end{keywords}

\IEEEpeerreviewmaketitle

\section{Introduction}

\IEEEPARstart{G}{raphical} models are a standard tool in image
processing, computer vision, and many other fields. Exact
inference and inference are often intractable, due to the high treewidth of the
graph.

Much previous work involves approximations of the likelihood. (Section
\ref{sec:Loss-Functions}). In this paper, we suggest that parameter
learning can instead be done using ``marginalization-based'' loss
functions. These directly quantify the quality of the \emph{predictions}
of a given marginal inference algorithm. This has two major advantages.
First, approximation errors in the inference algorithm are taken into
account while learning. Second, this is robust to model mis-specification.

The contributions of this paper are, first, the general framework
of marginalization-based fitting as implicit differentiation. Second,
we show that the parameter gradient can be computed by ``perturbation''--
that is, by re-running the approximate algorithm twice with the parameters
perturbed slightly based on the current loss. Third, we introduce
the strategy of ``truncated fitting''. Inference algorithms are
based on optimization, where one iterates updates until some convergence
threshold is reached. In truncated fitting, algorithms are derived
to fit the marginals produced after a \emph{fixed number} of updates,
with no assumption of convergence. We show that this leads to significant
speedups. We also derive a variant of this that can apply to likelihood
based learning. Finally, experimental results confirm that marginalization
based learning gives better results on difficult problems where inference
approximations and model mis-specification are most significant.

\section{Setup}

\subsection{Markov Random Fields}

Markov random fields are probability distributions that may be written
as
\begin{equation}
p({\bf x})=\frac{1}{Z}\prod_{c}\psi({\bf x}_{c})\prod_{i}\psi(x_{i}).\label{eq:MRF_def}
\end{equation}
This is defined with reference to a graph, with one node for each
random variable. The first product in Eq. \ref{eq:MRF_def} is over
the set of \emph{cliques} $c$ in the graph, while the second is over
all individual variables. For example, the graph 

\begin{center}
\begin{tikzpicture}[scale=1.2]
\tikzstyle{every node}=[draw,shape=circle];
\node (x1) at (0,  0)  {$x_1$};
\node (x2) at (1,  0)  {$x_2$};
\node (x3) at (2, .4)  {$x_3$};
\node (x4) at (3, .4)  {$x_4$};
\node (x5) at (2,-.4)  {$x_5$};
\node (x6) at (3,-.4)  {$x_6$};
\draw (x1) -- (x2)
      (x2) -- (x3)
      (x2) -- (x5)
      (x3) -- (x5)
      (x3) -- (x4)
(x5) -- (x6);
\end{tikzpicture}
\par\end{center}

\noindent corresponds to the distribution\vspace{-10pt}

\begin{align*}
p({\bf x})= & \frac{1}{Z}\psi(x_{1},x_{2})\psi(x_{2},x_{3},x_{5})\psi(x_{3},x_{4})\psi(x_{5},x_{6})\\
 & \times\psi(x_{1})\psi(x_{2})\psi(x_{3})\psi(x_{4})\psi(x_{5})\psi(x_{6}).
\end{align*}

Each function $\psi({\bf x}_{c})$ or $\psi(x_{i})$ is positive,
but otherwise arbitrary. The factor $Z$ ensures normalization.

The motivation for these types of models is the Hammersley\textendash{}Clifford
theorem \cite{SpatialInteractionAndTheStatistical}, which gives specific
conditions under which a distribution can be written as in Eq. \ref{eq:MRF_def}.
Those conditions are that, first, each random variable is conditionally
independent of all others, given its immediate neighbors and, secondly,
that each configuration ${\bf x}$ has nonzero probability. Often,
domain knowledge about conditional independence can be used to build
a reasonable graph, and the factorized representation in an MRF reduces
the curse of dimensionality encountered in modeling a high-dimensional
distribution.

\subsection{Conditional Random Fields}

One is often interested in modeling the conditional probability of
${\bf x}$, given observations ${\bf y}$. For such problems, it is
natural to define a Conditional Random Field \cite{ConditionalRandomFields}
\[
p({\bf x}|{\bf y})=\frac{1}{Z({\bf y})}\prod_{c}\psi({\bf x}_{c},{\bf y})\prod_{i}\psi(x_{i},{\bf y}).
\]

Here, $\psi({\bf x}_{c},{\bf y})$ indicates that the value for a
particular configuration ${\bf x}_{c}$ depends on the input ${\bf y}$.
In practice, the form of this dependence is application dependent.

\subsection{Inference Problems\label{sub:Inference-Problems}}

Suppose we have some distribution $p({\bf x}|{\bf y})$, we are given
some input ${\bf y}$, and we need to guess a single output vector
${\bf x}^{*}$. What is the best guess?

The answer clearly depends on the meaning of ``best''. One framework
for answering this question is the idea of a Bayes estimator \cite{ModelDistortionsInBayesian}.
One must specify some utility function $U({\bf x},{\bf x}')$, quantifying
how ``happy'' one is to have guessed ${\bf x}$ if the true output
is ${\bf x}'$. One then chooses ${\bf x}^{*}$ to maximize the expected
utility
\[
{\bf x}^{*}=\arg\max_{{\bf x}}\sum_{{\bf x}'}p({\bf x}'|{\bf y})U({\bf x},{\bf x}').
\]

One natural utility function is an indicator function, giving one
for the exact value ${\bf x}'$, and zero otherwise. It is easy to
show that for this utility, the optimal estimate is the popular Maximum
a Posteriori (MAP) estimate. 
\begin{thm*}
If $U({\bf x},{\bf x}')=I[{\bf x}={\bf x}'],$ then
\[
{\bf x}^{*}=\arg\max_{{\bf x}}p({\bf x}|{\bf y}).
\]

\end{thm*}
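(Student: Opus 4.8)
The plan is to substitute the specific utility $U({\bf x},{\bf x}')=I[{\bf x}={\bf x}']$ directly into the expected-utility objective from Section~\ref{sub:Inference-Problems} and show that it reduces to the posterior itself. First I would write out the expected utility for an arbitrary candidate guess ${\bf x}$,
\[
\sum_{{\bf x}'}p({\bf x}'|{\bf y})\,U({\bf x},{\bf x}')=\sum_{{\bf x}'}p({\bf x}'|{\bf y})\,I[{\bf x}={\bf x}'].
\]
Because the indicator $I[{\bf x}={\bf x}']$ vanishes for every term of the sum except the single term ${\bf x}'={\bf x}$, where it equals one, the whole sum sifts down to $p({\bf x}|{\bf y})$.

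Having established this identity, the second and final step is to observe that maximizing the expected utility over the guess ${\bf x}$ is therefore the very same optimization as maximizing $p({\bf x}|{\bf y})$ over ${\bf x}$; that is,
\[
\arg\max_{{\bf x}}\sum_{{\bf x}'}p({\bf x}'|{\bf y})\,U({\bf x},{\bf x}')=\arg\max_{{\bf x}}p({\bf x}|{\bf y}),
\]
which is exactly the MAP estimate. Since the two objectives coincide pointwise for every ${\bf x}$, their maximizers — and indeed their entire argmax sets, should ties occur — agree, so no separate tie-breaking argument is required.

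There is essentially no hard step here: the conclusion is immediate once the indicator is used to collapse the summation. The only point deserving a word of care is the passage from ``maximize expected utility'' to ``maximize the posterior,'' which is legitimate precisely because the two are literally the same function of ${\bf x}$, not merely proportional or monotonically related. I would also note in passing that the argument uses nothing special about the conditional structure: the identical one-line collapse shows that the indicator utility always returns the mode of a discrete distribution, which is why the MAP estimate is the Bayes-optimal point estimate under $0/1$ loss.
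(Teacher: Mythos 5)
Your proof is correct and is exactly the one-line argument the paper has in mind when it says the result is ``easy to show'' (the paper itself omits the proof): the indicator collapses the expected-utility sum to $p({\bf x}|{\bf y})$, so the two objectives coincide and their maximizers agree. Nothing further is needed.
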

Little can be said in general about if this utility function truly
reflects user priorities. However, in high-dimensional applications,
there are reasons for skepticism. First, the actual maximizing probability
$p({\bf x}^{*}|{\bf y})$ in a MAP estimate might be extremely small,
so much so that astronomical numbers of examples might be necessary
before one could expect to exactly predict the true output. Second,
this utility does not distinguish between a prediction that contains
only a single error at some component $x_{j}$, and one that is entirely
wrong.

An alternative utility function, popular for imaging problems, quantifies
the Hamming distance, or the \emph{number of components} of the output
vector that are correct. Maximizing this results in selecting the
most likely value for each component independently.
\begin{thm*}
If $U({\bf x},{\bf x}')=\sum_{i}I[x_{i}=x_{i}'],$ then

\noindent 
\begin{equation}
x_{i}^{*}=\arg\max_{x_{i}}p(x_{i}|{\bf y}).\label{eq:MPM-inference}
\end{equation}

\end{thm*}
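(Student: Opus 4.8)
The plan is to maximize the expected utility directly and show that it decomposes coordinate-wise. Starting from the Bayes estimator $\mathbf{x}^* = \arg\max_{\mathbf{x}} \sum_{\mathbf{x}'} p(\mathbf{x}'|\mathbf{y}) U(\mathbf{x},\mathbf{x}')$, I would substitute the Hamming utility $U(\mathbf{x},\mathbf{x}') = \sum_i I[x_i = x_i']$ and interchange the two sums. The key observation is that the objective becomes a sum of independent terms, one per coordinate:
\[
\sum_{\mathbf{x}'} p(\mathbf{x}'|\mathbf{y}) \sum_i I[x_i = x_i'] = \sum_i \sum_{\mathbf{x}'} p(\mathbf{x}'|\mathbf{y}) I[x_i = x_i'].
\]

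Next I would collapse the inner sum over the full configuration $\mathbf{x}'$ down to a marginal. Because the summand $I[x_i = x_i']$ depends only on the single component $x_i'$, summing $p(\mathbf{x}'|\mathbf{y})$ over all components other than the $i$-th produces the marginal $p(x_i'|\mathbf{y})$, so that $\sum_{\mathbf{x}'} p(\mathbf{x}'|\mathbf{y}) I[x_i = x_i'] = p(x_i|\mathbf{y})$. The total expected utility is therefore $\sum_i p(x_i|\mathbf{y})$, a sum in which the $i$-th term depends only on the choice of $x_i$.

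Finally I would invoke the separability: maximizing a sum of terms, where each term involves a disjoint block of the decision variables, is achieved by maximizing each term independently. Hence the optimal $x_i^*$ is obtained by $x_i^* = \arg\max_{x_i} p(x_i|\mathbf{y})$, which is exactly Eq.~\ref{eq:MPM-inference}.

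I do not anticipate a genuine obstacle here, as the argument is elementary; the only point requiring a little care is justifying the marginalization step cleanly — namely that summing the joint against an indicator of a single coordinate yields precisely that coordinate's marginal — and stating explicitly that the coordinate-wise separability of the resulting objective licenses independent maximization of each component. These are both routine, so the proof should be short.
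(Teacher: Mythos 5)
Your argument is correct and is exactly the standard derivation the paper has in mind; the paper simply asserts this result without proof (it is not among the theorems proved in the appendices). The three steps you identify --- exchanging the sums, collapsing $\sum_{{\bf x}'}p({\bf x}'|{\bf y})I[x_{i}=x_{i}']$ to the marginal $p(x_{i}|{\bf y})$, and invoking coordinate-wise separability of the resulting objective $\sum_{i}p(x_{i}|{\bf y})$ --- are all sound and complete the proof.
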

This appears to have been originally called Maximum Posterior Marginal
(MPM) inference \cite{ProbabilisticSolutionOfIllPosedProblems}, though
it has been reinvented under other names \cite{TrainingCRFsForMaximumLabelwiseAccuracy}.
From a computational perspective, the main difficulty is not performing
the trivial maximization in Eq. \ref{eq:MPM-inference}, but rather
computing the marginals $p(x_{i}|{\bf y})$. The marginal-based loss
functions introduced in Section \ref{sub:Marginal-based-Loss-Functions}
can be motivated by the idea that at test time, one will use an inference
method similar to MPM where one in concerned only with the accuracy
of the marginals.

The results of MAP and MPM inference will be similar if the distribution
$p({\bf x}|{\bf y})$ is heavily ``peaked'' at a single configuration
${\bf x}$. Roughly, the greater the entropy of $p({\bf x}|{\bf y})$,
the more there is to be gained in integrating over all possible configurations,
as MPM does. A few papers have experimentally compared MAP and MPM
inference \cite{kumar_exploiting,MeasuringUnvertaintyInGraphCutSolutions}.

\subsection{Exponential Family}

The exponential family is defined by
\[
p({\bf x};\boldsymbol{\theta})=\exp\bigl(\boldsymbol{\theta}\cdot{\bf f}({\bf x})-A(\boldsymbol{\theta})\bigr),
\]

\noindent where $\boldsymbol{\theta}$ is a vector of parameters,
${\bf f}({\bf x})$ is a vector of sufficient statistics, and the
log-partition function
\begin{equation}
A(\boldsymbol{\theta})=\log\sum_{{\bf x}}\exp\boldsymbol{\theta}\cdot{\bf f}({\bf x}).\label{eq:log-partition}
\end{equation}

\noindent ensures normalization. Different sufficient statistics ${\bf f}({\bf x})$
define different distributions. The exponential family is well understood
in statistics. Accordingly, it is useful to note that a Markov random
field (Eq. \ref{eq:MRF_def}) is a member of the exponential family,
with sufficient statistics consisting of indicator functions for each
possible configuration of each clique and each variable \cite{WainwrightJordanMonster},
namely,

\[
{\bf f}({\bf X})=\{I[{\bf X}_{c}={\bf x}_{c}]|\forall c,{\bf x}_{c}\}\cup\{I[X_{i}=x_{i}]|\forall i,x_{i}\}.
\]

It is useful to introduce the notation $\theta({\bf x}_{c})$ to refer
to the component of $\boldsymbol{\theta}$ corresponding to the indicator
function $I[{\bf X}_{c}={\bf x}_{c}],$ and similarly for $\theta(x_{i})$.
Then, the MRF in Eq. \ref{eq:MRF_def} would have $\psi({\bf x}_{c})=e^{\theta({\bf x}_{c})}$
and $\psi(x_{i})=e^{\theta(x_{i})}$. Many operations on graphical
models can be more elegantly represented using this exponential family
representation.

A standard problem in the exponential family is to compute the mean
value of ${\bf f}$,

\vspace{-5pt}
\[
\boldsymbol{\mu}(\boldsymbol{\theta})=\sum_{{\bf x}}p({\bf x};\boldsymbol{\theta}){\bf f}({\bf x}),
\]
called the ``mean parameters''. It is easy to show these are equal
to the gradient of the log-partition function.

\vspace{-5pt}
\begin{equation}
\frac{dA}{d\boldsymbol{\theta}}=\boldsymbol{\mu}(\boldsymbol{\theta}).\label{eq:dA_dtheta_eq_mu}
\end{equation}

For an exponential family corresponding to an MRF, computing $\boldsymbol{\mu}$
is equivalent to computing all the marginal probabilities. To see
this, note that, using a similar notation for indexing $\boldsymbol{\mu}$
as for $\boldsymbol{\theta}$ above,\vspace{-5pt}

\[
\boldsymbol{\mu}({\bf x}_{c};\boldsymbol{\theta})=\sum_{{\bf X}}p({\bf X};\boldsymbol{\theta})I[{\bf X}_{c}={\bf x}_{c}]=p({\bf x}_{c};\boldsymbol{\theta}).
\]

Conditional distributions can be represented by thinking of the parameter
vector $\boldsymbol{\theta}({\bf y};\boldsymbol{\gamma})$ as being
a function of the input ${\bf y}$, where $\boldsymbol{\gamma}$ are
now the free parameters rather than $\boldsymbol{\theta}$. (Again,
the nature of the dependence of $\boldsymbol{\theta}$ on ${\bf y}$
and $\boldsymbol{\gamma}$ will vary by application.) Then, we have
that
\begin{equation}
p({\bf x}|{\bf y};\boldsymbol{\gamma})=\exp\bigl(\boldsymbol{\theta}({\bf y};\boldsymbol{\gamma})\cdot{\bf f}({\bf x})-A(\boldsymbol{\theta}({\bf y};\boldsymbol{\gamma}))\bigr),\label{eq:conditiona-efam}
\end{equation}
sometimes called a curved conditional exponential family.

\subsection{Learning\label{sub:Learning}}

The focus of this paper is learning of model parameters from data.
(Automatically determining graph \emph{structure} remains an active
research area, but is not considered here.) Specifically, we take
the goal of learning to be to minimize the empirical risk
\begin{equation}
R(\boldsymbol{\theta})=\sum_{\hat{{\bf x}}}L\bigl(\boldsymbol{\theta},\hat{{\bf x}}\bigr),\label{eq:empirical_risk}
\end{equation}
where the summation is over all examples $\hat{{\bf x}}$ in the dataset,
and the loss function $L(\boldsymbol{\theta},\hat{{\bf x}})$ quantifies
how well the distribution defined by the parameter vector $\boldsymbol{\theta}$
matches the example $\hat{{\bf x}}$. Several loss functions are considered
in Section \ref{sec:Loss-Functions}.

We assume that the empirical risk will be fit by some gradient-based
optimization. Hence, the main technical issues in learning are which
loss function to use and how to compute the gradient $\frac{dL}{d\boldsymbol{\theta}}$.

In practice, we will usually be interested in fitting conditional
distributions. Using the notation from Eq. \ref{eq:conditiona-efam},
we can write this as

\[
R(\boldsymbol{\gamma})=\sum_{(\hat{{\bf y}},\hat{{\bf x}})}L\bigl(\boldsymbol{\theta}(\hat{{\bf y}},\boldsymbol{\gamma}),\hat{{\bf x}}\bigr).
\]

Note that if one has recovered $\frac{dL}{d\boldsymbol{\theta}},$
$\frac{dL}{d\boldsymbol{\gamma}}$ is immediate from the vector chain
rule as

\begin{equation}
\frac{dL}{d\boldsymbol{\gamma}}=\frac{d{\bf \boldsymbol{\theta}}^{T}}{d\boldsymbol{\gamma}}\frac{dL}{d\boldsymbol{\theta}}.\label{eq:conditional_model_chainrule}
\end{equation}

Thus, the main technical problems involved in fitting a conditional
distribution are similar to those for a generative distribution: One
finds $\boldsymbol{\theta}=\boldsymbol{\theta}(\hat{{\bf y}},\boldsymbol{\gamma})$,
computes the $L$ and $\frac{dL}{d\boldsymbol{\theta}}$ on example
$\hat{{\bf x}}$ exactly as in the generative case, and finally recovers
$\frac{dL}{d\boldsymbol{\gamma}}$ from Eq. \ref{eq:conditional_model_chainrule}.
So, for simplicity, ${\bf y}$ and $\boldsymbol{\gamma}$ will largely
be ignored in the theoretical developments below.

\section{Variational Inference}

This section reviews approximate methods for computing marginals,
with notation based on Wainwright and Jordan \cite{WainwrightJordanMonster}.
For readability, all proofs in this section are postponed to Appendix
A.

The relationship between the marginals and the log-partition function
in Eq. \ref{eq:dA_dtheta_eq_mu} is key to defining approximate marginalization
procedures. In Section \ref{sub:Exact-Variational-Principle}, the
exact variational principle shows that the (intractable) problem of
computing the log-partition function can be converted to a (still
intractable) optimization problem. To derive a tractable marginalization
algorithm one approximates this optimization, yielding some approximate
log-partition function $\tilde{A}(\boldsymbol{\theta})$. The approximate
marginals are then taken as the \emph{exact} gradient of $\tilde{A}$.

We define the reverse mapping $\boldsymbol{\theta}(\boldsymbol{\mu})$
to return some parameter vector that yields that marginals $\boldsymbol{\mu}$.
While this will in general not be unique \cite[sec. 3.5.2]{WainwrightJordanMonster},
any two vectors that produce the same marginals $\boldsymbol{\mu}$
will also yield the same distribution, and so $p({\bf x};\boldsymbol{\theta}(\boldsymbol{\mu}))$
is unambiguous.

\subsection{Exact Variational Principle\label{sub:Exact-Variational-Principle}}
\begin{thm*}
[Exact variational principle]The log-partition function can also
be represented as

\begin{equation}
A(\boldsymbol{\theta})=\max_{\boldsymbol{\mu}\in\mathcal{M}}\boldsymbol{\theta}\cdot\boldsymbol{\mu}+H(\boldsymbol{\mu}),\label{eq:A-variational}
\end{equation}
where 
\[
\mathcal{M}=\{\boldsymbol{\mu}':\exists\boldsymbol{\theta},\boldsymbol{\mu}'=\boldsymbol{\mu}(\boldsymbol{\theta})\}
\]
is the marginal polytope, and 
\[
H(\boldsymbol{\mu})=-\sum_{{\bf x}}p({\bf x};\boldsymbol{\theta}(\boldsymbol{\mu}))\log p({\bf x};\boldsymbol{\theta}(\boldsymbol{\mu}))
\]

\noindent is the entropy.
\end{thm*}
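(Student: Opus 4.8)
The plan is to derive the identity from the nonnegativity of the Kullback--Leibler divergence (Gibbs' inequality), exploiting the exponential-family form of $p(\mathbf{x};\boldsymbol{\theta})$. First I would fix an arbitrary distribution $q(\mathbf{x})$ with mean parameters $\boldsymbol{\mu}_{q}=\sum_{\mathbf{x}}q(\mathbf{x})\mathbf{f}(\mathbf{x})$ and expand
\[
0\le D(q\,\|\,p_{\boldsymbol{\theta}})=\sum_{\mathbf{x}}q(\mathbf{x})\log q(\mathbf{x})-\sum_{\mathbf{x}}q(\mathbf{x})\log p(\mathbf{x};\boldsymbol{\theta}).
\]
Substituting $\log p(\mathbf{x};\boldsymbol{\theta})=\boldsymbol{\theta}\cdot\mathbf{f}(\mathbf{x})-A(\boldsymbol{\theta})$ and recognizing the first sum as $-H(q)$ yields, after rearranging,
\[
A(\boldsymbol{\theta})\ge\boldsymbol{\theta}\cdot\boldsymbol{\mu}_{q}+H(q),
\]
with equality if and only if $q=p_{\boldsymbol{\theta}}$. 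This inequality is the engine of the whole argument.

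Next I would specialize this bound to the exponential-family members indexed by the marginal polytope. For each $\boldsymbol{\mu}\in\mathcal{M}$ the reverse mapping supplies a distribution $q=p(\mathbf{x};\boldsymbol{\theta}(\boldsymbol{\mu}))$ whose mean parameters are exactly $\boldsymbol{\mu}$ and whose entropy is, by definition, $H(\boldsymbol{\mu})$. Plugging this $q$ into the bound immediately gives $A(\boldsymbol{\theta})\ge\boldsymbol{\theta}\cdot\boldsymbol{\mu}+H(\boldsymbol{\mu})$ for every $\boldsymbol{\mu}\in\mathcal{M}$, establishing that $A(\boldsymbol{\theta})$ dominates the objective.

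To see that the bound is tight, I would evaluate the objective at the single point $\boldsymbol{\mu}_{\boldsymbol{\theta}}=\boldsymbol{\mu}(\boldsymbol{\theta})\in\mathcal{M}$, i.e.\ the true mean parameters of $p_{\boldsymbol{\theta}}$ itself. Because any parameter vector reproducing these marginals yields the same distribution (the unambiguity remark preceding the theorem), we have $p(\mathbf{x};\boldsymbol{\theta}(\boldsymbol{\mu}_{\boldsymbol{\theta}}))=p(\mathbf{x};\boldsymbol{\theta})$ and hence $H(\boldsymbol{\mu}_{\boldsymbol{\theta}})=H(p_{\boldsymbol{\theta}})$. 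Taking $q=p_{\boldsymbol{\theta}}$ in the equality case of Gibbs' inequality then gives $\boldsymbol{\theta}\cdot\boldsymbol{\mu}_{\boldsymbol{\theta}}+H(\boldsymbol{\mu}_{\boldsymbol{\theta}})=A(\boldsymbol{\theta})$, so the supremum is attained at $\boldsymbol{\mu}_{\boldsymbol{\theta}}$ and the $\max$ is legitimate.

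The main obstacle is conceptual rather than computational: one must keep straight that the function $H(\boldsymbol{\mu})$ in the objective is defined through the specific exponential-family member $p(\mathbf{x};\boldsymbol{\theta}(\boldsymbol{\mu}))$, and that this is precisely what makes the Gibbs bound tight at each $\boldsymbol{\mu}$ rather than merely an inequality against some arbitrary $q$. I would also take care to confirm that $\mathcal{M}$ actually contains the maximizer $\boldsymbol{\mu}_{\boldsymbol{\theta}}$ --- immediate here, since $\mathcal{M}$ is defined as the set of all realizable mean parameters $\boldsymbol{\mu}(\boldsymbol{\theta})$ --- so that writing ``$\max$'' in place of ``$\sup$'' is warranted.
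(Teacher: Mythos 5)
Your proof is correct, and it takes a genuinely different route from the paper. The paper proves the identity by Fenchel duality: it writes $A$ as its biconjugate, $A(\boldsymbol{\theta})=\sup_{\boldsymbol{\mu}}\boldsymbol{\theta}\cdot\boldsymbol{\mu}-A^{*}(\boldsymbol{\mu})$, computes the conjugate $A^{*}$ by noting that the defining infimum is attained where $dA/d\boldsymbol{\theta}=\boldsymbol{\mu}$, and then identifies $A^{*}(\boldsymbol{\mu})=-H(\boldsymbol{\mu})$ on $\mathcal{M}$ and $+\infty$ off it. You instead obtain the inequality $A(\boldsymbol{\theta})\geq\boldsymbol{\theta}\cdot\boldsymbol{\mu}_{q}+H(q)$ directly from $D(q\,\|\,p_{\boldsymbol{\theta}})\geq0$, specialize $q$ to the exponential-family member $p(\cdot;\boldsymbol{\theta}(\boldsymbol{\mu}))$ for each $\boldsymbol{\mu}\in\mathcal{M}$, and close the gap with the equality case of Gibbs at $\boldsymbol{\mu}(\boldsymbol{\theta})$. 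Your argument is the more elementary and self-contained one: it never invokes biconjugation (which requires $A$ convex and closed) and it sidesteps the paper's somewhat delicate claim that the conjugate is unbounded for $\boldsymbol{\mu}\not\in\mathcal{M}$, since you only ever evaluate the objective on $\mathcal{M}$ where the maximizer demonstrably lies. What the paper's duality route buys in exchange is the explicit characterization $A^{*}(\boldsymbol{\mu})=-H(\boldsymbol{\mu})$ as a reusable identity: the appendix reuses exactly this formula, $H(\boldsymbol{\mu})=-\inf_{\boldsymbol{\theta}}(\boldsymbol{\theta}\cdot\boldsymbol{\mu}-A(\boldsymbol{\theta}))$, in the projection lemma underlying the TRW entropy bound, so the conjugate-dual formulation is doing work beyond this one theorem. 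Your Gibbs-based argument, while cleaner here, would not hand you that variational expression for $H$ for free.
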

In treelike graphs, this optimization can be solved efficiently. In
general graphs, however, it is intractable in two ways. First, the
marginal polytope $\mathcal{M}$ becomes difficult to characterize.
Second, the entropy is intractable to compute.

Applying Danskin's theorem to Eq. \ref{eq:A-variational} yields that

\begin{equation}
\boldsymbol{\mu}(\boldsymbol{\theta})=\frac{dA}{d\boldsymbol{\theta}}=\underset{\boldsymbol{\mu}\in\mathcal{M}}{\arg\max}\,\boldsymbol{\theta}\cdot\boldsymbol{\mu}+H(\boldsymbol{\mu}).\label{eq:mu-variational}
\end{equation}

Thus, the partition function (Eq. \ref{eq:A-variational}) and marginals
(Eq. \ref{eq:mu-variational}) can both be obtained from solving the
same optimization problem. This close relationship between the log-partition
function and marginals is heavily used in the derivation of approximate
marginalization algorithms. To compute approximate marginals, first,
derive an approximate version of the optimization in Eq. \ref{eq:A-variational}.
Next, take the exact gradient of this approximate partition function.
This strategy is used in both of the approximate marginalization procedures
considered here: mean field and tree-reweighted belief propagation.

\subsection{Mean Field}

The idea of mean field is to approximate the exact variational principle
by replacing $\mathcal{M}$ with some tractable subset $\mathcal{F}\subset\mathcal{M}$,
such that $\mathcal{F}$ is easy to characterize, and for any vector
$\boldsymbol{\mu}\in\mathcal{F}$ we can exactly compute the entropy.
To create such a set $\mathcal{F}$, instead of considering the set
of mean vectors obtainable from \emph{any} parameter vector (which
characterizes $\mathcal{M}$), consider a subset of \emph{tractable}
parameter vectors. The simplest way to achieve this to restrict consideration
to parameter vectors $\boldsymbol{\theta}$ with $\theta({\bf x}_{c})=0$
for all factors $c$.

\[
\mathcal{F}=\{\boldsymbol{\mu}':\exists\boldsymbol{\theta},\boldsymbol{\mu}'=\boldsymbol{\mu}(\boldsymbol{\theta}),\,\forall c,\,\theta({\bf x}_{c})=0\}.
\]

It is not hard to see that this corresponds to the set of \emph{fully-factorized}
distributions. Note also that this is (in non-treelike graphs) a non-convex
set, since it has the same convex hull as $\mathcal{M}$, but is a
proper subset. So, the mean field partition function approximation
is based on the optimization

\begin{equation}
\tilde{A}(\boldsymbol{\theta})=\max_{\boldsymbol{\mu}\in\mathcal{F}}\boldsymbol{\theta}\cdot\boldsymbol{\mu}+H(\boldsymbol{\mu}),\label{eq:A-meanfield}
\end{equation}

\noindent with approximate marginals corresponding to the maximizing
vector $\boldsymbol{\mu}$, i.e.

\begin{equation}
\tilde{\boldsymbol{\mu}}(\boldsymbol{\theta})=\arg\max_{\boldsymbol{\mu}\in\mathcal{F}}\boldsymbol{\theta}\cdot\boldsymbol{\mu}+H(\boldsymbol{\mu}).\label{eq:mu-meanfield}
\end{equation}

Since this is maximizing the same objective as the exact variational
principle, but under a more restricted constraint set, clearly $\tilde{A}(\boldsymbol{\theta})\leq A(\boldsymbol{\theta}).$

Here, since the marginals are coming from a fully-factorized distribution,
the exact entropy is available as
\begin{equation}
H(\boldsymbol{\mu})=-\sum_{i}\sum_{x_{i}}\mu(x_{i})\log\mu(x_{i}).\label{eq:meanfield-entropy}
\end{equation}

The strategy we use to perform the maximization in Eq. \ref{eq:A-meanfield}
is block-coordinate ascent. Namely, we pick a coordinate $j$, then
set $\mu(x_{j})$ to maximize the objective, leaving $\mu(x_{i})$
fixed for all $i\not=j$. The next theorem formalizes this.
\begin{thm*}
[Mean Field Updates]A local maximum of Eq. \ref{eq:A-meanfield}
can be reached by iterating the updates

\[
\mu(x_{j})\leftarrow\frac{1}{Z}\exp\bigl(\theta(x_{j})+\sum_{c:j\in c}\sum_{{\bf x}_{c\backslash j}}\theta({\bf x}_{c})\prod_{i\in c\backslash j}\mu(x_{i})\bigr),
\]

\noindent where $Z$ is a normalizing factor ensuring that ${\displaystyle \sum_{x_{j}}\mu(x_{j})=1}$. 
\end{thm*}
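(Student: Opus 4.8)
The plan is to derive the coordinate-wise update by isolating, in the mean-field objective $\boldsymbol{\theta}\cdot\boldsymbol{\mu}+H(\boldsymbol{\mu})$, exactly those terms that depend on the single block $\mu(x_j)$, and then maximizing this restricted objective subject to the normalization constraint $\sum_{x_j}\mu(x_j)=1$. Since $\mathcal{F}$ is the set of fully-factorized distributions, the entropy splits as in Eq.~\ref{eq:meanfield-entropy} and the factor marginals are determined by the node marginals through $\mu({\bf x}_c)=\prod_{i\in c}\mu(x_i)$; I would substitute this product form into $\boldsymbol{\theta}\cdot\boldsymbol{\mu}$ before differentiating, so that everything is expressed in terms of the node beliefs alone.

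First I would write the objective as a sum of a node term $\sum_i\sum_{x_i}\theta(x_i)\mu(x_i)$, a factor term $\sum_c\sum_{{\bf x}_c}\theta({\bf x}_c)\prod_{i\in c}\mu(x_i)$, and the entropy $-\sum_i\sum_{x_i}\mu(x_i)\log\mu(x_i)$. Fixing all $\mu(x_i)$ for $i\neq j$, I would collect the pieces that involve $\mu(x_j)$: from the node term this is $\sum_{x_j}\theta(x_j)\mu(x_j)$; from the factor term, only the cliques $c$ with $j\in c$ contribute, and each such contribution factors as $\sum_{x_j}\mu(x_j)\bigl(\sum_{{\bf x}_{c\backslash j}}\theta({\bf x}_c)\prod_{i\in c\backslash j}\mu(x_i)\bigr)$; and from the entropy the relevant piece is $-\sum_{x_j}\mu(x_j)\log\mu(x_j)$.

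Next I would form the Lagrangian by adding $\lambda\bigl(\sum_{x_j}\mu(x_j)-1\bigr)$, differentiate with respect to $\mu(x_j)$, and set the derivative to zero. The node and factor contributions combine into the single linear coefficient $\theta(x_j)+\sum_{c:j\in c}\sum_{{\bf x}_{c\backslash j}}\theta({\bf x}_c)\prod_{i\in c\backslash j}\mu(x_i)$, while differentiating the entropy yields $-\log\mu(x_j)-1$. Solving the stationarity condition gives $\mu(x_j)\propto\exp\bigl(\theta(x_j)+\sum_{c:j\in c}\sum_{{\bf x}_{c\backslash j}}\theta({\bf x}_c)\prod_{i\in c\backslash j}\mu(x_i)\bigr)$, and absorbing $\lambda$ into the normalizer $Z$ enforcing $\sum_{x_j}\mu(x_j)=1$ produces exactly the stated update. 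Because the entropy term is strictly concave in $\mu(x_j)$ and the remaining terms are linear, this stationary point is the unique maximizer over the block, so each update can only increase the objective; since the objective is bounded above by $A(\boldsymbol{\theta})$, iterating the updates converges to a local maximum.

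The main obstacle I anticipate is the bookkeeping in the factor term: one must verify that $\prod_{i\in c}\mu(x_i)$ separates cleanly into $\mu(x_j)$ times $\prod_{i\in c\backslash j}\mu(x_i)$ and that summing over ${\bf x}_c$ factors correctly into a sum over $x_j$ and an independent sum over ${\bf x}_{c\backslash j}$. A subtlety worth flagging is whether a variable can appear more than once in a clique or whether overlapping cliques interact when the block is updated; under the standard convention that each $\mu(x_i)$ appears to the first power in $\mu({\bf x}_c)$ and that we hold all $i\neq j$ fixed, the separation is exact and the coefficient of $\mu(x_j)$ is precisely the bracketed expression above.
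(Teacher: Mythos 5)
Your proposal is correct and follows essentially the same route as the paper's proof: substitute the fully-factorized forms $\mu({\bf x}_c)=\prod_{i\in c}\mu(x_i)$ and the separable entropy into the objective, form a Lagrangian for the normalization constraint, and solve the stationarity condition for $\mu(x_j)$. Your added remarks on block concavity and monotone ascent of the bounded objective are a welcome tightening of the convergence claim, which the paper treats more briefly, but they do not change the argument's structure.
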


\subsection{Tree-Reweighted Belief Propagation }

Whereas mean field replaced the marginal polytope with a subset, tree-reweighted
belief propagation (TRW) replaces it with a superset, $\mathcal{L}\supset\mathcal{M}$.
This clearly can only increase the value of the approximate log-partition
function. However, a further approximation is needed, as the entropy
remains intractable to compute for an arbitrary mean vector $\boldsymbol{\mu}$.
(It is not even defined for $\boldsymbol{\mu}\not\in\mathcal{M}.$)
Thus, TRW further approximates the entropy with a tractable upper
bound. Taken together, these two approximations yield a tractable
upper bound on the log-partition function.

Thus, TRW is based on the optimization problem

\begin{equation}
\tilde{A}(\boldsymbol{\theta})=\max_{\boldsymbol{\mu}\in\mathcal{L}}\boldsymbol{\theta}\cdot\boldsymbol{\mu}+\tilde{H}(\boldsymbol{\mu}).\label{eq:A-TRW}
\end{equation}

\noindent Again, the approximate marginals are simply the maximizing
vector $\boldsymbol{\mu}$, i.e.,

\begin{equation}
\tilde{\boldsymbol{\mu}}(\boldsymbol{\theta})=\arg\max_{\boldsymbol{\mu}\in\mathcal{L}}\boldsymbol{\theta}\cdot\boldsymbol{\mu}+\tilde{H}(\boldsymbol{\mu}).\label{eq:mu-TRW}
\end{equation}

The relaxation of the local polytope used in TRW is the \emph{local
polytope}, 

\begin{equation}
\mathcal{L}=\{\boldsymbol{\mu}:\sum_{{\bf x}_{c\backslash i}}\mu({\bf x}_{c})=\mu(x_{i}),\,\sum_{x_{i}}\mu(x_{i})=1\}.\label{eq:local-polytope}
\end{equation}

\noindent Since any valid marginal vector must obey these constraints,
clearly $\mathcal{M}\subset\mathcal{L}$. However, $\mathcal{L}$
in general also contains unrealizable vectors (though on trees $\mathcal{L}=\mathcal{M})$.
Thus, the marginal vector returned by TRW may, in general, be inconsistent
in the sense that no joint distribution yields those marginals.

The entropy approximation used by TRW is 
\begin{equation}
\tilde{H}(\mu)=\sum_{i}H(\mu_{i})-\sum_{c}\rho_{c}I(\mu_{c}),\label{eq:TRW-entropy}
\end{equation}

\noindent where $H(\mu_{i})=-\sum_{x_{i}}\mu(x_{i})\log\mu(x_{i})$
is the univariate entropy corresponding to variable $i$, and
\begin{equation}
I(\mu_{c})=\sum_{{\bf x}_{c}}\mu({\bf x}_{c})\log\frac{\mu({\bf x}_{c})}{\prod_{i\in c}\mu(x_{i})}\label{eq:TRW-mutualinfo}
\end{equation}
is the mutual information corresponding to the variables in the factor
$c$. The motivation for this approximation is that if the constants
$\rho_{c}$ are selected appropriately, this gives an upper bound
on the true entropy.
\begin{thm*}
[TRW Entropy Bound]Let $Pr(\mathcal{G})$ be a distribution over
tree structured graphs, and define $\rho_{c}=Pr(c\in\mathcal{G}).$
Then, with $\tilde{H}$ as defined in Eq. \ref{eq:TRW-entropy}, 
\[
\tilde{H}(\boldsymbol{\mu})\geq H(\boldsymbol{\mu}).
\]

\end{thm*}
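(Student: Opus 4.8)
The plan is to recognize the TRW entropy $\tilde{H}(\boldsymbol{\mu})$ as an \emph{expectation} over the random tree $\mathcal{G}$ of a single-tree entropy, and then to prove the bound one tree at a time. Writing $H_{\mathcal{G}}(\boldsymbol{\mu}) = \sum_{i} H(\mu_{i}) - \sum_{c \in \mathcal{G}} I(\mu_{c})$, linearity of expectation together with $\rho_{c} = Pr(c \in \mathcal{G})$ gives $\tilde{H}(\boldsymbol{\mu}) = \mathbb{E}_{\mathcal{G}}[H_{\mathcal{G}}(\boldsymbol{\mu})]$, since the univariate term does not depend on $\mathcal{G}$ and each $I(\mu_{c})$ is retained exactly when $c \in \mathcal{G}$. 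It therefore suffices to establish the deterministic inequality $H(\boldsymbol{\mu}) \le H_{T}(\boldsymbol{\mu})$ for every tree $T$ in the support of $Pr$; averaging over $\mathcal{G}$ then yields the theorem.

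For a fixed tree $T$, I would first exhibit the tree-structured distribution that reproduces the target marginals, namely $p_{T}(\mathbf{x}) = \prod_{i} \mu(x_{i}) \prod_{c \in T} \frac{\mu(\mathbf{x}_{c})}{\prod_{i \in c} \mu(x_{i})}$. A direct check shows $p_{T}$ is normalized and has node and factor marginals equal to those of $\boldsymbol{\mu}$, and substituting into $-\sum_{\mathbf{x}} p_{T} \log p_{T}$ and grouping terms shows its entropy is \emph{exactly} $H_{T}(\boldsymbol{\mu}) = \sum_{i} H(\mu_{i}) - \sum_{c \in T} I(\mu_{c})$. Thus the per-tree quantity is a genuine entropy, not merely a formal expression.

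The crux, which I expect to be the main obstacle, is the inequality $H(\boldsymbol{\mu}) \le H_{T}(\boldsymbol{\mu})$, which I would obtain from nonnegativity of the Kullback--Leibler divergence. Let $p = p(\mathbf{x}; \boldsymbol{\theta}(\boldsymbol{\mu}))$ be the true distribution with marginals $\boldsymbol{\mu}$, so $H(\boldsymbol{\mu}) = -\sum_{\mathbf{x}} p \log p$. The key observation is that $\log p_{T}$ is a linear combination of the node and factor indicator statistics, and $p$ shares precisely those marginals with $p_{T}$; hence the cross term satisfies $\sum_{\mathbf{x}} p \log p_{T} = \sum_{\mathbf{x}} p_{T} \log p_{T} = -H_{T}(\boldsymbol{\mu})$. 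Expanding $0 \le \mathrm{KL}(p \Vert p_{T}) = -H(\boldsymbol{\mu}) - \sum_{\mathbf{x}} p \log p_{T}$ and substituting the cross-term identity then gives $H(\boldsymbol{\mu}) \le H_{T}(\boldsymbol{\mu})$ immediately.

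Finally I would take the expectation of $H(\boldsymbol{\mu}) \le H_{\mathcal{G}}(\boldsymbol{\mu})$ over $\mathcal{G} \sim Pr$: the left side is constant in $\mathcal{G}$, while the right side averages to $\tilde{H}(\boldsymbol{\mu})$ by the first paragraph, yielding $\tilde{H}(\boldsymbol{\mu}) \ge H(\boldsymbol{\mu})$. The only delicate points are that every graph in the support is spanning, so the univariate sum $\sum_{i} H(\mu_{i})$ is identical across trees and matches the definition of $\tilde{H}$, and that the factor-marginal matching used in the cross-term identity holds for all $c \in T$ at once; both are immediate once $p_{T}$ is written down.
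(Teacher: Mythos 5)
Your proof is correct, and its outer skeleton matches the paper's exactly: establish that for each tree $T$ the quantity $H_{T}(\boldsymbol{\mu})=\sum_{i}H(\mu_{i})-\sum_{c\in T}I(\mu_{c})$ is the entropy of a tree-factorized distribution with the given marginals, show $H(\boldsymbol{\mu})\le H_{T}(\boldsymbol{\mu})$, and average over $\mathcal{G}\sim Pr$ using $\rho_{c}=Pr(c\in\mathcal{G})$. Where you genuinely diverge is in the crux step, the per-tree inequality. The paper proves it abstractly: it defines the entropy of the ``projection'' $\boldsymbol{\mu}^{\mathcal{G}}$ variationally, via the conjugate-duality identity $H=-A^{*}$ established for the exact variational principle, and observes that restricting the infimum to parameter vectors with $\theta_{c}=0$ for $c\not\in\mathcal{G}$ can only increase the entropy. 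You instead exhibit the tree distribution $p_{T}$ explicitly and invoke $\mathrm{KL}(p\Vert p_{T})\ge0$ together with the cross-term identity $\sum_{\mathbf{x}}p\log p_{T}=\sum_{\mathbf{x}}p_{T}\log p_{T}$, which holds because $\log p_{T}$ is a linear combination of the node and clique sufficient statistics on which $p$ and $p_{T}$ agree. Your route is more elementary and self-contained (it needs only Gibbs' inequality, not the Legendre-dual characterization of $H$, whose sign conventions the paper in fact fumbles slightly), and it makes transparent \emph{why} the bound holds: the tree distribution is the maximum-entropy distribution consistent with the matched marginals. The paper's route buys generality: the projection lemma applies to an arbitrary subgraph $\mathcal{G}$ without needing a closed form for the projected entropy, whereas your argument relies on the explicit factorization and so is specific to trees (or, more broadly, decomposable graphs) --- which is all the theorem requires. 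Your two flagged delicacies (spanning trees so the univariate sum is common to all $\mathcal{G}$, and simultaneous marginal matching on all $c\in T$, which needs $\boldsymbol{\mu}\in\mathcal{M}$ so the marginals are consistent) are exactly the right ones and both hold.
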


Thus, TRW is maximizing an upper bound on the exact variational principle,
under an expanded constraint set. Since both of these changes can
only increase the maximum value, we have that $\tilde{A}(\boldsymbol{\theta})\geq A(\boldsymbol{\theta})$.

Now, we consider how to actually compute the approximate log-partition
function and associated marginals. Consider the message-passing updates

\begin{equation}
m_{c}(x_{i})\propto\sum_{{\bf x}_{c\backslash i}}e^{\frac{1}{\rho_{c}}\theta({\bf x}_{c})}\prod_{j\in c\backslash i}e^{\theta(x_{j})}\frac{\prod_{d:j\in d}m_{d}(x_{j})^{\rho_{d}}}{m_{c}(x_{j})},\label{eq:TRW-msgs}
\end{equation}
where ``$\propto$'' is used as an assignment operator to means
assigning after normalization.
\begin{thm*}
[TRW Updates]Let $\rho_{c}$ be as in the previous theorem. Then,
if the updates in Eq. \ref{eq:TRW-msgs} reach a fixed point, the
marginals defined by 
\begin{eqnarray*}
\mu({\bf x}_{c}) & \propto & e^{\frac{1}{\rho_{c}}\theta({\bf x}_{c})}\prod_{i\in c}e^{\theta(x_{i})}\frac{\prod_{d:i\in d}m_{d}(x_{i})^{\rho_{d}}}{m_{c}(x_{i})},\\
\mu(x_{i}) & \propto & e^{\theta(x_{i})}\prod_{d:i\in d}m_{d}(x_{i})^{\rho_{d}}
\end{eqnarray*}

\noindent constitute the global optimum of Eq. \ref{eq:A-TRW}.
\end{thm*}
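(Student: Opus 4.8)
The plan is to treat Eq.~\ref{eq:A-TRW} as a constrained concave maximization and show the stated marginals are precisely its stationary point. The first step is to record that the objective is concave on $\mathcal{L}$: the term $\boldsymbol{\theta}\cdot\boldsymbol{\mu}$ is linear, and because the $\rho_c$ are edge-appearance probabilities of a distribution over spanning trees (the same structure underlying the TRW Entropy Bound), $\tilde{H}$ is concave on the local polytope. Since $\mathcal{L}$ is a convex polytope, the Karush--Kuhn--Tucker conditions are necessary and sufficient for a global maximum, so it suffices to exhibit Lagrange multipliers under which the claimed $\mu(\mathbf{x}_c)$ and $\mu(x_i)$ form a feasible stationary point.

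Next I would form the Lagrangian, attaching multipliers $\lambda_{ci}(x_i)$ to the marginalization constraints $\sum_{\mathbf{x}_{c\backslash i}}\mu(\mathbf{x}_c)=\mu(x_i)$ and multipliers to the normalization constraints $\sum_{x_i}\mu(x_i)=1$. Differentiating with respect to $\mu(\mathbf{x}_c)$ and setting the result to zero gives, after rearranging, a form $\mu(\mathbf{x}_c)\propto\prod_{i\in c}\mu(x_i)\,\exp\bigl(\tfrac{1}{\rho_c}\theta(\mathbf{x}_c)-\tfrac{1}{\rho_c}\sum_{i\in c}\lambda_{ci}(x_i)\bigr)$, while differentiating with respect to $\mu(x_i)$ gives a companion expression $\mu(x_i)\propto e^{\theta(x_i)}\exp\bigl(\sum_{d:i\in d}\lambda_{di}(x_i)\bigr)$. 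The decisive step is then a logarithmic change of variables: setting $\lambda_{ci}(x_i)=\rho_c\log m_c(x_i)$ (absorbing normalization constants into the proportionalities) turns the node factor into $\prod_{d:i\in d}m_d(x_i)^{\rho_d}$ and the clique correction into $\prod_{i\in c}m_c(x_i)^{-1}$, so that back-substitution reproduces verbatim the two claimed marginal formulas.

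Finally, I would re-impose primal feasibility on these message-parametrized marginals. Summing the clique formula over $\mathbf{x}_{c\backslash i}$, the $j=i$ node factor pulls out of the sum, and the residual sum over $\mathbf{x}_{c\backslash i}$ is exactly the right-hand side of the update Eq.~\ref{eq:TRW-msgs}; hence the constraint $\sum_{\mathbf{x}_{c\backslash i}}\mu(\mathbf{x}_c)=\mu(x_i)$ reduces, after the shared node factor cancels, to the statement that the messages satisfy Eq.~\ref{eq:TRW-msgs}. Thus a message fixed point is exactly a feasible stationary point, and by the concavity argument it attains the global optimum of Eq.~\ref{eq:A-TRW}, proving the claim.

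I expect the main obstacle to be the stationarity condition for $\mu(x_i)$: the mutual-information term contributes, through its $-\log\prod_{i\in c}\mu(x_i)$ factor, a derivative of the form $\rho_c\sum_{\mathbf{x}_{c\backslash i}}\mu(\mathbf{x}_c)/\mu(x_i)$ that couples clique and node variables nonlinearly, and one must invoke feasibility to collapse it to the constant $\rho_c$ before the clean exponential form emerges. The other point needing care is justifying concavity of $\tilde{H}$ for the chosen $\rho_c$, which, if not simply cited alongside the entropy-bound construction, must be established so that the stationary point is certified as a global rather than merely local optimum.
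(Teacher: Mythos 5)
Your proposal follows essentially the same route as the paper's own proof: form the Lagrangian over the local polytope, set the derivatives with respect to $\mu(\mathbf{x}_c)$ and $\mu(x_i)$ to zero, reparametrize the multipliers as messages (the paper uses $m_c(x_i)=e^{-\lambda_c(x_i)/\rho_c}$, the opposite sign convention to yours, which is immaterial), and observe that primal feasibility $\sum_{\mathbf{x}_{c\backslash i}}\mu(\mathbf{x}_c)=\mu(x_i)$ collapses to the fixed-point equation Eq.~\ref{eq:TRW-msgs}. The one point where you go beyond the paper is in explicitly requiring concavity of $\tilde{H}$ on $\mathcal{L}$ for valid tree-appearance probabilities $\rho_c$ in order to certify the stationary point as a \emph{global} optimum; the paper's proof only verifies stationarity and asserts global optimality without this step, so your flagging of it is a genuine (and correct) tightening rather than a deviation.
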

So, if the updates happen to converge, we have the solution. Meltzer
et al. show \cite{meltzer_et_al} that on certain graphs made up of
\emph{monotonic chains}, an appropriate ordering of messages does
assure convergence. (The proof is essentially that under these circumstances,
message passing is equivalent to coordinate ascent in the dual.)

TRW simplifies into loopy belief propagation by choosing $\rho_{c}=1$
everywhere, though the bounding property is lost.

\section{Loss Functions\label{sec:Loss-Functions}}

For space, only a representative sample of prior work can be cited.
A recent review \cite{StructuredLearningAndPredictionInComputerVision}
is more thorough.

Though, technically, a ``loss'' should be minimized, we continue
to use this terminology for the likelihood and its approximations,
where one wishes to maximize.

For simplicity, the discussion below is for the generative setting.
Using the same loss functions for training a conditional model is
simple (Section \ref{sub:Learning}).

\subsection{The Likelihood and Approximations\label{sub:The-Likelihood}}

The classic loss function would be the likelihood, with
\begin{equation}
L(\boldsymbol{\theta},{\bf x})=\log p({\bf x};\boldsymbol{\theta})=\boldsymbol{\theta}\cdot{\bf f}({\bf x})-A(\boldsymbol{\theta}).\label{eq:likelihood}
\end{equation}

This has the gradient
\begin{equation}
\frac{dL}{d\boldsymbol{\theta}}={\bf f}({\bf x})-\boldsymbol{\mu}(\boldsymbol{\theta}).\label{eq:likelihood-gradient}
\end{equation}

One argument for the likelihood is that it is efficient; given a correct
model, as data increases it converges to true parameters at an asymptotically
optimal rate \cite{MathematicalMethodsOfStatistics}.

Some previous work uses tree structured graphs where marginals may
be computed exactly \cite{OnParameterLearninginCRFbased}. Of course,
in high-treewidth graphs, the likelihood and its gradient will be
intractable to compute exactly, due to the presence of the log-partition
function $A(\boldsymbol{\theta})$ and marginals $\boldsymbol{\mu}(\boldsymbol{\theta})$.
This has motivated a variety of approximations. The first is to approximate
the marginals $\boldsymbol{\mu}$ using Markov chain Monte Carlo \cite{LearningFlexibleFeatures,MCMCML}.
This can lead to high computational expense (particularly in the conditional
case, where different chains must be run for each input). Contrastive
Divergence \cite{OnConstrastiveDivergenceLearning} further approximates
these samples by running the Markov chain for only a few steps, but
started at the data points \cite{FieldsOfExperts}. If the Markov
chain is run long enough, these approaches can give an arbitrarily
good approximation. However, Markov chain parameters may need to be
adjusted to the particular problem, and these approaches are generally
slower than those discussed below.

\subsubsection{Surrogate Likelihood}

A seemingly heuristic approach would be to replace the marginals in
Eq. \ref{eq:likelihood-gradient} with those from an approximate inference
method. This approximation can be quite principled if one thinks instead
of approximating the log-partition function in the likelihood itself
(Eq. \ref{eq:likelihood}). Then, the corresponding approximate marginals
will emerge as the \emph{exact} gradient of this surrogate loss. This
``surrogate likelihood'' \cite{EstimatingTheWrong} approximation
appears to be the most widely used loss in imaging problems, with
marginals approximated by either mean field \cite{EfficientlyLearningRandomFields,RandomFieldModelForIntegration},
TRW \cite{LearningToCombineBottomUpAndTopDown} or LBP \cite{ExploitingInferenceForApproximate,FigureGroundAssignment,AcceleratedTrainingofCRFs,LearningProbabilisticModels,SceneUnderstandingWithDiscriminative}.
However, the terminology of ``surrogate likelihood'' is not widespread
and in most cases, only the gradient is computed, meaning the optimization
cannot use line searches.

If one uses a log-partition approximation that provides a bound on
the true log-partition function, the surrogate likelihood will then
bound the true likelihood. Specifically, mean field based surrogate
likelihood is an upper bound on the true likelihood, while TRW-based
surrogate likelihood is a lower bound.

\subsubsection{Expectation Maximization}

In many applications, only a subset of variables may be observed.
Suppose that we want to model ${\bf x}=({\bf z},{\bf h})$ where ${\bf z}$
is observed, but ${\bf h}$ is hidden. A natural loss function here
is the expected maximization (EM) loss

\[
L(\boldsymbol{\theta},{\bf z})=\log p({\bf z};\boldsymbol{\theta})=\log\sum_{{\bf h}}p({\bf z},{\bf h};\boldsymbol{\theta}).
\]

\noindent It is easy to show that this is equivalent to
\begin{equation}
L(\boldsymbol{\theta},{\bf z})=A(\boldsymbol{\theta},{\bf z})-A(\boldsymbol{\theta}),\label{eq:EM_loss}
\end{equation}
where $A(\boldsymbol{\theta},{\bf z})=\log\sum_{{\bf h}}\exp\boldsymbol{\theta}\cdot{\bf f}({\bf z},{\bf h})$
is the log-partition function with ${\bf z}$ ``clamped'' to the
observed values. If all variables are observed $A(\boldsymbol{\theta},{\bf z})$
reduces to $\boldsymbol{\theta}\cdot{\bf f}({\bf z})$.

If on substitutes a variational approximation for $A(\boldsymbol{\theta},{\bf z})$,
a ``variational EM'' algorithm \cite[Sec. 6.2.2]{WainwrightJordanMonster}
can be recovered that alternates between computing approximate marginals
and parameter updates. Here, because of the close relationship to
the surrogate likelihood, we designate ``surrogate EM'' for the
case where $A(\boldsymbol{\theta},{\bf z})$ and $A(\boldsymbol{\theta})$
may both be approximated and the learning is done with a gradient-based
method. To obtain a bound on the true EM loss, care is required. For
example, lower-bounding $A(\boldsymbol{\theta},{\bf z})$ using mean
field, and upper-bounding $A(\boldsymbol{\theta})$ using TRW means
a lower-bound on the true EM loss. However, using the same approximation
for both $A(\boldsymbol{\theta})$ and $A(\boldsymbol{\theta},{\bf z})$
appears to work well in practice \cite{SceneSegmentationWithCRFsLearned}.

\subsubsection{Saddle-Point Approximation}

A third approximation of the likelihood is to search for a ``saddle-point''.
Here, one approximates the gradient in Eq. \ref{eq:likelihood-gradient}
by running a (presumably approximate) MAP inference algorithm, and
then imagining that the marginals put unit probability at the approximate
MAP solution, and zero elsewhere \cite{LearningConditionalRandomFieldsForStereo,UsingCombinationOfStatisticalModelsAndMultilevel,ExploitingInferenceForApproximate}.
This is a heuristic method, but it can be expected to work well when
the estimated MAP solution is close to the true MAP and the conditional
distribution $p({\bf x}|{\bf y})$ is strongly ``peaked''.

\subsubsection{Pseudolikelihood}

Finally, there are two classes of likelihood approximations that do
not require inference. The first is the classic pseudolikelihood \cite{StatisticalAnalysis},
where one uses
\[
L(\boldsymbol{\theta},{\bf x})=\sum_{i}\log p(x_{i}|{\bf x}_{-i};\boldsymbol{\theta}).
\]

This can be computed efficiently, even in high treewidth graphs, since
conditional probabilities are easy to compute. Besag \cite{StatisticalAnalysis}
showed that, under certain conditions, this will converge to the true
parameter vector as the amount of data becomes infinite. The pseudolikelihood
has been used in many applications \cite{MultiscaleConditionalRandomFieldsFor,DiscriminativeRandomFields}.
Instead of the probability of individual variables given all others,
one can take the probability of patches of variables given all others,
sometimes called the ``patch'' pseudolikelihood \cite{LearningInGibbsianFieldsHowAccurate}.
This interpolates to the exact likelihood as the patches become larger,
though some type of inference is generally required.

\subsubsection{Piecewise Likelihood}

More recently, Sutton and McCallum \cite{PiecewiseTrainingForUndirectedModels}
suggested the piecewise likelihood. The idea is to approximate the
log-partition function as a sum of log-partition functions of the
different ``pieces`` of the graph. There is flexibility in determining
which pieces to use. In this paper, we will use pieces consisting
of each clique and each variable, which worked better in practice
than some alternatives. Then, one has the surrogate partition function
\begin{eqnarray*}
\tilde{A}(\boldsymbol{\theta}) & = & \sum_{c}A_{c}(\boldsymbol{\theta})+\sum_{i}A_{i}(\boldsymbol{\theta}),\\
A_{c}(\boldsymbol{\theta}) & = & \log\sum_{{\bf x}_{c}}e^{\theta({\bf x}_{c})},\,\,\,\, A_{i}(\boldsymbol{\theta})=\log\sum_{x_{i}}e^{\theta(x_{i})}.
\end{eqnarray*}

It is not too hard to show that $A(\boldsymbol{\theta})\leq\tilde{A}(\boldsymbol{\theta})$.
In practice, it is sometimes best to make some heuristic adjustments
to the parameters after learning to improve test-time performance
\cite{RobustModelBasedSceneInterpretation,TextonBoostForImageUnderstanding}.

\subsection{Marginal-based Loss Functions\label{sub:Marginal-based-Loss-Functions}}

Given the discussion in Section \ref{sub:The-Likelihood}, one might
conclude that the likelihood, while difficult to optimize, is an ideal
loss function since, given a well-specified model, it will converge
to the true parameters at asymptotically efficient rates. However,
this conclusion is complicated by two issues. First, of course, the
maximum likelihood solution is computationally intractable, motivating
the approximations above.

A second issue is that of \emph{model mis-specification}. For many
types of complex phenomena, we will wish to fit a model that is approximate
in nature. This could be true because the conditional independencies
asserted by the graph do not exactly hold, or because the parametrization
of factors is too simplistic. These approximations might be made out
of ignorance, due to a lack of knowledge about the domain being studied,
or deliberately because the true model might have too many degrees
of freedom to be fit with available data.

In the case of an approximate model, no ``true'' parameters exist.
The idea of marginal-based loss functions is to instead consider how
the model will be used. If one will compute marginals at test-time
-- perhaps for MPM inference (Section \ref{sub:Inference-Problems})
-- it makes sense to maximize the accuracy of these predictions. Further,
if one will use an approximate inference algorithm, it makes sense
to optimize the accuracy of the \emph{approximate} marginals. This
essentially fits into the paradigm of empirical risk minimization
\cite{EmpiricalRiskMinimizationofGraphicalModel,LearningConvexInference}.
The idea of training a probabilistic model using an alternative loss
to the likelihood goes back at least to Bahl et al. in the late 1980s
\cite{BahlEtAl}.

There is reason to think the likelihood is somewhat robust to model
mis-specification. In the infinite data limit, it finds the ``closest''
solution in the sense of KL-divergence since, if $q$ is the true
distribution, then
\begin{eqnarray*}
KL(q||p) & = & \text{const.}-\underset{q}{\mathbb{E}}\log p({\bf x};\boldsymbol{\theta}).
\end{eqnarray*}

\subsubsection{Univariate Logistic Loss}

The univariate logistic loss \cite{AlternativeObjectiveFunctionFor}
is defined by
\[
L(\boldsymbol{\theta},{\bf x})=-\sum_{i}\log\mu(x_{i};\boldsymbol{\theta}),
\]
where we use the notation $\mu$ to indicate that the loss is implicitly
defined with respect to the marginal predictions of some (possibly
approximate) algorithm, rather than the true marginals. This measures
the mean accuracy of all univariate marginals, rather than the joint
distribution. This loss can be seen as empirical risk minimization
of the KL-divergence between the true marginals and the predicted
ones, since
\begin{eqnarray*}
\sum_{i}KL(q_{i}||\mu_{i}) & = & \sum_{i}\sum_{x_{i}}q(x_{i})\log\frac{q(x_{i})}{\mu(x_{i};\boldsymbol{\theta})}\\
 & = & \text{const.}-\underset{q}{\mathbb{E}}\sum_{i}\log\mu(x_{i};\boldsymbol{\theta}).
\end{eqnarray*}
If defined on exact marginals, this is a type of composite likelihood
\cite{CompositeLikelihoods}.

\subsubsection{Smoothed Univariate Classification Error}

Perhaps the most natural loss in the conditional setting would be
the univariate classification error,

\[
L(\boldsymbol{\theta},{\bf x})=\sum_{i}S\bigl(\max_{x_{i}'\not=x_{i}}\mu(x_{i};\boldsymbol{\theta})-\mu(x_{i};\boldsymbol{\theta})\bigr),
\]
where $S(\cdot)$ is the step function. This exactly measures the
number of components of ${\bf x}$ that would be incorrectly predicted
if using MPM inference. Of course, this loss is neither differentiable
nor continuous, which makes it impractical to optimize using gradient-based
methods. Instead Gross et al. \cite{TrainingCRFsForMaximumLabelwiseAccuracy}
suggest approximating with a sigmoid function $S(t)=(1+\exp(\text{\textminus\ensuremath{\alpha}}t))^{-1}$,
where $\alpha$ controls approximation quality.

There is evidence \cite{EmpiricalRiskMinimizationofGraphicalModel,TrainingCRFsForMaximumLabelwiseAccuracy}
that the smoothed classification loss can yield parameters with lower
univariate classification error under MPM inference. However, our
experience is that it is also more prone to getting stuck in local
minima, making experiments difficult to interpret. Thus, it is not
included in the experiments below. Our experience with the univariate
quadratic loss \cite{Domke} is similar.

\subsubsection{Clique Losses}

Any of the above univariate losses can be instead taken based on cliques.
For example, the clique logistic loss is

\[
L(\boldsymbol{\theta},{\bf x})=-\sum_{c}\log\mu({\bf x}_{c};\boldsymbol{\theta}),
\]

\noindent which may be seen as empirical risk minimization of the
mean KL-divergence of the true clique marginals to the predicted ones.
An advantage of this with an exact model is consistency. Simple examples
show cases where a model predicts perfect univariate marginals, despite
the joint distribution being very inaccurate. However, if all clique
marginals are correct, the joint must be correct, by the standard
moment matching conditions for the exponential family \cite{WainwrightJordanMonster}.

\subsubsection{Hidden variables}

Marginal-based loss functions can accommodate hidden variables by
simply taking the sum in the loss over the \emph{observed} variables
only. A similar approach can be used with the pseudolikelihood or
piecewise likelihood.

\subsection{Comparison with Exact Inference}

\begin{figure}

\includegraphics[bb=31bp 179bp 558bp 575bp,clip,scale=0.165]{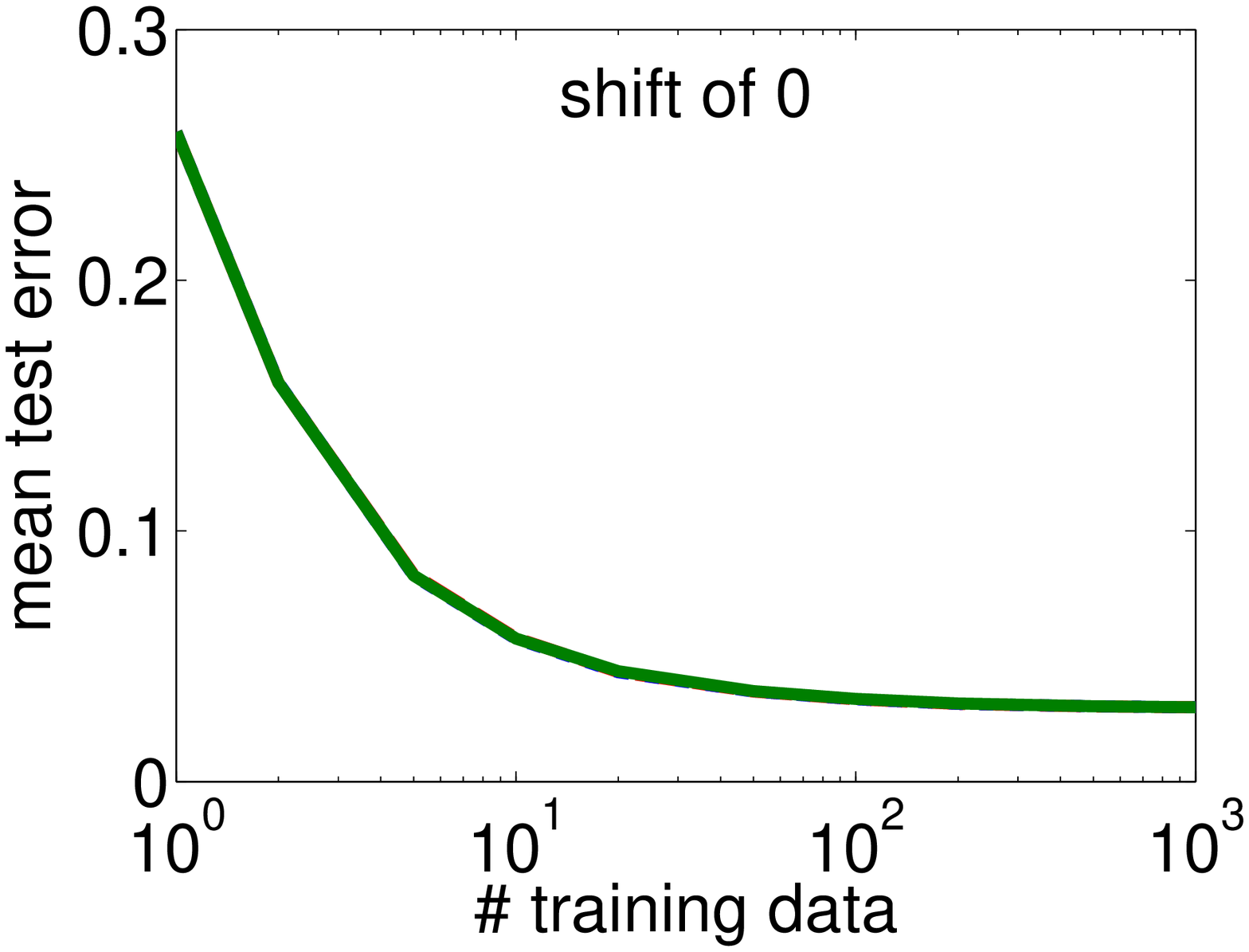}\includegraphics[bb=51bp 179bp 558bp 575bp,clip,scale=0.165]{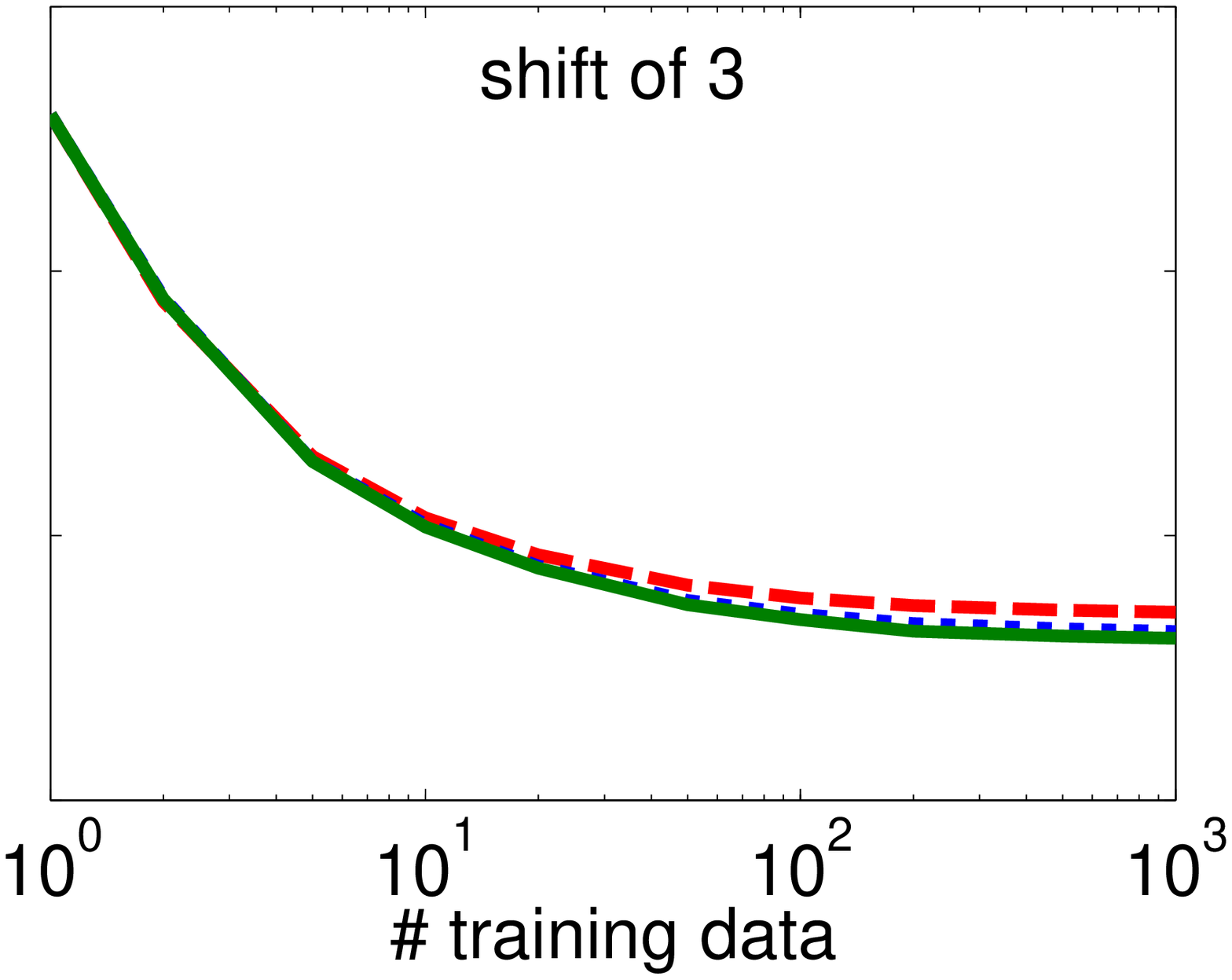}\includegraphics[bb=51bp 179bp 558bp 575bp,clip,scale=0.165]{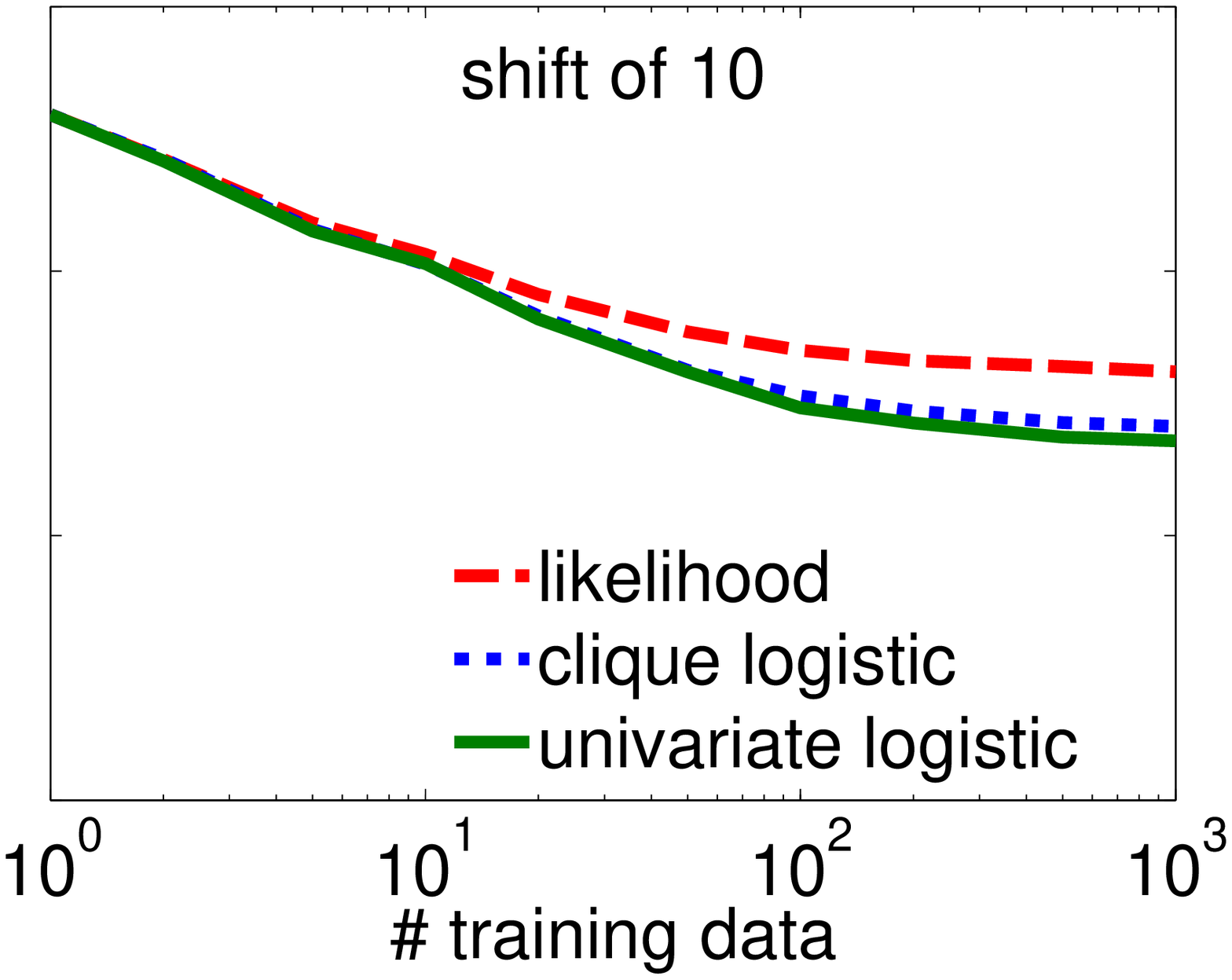} 

\caption{Mean test error of various loss functions trained with exact inference.
In the case of a well-specified model (shift of zero), the likelihood
performs essentially identically to the marginal-based loss functions.
However, when mis-specification is introduced, quite different estimates
result. \label{fig:exactinference-means-smaller}}
\end{figure}
\begin{figure*}
\psfrag{Pygivenx      }{$p(y_i=1|{\bf x})$}
\psfrag{i}{$i$}

\begin{centering}
\includegraphics[scale=0.4]{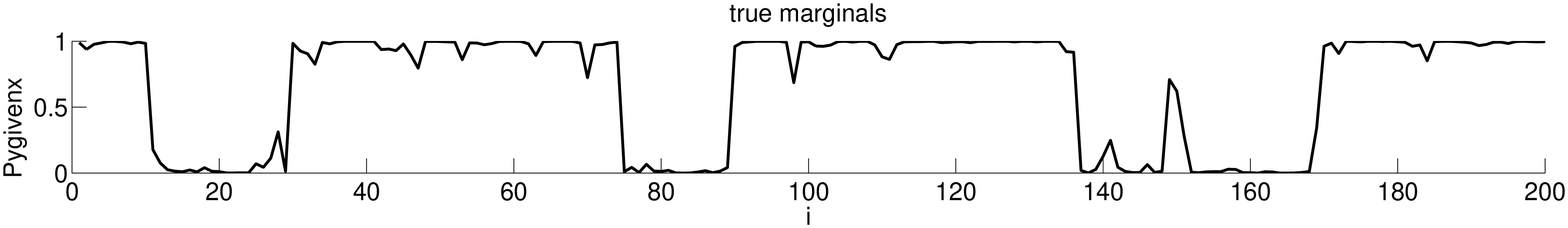}
\par\end{centering}

\begin{centering}
\vspace{1pt}

\par\end{centering}

\begin{centering}
\includegraphics[scale=0.4]{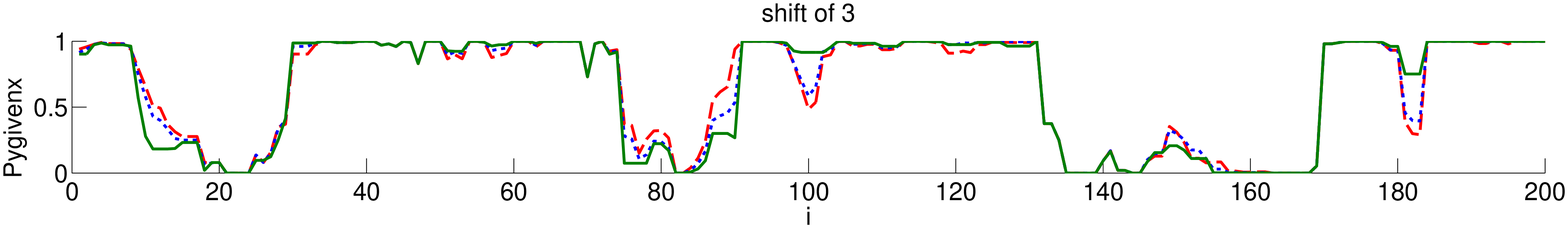}
\par\end{centering}

\begin{centering}
\vspace{1pt}

\par\end{centering}

\begin{centering}
\includegraphics[scale=0.4]{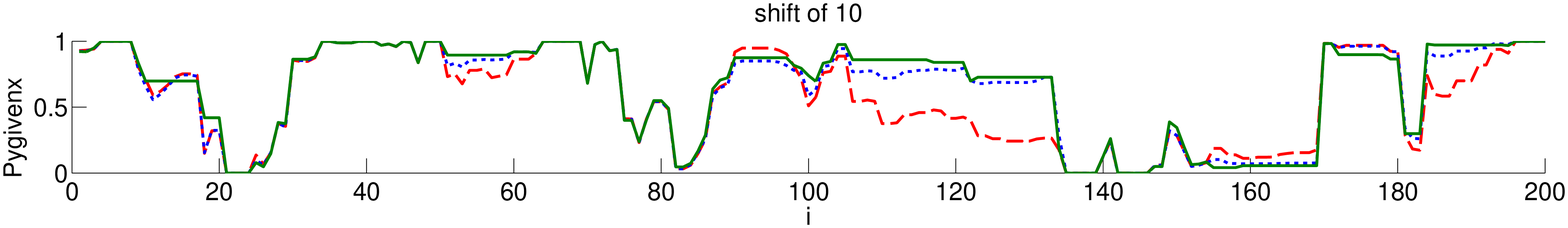}
\par\end{centering}

\caption{Exact and predicted marginals for an example input. Predicted marginals
are trained using 1000 data. With low shifts, all loss functions lead
to accurate predicted marginals. However, the univariate and clique
logistic loss are more resistant to the effects of model mis-specification.
Legends as in Fig. \ref{fig:exactinference-means-smaller}.\label{fig:exactinference-example} }
\end{figure*}
To compare the effects of different loss functions in the presence
of model mis-specification, this section contains a simple example
where the graphical model takes the following ``chain'' structure: 

\begin{center}
\begin{tikzpicture}[scale=1.2]
\node at (3.5, -.5) {$\hdots$};
\tikzstyle{every node}=[draw,shape=circle];
\node (x1) at (0,  0)  {$x_1$};
\node (x2) at (1,  0)  {$x_2$};
\node (x3) at (2,  0)  {$x_3$};
\node (x4) at (3,  0)  {$x_4$};
\node (x200) at (4,  0)  {$x_n$};
\node (y1) at (0, -1)  {$y_1$};
\node (y2) at (1, -1)  {$y_2$};
\node (y3) at (2, -1)  {$y_3$};
\node (y4) at (3, -1)  {$y_4$};
\node (y200) at (4, -1)  {$y_n$};
\draw (x1) -- (x2)
      (x2) -- (x3)
      (x3) -- (x4)
      (x1) -- (y1)
      (x2) -- (y2)
      (x3) -- (y3)
      (x4) -- (y4)
      (y1) -- (y2)
      (y2) -- (y3)
      (y3) -- (y4)
      (x200) -- (y200);
\end{tikzpicture}
\par\end{center}

Here, exact inference is possible, so comparison is not complicated
by approximate inference.

All variables are binary. Parameters are generated by taking $\theta(x_{i})$
randomly from the interval $[-1,+1]$ for all $i$ and $x_{i}$. Interaction
parameters are taken as $\theta(x_{i},x_{j})=t$ when $x_{i}=x_{j}$,
and $\theta(x_{i},x_{j})=-t$ when $x_{i}\not=x_{j}$, where $t$
is randomly chosen from the interval $[-1,+1]$ for all $(i,j)$.
Interactions $\theta(y_{i},y_{j})$ and $\theta(x_{i},y_{i})$ are
chosen in the same way.

To systematically study the effects of differing ``amounts'' of
mis-specification, after generating data, we apply various circular
shifts to ${\bf x}$. Thus, the data no longer corresponds exactly
the the structure of the graphical model being fit.

Thirty-two different random distributions were created. For each,
various quantities of data were generated by Markov chain Monte Carlo,
with shifts introduced after sampling. The likelihood was fit using
the closed-form gradient (Sec. \ref{sub:The-Likelihood}), while the
logistic losses were trained using a gradient obtained via backpropagation
(Sec. \ref{sec:Truncated-Fitting}). Fig. \ref{fig:exactinference-means-smaller} 
shows the mean test error (estimated on 1000 examples), while Fig.
\ref{fig:exactinference-example} shows example marginals. We see
that the performance of all methods deteriorates with mis-specification,
but the marginal-based loss functions are more resistant to these
effects.

\subsection{MAP-Based Training}

Another class of methods explicitly optimize the performance of MAP
inference \cite{DiscriminativeModelsForMultiClassObjectLayout,LearningCRFsUsingGraphCuts,HierarchicalImageRegionLabeling,SceneSegmentationViaLowDimensional,SceneUnderstandingWithDiscriminative}.
 This paper focuses on applications that use marginal inference,
and that may need to accommodate hidden variables, and so concentrates
on likelihood and marginal-based losses.

\section{Implicit Fitting}

We now turn to the issue of how to train high-treewidth graphical
models to optimize the performance of a marginal-based loss function,
based on some approximate inference algorithm. Now, computing the
value of the loss for any of the marginal-based loss functions is
not hard. One can simply run the inference algorithm and plug the
resulting marginal into the loss. However, we also require the gradient
$\frac{dL}{d\boldsymbol{\theta}}$.

Our first result is that the loss gradient can be obtained by solving
a sparse linear system. Here, it is useful to introduce notation to
distinguish the loss $L$, defined in terms of the parameters $\boldsymbol{\theta}$
from the loss $Q$, defined directly in terms of the marginals $\boldsymbol{\mu}$.
(Note that though the notation suggests the application to marginal
inference, this is a generic result.)
\begin{thm*}
Suppose that
\begin{equation}
\boldsymbol{\mu}(\boldsymbol{\theta}):=\underset{\boldsymbol{\mu}:B\boldsymbol{\mu}={\bf d}}{\arg\max}\,\,\boldsymbol{\theta}\cdot\boldsymbol{\mu}+H(\boldsymbol{\mu}).\label{eq:implicit_optimization}
\end{equation}
Define $L(\boldsymbol{\theta},{\bf x})=Q(\boldsymbol{\mu}(\boldsymbol{\theta}),{\bf x}).$
Then, letting $D=\frac{d^{2}H}{d\boldsymbol{\mu}d\boldsymbol{\mu}^{T}},$
\[
\frac{dL}{d\boldsymbol{\theta}}=\bigl(D^{-1}B^{T}(BD^{-1}B^{T})^{-1}BD^{-1}-D^{-1}\bigr)\frac{dQ}{d\boldsymbol{\mu}}.
\]

\end{thm*}
A proof may be found in Appendix B. This theorem states that, essentially,
once one has computed the predicted marginals, the gradient of the
loss with respect to marginals $\frac{dQ}{d\boldsymbol{\mu}}$ can
be transformed into the gradient of the loss with respect to parameters
$\frac{dL}{d\boldsymbol{\theta}}$ through the solution of a sparse
linear system.

The optimization in Eq. \ref{eq:implicit_optimization} takes place
under linear constraints, which encompasses the local polytope used
in TRW message-passing (Eq. \ref{eq:local-polytope}). This theorem
does not apply to mean field, as $\mathcal{F}$ is not a linear constraint
set when viewed as a function of both clique and univariate marginals.\textcolor{cyan}{}

In any case, the methods developed below are simpler to use, as they
do not require explicitly forming the constraint matrix $B$ or solving
the linear system.

\section{Perturbation}

This section observes that variational methods have a special structure
that allows derivatives to be calculated without explicitly forming
or inverting a linear system. We have, by the vector chain rule, that
\begin{equation}
\frac{dL}{d\boldsymbol{\theta}}=\frac{d\boldsymbol{\mu}^{T}}{d\boldsymbol{\theta}}\frac{dQ}{d\boldsymbol{\mu}}.\label{eq:loss_chainrule}
\end{equation}

A classic trick in scientific computing is to efficiently compute
Jacobian-vector products by finite differences. The basic result is
that, for any vector ${\bf v}$,
\[
\frac{d\boldsymbol{\mu}}{d\boldsymbol{\theta}^{T}}{\bf v}=\lim_{r\rightarrow0}\frac{1}{r}\bigl(\boldsymbol{\mu}(\boldsymbol{\theta}+r{\bf v})-\boldsymbol{\mu}(\boldsymbol{\theta})\bigr),
\]
which is essentially just the definition of the derivative of $\boldsymbol{\mu}$
in the direction of ${\bf v}$. Now, this does not immediately seem
helpful, since Eq. \ref{eq:loss_chainrule} requires $\frac{d\boldsymbol{\mu}^{T}}{d\boldsymbol{\theta}}$,
not $\frac{d\boldsymbol{\mu}}{d\boldsymbol{\theta}^{T}}$. However,
with variational methods, these are symmetric. The simplest way to
see this is to note that
\[
\frac{d\boldsymbol{\mu}}{d\boldsymbol{\theta}^{T}}=\frac{d}{d\boldsymbol{\theta}^{T}}\left(\frac{dA}{d\boldsymbol{\theta}}\right)=\frac{dA}{d\boldsymbol{\theta}d\boldsymbol{\theta}^{T}}.
\]

\noindent Domke \cite{ImplicitDifferentiationByPerturbation} lists
conditions for various classes of entropies that guarantee that $A$
will be differentiable.

Combining the above three equations, the loss gradient is available
as the limit

\begin{equation}
\frac{dL}{d\boldsymbol{\theta}}=\lim_{r\rightarrow0}\frac{1}{r}\bigl(\boldsymbol{\mu}(\boldsymbol{\theta}+r\frac{dQ}{d\boldsymbol{\mu}})-\boldsymbol{\mu}(\boldsymbol{\theta})\bigr).\label{eq:implicit_limit}
\end{equation}

In practice, of course, the gradient is approximated using some finite
$r$. The simplest approximation, one-sided differences, simply takes
a single value of $r$ in Eq. \ref{eq:implicit_limit}, rather than
a limit. More accurate results at the cost of more calls to inference,
are given using two-sided differences, with
\[
\frac{dL}{d\boldsymbol{\theta}}\approx\frac{1}{2r}\bigl(\boldsymbol{\mu}(\boldsymbol{\theta}+r\frac{dQ}{d\boldsymbol{\mu}})-\boldsymbol{\mu}(\boldsymbol{\theta}-r\frac{dQ}{d\boldsymbol{\mu}})\bigr),
\]
which is accurate to order $o(r^{2}).$ Still more accurate results
are obtained with ``four-sided'' differences, with
\begin{multline*}
\frac{dL}{d\boldsymbol{\theta}}\approx\frac{1}{12r}\bigl(-\boldsymbol{\mu}(\boldsymbol{\theta}+2r\frac{dQ}{d\boldsymbol{\mu}})+8\boldsymbol{\mu}(\boldsymbol{\theta}+r\frac{dQ}{d\boldsymbol{\mu}})\\
-8\boldsymbol{\mu}(\boldsymbol{\theta}-r\frac{dQ}{d\boldsymbol{\mu}})+\boldsymbol{\mu}(\boldsymbol{\theta}-2r\frac{dQ}{d\boldsymbol{\mu}})\bigr),
\end{multline*}
which is accurate to order $o(r^{4})$ \cite{ApproximateSolutionMethods}.

Alg. \ref{alg:Calculating-loss-derivatives-2sided} shows more explicitly
how the loss gradient could be calculated, using two-sided differences.

The issue remains of how to calculate the step size $r$. Each of
the approximations above becomes exact as $r\rightarrow0$. However,
as $r$ becomes very small, numerical error eventually dominates.
To investigate this issue experimentally, we generated random models
on a $10\times10$ binary grid, with each parameter $\theta(x_{i})$
randomly chosen from a standard normal, while each interaction parameter
$\theta(x_{i},x_{j})$ was chosen randomly from a normal with a standard
deviation of $s$. In each case, a random value ${\bf x}$ was generated,
and the ``true'' loss gradient was estimated by standard (inefficient)
2-sided finite differences, with inference re-run after each component
of $\boldsymbol{\theta}$ is perturbed independently. To this, we
compare one, two, and four-sided perturbations. In all cases, the
step size is, following Andrei \cite{AcceleratedConjugateGradientAlgorithm},
taken to be $r=m\epsilon^{\frac{1}{3}}\bigl(1+||\boldsymbol{\theta}||_{\infty}\bigr)/||\frac{dQ}{d\boldsymbol{\mu}}||_{\infty},$
where $\epsilon$ is machine epsilon, and $m$ is a multiplier that
we will vary. Note that the optimal power of $\epsilon$ will depend
on the finite difference scheme; $\frac{1}{3}$ is optimal for two-sided
differences \cite[Sec. 8.1]{NocedalWright}. All calculations take
place in double-precision with inference run until marginals changed
by a threshold of less than $10^{-15}$. Fig. \ref{fig:Perurbation-Sizes}
shows that using many-sided differences leads to more accuracy, at
the cost of needing to run inference more times to estimate a single
loss gradient. In the following experiments, we chose two-sided differences
with a multiplier of 1 as a reasonable tradeoff between accuracy,
simplicity, and computational expense.

Welling and Teh used sensitivity of approximate beliefs to parameters
to approximate joint probabilities of non-neighboring variables \cite{WellingTeh}.

\begin{algorithm}
\begin{enumerate}
\item Do inference. ${\displaystyle \boldsymbol{\mu}^{*}\leftarrow\arg\max_{\boldsymbol{\mu}\in\mathcal{M}}\boldsymbol{\theta}\cdot\boldsymbol{\mu}+H(\boldsymbol{\mu})}$
\item At $\boldsymbol{\mu}^{*}$, calculate the gradient ${\displaystyle \frac{dQ}{d\boldsymbol{\mu}}}$.\medskip{}

\item Calculate a perturbation size $r$.
\item Do inference on perturbed parameters.

\begin{enumerate}
\item ${\displaystyle \boldsymbol{\mu}^{+}\leftarrow\arg\max_{\boldsymbol{\mu}\in\mathcal{M}}(\boldsymbol{\theta}+r\frac{dQ}{d\boldsymbol{\mu}})\cdot\boldsymbol{\mu}+H(\boldsymbol{\mu})}$
\item ${\displaystyle \boldsymbol{\mu}^{-}\leftarrow\arg\max_{\boldsymbol{\mu}\in\mathcal{M}}(\boldsymbol{\theta}-r\frac{dQ}{d\boldsymbol{\mu}})\cdot\boldsymbol{\mu}+H(\boldsymbol{\mu})}$
\end{enumerate}
\item Recover full derivative as~~~${\displaystyle \frac{dL}{d\boldsymbol{\theta}}\leftarrow\frac{1}{2r}(\boldsymbol{\mu}^{+}-\boldsymbol{\mu}^{-})}$.
\end{enumerate}
\caption{Calculating $\frac{dL}{d\boldsymbol{\theta}}$ by perturbation (two-sided).\label{alg:Calculating-loss-derivatives-2sided}}
\end{algorithm}
\begin{figure}


\begin{centering}
\includegraphics[bb=18bp 179bp 540bp 604bp,clip,scale=0.18]{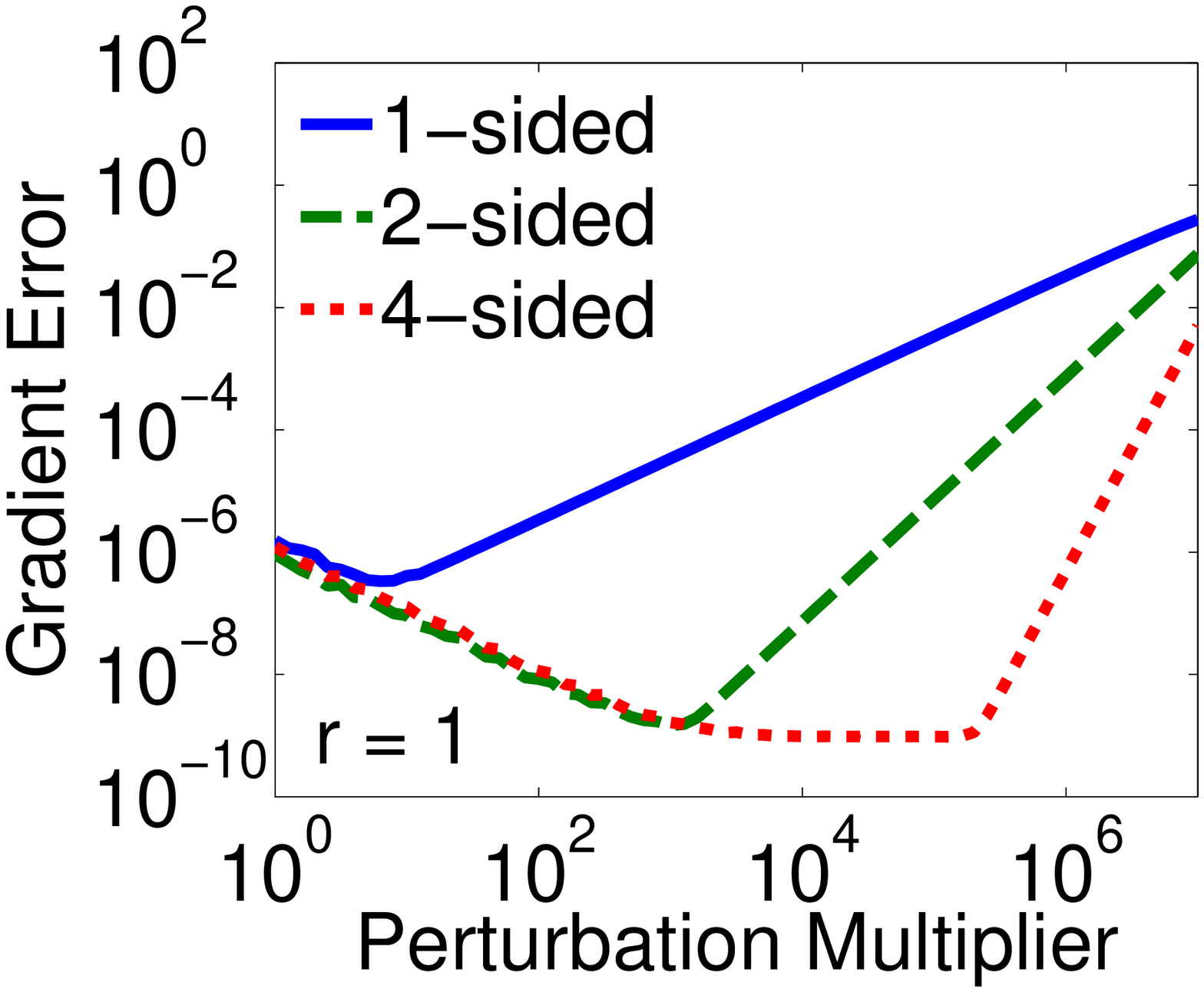}\includegraphics[bb=110bp 179bp 540bp 604bp,clip,scale=0.18]{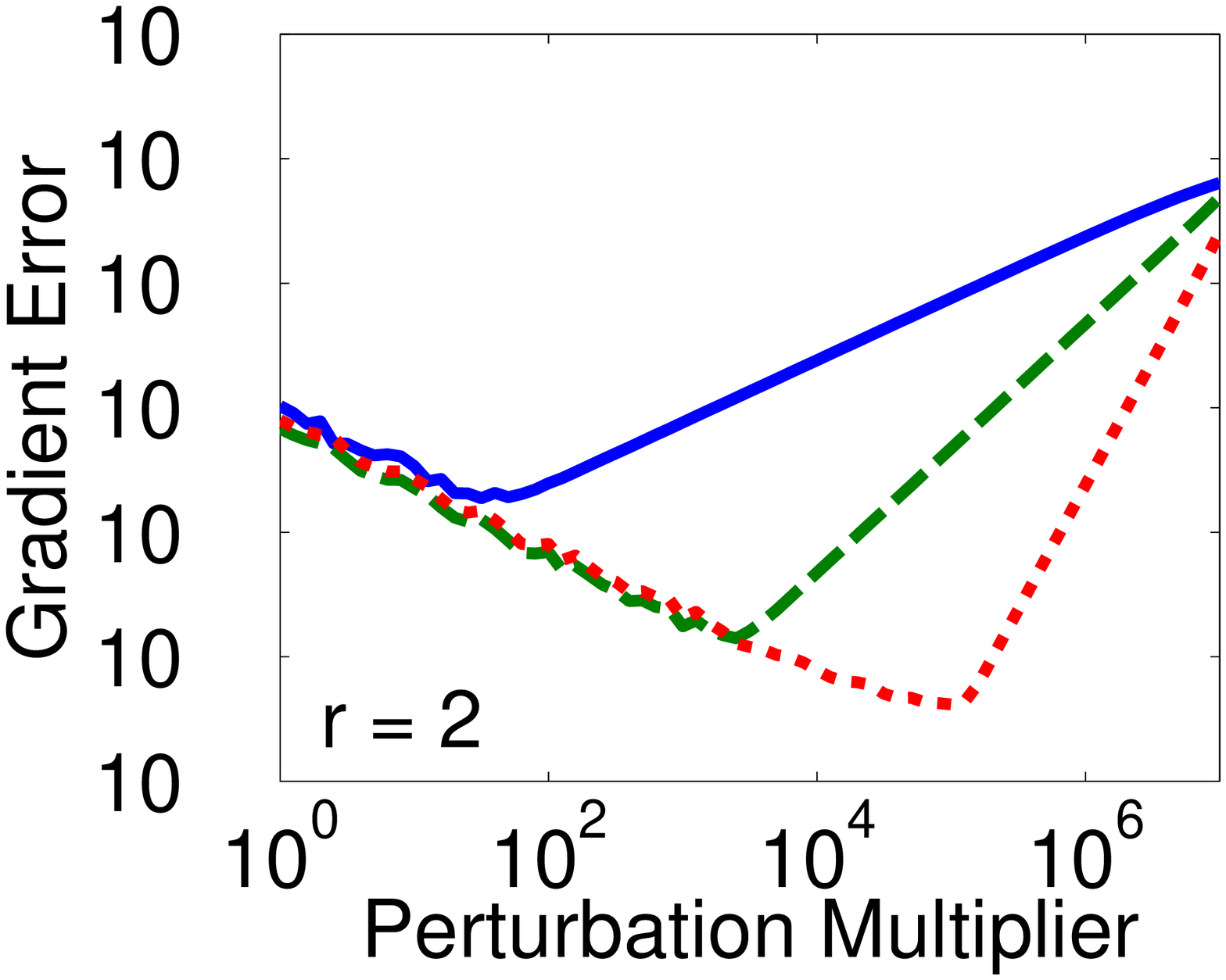}\includegraphics[bb=110bp 179bp 540bp 604bp,clip,scale=0.18]{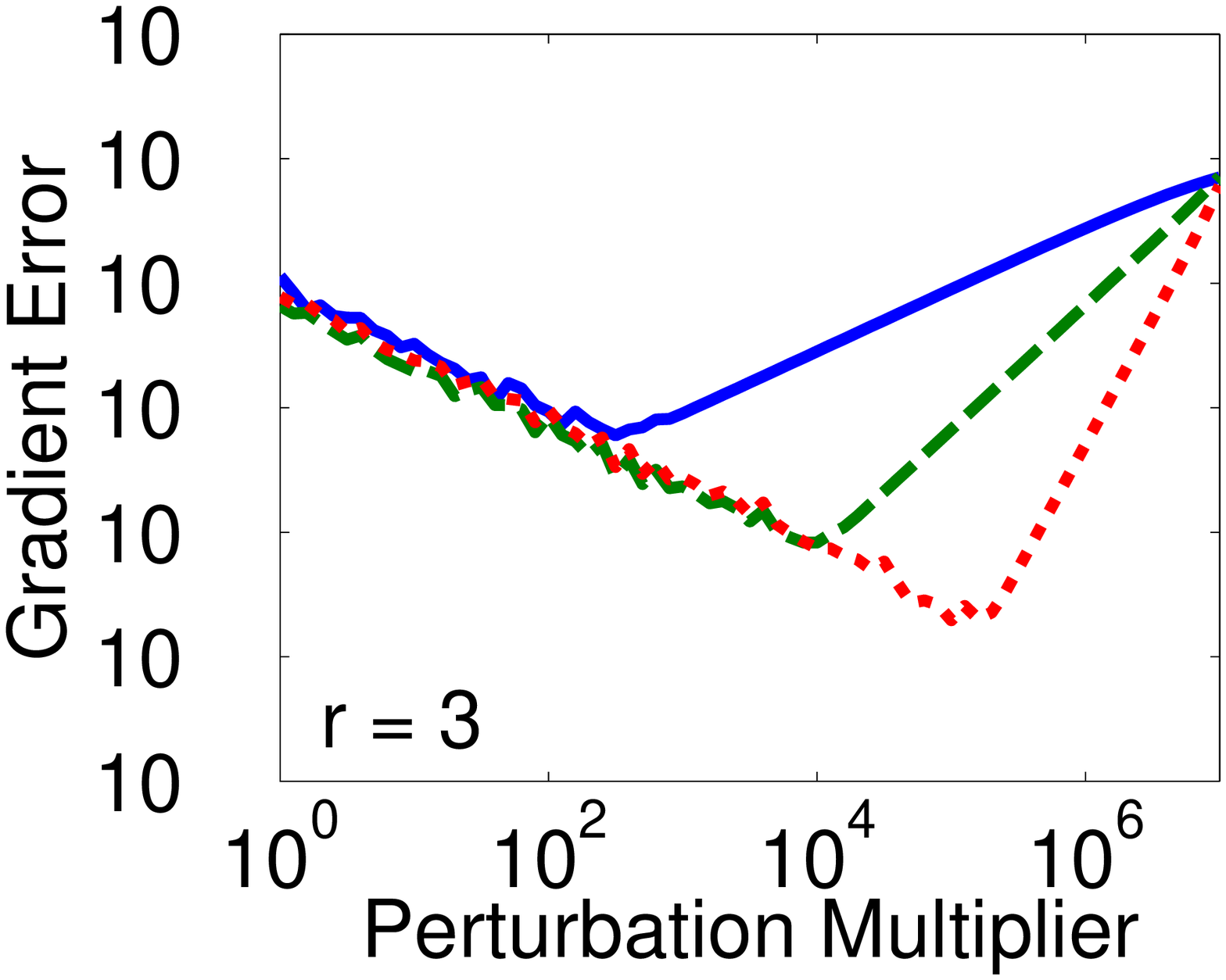}
\par\end{centering}

\begin{centering}
\includegraphics[bb=18bp 179bp 540bp 604bp,clip,scale=0.18]{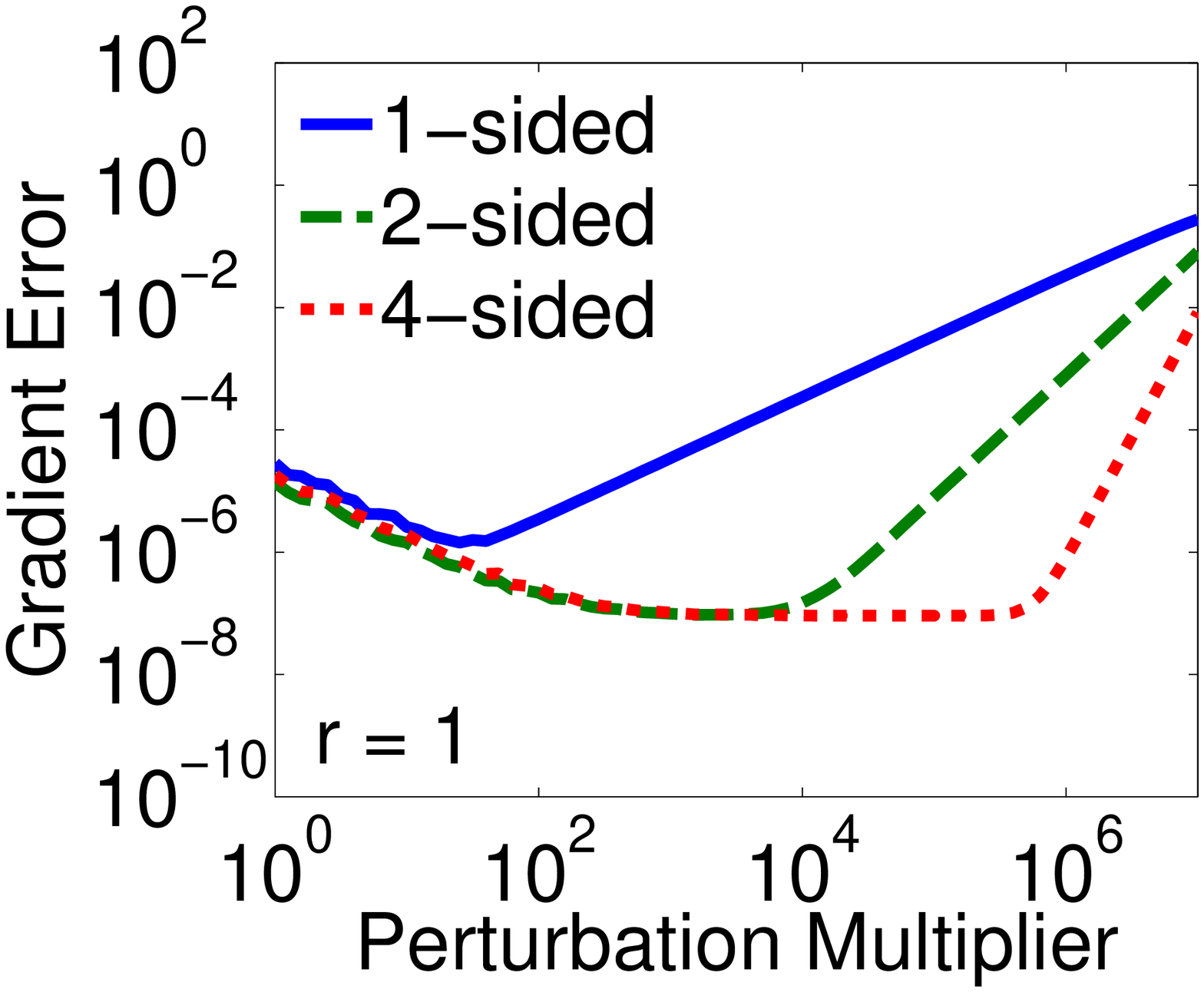}\includegraphics[bb=110bp 179bp 540bp 604bp,clip,scale=0.18]{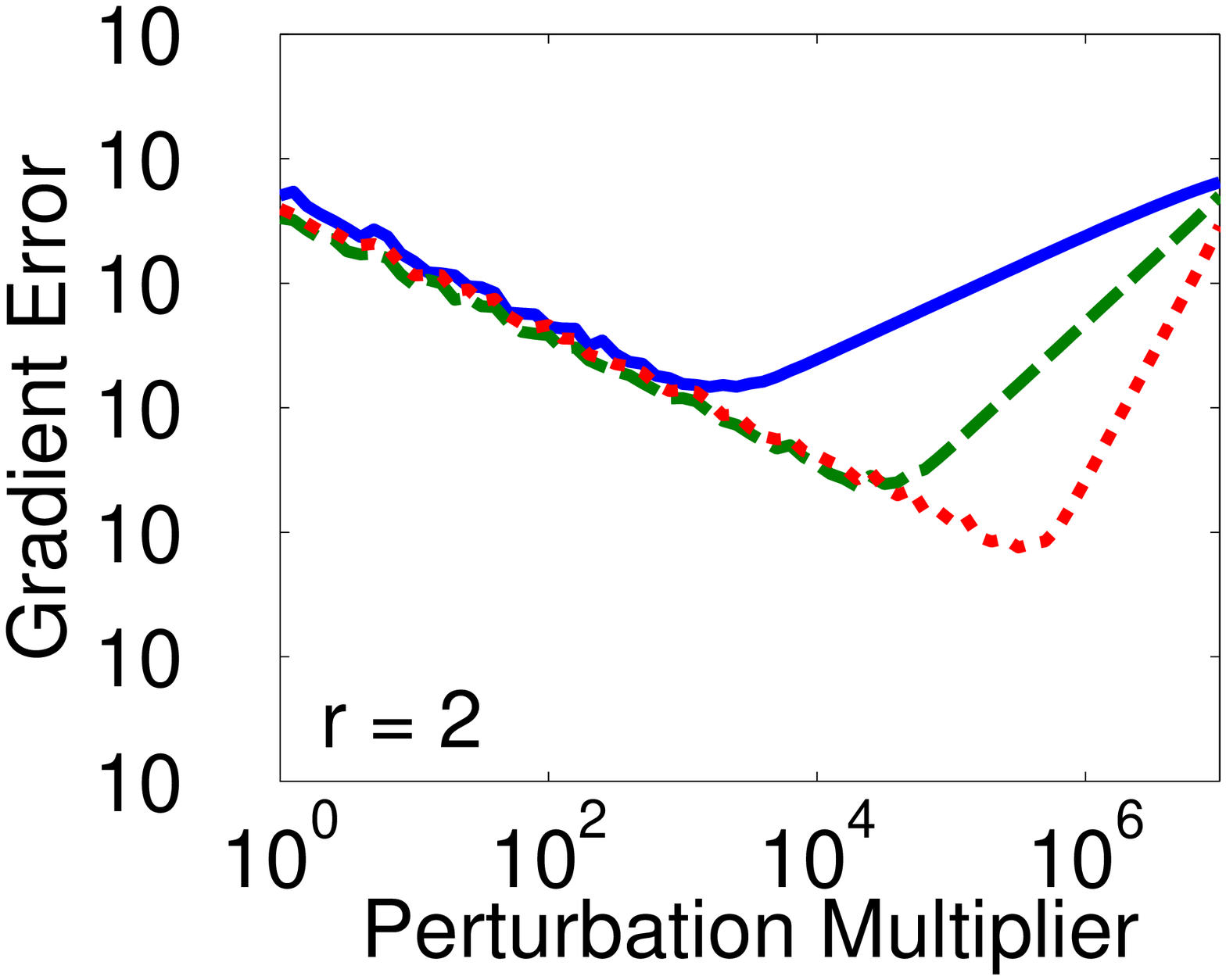}\includegraphics[bb=110bp 179bp 540bp 604bp,clip,scale=0.18]{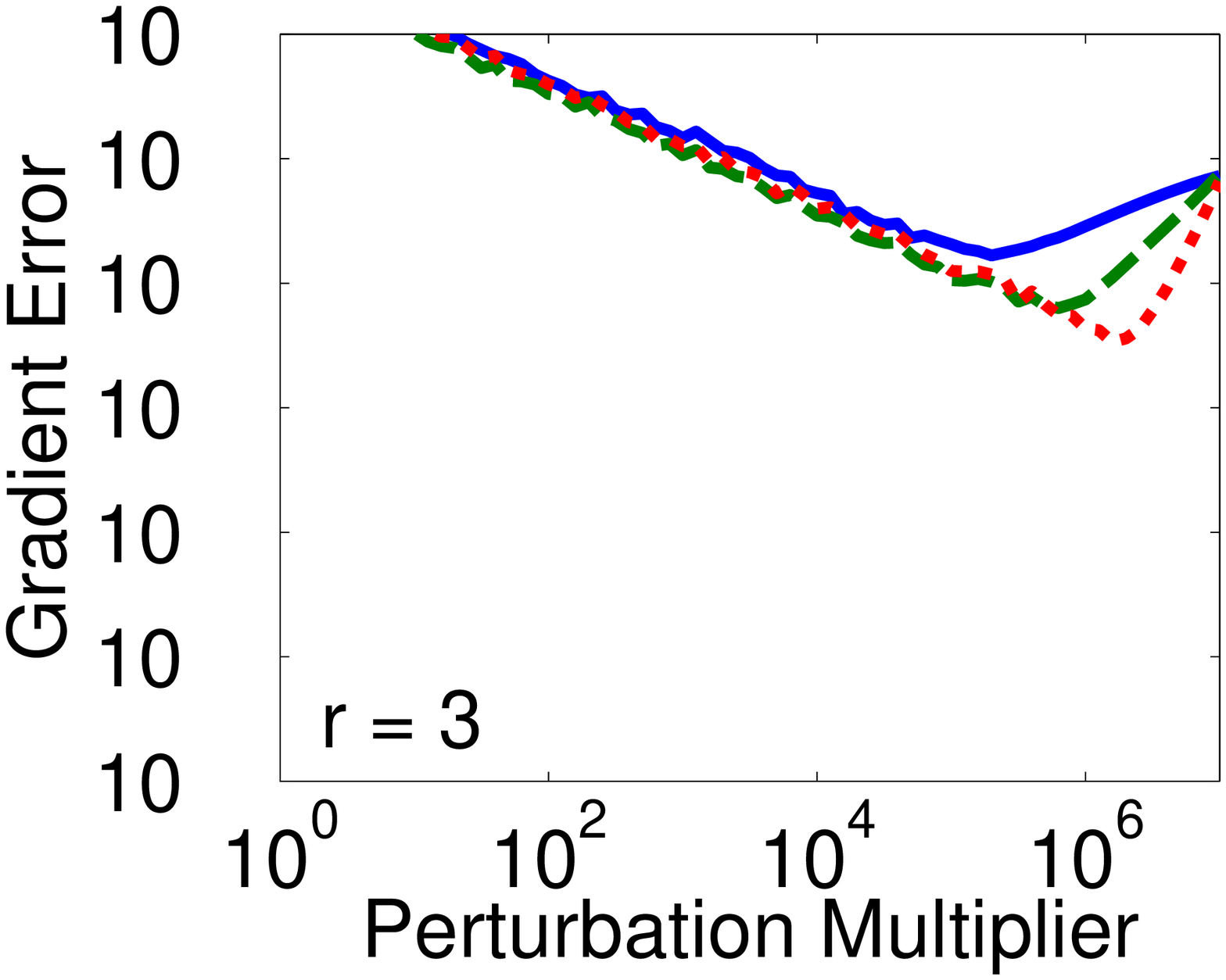}
\par\end{centering}

\centering{}\caption{An evaluation of perturbation multipliers $m$. Top: TRW. Bottom:
Mean field. Two effects are in play here: First, for too small a perturbation,
numerical errors dominate. Meanwhile, for too large a perturbation,
approximation errors dominate. We see that using 2- or 4-sided differences
differences reduce approximation error, leading to better results
with larger perturbations.\label{fig:Perurbation-Sizes}}
\end{figure}

\section{Truncated Fitting\label{sec:Truncated-Fitting}}

The previous methods for computing loss gradients are derived under
the assumption that the inference optimization is solved exactly.
In an implementation, of course, some convergence threshold must be
used. 

Different convergence thresholds can be used in the learning stage
and at test time. In practice, we have observed that too loose a threshold
in the learning stage can lead to a bad estimated risk gradient, and
learning terminating with a bad search direction. Meanwhile, a loose
threshold can often be used at test time with few consequences. Usually,
a difference of $10^{-3}$ in estimated marginals has little practical
impact, but this can still be enough to prevent learning from succeeding
\cite{DomkeCVPR2011}.

It seems odd that the learning algorithm would spend the majority
of computational effort exploring tight convergence levels that are
irrelevant to the practical performance of the model. Here, we \emph{define}
the learning objective in terms of the approximate marginals obtained
after a fixed number of iterations. To understand this, one may think
of the inference process not as an optimization, but rather as a large,
nonlinear function. This clearly leads to a well-defined objective
function. Inputting parameters, applying the iterations of either
TRW or mean field, computing predicted marginals, and finally a loss
are all differentiable operations. Thus, the loss gradient is efficiently
computable, at least in principle, by reverse-mode automatic differentiation
(autodiff), an approach explored by Stovanov et al. \cite{EmpiricalRiskMinimizationofGraphicalModel,stoyanov-eisner-2012-naacl}.
In preliminary work, we experimented with autodiff tools, but found
these to be unsatisfactory for our applications for two reasons. Firstly,
these tools impose a computational penalty over manually derived gradients.
Secondly, autodiff stores all intermediate calculations, leading to
large memory requirements. The methods derived below use less memory,
both in terms of constant factors and big-O complexity. Nevertheless,
some of these problems are issues with current \emph{implementations}
of reverse-mode autodiff, avoidable in theory.

Both mean field and TRW involve steps where we first take a product
of a set of terms, and then normalize. We define a ``backnorm''
operator, which is useful in taking derivatives over such operations,
by
\[
\text{backnorm}({\bf g},{\bf c})={\bf c}\odot({\bf g}-{\bf g}\cdot{\bf c}).
\]

\noindent This will be used in the algorithms here. More discussion
on this point can be found in Appendix C.

\subsection{Back Mean Field}

\begin{algorithm}
\begin{enumerate}
\item \setlength{\itemsep}{2pt}Initialize $\mu$ uniformly.
\item \noindent Repeat ${\displaystyle N}$ times for all $j$:

\begin{enumerate}
\item \noindent \setlength{\itemsep}{5pt}Push the marginals $\mu_{j}$
onto a stack.
\item \noindent ${\mu(x_{j})\propto\exp\bigl(\theta(x_{j})+\underset{c:j\in c}{\sum}\,\underset{{\bf x}_{c\backslash j}}{\sum}\theta({\bf x}_{c})\underset{i\in c\backslash j}{\prod}\mu(x_{i})\bigr)}$
\end{enumerate}
\item \noindent Compute $L$, $\overleftarrow{\mu}(x_{j})=\frac{dL}{d\mu(x_{j})}$
and $\overleftarrow{\mu}({\bf x}_{c})=\frac{dL}{d\mu({\bf x}_{c})}$.
\item \noindent Initialize $\overleftarrow{\theta}(x_{i})\leftarrow0,\,\overleftarrow{\theta}({\bf x}_{c})\leftarrow0.$
\item \noindent Repeat $N$ times for all $j$ (in reverse order):

\begin{enumerate}
\item \noindent \setlength{\itemsep}{5pt}${\displaystyle \overleftarrow{\nu_{j}}\,\,\,\,\,\,\,\,\,\,\leftarrow\text{backnorm}(\overleftarrow{\mu_{j}},\mu_{j})}$
\item \noindent $\overleftarrow{\theta}(x_{j})\leftarrow\overleftarrow{\theta}(x_{j})+\overleftarrow{\nu}(x_{j})$
\item \noindent $\overleftarrow{\theta}({\bf x}_{c})\leftarrow\overleftarrow{\theta}({\bf x}_{c})+\overleftarrow{\nu}(x_{j})\underset{i\in c\backslash j}{\prod}\mu(x_{i})$\hfill{}${\displaystyle \forall c:j\in c}$
\item \noindent $\overleftarrow{\mu}(x_{i})\leftarrow\overleftarrow{\mu}(x_{i})+\underset{{\bf x}_{c\backslash i}}{\sum}\overleftarrow{\nu}(x_{j})\theta({\bf x}_{c})\underset{k\in c\backslash\{i,j\}}{\prod}\mu(x_{k})$\vspace{5pt}\\
\hphantom{1pt}\hfill{}${\displaystyle \forall c:j\in c,\,\forall i\in c\backslash j}$
\item Pull marginals $\mu_{j}$ from the stack.
\item $\overleftarrow{\mu_{j}}(x_{j})\leftarrow0$
\end{enumerate}
\end{enumerate}
\caption{Back Mean Field\label{alg:Back-Mean-Field}}
\end{algorithm}

The first backpropagating inference algorithm, back mean field, is
shown as Alg. \ref{alg:Back-Mean-Field}. The idea is as follows:
Suppose we start with uniform marginals, run $N$ iterations of mean
field, and then-- regardless of if mean field has converged or not--
take predicted marginals and plug them into one of the marginal-based
loss functions. Since each step in this process is differentiable,
this specifies the loss as a differentiable function of model parameters.
We want the exact gradient of this function.
\begin{thm*}
After execution of back mean field, 
\[
\overleftarrow{\theta}(x_{i})=\frac{dL}{d\theta(x_{i})}\text{ and }\overleftarrow{\theta}({\bf x}_{c})=\frac{dL}{d\theta({\bf x}_{c})}.
\]

\end{thm*}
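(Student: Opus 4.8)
The plan is to treat the forward pass of Algorithm~\ref{alg:Back-Mean-Field} as a composition of elementary differentiable maps and then verify that the reverse pass is exactly reverse-mode automatic differentiation on that composition. The key is to unroll the $N$ sweeps into a static sequence of updates $U_1,\dots,U_T$, giving each overwrite of a marginal its own node in the computation graph, so that the in-place vector $\boldsymbol{\mu}$ is really a sequence of distinct intermediate variables. Each update $U_t$ acting on coordinate $j$ factors as two operations: first the affine-in-$\theta$, multilinear-in-the-other-marginals map
\[
a(x_j)=\theta(x_j)+\sum_{c:j\in c}\sum_{{\bf x}_{c\backslash j}}\theta({\bf x}_c)\prod_{i\in c\backslash j}\mu(x_i),
\]
and then the softmax $\mu(x_j)=\exp\bigl(a(x_j)\bigr)/\sum_{x_j'}\exp\bigl(a(x_j')\bigr)$.

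First I would verify the backnorm step. Writing the softmax Jacobian as $\partial\mu(x_j)/\partial a(x_j')=\mu(x_j)\bigl(\delta_{x_jx_j'}-\mu(x_j')\bigr)$, a direct computation gives, for any incoming adjoint $\overleftarrow{\mu_j}=dL/d\mu_j$, that $dL/da_j=\mu_j\odot(\overleftarrow{\mu_j}-\overleftarrow{\mu_j}\cdot\mu_j)=\text{backnorm}(\overleftarrow{\mu_j},\mu_j)$, which is precisely step (a). Next I would read off the partials of $a(x_j)$ with respect to its inputs, namely $\partial a(x_j)/\partial\theta(x_j)=1$, $\partial a(x_j)/\partial\theta({\bf x}_c)=\prod_{i\in c\backslash j}\mu(x_i)$, and $\partial a(x_j)/\partial\mu(x_i)=\sum_{{\bf x}_{c\backslash i}}\theta({\bf x}_c)\prod_{k\in c\backslash\{i,j\}}\mu(x_k)$ for $i\in c\backslash j$, and check term by term that multiplying each by the adjoint $\overleftarrow{\nu}(x_j)$ and summing over $x_j$ reproduces exactly steps (b), (c), and (d). This establishes that one backward step correctly applies the transposed local Jacobian of one $U_t$.

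The heart of the argument is the bookkeeping forced by the in-place update, and this is where I expect the main difficulty. I would prove by induction on the reverse iteration the invariant that, at the moment the algorithm begins processing $U_t$, the stored marginals $\boldsymbol{\mu}$ hold the \emph{output} state of $U_t$ and $\overleftarrow{\mu_j}$ holds the full accumulated adjoint $dL/d\mu_j^{\text{(new)}}$ of the version of coordinate $j$ produced by $U_t$. The backnorm and the three accumulation steps then correctly push gradient onto $\theta$ and onto the adjoints of the marginals read by $U_t$; pulling $\mu_j$ from the stack restores the \emph{input} state of $U_t$ (which, since $U_t$ changes only coordinate $j$, differs from the output state only there), re-establishing the $\boldsymbol{\mu}$ half of the invariant for $U_{t-1}$; and the reset $\overleftarrow{\mu_j}(x_j)\leftarrow0$ re-establishes the adjoint half, because the earlier version $\mu_j^{\text{(old)}}$ is a distinct graph node whose adjoint starts at zero and is subsequently accumulated only by those updates $U_s$ with $s<t$ that read it.

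Finally, since $\theta(x_i)$ and $\theta({\bf x}_c)$ are genuine shared inputs feeding every update in which they appear, their adjoints are accumulated, never reset, across all of $U_T,\dots,U_1$; by the chain rule over the unrolled graph this total is exactly $dL/d\theta(x_i)$ and $dL/d\theta({\bf x}_c)$, which is the claim. The only real subtlety beyond routine verification is confirming that the reset in step (f) discards no contribution, i.e.\ that every path from $\mu_j^{\text{(new)}}$ to $L$ has already been traversed by the time the reset occurs; this holds because $U_t$ is the unique update producing that version and it is processed first among all updates reading it in reverse order.
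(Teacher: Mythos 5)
Your proposal is correct and follows essentially the same route as the paper's proof sketch: both treat the inference sweeps as a composition of differentiable normalized-product (softmax) steps, identify $\text{backnorm}$ as the adjoint of the normalization, and read off the partials of the pre-normalization quantity with respect to $\theta(x_j)$, $\theta({\bf x}_c)$, and the neighboring $\mu(x_i)$ to match steps (b)--(d). Your explicit induction on the unrolled computation graph justifying the stack pull and the reset $\overleftarrow{\mu_j}(x_j)\leftarrow 0$ is a welcome tightening of bookkeeping that the paper's sketch leaves implicit, but it is a refinement of the same argument rather than a different one.
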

A proof sketch is in Appendix C. Roughly speaking, the proof takes
the form of a mechanical differentiation of each step of the inference
process.

Note that, as written, back mean field only produces univariate marginals,
and so cannot cope with loss functions making use of clique marginals.
However, with mean field, the clique marginals, are simply the product
of  univariate marginals: $\mu({\bf x}_{c})=\prod_{i\in c}\mu(x_{i})$.
Hence, any loss defined on clique marginals can equivalently be defined
on univariate marginals.

\subsection{Back TRW}

\begin{algorithm}
\hspace{-10pt}%
\begin{minipage}[t]{1\columnwidth}%
\begin{enumerate}
\item \setlength{\itemsep}{2pt}Initialize $m$ uniformly.
\item \noindent Repeat ${\displaystyle N}$ times for all pairs $(c,i)$,
with $i\in c$:

\begin{enumerate}
\item \noindent Push the messages $m_{c}(x_{i})$ onto a stack.
\item \noindent $m_{c}(x_{i})\propto\sum_{{\bf x}_{c\backslash i}}e^{\frac{1}{\rho_{c}}\theta({\bf x}_{c})}\prod_{j\in c\backslash i}e^{\theta(x_{j})}\frac{\underset{d:j\in d}{\prod}m_{d}(x_{j})^{\rho_{d}}}{m_{c}(x_{j})}$
\end{enumerate}
\item \hspace{-8pt}%
\begin{tabular}{>{\centering}p{0.3in}>{\centering}p{0.01in}c}
$\mu({\bf x}_{c})$ & $\propto$ & $e^{\frac{1}{\rho_{c}}\theta({\bf x}_{c})}\prod_{i\in c}e^{\theta(x_{i})}\frac{\underset{d:i\in d}{\prod}m_{d}(x_{i})^{\rho_{d}}}{m_{c}(x_{i})}$\tabularnewline
\end{tabular}\hfill{}$\forall c$
\item \hspace{-8pt}%
\begin{tabular}{>{\centering}p{0.3in}>{\centering}p{0.01in}c}
$\mu(x_{i})$ & $\propto$ & $e^{\theta(x_{i})}\prod_{d:i\in d}m_{d}(x_{i})^{\rho_{d}}$\tabularnewline
\end{tabular}\hfill{}$\forall i$
\item \noindent Compute $L$, $\overleftarrow{\mu}(x_{i})=\frac{dL}{d\mu(x_{i})}$
and $\overleftarrow{\mu}({\bf x}_{c})=\frac{dL}{d\mu({\bf x}_{c})}$.
\item For all $c,$

\begin{enumerate}
\item \setlength{\itemsep}{5pt}\hspace{-8pt}%
\begin{tabular}{>{\centering}p{0.3in}>{\centering}p{0.01in}c}
$\overleftarrow{\nu}({\bf x}_{c})$ & $\leftarrow$ & $\text{backnorm}(\overleftarrow{\mu_{c}},\mu_{c})$\tabularnewline
\end{tabular}
\item \hspace{-8pt}%
\begin{tabular}{>{\centering}p{0.3in}>{\centering}p{0.01in}c}
$\overleftarrow{\theta}({\bf x}_{c})$ & $\overset{+}{\leftarrow}$ & $\frac{1}{\rho_{c}}\overleftarrow{\nu}({\bf x}_{c})$\tabularnewline
\end{tabular}
\item \hspace{-8pt}%
\begin{tabular}{>{\centering}p{0.3in}>{\centering}p{0.01in}c}
$\overleftarrow{\theta}(x_{i})$ & $\overset{+}{\leftarrow}$ & $\sum_{{\bf x}_{c\backslash i}}\overleftarrow{\nu}({\bf x}_{c})$\tabularnewline
\end{tabular}\hfill{}$\forall i\in c$
\item \hspace{-8pt}%
\begin{tabular}{>{\centering}p{0.3in}>{\centering}p{0.01in}c}
$\overleftarrow{m_{d}}(x_{i})$ & $\overset{+}{\leftarrow}$ & $\frac{\rho_{d}-I_{c=d}}{m_{d}(x_{i})}\sum_{{\bf x}_{c\backslash i}}\overleftarrow{\nu}$\tabularnewline
\end{tabular}\hfill{}$\forall i\in c,\forall d:i\in d$
\end{enumerate}
\item For all $i$,

\begin{enumerate}
\item \setlength{\itemsep}{5pt}\hspace{-8pt}%
\begin{tabular}{>{\centering}p{0.3in}>{\centering}p{0.01in}c}
$\overleftarrow{\nu}(x_{i})$ & $\leftarrow$ & $\text{backnorm}(\overleftarrow{\mu_{i}},\mu_{i})$\tabularnewline
\end{tabular}
\item \hspace{-8pt}%
\begin{tabular}{>{\centering}p{0.3in}>{\centering}p{0.01in}c}
$\overleftarrow{\theta}(x_{i})$ & $\overset{+}{\leftarrow}$ & $\overleftarrow{\nu}(x_{i})$\tabularnewline
\end{tabular}
\item \hspace{-8pt}%
\begin{tabular}{>{\centering}p{0.3in}>{\centering}p{0.01in}c}
$\overleftarrow{m_{d}}(x_{i})$ & $\overset{+}{\leftarrow}$ & $\rho_{d}\frac{\overleftarrow{\nu}(x_{i})}{m_{d}(x_{i})}$\tabularnewline
\end{tabular}\hfill{}$\forall d:i\in d$
\end{enumerate}
\item \noindent Repeat $N$ times for all pairs $(c,i)$ (in reverse order)

\begin{enumerate}
\item \noindent \setlength{\itemsep}{5pt}\hspace{-8pt}%
\begin{tabular}{>{\centering}p{0.3in}>{\centering}p{0.01in}c}
$s({\bf x}_{c})$ & $\leftarrow$ & $e^{\frac{1}{\rho_{c}}\theta({\bf x}_{c})}\prod_{j\in c\backslash i}e^{\theta(x_{j})}\frac{\underset{d:j\in d}{\prod}m_{d}(x_{j})^{\rho_{d}}}{m_{c}(x_{j})}$\tabularnewline
\end{tabular}
\item \hspace{-8pt}%
\begin{tabular}{>{\centering}p{0.3in}>{\centering}p{0.01in}c}
$\overleftarrow{\nu}(x_{i})$ & $\leftarrow$ & $\text{backnorm}(\overleftarrow{m_{ci}},m_{ci})$\tabularnewline
\end{tabular}
\item \hspace{-8pt}%
\begin{tabular}{>{\centering}p{0.3in}>{\centering}p{0.01in}c}
$\overleftarrow{\theta}({\bf x}_{c})$ & $\overset{+}{\leftarrow}$ & $\frac{1}{\rho_{c}}s({\bf x}_{c})\frac{\overleftarrow{\nu}(x_{i})}{m_{c}(x_{i})}$\tabularnewline
\end{tabular}
\item \hspace{-8pt}%
\begin{tabular}{>{\centering}p{0.3in}>{\centering}p{0.01in}c}
$\overleftarrow{\theta}(x_{j})$ & ${\displaystyle \overset{+}{\leftarrow}}$ & $\sum_{{\bf x}_{c\backslash j}}s({\bf x}_{c})\frac{\overleftarrow{\nu}(x_{i})}{m_{c}(x_{i})}$\tabularnewline
\end{tabular}\hfill{}$\forall j\in c\backslash i$
\item \hspace{-8pt}%
\begin{tabular}{>{\centering}p{0.3in}>{\centering}p{0.01in}c}
$\overleftarrow{m_{d}}(x_{j})$ & $\overset{+}{\leftarrow}$ & $\frac{\rho_{d}-I_{c=d}}{m_{d}(x_{j})}\sum_{{\bf x}_{c\backslash j}}s({\bf x}_{c})\frac{\overleftarrow{\nu}(x_{i})}{m_{c}(x_{i})}$\tabularnewline
\end{tabular}\\
\hphantom{1pt}\hfill{}$\forall j\in c\backslash i,\forall d:j\in d$\vspace{-5pt}
\item Pull messages $m_{c}(x_{i})$ from the stack.
\item $\overleftarrow{m_{c}}(x_{i})\leftarrow0$\end{enumerate}
\end{enumerate}
\end{minipage}

\caption{Back TRW. \label{alg:Back-TRW}}
\end{algorithm}

Next, we consider truncated fitting with TRW inference. As above,
we will assume that some fixed number $N$ of inference iterations
have been run, and we want to define and differentiate a loss defined
on the current predicted marginals. Alg. \ref{alg:Back-TRW} shows
the method.
\begin{thm*}
After execution of back TRW, 
\[
\overleftarrow{\theta}(x_{i})=\frac{dL}{d\theta(x_{i})}\text{ and }\overleftarrow{\theta}({\bf x}_{c})=\frac{dL}{d\theta({\bf x}_{c})}.
\]

\end{thm*}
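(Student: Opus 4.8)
The plan is to prove correctness exactly as for back mean field: the entire forward computation---uniform initialization of the messages, $N$ sweeps of the update in Eq.~\ref{eq:TRW-msgs}, the read-out of $\mu({\bf x}_c)$ and $\mu(x_i)$, and finally the loss $L$---is a composition of elementary differentiable maps, and Alg.~\ref{alg:Back-TRW} is nothing more than reverse-mode automatic differentiation through that composition. I would invoke the standard adjoint principle: if one processes the forward operations in reverse, applying to each the transpose of its Jacobian and \emph{accumulating} the contributions additively into an adjoint variable for every quantity (hence the $\overset{+}{\leftarrow}$), then after the last (earliest) operation the accumulated adjoint of each input equals its total derivative $dL/d(\cdot)$. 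Thus the theorem reduces to checking, line by line, that each step of Alg.~\ref{alg:Back-TRW} implements the transpose-Jacobian of the corresponding forward step.

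The one nonroutine ingredient is the backnorm operator, so I would first record what it computes. Every forward step has the shape ``form a product of exponential factors, then normalize.'' Writing such a step as $\mu \propto \exp(\boldsymbol\ell)$ with $\boldsymbol\ell$ the vector of log-factors, the map $\boldsymbol\ell \mapsto \mu$ is a softmax, whose Jacobian is $\partial\mu_k/\partial\ell_j = \mu_k(\delta_{kj}-\mu_j)$. Its transpose applied to an incoming adjoint $\overleftarrow{\mu}$ gives precisely $\mu_j(\overleftarrow{\mu}_j - \overleftarrow{\mu}\cdot\mu)=[\text{backnorm}(\overleftarrow{\mu},\mu)]_j$. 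Hence backnorm converts the adjoint of a normalized belief into the adjoint of the \emph{log} of its unnormalized factors; this is what licenses adding the result directly into $\overleftarrow{\theta}$ (which sits in an exponent) and, via the chain rule on $\rho_d\log m_d$, into messages through a factor $\rho_d/m_d$.

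With that in hand the verification splits into the two reverse phases. Steps~6--7 transpose the marginal definitions: backnorm handles the normalization, the exponent $\tfrac{1}{\rho_c}\theta({\bf x}_c)$ produces the $\tfrac{1}{\rho_c}$ on $\overleftarrow{\theta}({\bf x}_c)$, the appearance of $\theta(x_i)$ in each clique term produces the marginalizing sum $\sum_{{\bf x}_{c\backslash i}}$, and the power product $\prod_{d}m_d^{\rho_d}$ together with the lone denominator $m_c$ produces the message adjoint with exponent $\rho_d-I_{c=d}$. Step~8, run in reverse sweep order, transposes the message update: one first rebuilds the summand $s({\bf x}_c)$ of Eq.~\ref{eq:TRW-msgs}, applies backnorm to $\overleftarrow{m_{ci}}$, and then distributes the resulting log-adjoint onto $\theta({\bf x}_c)$, each $\theta(x_j)$ for $j\in c\backslash i$, and every incoming message $m_d(x_j)$, again with the $\rho_d-I_{c=d}$ exponent.

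The hard part will be the bookkeeping forced by the \emph{in-place} overwriting of messages. Because the forward pass overwrites $m_c(x_i)$ at every sweep, the transpose-Jacobian of a given update must be evaluated at the message values \emph{before} that update; this is exactly why the forward pass pushes each old $m_c(x_i)$ onto the stack and the backward pass pulls it (step~8e) before recomputing $s$. Correspondingly, once a message's adjoint has been fully propagated to the inputs that produced it, it must be reset to zero (step~8f) so that it does not contaminate earlier sweeps in which that slot held a different value. The single most error-prone point is the $\rho_d-I_{c=d}$ correction: the message $m_c(x_j)$ enters the update for $m_c(x_i)$ both through the numerator power product (exponent $\rho_c$) and through the explicit denominator (exponent $-1$), so its net log-exponent is $\rho_c-1$, whereas a message $m_d(x_j)$ with $d\neq c$ carries exponent $\rho_d$; the indicator $I_{c=d}$ is precisely this discrepancy. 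Once the stack timing, the zeroing, and this correction are made mutually consistent, the remainder is a mechanical transposition identical in spirit to the back mean field proof, and the accumulated $\overleftarrow{\theta}(x_i)$, $\overleftarrow{\theta}({\bf x}_c)$ are the claimed derivatives.
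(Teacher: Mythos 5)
Your proposal is correct and follows essentially the same route as the paper's own proof sketch: a mechanical reverse-mode differentiation of each forward step, with the backnorm operator derived as the transpose Jacobian of the normalization (the paper's ``Normalized Products'' lemma and its exponential and power special cases), the $\rho_d-I_{c=d}$ exponent explained by the message $m_c(x_j)$ appearing in both the numerator power product and the denominator, and the stack/zeroing bookkeeping handling the in-place overwriting of messages. Nothing essential is missing relative to the paper's argument.
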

Again, a proof sketch is in Appendix C.

If one uses pairwise factors only, uniform appearance probabilities
of $\rho=1$, removes all reference to the stack, and uses a convergence
threshold in place of a fixed number of iterations, one obtains essentially
Eaton and Ghahramani's back belief propagation \cite[ extended version, Fig. 5]{Eaton_Ghahramani}.
Here, we refer to the general strategy of using full (non-truncated)
inference as ``backpropagation'', either with LBP, TRW, or mean
field.

\subsection{Truncated Likelihood \& Truncated EM}

Applying the truncated fitting strategies to any of the marginal-based
loss functions is simple. Applying it to the likelihood or EM loss,
however, is not so straightforward. The reason is that these losses
(Eqs. \ref{eq:likelihood} and \ref{eq:EM_loss}) are defined, not
in terms of predicted \emph{marginals}, but in terms of \emph{partition
functions}. Nevertheless, we wish to compare to these losses in the
experiments below. As we found truncation to be critical for speed,
we instead derive a variant of truncated fitting.

The basic idea is to define a ``truncated partition function''.
This is done by taking the predicted marginals, obtained after a fixed
number of iterations, and plugging them into the entropy approximations
used either for mean field (Eq. \ref{eq:meanfield-entropy}) or TRW
(Eq. \ref{eq:TRW-entropy}). The approximate entropy $\tilde{H}$
is then used in defining a truncated partition function as\vspace{-5pt}
\[
\tilde{A}(\boldsymbol{\theta})=\boldsymbol{\theta}\cdot\boldsymbol{\mu}(\boldsymbol{\theta})-\tilde{H}(\boldsymbol{\mu}(\boldsymbol{\theta})).
\]
As we will see below, with too few inference iterations, using this
approximation can cause the surrogate likelihood to diverge. To see
why, imagine an extreme case where \textit{zero} inference iterations
are used. This results in the loss $L(\boldsymbol{\theta},{\bf x})=\boldsymbol{\theta}\cdot({\bf f}({\bf x})-\boldsymbol{\mu}^{0})+\tilde{H}(\boldsymbol{\mu}^{0}),$
where $\boldsymbol{\mu}^{0}$ are the initial marginals. As long as
the mean of ${\bf f}({\bf x})$ over the dataset is not equal to $\boldsymbol{\mu}^{0}$,
arbitrary loss can be achieved. With hidden variables, $A(\boldsymbol{\theta},{\bf z})$
is defined similarly, but with the variables ${\bf z}$ ``clamped''
to the observed values. (Those variables will play no role in determining
$\boldsymbol{\mu}(\boldsymbol{\theta})$).

\section{Experiments}

These experiments consider three different datasets with varying complexity.
In all cases, we try to keep the features used relatively simple.
This means some sacrifice in performance, relative to using sophisticated
features tuned more carefully to the different problem domains. However,
given that our goal here is to gauge the relative performance of the
different algorithms, we use simple features for the sake of experimental
transparency.

We compare marginal-based learning methods to the surrogate likelihood/EM,
the pseudolikelihood and piecewise likelihood. These comparisons were
chosen because, first, they are the most popular in the literature
(Sec. \ref{sec:Loss-Functions}). Second, the surrogate likelihood
also requires marginal inference, meaning an ``apples to apples''
comparison using the same inference method. Third, these methods can
all cope with hidden variables, which appear in our third dataset.

In each experiment, an ``independent'' model, trained using univariate
features only with logistic loss was used to initialize others. The
smoothed classification loss, because of more severe issues with local
minima, was initialized using surrogate likelihood/EM.

\subsection{Setup}

All experiments here will be on vision problems, using a pairwise,
4-connected grid. Learning uses the L-BFGS optimization algorithm.
The values $\theta$ are linearly parametrized in terms of unary and
edge features. Formally, we will fit two matrices, $F$ and $G$,
which determine all unary and edge features, respectively. These can
be expressed most elegantly by introducing a bit more notation. Let
$\boldsymbol{\theta}_{i}$ denote the set of parameter values $\theta(x_{i})$
for all values $x_{i}$. If ${\bf u}({\bf y},i)$ denotes the vector
of unary features for variable $i$ given input image ${\bf y}$,
then
\[
\boldsymbol{\theta}_{i}=F{\bf u}({\bf y},i).
\]

\noindent Similarly, let $\boldsymbol{\theta}_{ij}$ denote the set
of parameter values $\theta(x_{i},x_{j})$ for all $x_{i},x_{j}$.
If ${\bf v}({\bf y},i,j)$ is the vector of edge features for pair
$(i,j)$, then
\[
\boldsymbol{\theta}_{ij}=G{\bf v}({\bf y},i,j).
\]

Once $\frac{dL}{d\boldsymbol{\theta}}$ has been calculated (for
whichever loss and method), we can easily recover the gradients with
respect to $F$ and $G$ by\vspace{-10pt}

\[
\frac{dL}{dF}=\sum_{i}\frac{dL}{d\boldsymbol{\theta}_{i}}{\bf u}({\bf y},i)^{T},\,\,\,\,\frac{dL}{dG}=\sum_{ij}\frac{dL}{d\boldsymbol{\theta}_{ij}}{\bf v}({\bf y},i,j)^{T}.
\]

\subsection{Binary Denoising Data}

\begin{table}[h]
\begin{centering}
\setlength{\tabcolsep}{5pt}%
\begin{tabular}{|c|cc|cc|cc|}
\hline 
 & \multicolumn{2}{c|}{$n=1.25$} & \multicolumn{2}{c|}{$n=1.5$} & \multicolumn{2}{c|}{$n=5$}\tabularnewline
\hline 
Loss & Train & Test & Train & Test & Train & Test\tabularnewline
\hline 
surrogate likelihood & .149 & .143 & .103 & .097 & .031 & .030\tabularnewline
\hline 
univariate logistic & .133 & .126 & .102 & .096 & .031 & .030\tabularnewline
\hline 
clique logistic & .132 & .126 & .102 & .096 & .031 & .030\tabularnewline
\hline 
univariate quad & .132 & .126 & .102 & .096 & .031 & .030\tabularnewline
\hline 
smooth class. $\alpha=5$ & .136 & .129 & .105 & .099 & .032 & .030\tabularnewline
\hline 
smooth class. $\alpha=15$ & .132 & .126 & .102 & .096 & .031 & .030\tabularnewline
\hline 
smooth class. $\alpha=50$ & .132 & .125 & .102 & .096 & .030 & .030\tabularnewline
\hline 
\hline 
pseudo-likelihood & .207 & .204 & .117 & .112 & .032 & .030\tabularnewline
\hline 
piecewise & .466 & .481 & .466 & .481 & .063 & .058\tabularnewline
\hline 
independent & .421 & .424 & .367 & .368 & .129 & .129\tabularnewline
\hline 
\end{tabular}
\par\end{centering}

\caption{Binary denoising error rates for different noise levels $n$. All
methods use TRW inference with back-propagation based learning with
a threshold of $10^{-4}$.\label{tab:Binary-Denoising-error}}
\end{table}
\begin{figure}[h]
\begin{centering}
\vspace{-10pt}
\renewcommand{\tabcolsep}{1pt}%
\begin{tabular}{>{\centering}m{0.57in}>{\centering}m{0.9in}>{\centering}m{0.9in}>{\centering}m{0.9in}}
 & $n=1.25$ & $n=1.5$ & $n=5$\tabularnewline
{\small input} & \includegraphics[width=1\linewidth]{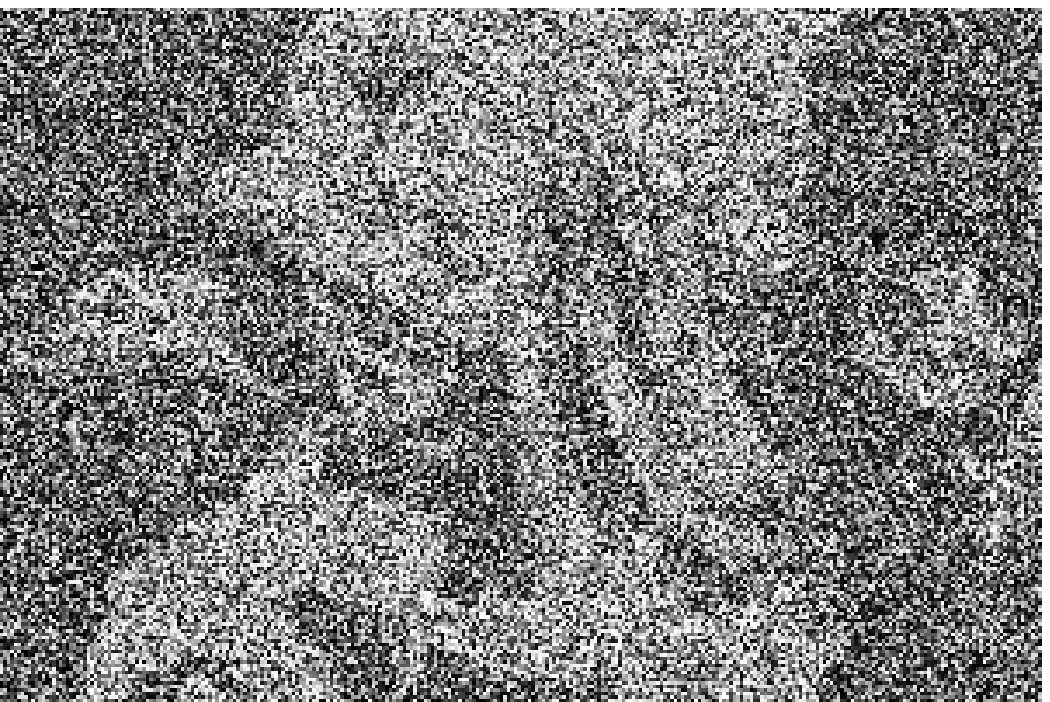} & \includegraphics[width=1\linewidth]{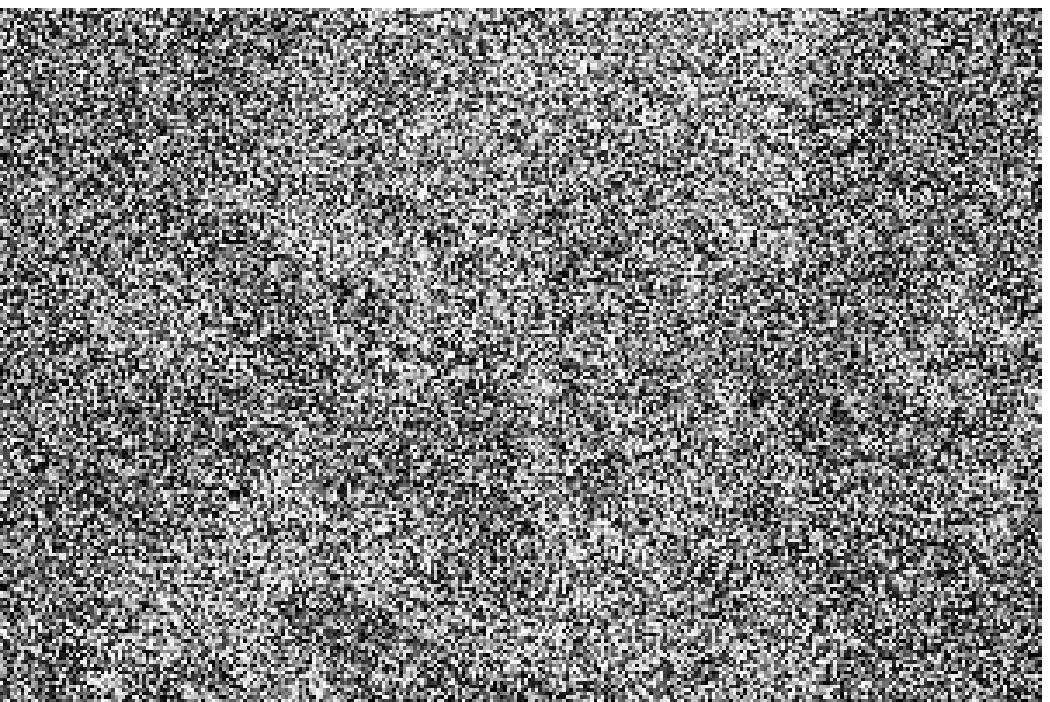} & \includegraphics[width=1\linewidth]{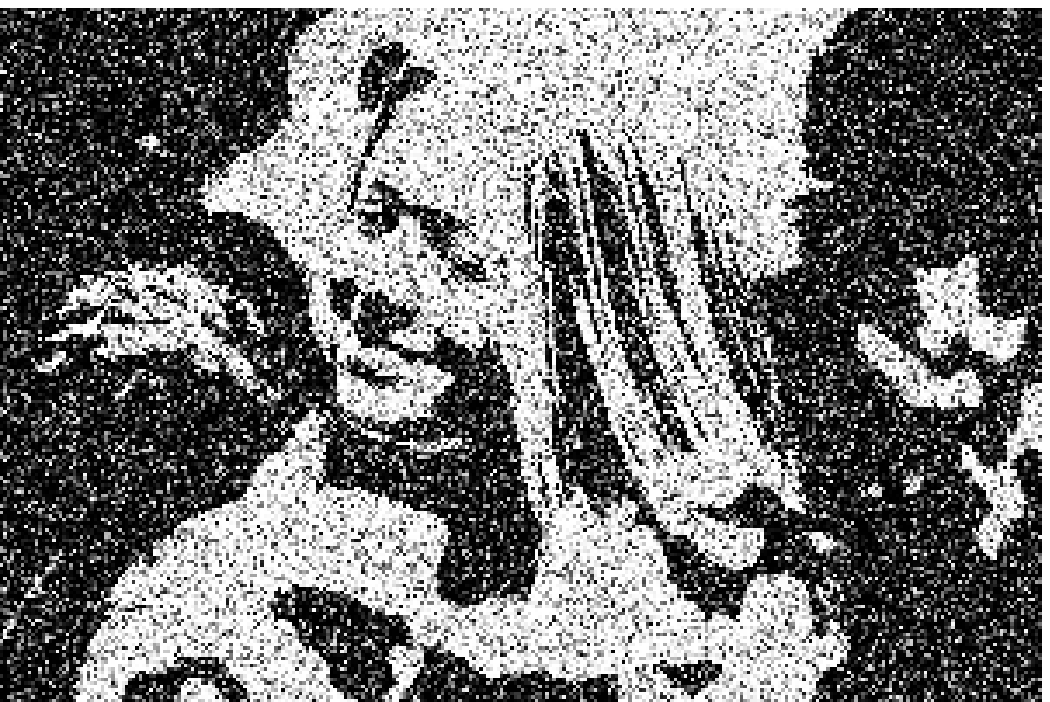}\tabularnewline
{\small surrogate likelihood} & \includegraphics[width=1\linewidth]{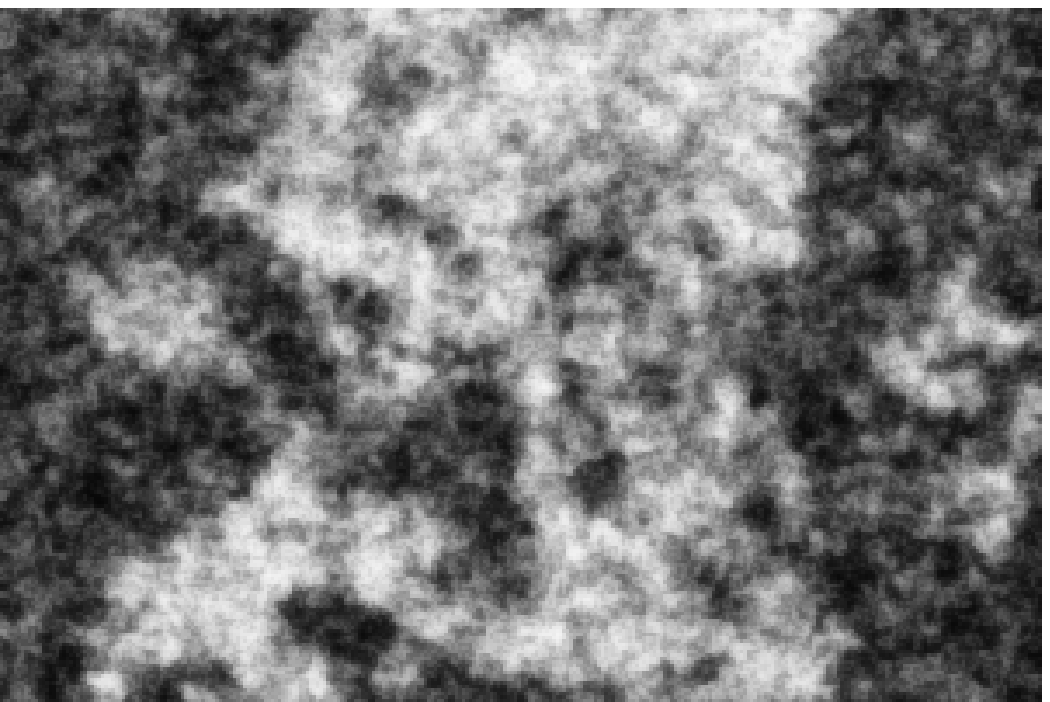} & \includegraphics[width=1\linewidth]{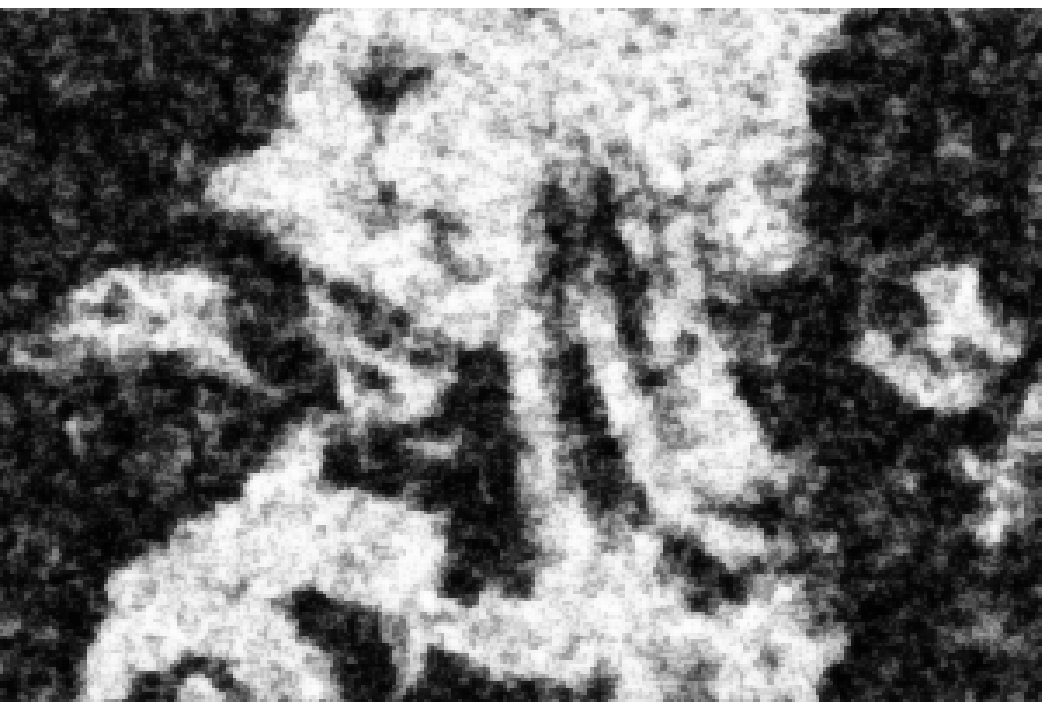} & \includegraphics[width=1\linewidth]{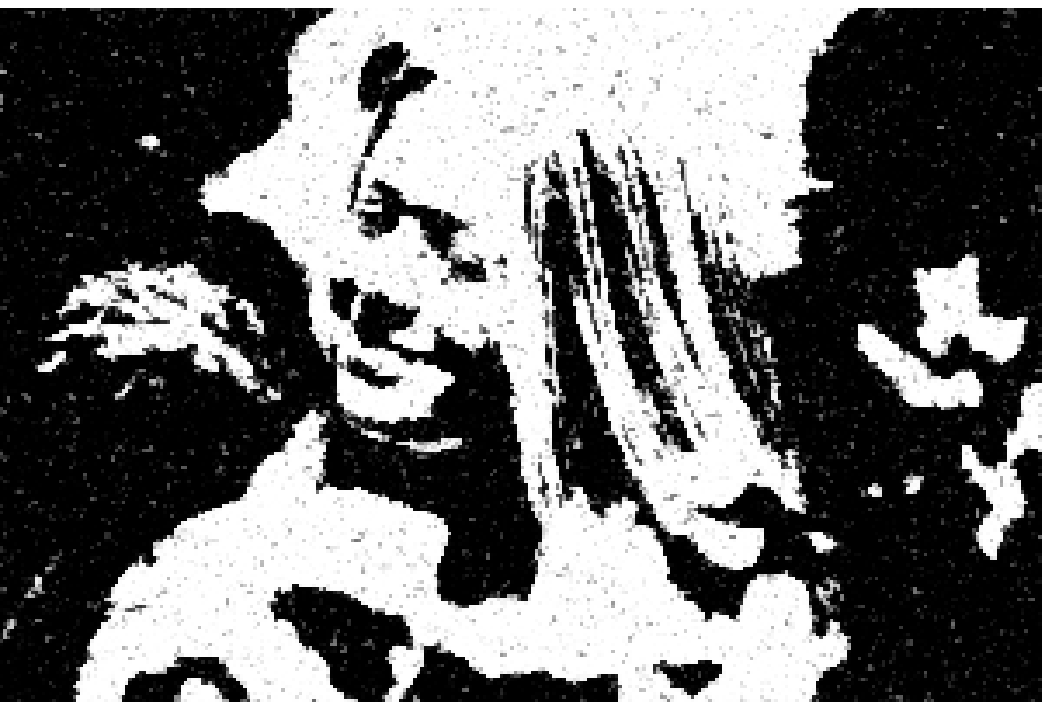}\tabularnewline
{\small univariate logistic} & \includegraphics[width=1\linewidth]{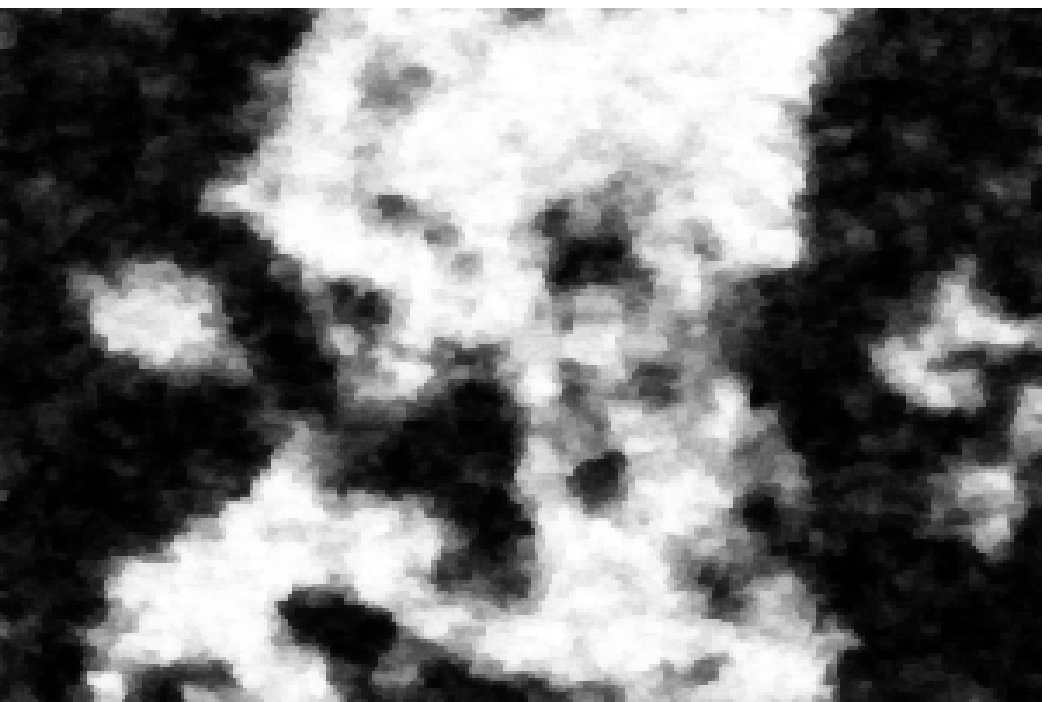} & \includegraphics[width=1\linewidth]{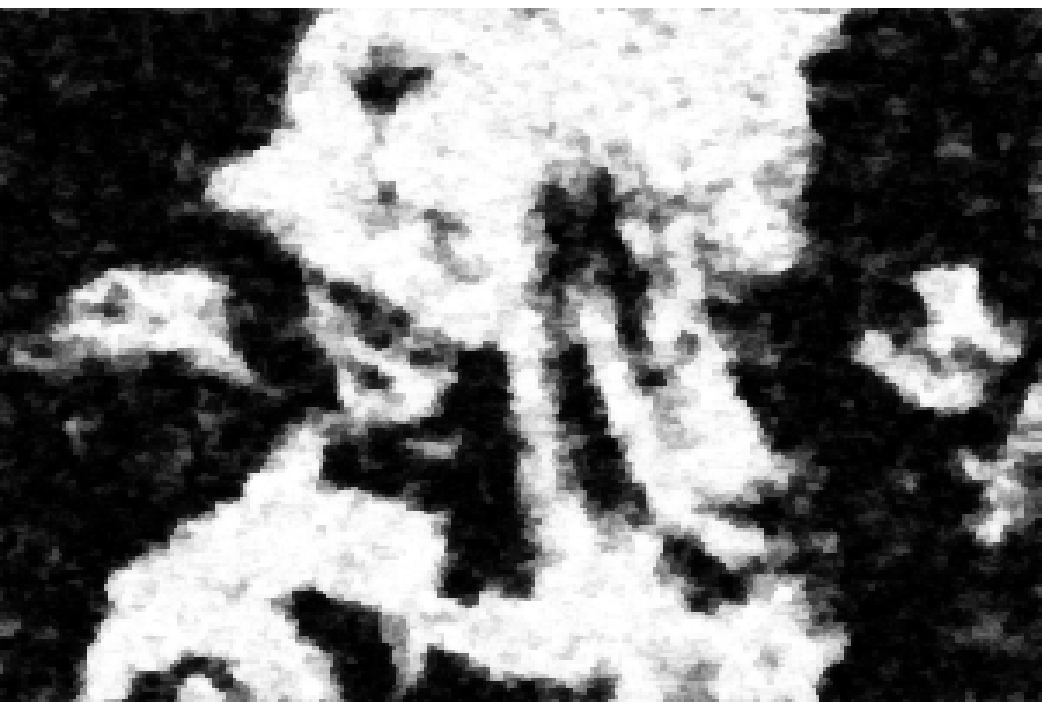} & \includegraphics[width=1\linewidth]{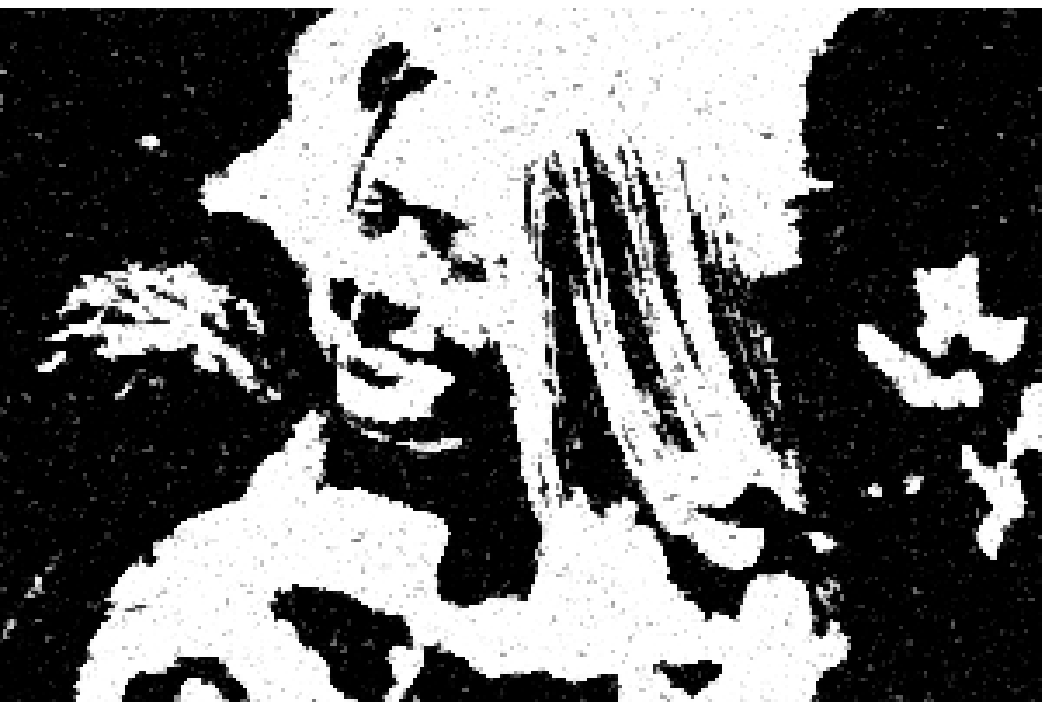}\tabularnewline
{\small clique logistic} & \includegraphics[width=1\linewidth]{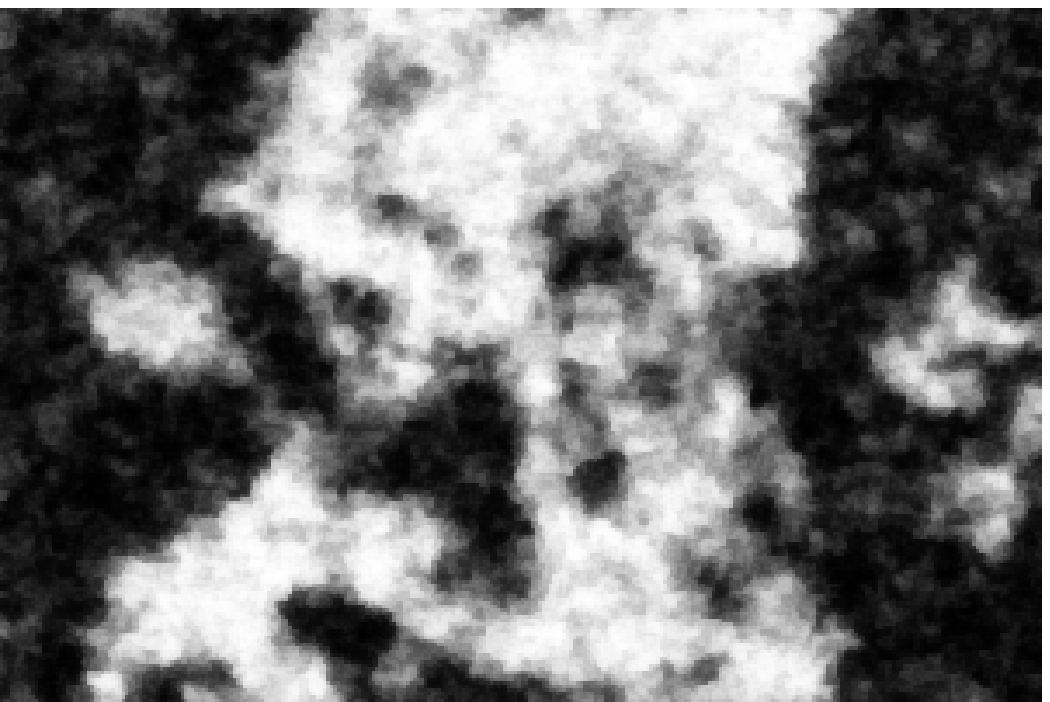} & \includegraphics[width=1\linewidth]{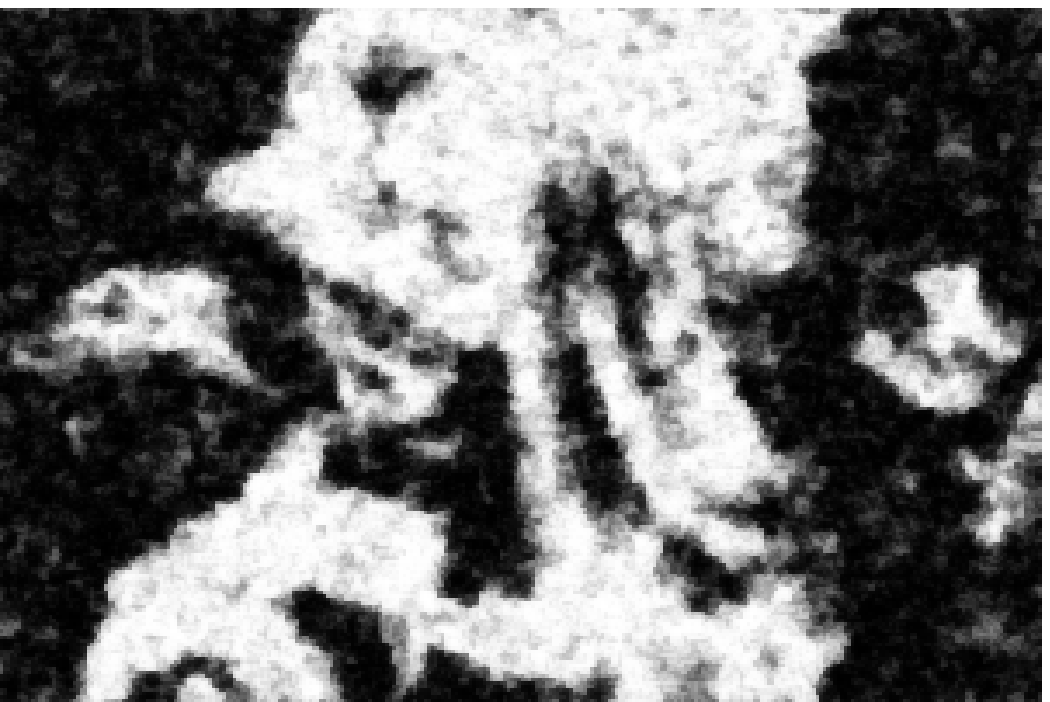} & \includegraphics[width=1\linewidth]{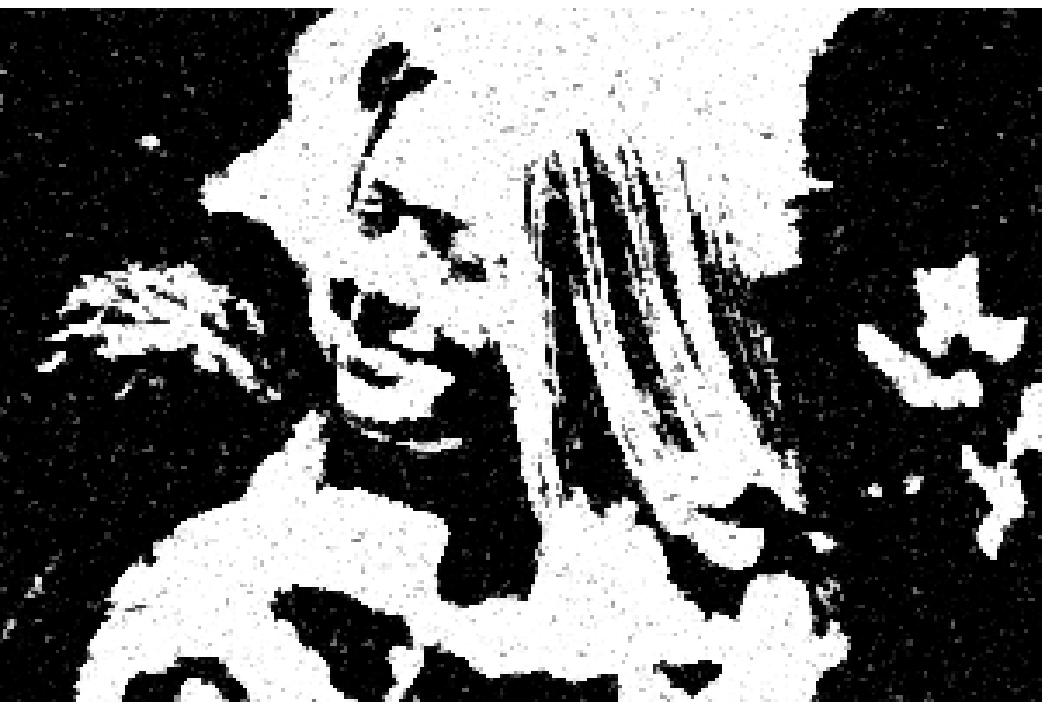}\tabularnewline
sm. class $\alpha=50$ & \includegraphics[width=1\linewidth]{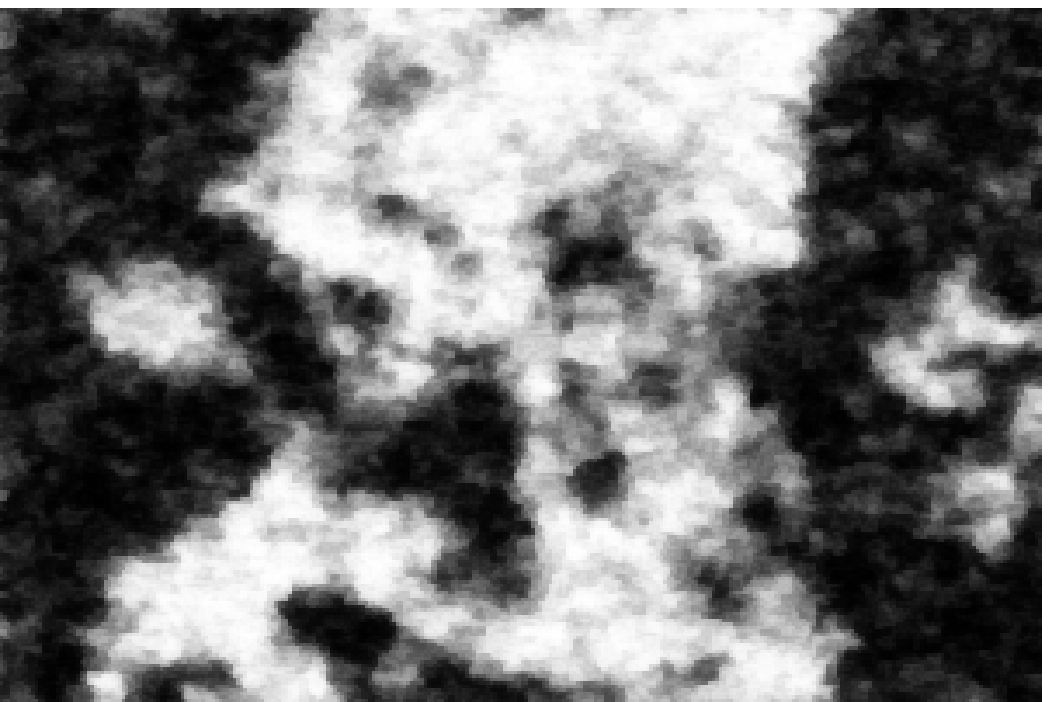} & \includegraphics[width=1\linewidth]{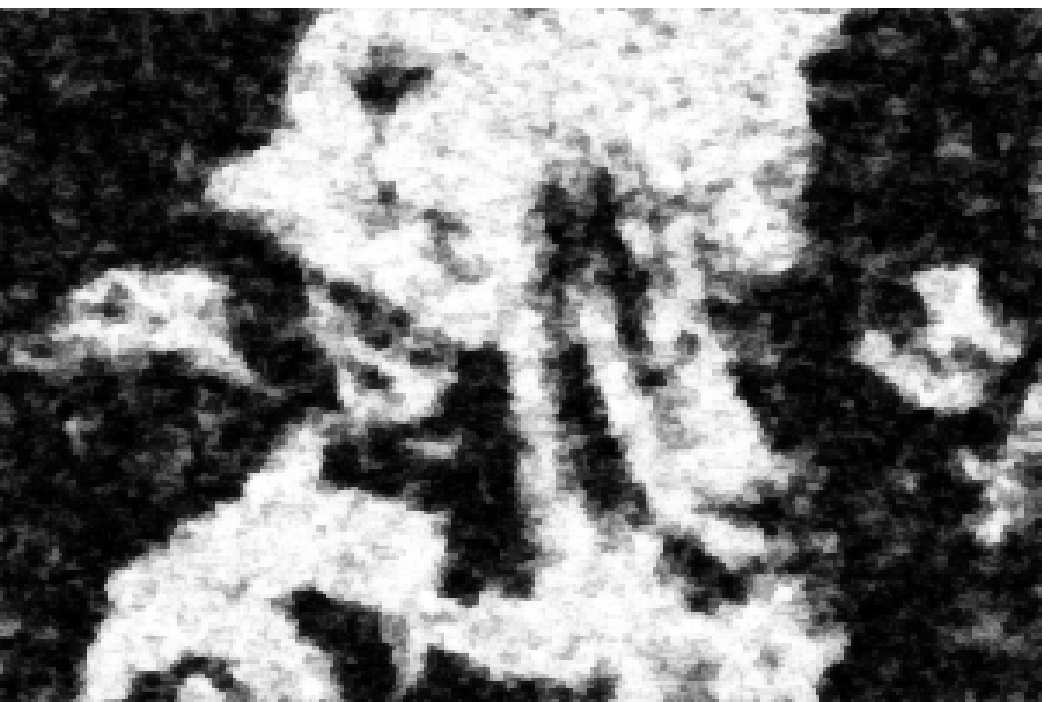} & \includegraphics[width=1\linewidth]{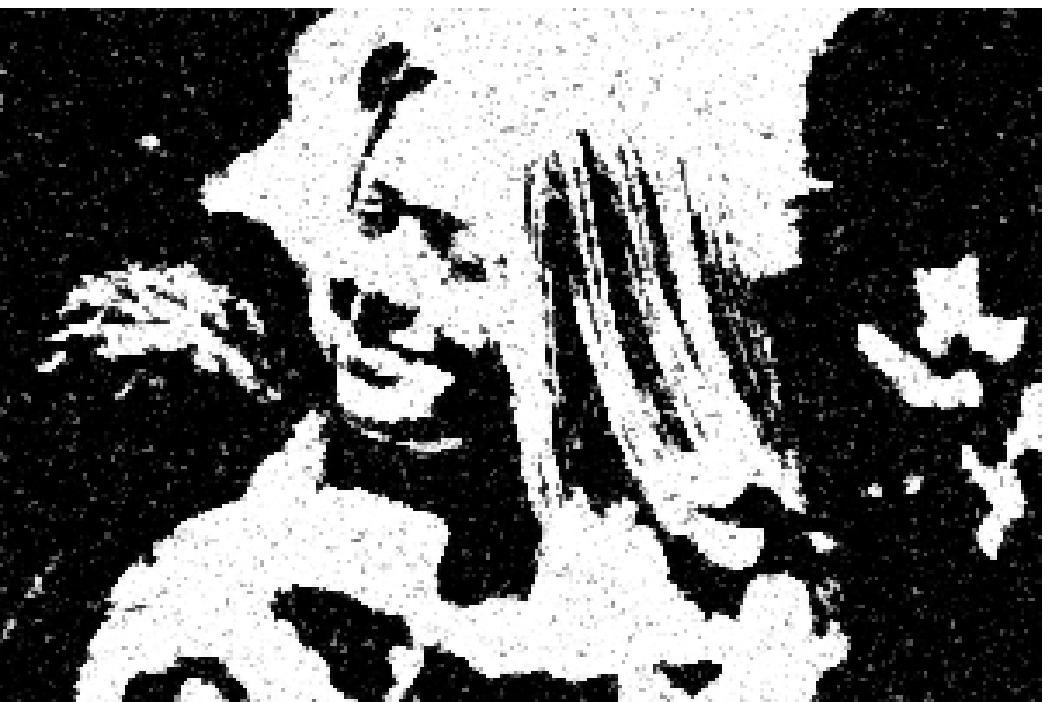}\tabularnewline
{\small pseudo-likelihood} & \includegraphics[width=1\linewidth]{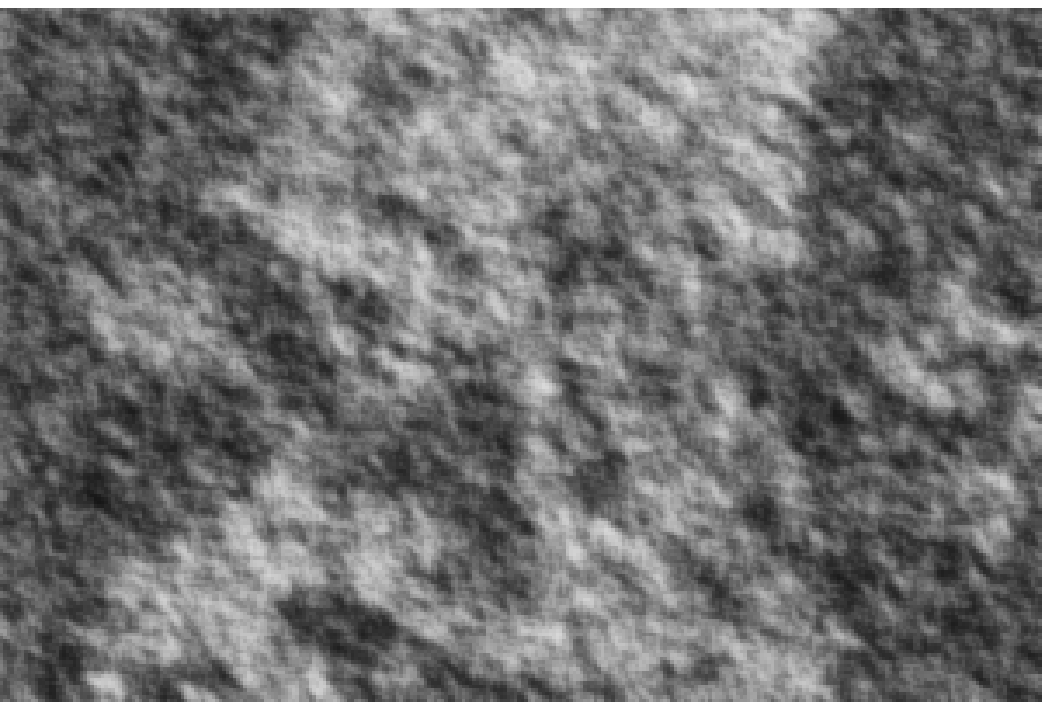} & \includegraphics[width=1\linewidth]{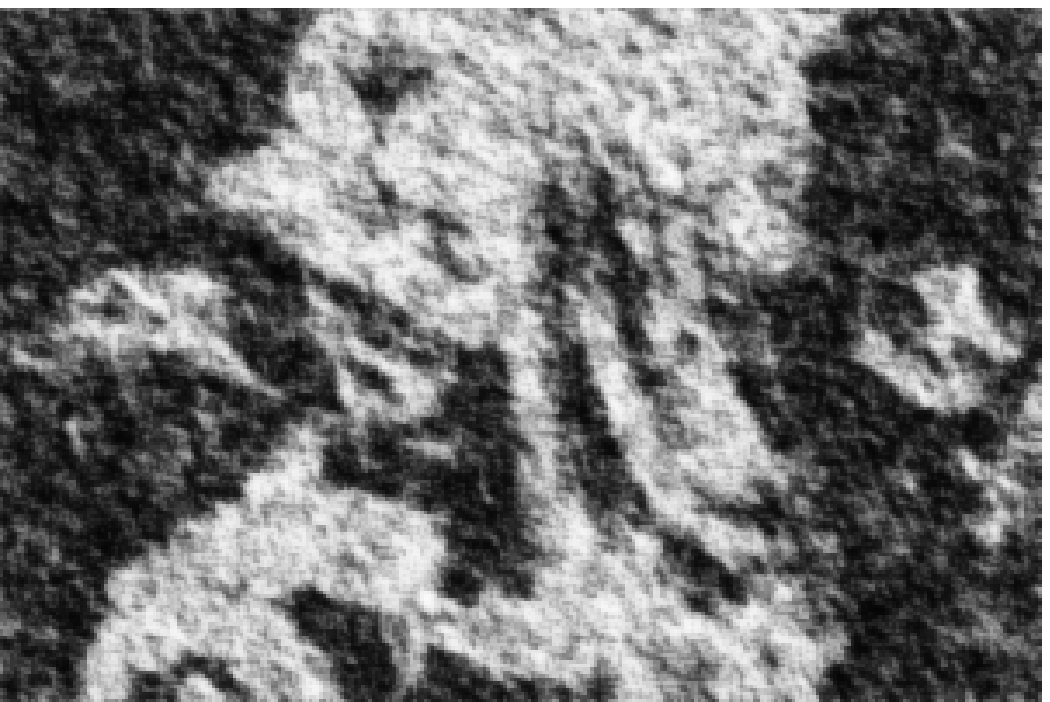} & \includegraphics[width=1\linewidth]{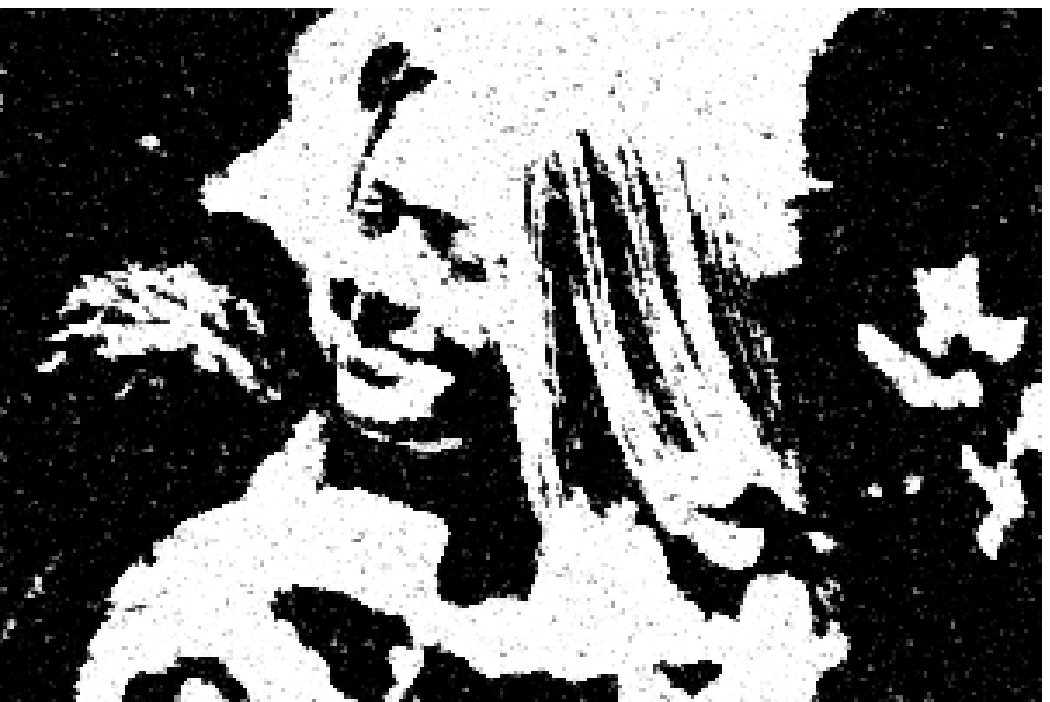}\tabularnewline
{\small inde-pendent} & \includegraphics[width=1\linewidth]{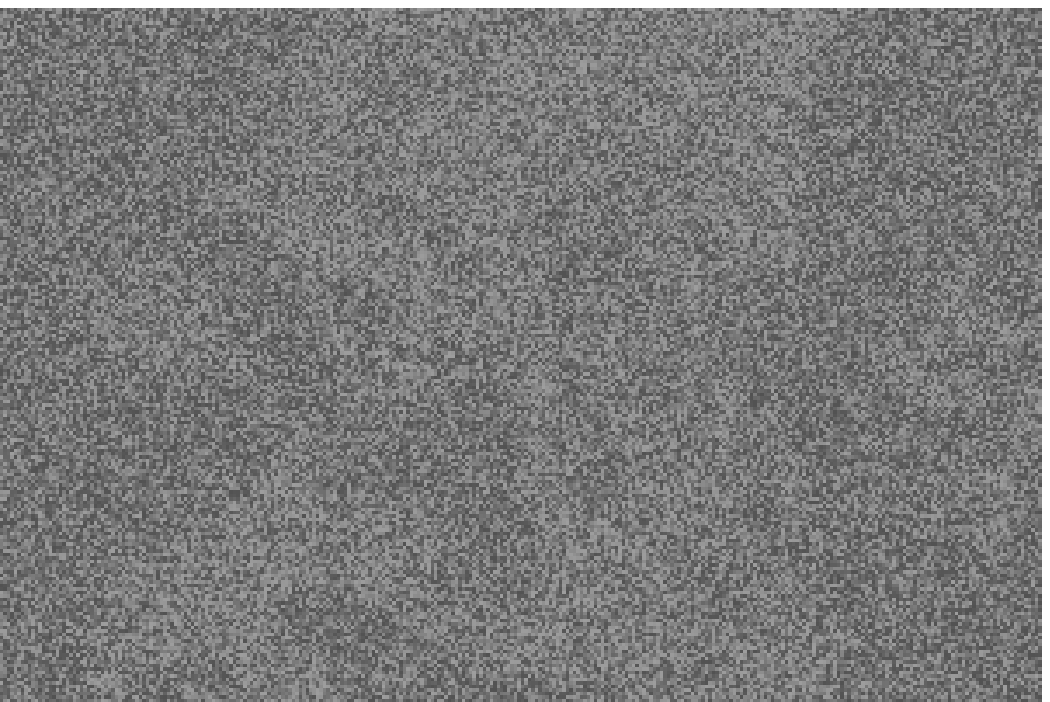} & \includegraphics[width=1\linewidth]{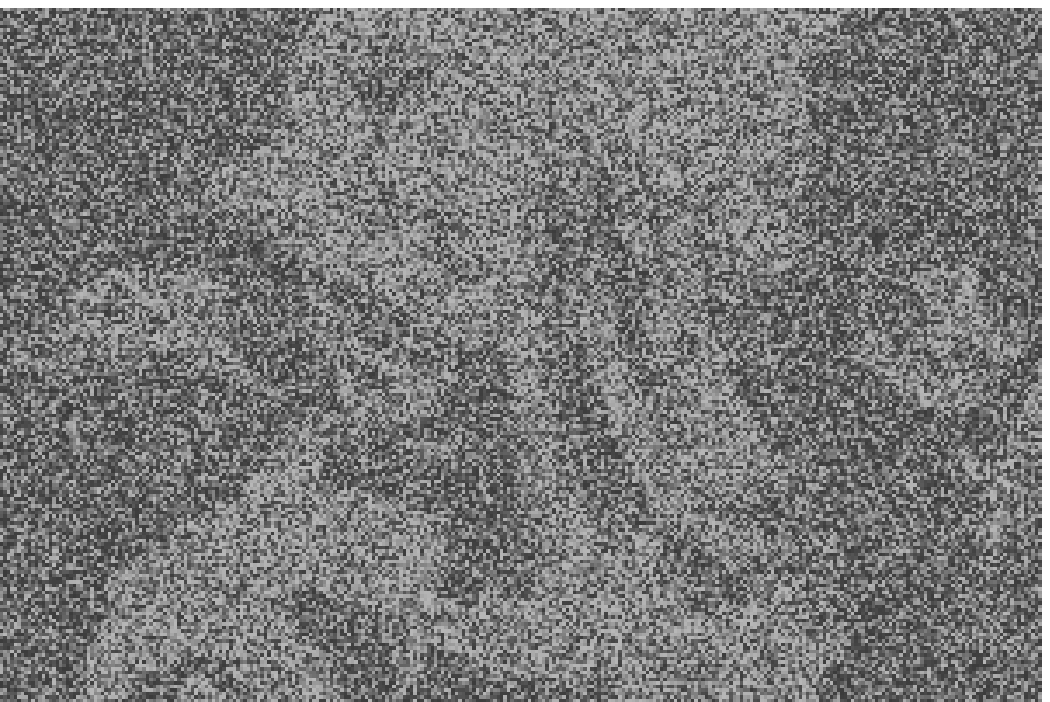} & \includegraphics[width=1\linewidth]{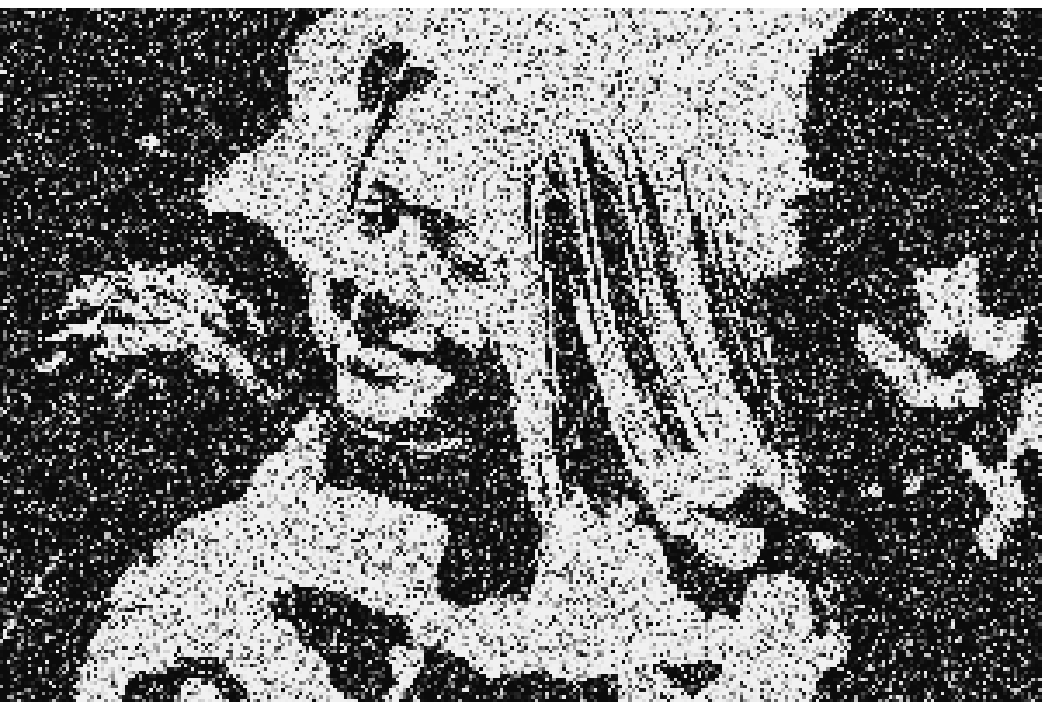}\tabularnewline
\end{tabular}
\par\end{centering}

\caption{Predicted marginals for an example binary denoising test image with
different noise levels $n$.\label{fig:Predicted-marginals-for}}
\end{figure}

In a first experiment, we create a binary denoising problem using
the Berkeley Segmentation Dataset. Here, we took 132 $200\times300$
images from the Berkeley segmentation dataset, binarized them according
to if each pixel is above the image mean. The noisy input values are
then generated as $y_{i}=x_{i}(1\lyxmathsym{\textminus}t_{i}^{n})+(1\lyxmathsym{\textminus}x_{i})t_{i}^{n}$,
where $x_{i}$ is the true binary label, and $t_{i}\in[0,1]$ is random.
Here, $n\in(1,\infty)$ is the noise level, where lower values correspond
to more noise. Thirty-two images were used for training, and 100 for
testing. This is something of a toy problem, but the ability to systematically
vary the noise level is illustrative.

As unary features ${\bf u}({\bf y},i)$, we use only two features:
a constant of 1, and the noisy input value at that pixel.

For edge features ${\bf v}({\bf y},i,j)$, we also use two features:
one indicating that $(i,j)$ is a horizontal edge, and one indicating
that $(i,j)$ is a vertical edge. The effect is that vertical and
horizontal edges have independent parameters.

For learning, we use full back TRW and back mean field (without message-storing
or truncation) for marginal-based loss functions, and the surrogate
likelihood with the gradient computed in the direct form (Eq. \ref{eq:likelihood-gradient}).
In all cases, a threshold on inference of $10^{-4}$ is used.

Error rates are shown in Tab. \ref{tab:Binary-Denoising-error}, while
predicted marginals for an example test image are shown in Fig. \ref{fig:Predicted-marginals-for}.
We compare against an independent model, which can be seen as truncated
fitting with zero iterations or, equivalently, logistic regression
at the pixel level. We see that for low noise levels, all methods
perform well, while for high noise levels, the marginal-based losses
outperform the surrogate likelihood and pseudolikelihood by a considerable
margin. Our interpretation of this is that model mis-specification
is more pronounced with high noise, and other losses are less robust
to this.

\subsection{Horses}

\begin{table}
\begin{centering}
\setlength{\tabcolsep}{2.0pt}%
\begin{tabular}{|c|c|c|cc|cc|cc|}
\hline 
 &  &  & \multicolumn{2}{c|}{10 iters} & \multicolumn{2}{c|}{20 iters} & \multicolumn{2}{c|}{40 iters}\tabularnewline
\hline 
Loss & $\lambda$ & Mode & Train & Test & Train & Test & Train & Test\tabularnewline
\hline 
surrogate likelihood & $10^{-3}$ & TRW & .421 & .416 & .088 & .109 & .088 & .109\tabularnewline
\hline 
univariate logistic & $10^{-3}$ & TRW & .065 & .094 & .064 & .093 & .064 & .093\tabularnewline
\hline 
clique logistic & $10^{-3}$ & TRW & .064 & .094 & .062 & .093 & .062 & .092\tabularnewline
\hline 
sm. class. $\alpha=5$ & $10^{-3}$ & TRW & .068 & .097 & .067 & .097 & .067 & .096\tabularnewline
\hline 
sm. class. $\alpha=15$ & $10^{-3}$ & TRW & .065 & .097 & .064 & .096 & .063 & .096\tabularnewline
\hline 
sm. class. $\alpha=50$ & $10^{-3}$ & TRW & .064 & .096 & .063 & .095 & .062 & .095\tabularnewline
\hline 
surrogate likelihood & $10^{-3}$ & MNF & .405 & .383 & .236 & .226 & .199 & .200\tabularnewline
\hline 
univariate logistic & $10^{-3}$ & MNF & .078 & .108 & .077 & .106 & .078 & .106\tabularnewline
\hline 
clique logistic & $10^{-3}$ & MNF & .073 & .101 & .075 & .105 & .079 & .107\tabularnewline
\hline 
\hline 
pseudolikelihood & $10^{-4}$ & TRW & \multicolumn{4}{c|}{} & .222 & .249\tabularnewline
\cline{1-3} \cline{8-9} 
piecewise & $10^{-4}$ & TRW & \multicolumn{4}{c|}{} & .202 & .236\tabularnewline
\cline{1-3} \cline{8-9} 
independent & $10^{-4}$ & \multicolumn{1}{c}{} &  & \multicolumn{1}{c}{} &  &  & .095 & .125\tabularnewline
\hline 
\end{tabular}
\par\end{centering}

\caption{Training and test errors on the horses dataset, using either TRW on
mean-field (MNF) inference. With too few iterations, the surrogate
likelihood diverges.\label{tab:Training-and-test-horses}}
\end{table}
\begin{figure}
\begin{raggedright}
\vspace{-15pt}
\subfloat[Input]{\includegraphics[width=0.32\columnwidth]{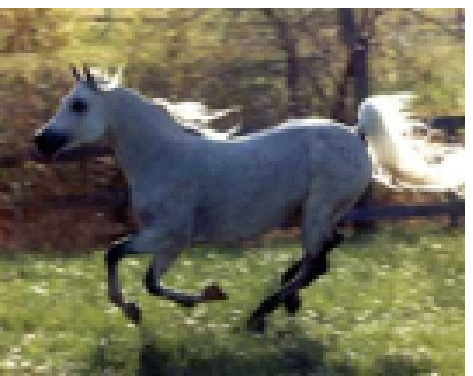}

}\subfloat[True Labels]{\includegraphics[width=0.32\columnwidth]{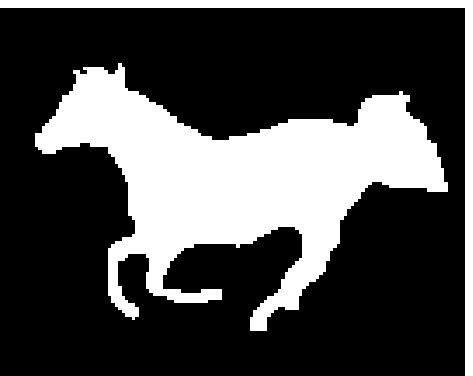}

}
\par\end{raggedright}

\begin{raggedright}
\subfloat[Surr. Like. {\scriptsize TRW}]{\includegraphics[width=0.32\columnwidth]{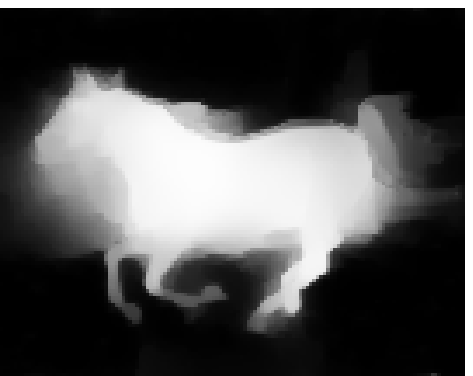}

}\subfloat[U. Logistic {\scriptsize TRW}]{\includegraphics[width=0.32\columnwidth]{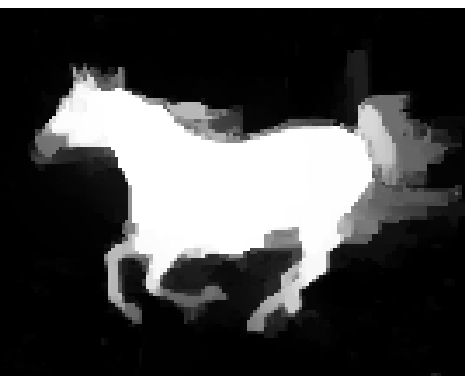}

}\subfloat[Sm. Class {\scriptsize $\lambda$$=$$50$} {\scriptsize TRW}]{\includegraphics[width=0.32\columnwidth]{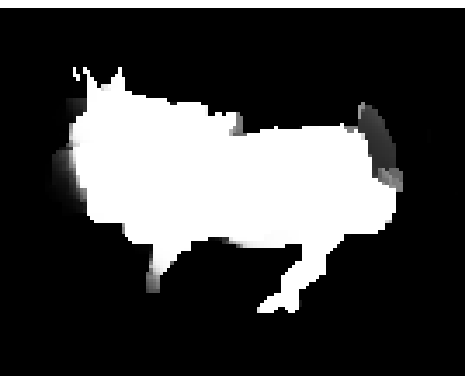}

}
\par\end{raggedright}

\begin{raggedright}
\subfloat[Surr. Like. {\scriptsize MNF}]{\includegraphics[width=0.32\columnwidth]{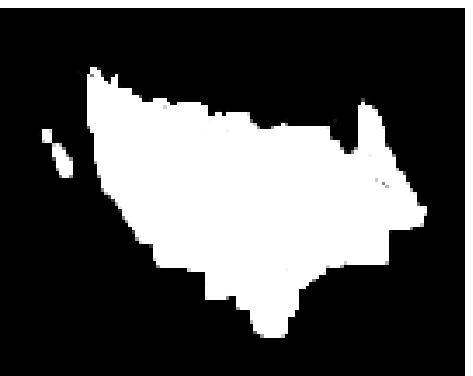}

}\subfloat[U. Logistic {\scriptsize MNF}]{\includegraphics[width=0.32\columnwidth]{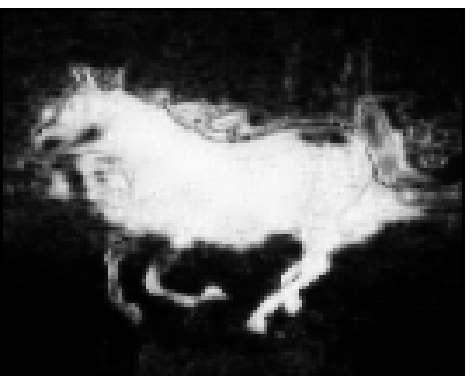}

}\subfloat[Independent]{\includegraphics[width=0.32\columnwidth]{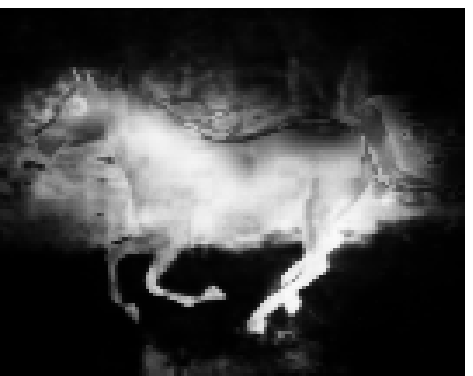}

}
\par\end{raggedright}

\caption{Predicted marginals for a test image from the horses dataset. Truncated
learning uses 40 iterations.\label{fig:Predicted-marginals-horses}}
\end{figure}

Secondly, we use the Weizman horse dataset, consisting of 328 images
of horses at various resolutions. We use 200 for training and 128
for testing. The set of possible labels $x_{i}$ is again binary--
either the pixel is part of a horse or not.

For unary features ${\bf u}({\bf y},i)$, we begin by computing the
RGB intensities of each pixel, along with the normalized vertical
and horizontal positions. We expand these 5 initial features into
a larger set using sinusoidal expansion \cite{ValueFunctionApproximationinReinforcementLearning}.
Specifically, denote the five original features by ${\bf s}$. Then,
we include the features $\sin({\bf c}\cdot{\bf s})$ and $\cos({\bf c}\cdot{\bf s})$
for all binary vectors ${\bf c}$ of the appropriate length. This
results in an expanded set of $64$ features. To these we append a
36-component Histogram of Gradients \cite{HistogramsOfOrientedGradients},
for a total of 100 features.

For edge features between $i$ and $j$, we use a set of 21 ``base''
features: A constant of one, the $l_{2}$ norm of the difference of
the RGB values at $i$ and $j$, discretized as above 10 thresholds,
and the maximum response of a Sobel edge filter at $i$ or $j$, again
discretized using 10 thresholds. To generate the final feature vector
${\bf v}({\bf y},i,j)$, this is increased into a set of 42 features.
If $(i,j)$ is a horizontal edge, the first half of these will contain
the base features, and the other half will be zero. If $(i,j)$ is
a vertical edge, the opposite situation occurs. This essentially allows
for different parametrization of vertical and horizontal edges.

In a first experiment, we train models with truncated fitting with
various numbers of iterations. The pseudolikelihood and piecewise
likelihood use a convergence threshold of $10^{-5}$ for testing.
Several trends are visible in Tab. \ref{tab:Training-and-test-horses}.
First, with less than 20 iterations, the truncated surrogate likelihood
diverges, and produces errors around 0.4. Second, TRW consistently
outperforms mean field. Finally, marginal-based loss functions outperform
the others, both in terms of training and test errors. Fig. \ref{fig:Predicted-marginals-horses}
shows predicted marginals for an example test image. On this dataset,
the pseudolikelihood, piecewise likelihood, and surrogate likelihood
based on mean field are outperformed by an independent model, where
each label is predicted by input features independent of all others.

\subsection{Stanford Backgrounds Data}

\begin{table}[h]
\begin{centering}
\setlength{\tabcolsep}{4.1pt}%
\begin{tabular}{|c|c|cc|cc|cc|}
\hline 
 &  & \multicolumn{2}{c|}{5 iters} & \multicolumn{2}{c|}{10 iters} & \multicolumn{2}{c|}{20 iters}\tabularnewline
\hline 
Loss & $\lambda$ & Train & Test & Train & Test & \multicolumn{1}{c|}{Train} & Test\tabularnewline
\hline 
\hline 
surrogate EM & $10^{-3}$ & .876  & .877 & .239  & .249 & .238 & .248\tabularnewline
\hline 
univariate logistic & $10^{-3}$ & .210  & .228 & .202  & .224 & .201 & .223\tabularnewline
\hline 
clique logistic & $10^{-3}$ & .206 & .226 & .198 & .223 & .195 & .221\tabularnewline
\hline 
\hline 
pseudolikelihood & $10^{-4}$ &  & \multicolumn{1}{c}{} &  &  & .516 & .519\tabularnewline
\cline{1-2} \cline{7-8} 
piecewise & $10^{-4}$ &  & \multicolumn{1}{c}{} &  &  & .335 & .341\tabularnewline
\cline{1-2} \cline{7-8} 
independent & $10^{-4}$ &  & \multicolumn{1}{c}{} &  &  & .293 & .299\tabularnewline
\hline 
\end{tabular}
\par\end{centering}

\centering{}\caption{Test errors on the backgrounds dataset using TRW inference. With too
few iterations, surrogate EM diverges, leading to very high error
rates.}
\end{table}
\begin{figure}[h]
\vspace{-10pt}
\subfloat[Input Image]{\includegraphics[width=0.32\columnwidth]{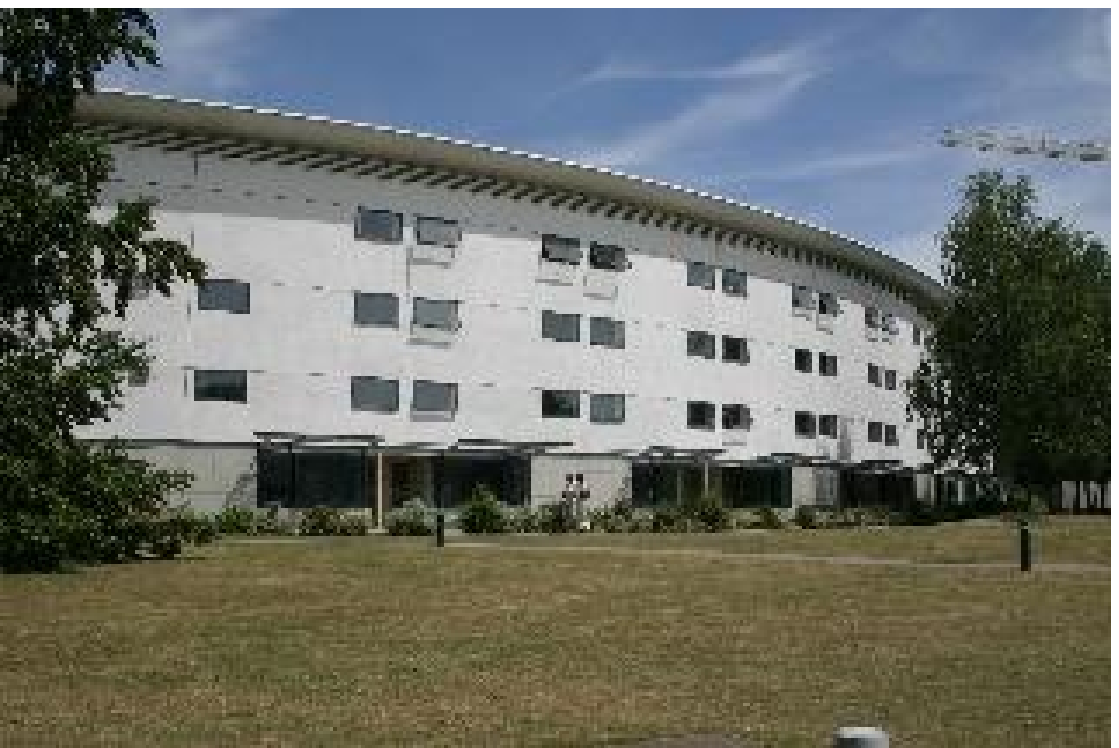}

}\subfloat[True Labels]{\includegraphics[width=0.32\columnwidth]{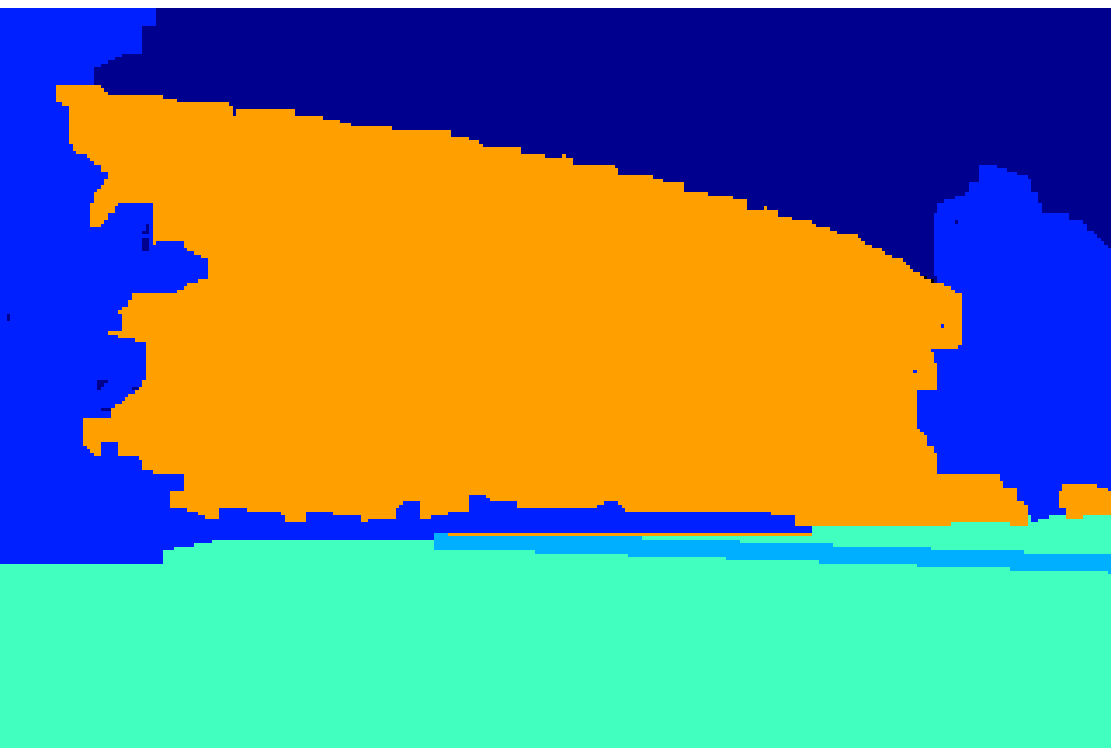}

}

\subfloat[Surrogate EM]{\includegraphics[width=0.32\columnwidth]{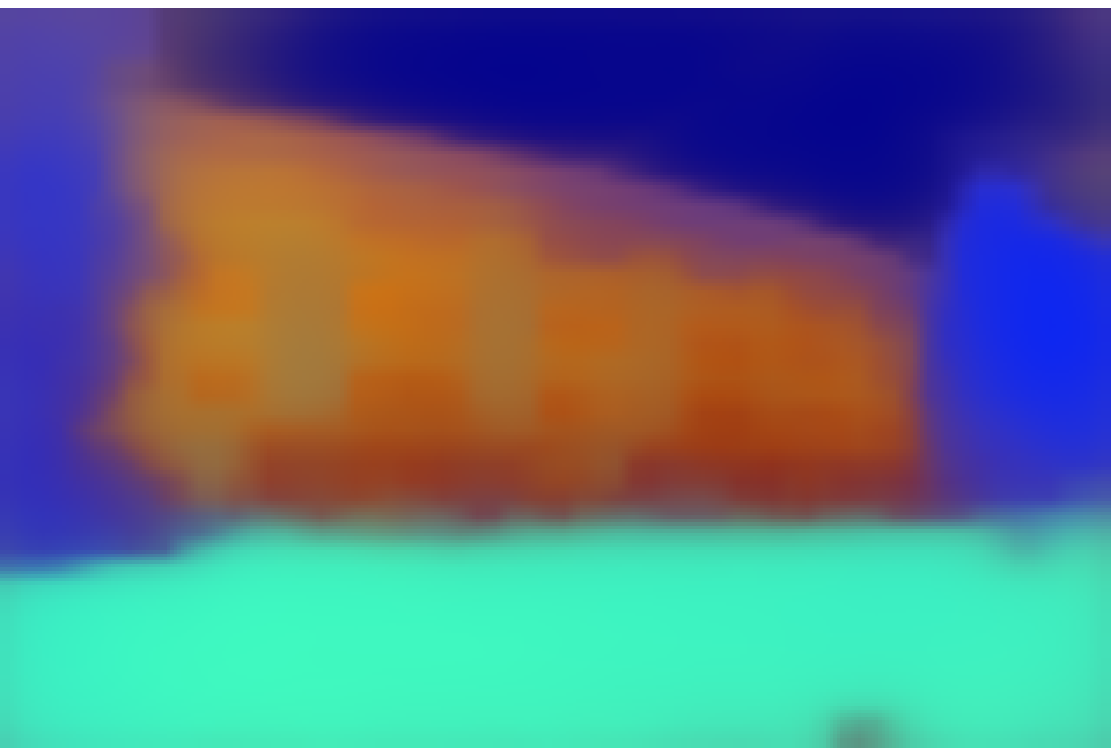}

}\subfloat[Univ. Logistic]{\includegraphics[width=0.32\columnwidth]{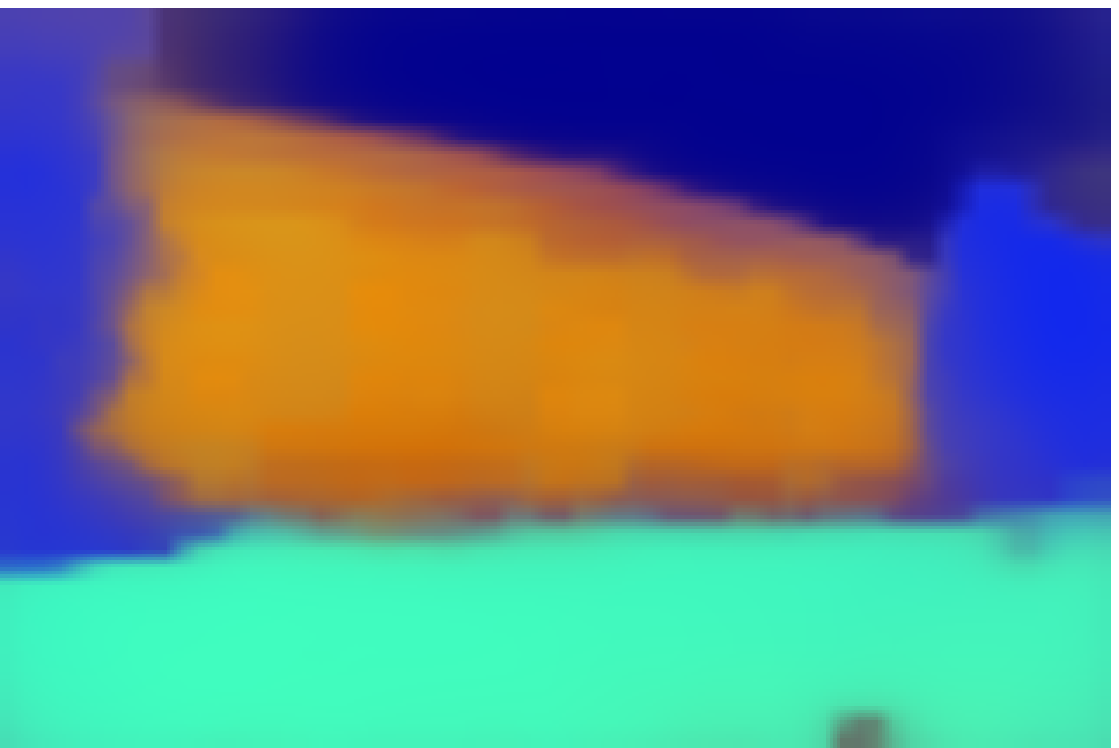}

}\subfloat[Clique Logistic]{\includegraphics[width=0.32\columnwidth]{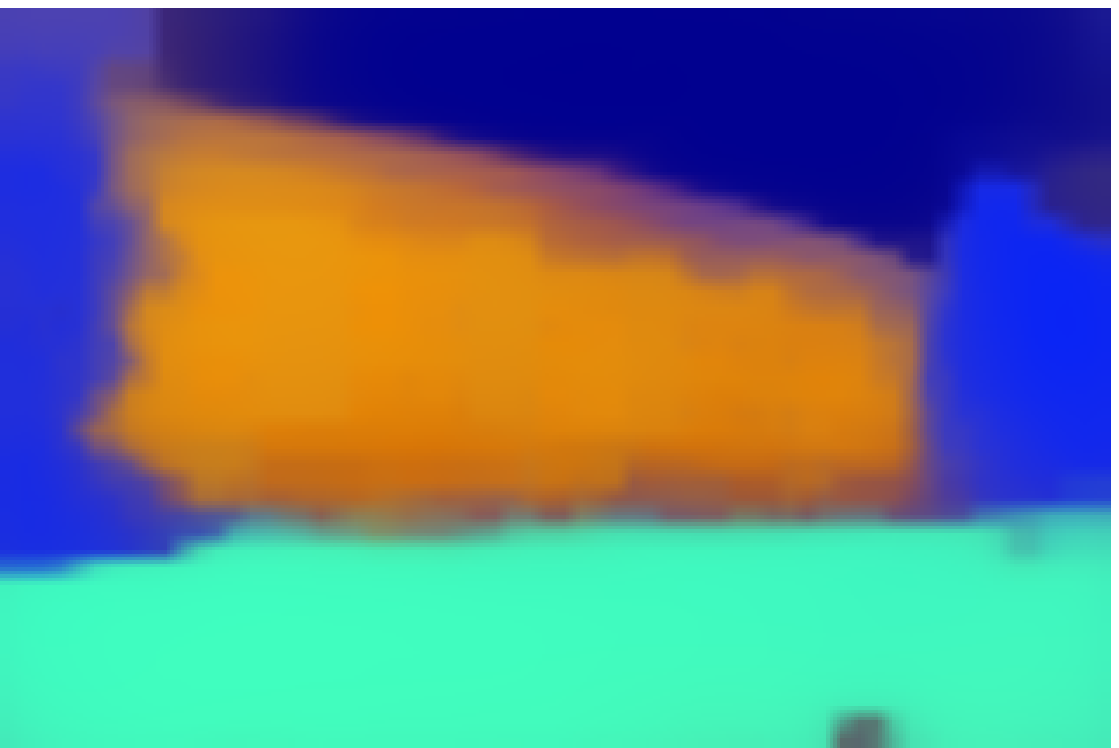}

}

\subfloat[Pseudolikelihood]{\includegraphics[width=0.32\columnwidth]{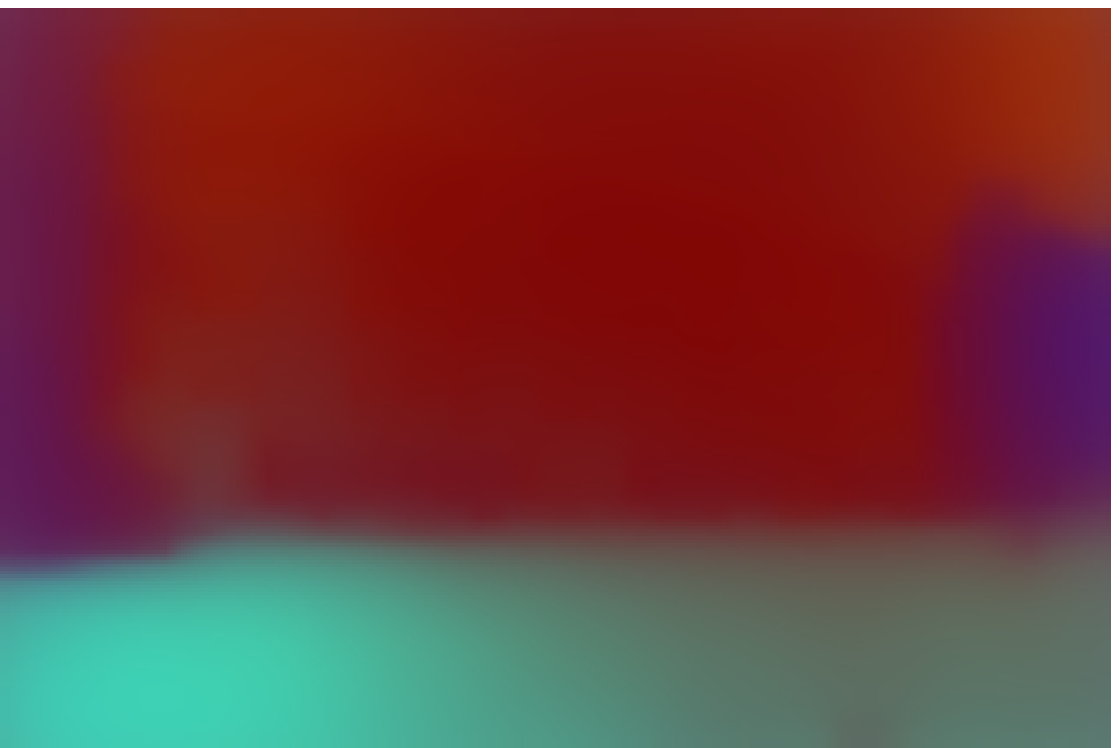}

}\subfloat[Piecewise]{\includegraphics[width=0.32\columnwidth]{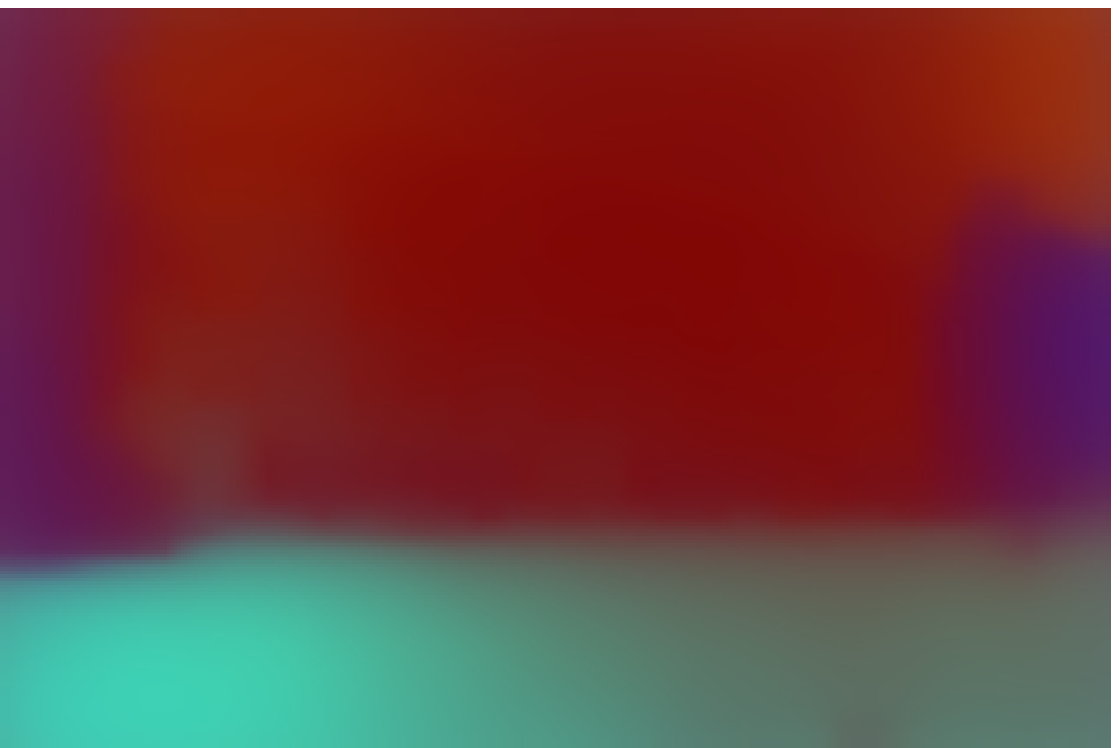}

}\subfloat[Independent]{\includegraphics[width=0.32\columnwidth]{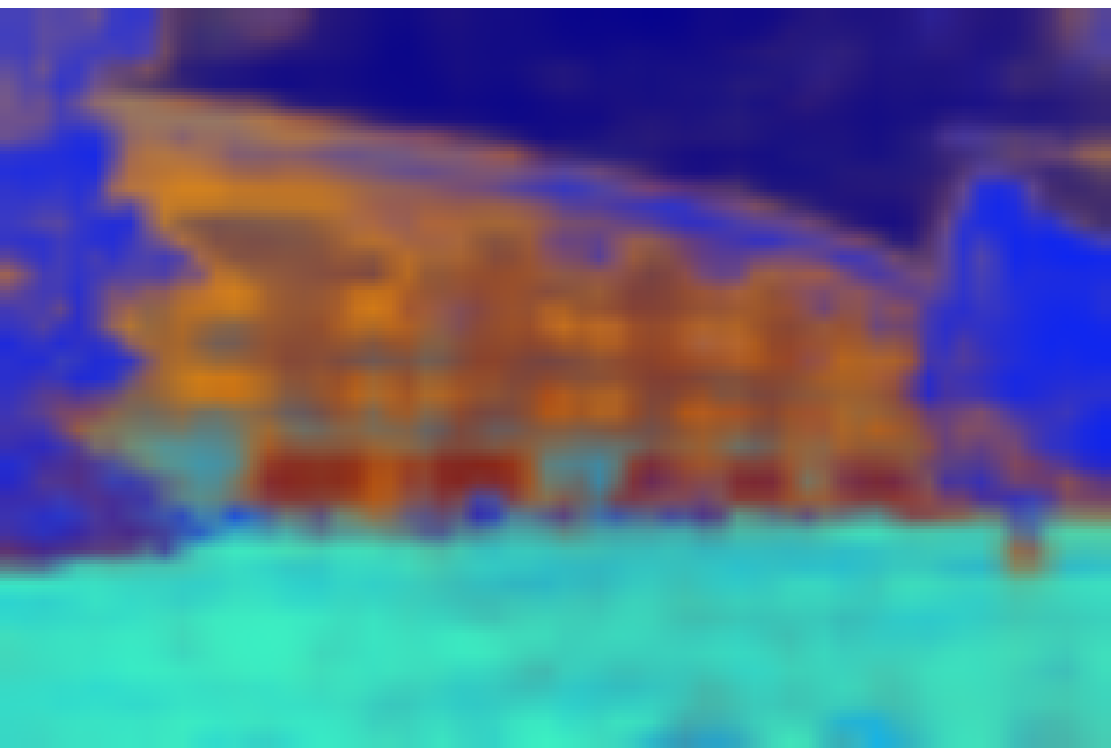}

}

\caption{Example marginals from the backgrounds dataset using $20$ iterations
for truncated fitting. \label{fig:Stanford-Background-Results}}
\end{figure}

Our final experiments consider the Stanford backgrounds dataset. This
consists of 715 images of resolution approximately $240\times320$.
Most pixels are labeled as one of eight classes, with some unlabeled.

The unary features ${\bf u}({\bf y},i)$ we use here are identical
to those for the horses dataset. In preliminary experiments, we tried
training models with various resolutions. We found that reducing resolution
to 20\% of the original after computing features, then upsampling
the predicted marginals yielded significantly better results than
using the original resolution. Hence, this is done for all the following
experiments. Edge features are identical to those for the horses dataset,
except only based on the difference of RGB intensities, meaning 22
total edge features ${\bf v}({\bf y},i,j)$.

In a first experiment, we compare the performance of truncated fitting,
perturbation, and back-propagation, using 100 images from this dataset
for speed. We train with varying thresholds for perturbation and back-propagation,
while for truncated learning, we use various numbers of iterations.
All models are trained with TRW to fit the univariate logistic loss.
If a bad search-direction is encountered, L-BFGS is re-initialized. 
Results are shown in Fig. \ref{fig:pert_bprop-trunc}. We see that
with loose thresholds, perturbation and back-propagation experience
learning failure at sub-optimal solutions. Truncated fitting is far
more successful; using more iterations is slower to fit, but leads
to better performance at convergence.

\begin{figure}
\includegraphics[bb=18bp 180bp 550bp 616bp,clip,scale=0.175]{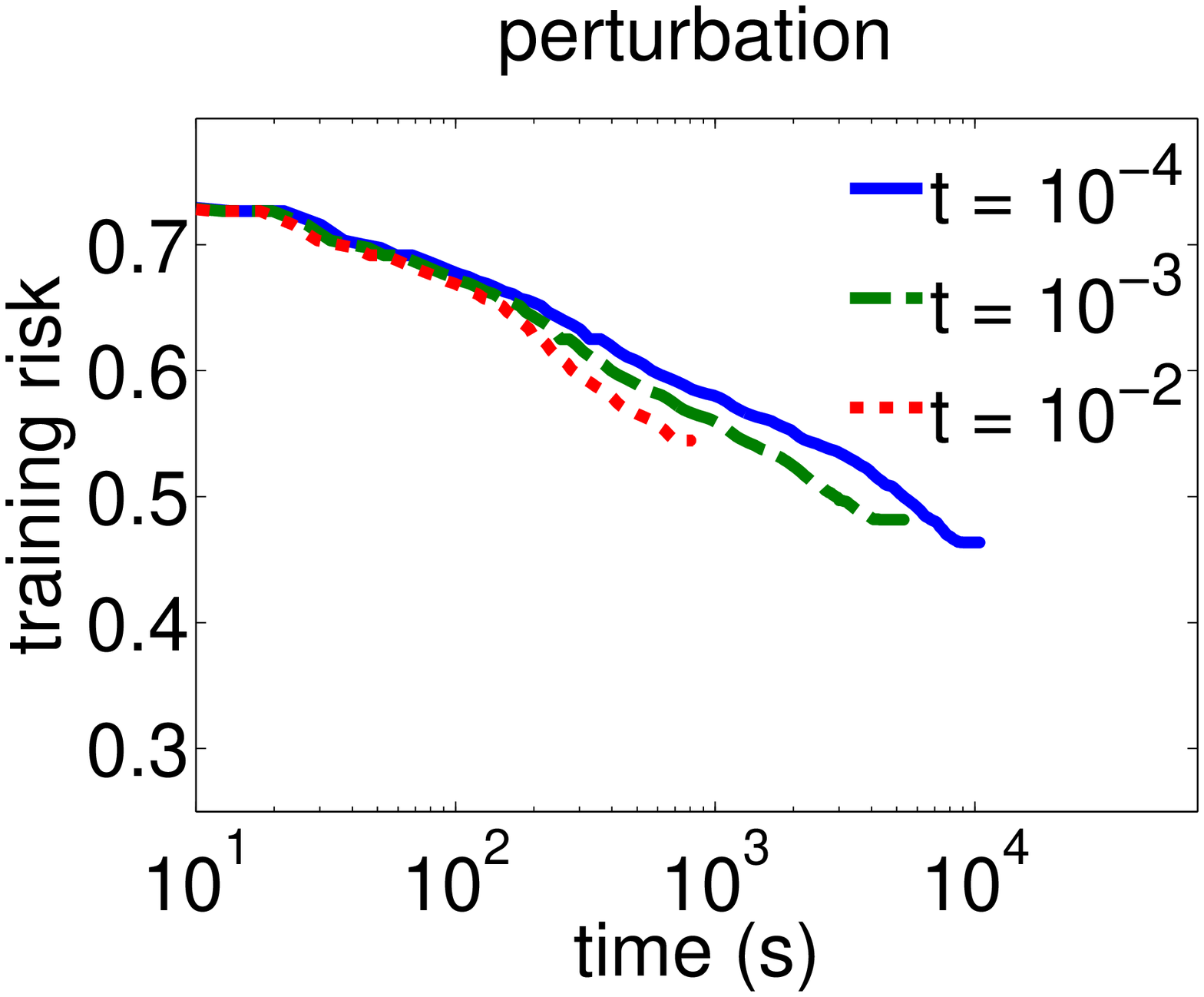}\includegraphics[bb=100bp 180bp 550bp 616bp,clip,scale=0.175]{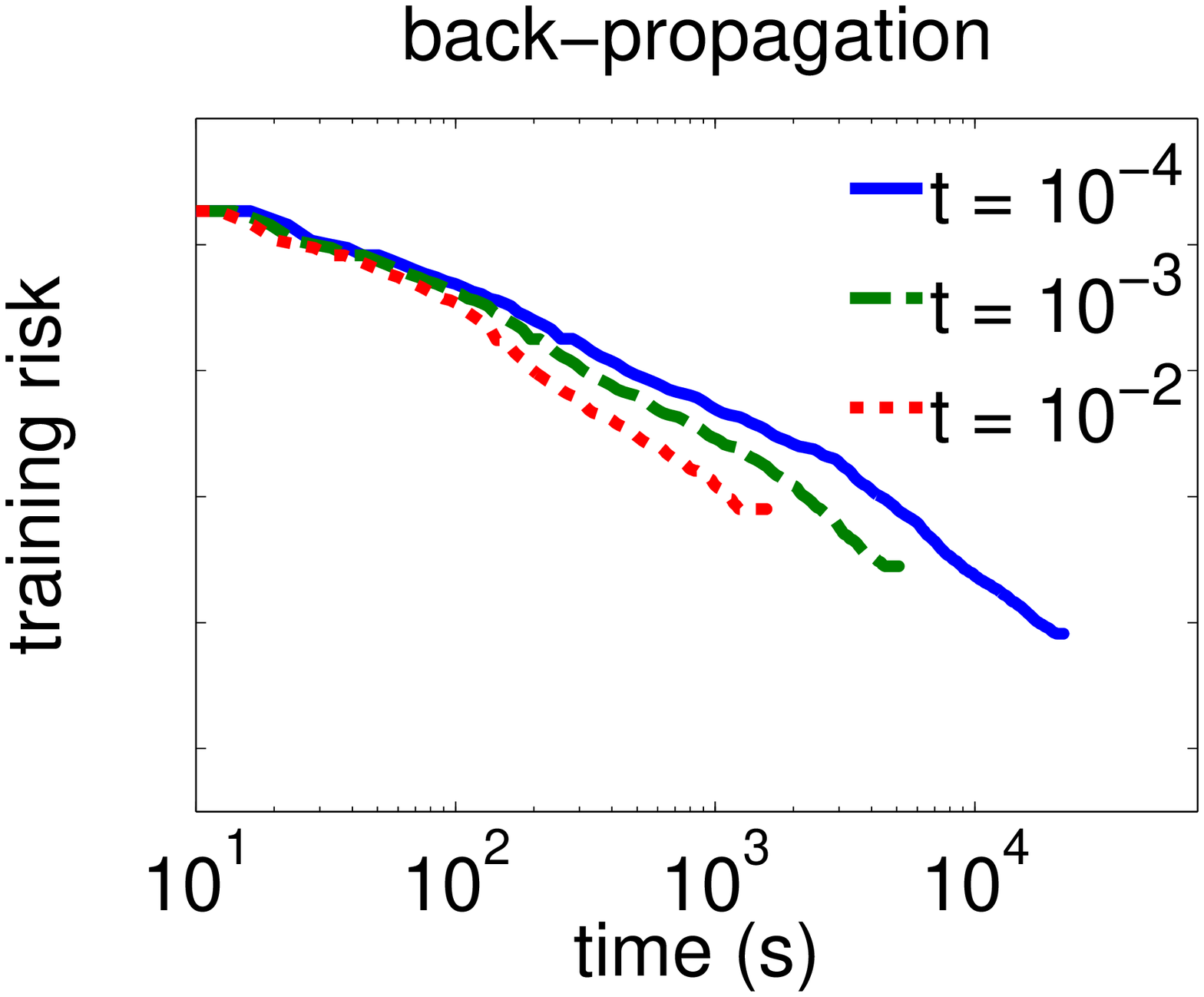}\includegraphics[bb=100bp 180bp 550bp 616bp,clip,scale=0.175]{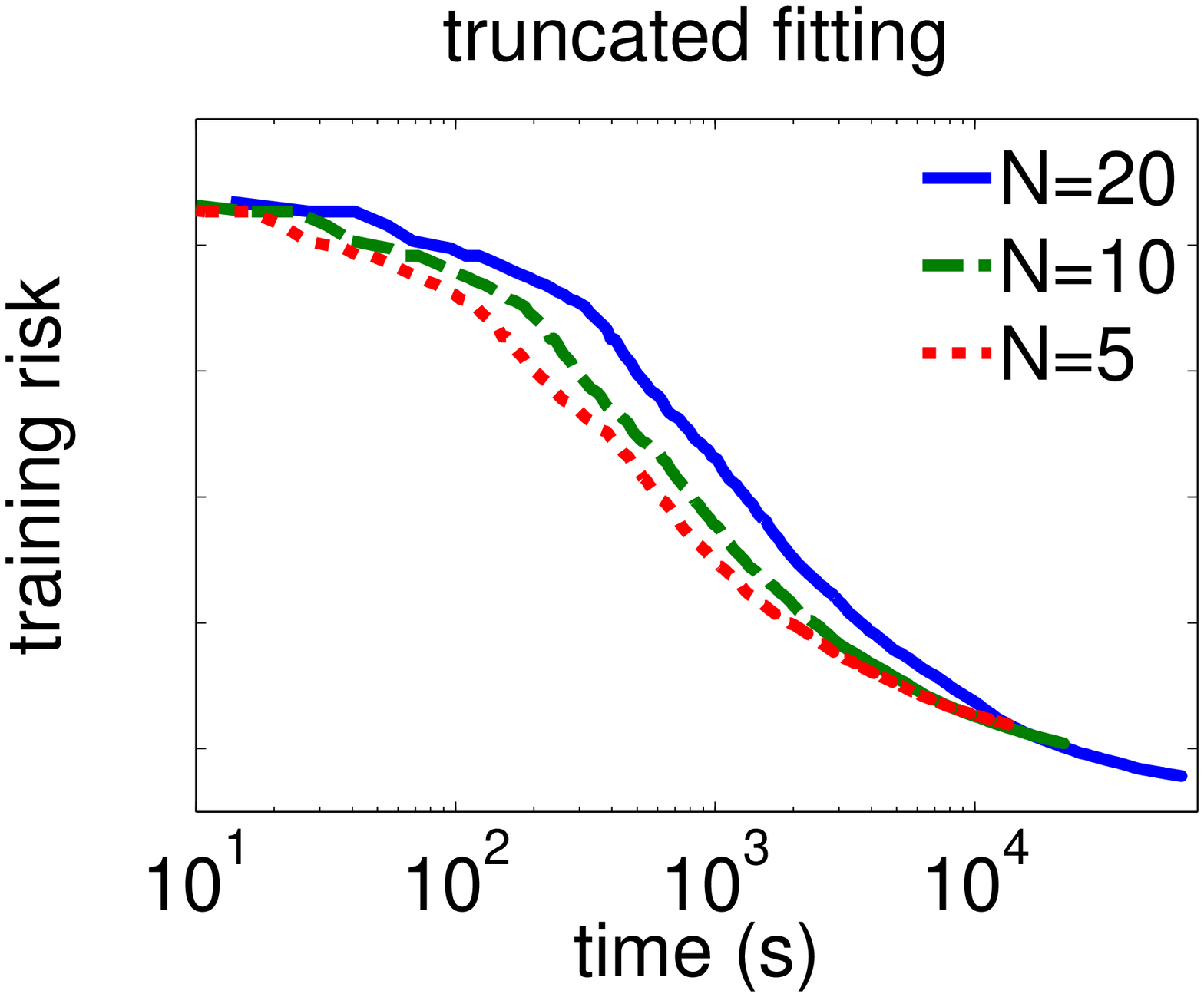}

\caption{Comparison of different learning methods on the backgrounds dataset
with 100 images. All use an 8-core 2.26 Ghz PC.\label{fig:pert_bprop-trunc} }
\vspace{-15pt}
\end{figure}

In a second experiment, we train on the entire dataset, with errors
estimated using five-fold cross validation. Here, an incremental procedure
was used, where first a subset of 32 images was trained on, with 1000
learning iterations. The number of images was repeatedly doubled,
with the number of learning iterations halved. In practice this reduced
training time substantially. Results are shown in Fig. \ref{fig:Stanford-Background-Results}.
These results use a ridge regularization penalty of $\lambda$ on
all parameters. (This is relative to the empirical risk, as measured
per pixel.) For EM, and marginal based loss functions, we set this
as $\lambda=10^{-3}$. We found in preliminary experiments that using
a smaller regularization constant caused truncated EM to diverge even
with 10 iterations. The pseudolikelihood and piecewise benefit from
less regularization, and so we use $\lambda=10^{-4}$ there. Again
the marginal based loss functions outperform others. In particular,
they also perform quite well even with $5$ iterations, where truncated
EM diverges.

\section{Conclusions}

Training parameters of graphical models in a high treewidth setting
involves several challenges. In this paper, we focus on three: model
mis-specification, the necessity of approximate inference, and computational
complexity.

The main technical contribution of this paper is several methods for
training based on the marginals predicted by a given approximate inference
algorithm. These methods take into account model mis-specification
and inference approximations. To combat computational complexity,
we introduce ``truncated'' learning, where the inference algorithm
only needs to be run for a fixed number of iterations. Truncation
can also be applied, somewhat heuristically, to the surrogate likelihood.

Among previous methods, we experimentally find the surrogate likelihood
to outperform the pseudolikelihood or piecewise learning. By more
closely reflecting the test criterion of Hamming loss, marginal-based
loss functions perform still better, particularly on harder problems
(Though the surrogate likelihood generally displays smaller train/test
gaps.) Additionally marginal-based loss functions are more amenable
to truncation, as the surrogate likelihood diverges with too few iterations.

\bibliographystyle{IEEEtran}

\section{Biography}

Justin Domke obtained a PhD degree in Computer Science from the University
of Maryland, College Park in 2009. From 2009 to 2012, he was an Assistant
Professor at Rochester Institute of Technology. Since 2012, he is
a member of the Machine Learning group at NICTA.

\clearpage{}

\newpage{}

\section{Appendix A: Variational Inference}
\begin{thm*}
[Exact variational principle]The log-partition function can also
be represented as

\[
A(\boldsymbol{\theta})=\max_{\boldsymbol{\mu}\in\mathcal{M}}\boldsymbol{\theta}\cdot\boldsymbol{\mu}+H(\boldsymbol{\mu}),
\]
where 
\[
\mathcal{M}=\{\boldsymbol{\mu}':\exists\boldsymbol{\theta},\boldsymbol{\mu}'=\boldsymbol{\mu}(\boldsymbol{\theta})\}
\]
is the marginal polytope, and 
\[
H(\boldsymbol{\mu})=-\sum_{{\bf x}}p({\bf x};\boldsymbol{\theta}(\boldsymbol{\mu}))\log p({\bf x};\boldsymbol{\theta}(\boldsymbol{\mu}))
\]

\noindent is the entropy.\end{thm*}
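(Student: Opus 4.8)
The plan is to fix the parameter vector $\boldsymbol{\theta}$ and show that for \emph{every} $\boldsymbol{\mu}\in\mathcal{M}$ the variational objective $\boldsymbol{\theta}\cdot\boldsymbol{\mu}+H(\boldsymbol{\mu})$ equals $A(\boldsymbol{\theta})$ minus a Kullback--Leibler divergence, so that it never exceeds $A(\boldsymbol{\theta})$ and attains that value exactly at $\boldsymbol{\mu}=\boldsymbol{\mu}(\boldsymbol{\theta})$. Since $\boldsymbol{\mu}(\boldsymbol{\theta})\in\mathcal{M}$ by the very definition of the marginal polytope, the maximum is attained and equals $A(\boldsymbol{\theta})$, which is precisely the claim.

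First I would unpack the objective at an arbitrary $\boldsymbol{\mu}\in\mathcal{M}$. Using the reverse mapping, let $q(\mathbf{x})=p(\mathbf{x};\boldsymbol{\theta}(\boldsymbol{\mu}))$ be the exponential-family distribution whose mean parameters are $\boldsymbol{\mu}$, so that $\boldsymbol{\mu}=\sum_{\mathbf{x}}q(\mathbf{x})\mathbf{f}(\mathbf{x})$ and, by definition, $H(\boldsymbol{\mu})=-\sum_{\mathbf{x}}q(\mathbf{x})\log q(\mathbf{x})$. Then $\boldsymbol{\theta}\cdot\boldsymbol{\mu}=\sum_{\mathbf{x}}q(\mathbf{x})\,\boldsymbol{\theta}\cdot\mathbf{f}(\mathbf{x})$, and I would substitute the exponential-family identity $\boldsymbol{\theta}\cdot\mathbf{f}(\mathbf{x})=\log p(\mathbf{x};\boldsymbol{\theta})+A(\boldsymbol{\theta})$, which is immediate from $p(\mathbf{x};\boldsymbol{\theta})=\exp(\boldsymbol{\theta}\cdot\mathbf{f}(\mathbf{x})-A(\boldsymbol{\theta}))$. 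Collecting terms and using $\sum_{\mathbf{x}}q(\mathbf{x})=1$ to pull out the constant $A(\boldsymbol{\theta})$ gives
\[
\boldsymbol{\theta}\cdot\boldsymbol{\mu}+H(\boldsymbol{\mu})=A(\boldsymbol{\theta})+\sum_{\mathbf{x}}q(\mathbf{x})\log\frac{p(\mathbf{x};\boldsymbol{\theta})}{q(\mathbf{x})}=A(\boldsymbol{\theta})-KL\bigl(q\,\|\,p(\cdot;\boldsymbol{\theta})\bigr).
\]
Since the KL divergence is nonnegative and vanishes exactly when $q=p(\cdot;\boldsymbol{\theta})$, the objective is bounded above by $A(\boldsymbol{\theta})$, with equality precisely when the mean parameters $\boldsymbol{\mu}$ coincide with $\boldsymbol{\mu}(\boldsymbol{\theta})$. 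As that optimizer lies in $\mathcal{M}$, the maximum is achieved and the identity follows; as a by-product this also verifies the accompanying formula for the maximizer, Eq. \ref{eq:mu-variational}.

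The main obstacle is not the algebra but the well-definedness of the entropy term through the reverse mapping $\boldsymbol{\theta}(\boldsymbol{\mu})$. I would appeal to the fact, noted in the setup, that although $\boldsymbol{\theta}(\boldsymbol{\mu})$ need not be unique, any two preimages induce the same distribution, so $q=p(\cdot;\boldsymbol{\theta}(\boldsymbol{\mu}))$ and hence $H(\boldsymbol{\mu})$ is unambiguous. A more delicate point is the boundary of $\mathcal{M}$: extreme mean vectors are realizable only in a limiting sense, since they force some configurations to zero probability and drive $\boldsymbol{\theta}$ to infinity, so strictly speaking the reverse mapping is defined on the relative interior. Because the theorem is stated with $\mathcal{M}$ taken as the set of means actually attained by some finite $\boldsymbol{\theta}$, and because the maximizer $\boldsymbol{\mu}(\boldsymbol{\theta})$ itself lies in that set, the $\max$ is legitimately attained and I can avoid a separate closure or continuity argument for the boundary.
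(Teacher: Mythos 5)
Your proof is correct, but it takes a genuinely different route from the paper. The paper argues via Fenchel conjugate duality: since $A$ is convex, $A(\boldsymbol{\theta})=\sup_{\boldsymbol{\mu}}\boldsymbol{\theta}\cdot\boldsymbol{\mu}-A^{*}(\boldsymbol{\mu})$, and it then computes the conjugate $A^{*}$ explicitly, showing it equals $-H(\boldsymbol{\mu})$ on $\mathcal{M}$ and $+\infty$ off $\mathcal{M}$ (the substitution it performs to get $\boldsymbol{\theta}(\boldsymbol{\mu})\cdot\boldsymbol{\mu}-A(\boldsymbol{\theta}(\boldsymbol{\mu}))=-H(\boldsymbol{\mu})$ is essentially the same algebra as yours). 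You instead fix $\boldsymbol{\theta}$ and establish the pointwise identity $\boldsymbol{\theta}\cdot\boldsymbol{\mu}+H(\boldsymbol{\mu})=A(\boldsymbol{\theta})-KL\bigl(p(\cdot;\boldsymbol{\theta}(\boldsymbol{\mu}))\,\|\,p(\cdot;\boldsymbol{\theta})\bigr)$ and conclude from nonnegativity of the KL divergence. Your version is more elementary and self-contained: it avoids invoking biconjugation and the behavior of $A^{*}$ outside the marginal polytope (a point the paper treats rather loosely, even writing the conjugate with an infimum and saying it is ``unbounded above''), and it delivers the maximizer formula of Eq.~\ref{eq:mu-variational} as a by-product without appealing to Danskin's theorem. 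What the paper's route buys in exchange is the global identification $A^{*}=-H+\iota_{\mathcal{M}}$ as a standalone fact, which it reuses later (the entropy-projection lemma behind the TRW bound rewrites $H(\boldsymbol{\mu})=-\inf_{\boldsymbol{\theta}}(\boldsymbol{\theta}\cdot\boldsymbol{\mu}-A(\boldsymbol{\theta}))$); your argument does not produce that characterization. Your remarks on the well-definedness of $H$ through the non-unique reverse mapping and on attainment at $\boldsymbol{\mu}(\boldsymbol{\theta})\in\mathcal{M}$ are sound and, if anything, more careful than the paper's own treatment.
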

\begin{proof}
[Proof of the exact variational principle]As $A$ is convex, we have
that
\[
A(\boldsymbol{\theta})=\sup_{\boldsymbol{\mu}}\boldsymbol{\theta}\cdot\boldsymbol{\mu}-A^{*}(\boldsymbol{\mu})
\]
where
\[
A^{*}(\boldsymbol{\mu})=\inf_{\boldsymbol{\theta}}\boldsymbol{\theta}\cdot\boldsymbol{\mu}-A(\boldsymbol{\theta})
\]
is the conjugate dual.

Now, since $dA/d\boldsymbol{\theta}=\boldsymbol{\mu}(\boldsymbol{\theta})$,
if $\boldsymbol{\mu}\not\in\mathcal{M},$ then the infimum for $A^{*}$
is unbounded above. For $\boldsymbol{\mu}\in\mathcal{M}$, the infimum
will be obtained at $\boldsymbol{\theta}(\boldsymbol{\mu})$. Thus
\[
A^{*}(\boldsymbol{\mu})=\begin{cases}
\infty & \boldsymbol{\mu}\not\in\mathcal{M}\\
\boldsymbol{\theta}(\boldsymbol{\mu})\cdot\boldsymbol{\mu}-A(\boldsymbol{\theta}(\boldsymbol{\mu})) & \boldsymbol{\mu}\in\mathcal{M}
\end{cases}.
\]

Now, for $\boldsymbol{\mu}\in\mathcal{M},$ we can see by substitution
that

\begin{align*}
A^{*}(\boldsymbol{\mu})= & \boldsymbol{\theta}(\boldsymbol{\mu})\cdot\sum_{{\bf x}}p({\bf x};\boldsymbol{\theta}(\boldsymbol{\mu})){\bf f}({\bf x})-A(\boldsymbol{\theta}(\boldsymbol{\mu}))\\
= & \sum_{{\bf x}}p({\bf x};\boldsymbol{\theta}(\boldsymbol{\mu}))(\boldsymbol{\theta}(\boldsymbol{\mu})\cdot{\bf f}({\bf x})-A(\boldsymbol{\theta}(\boldsymbol{\mu})))\\
= & \sum_{{\bf x}}p({\bf x};\boldsymbol{\theta}(\boldsymbol{\mu}))\log p({\bf x};\boldsymbol{\theta}(\boldsymbol{\mu}))=-H(\boldsymbol{\mu}).
\end{align*}

And so, finally, 
\begin{equation}
A^{*}(\boldsymbol{\mu})=\begin{cases}
\infty & \boldsymbol{\mu}\not\in\mathcal{M}\\
-H(\boldsymbol{\mu}) & \boldsymbol{\mu}\in\mathcal{M}
\end{cases},\label{eq:Astar_equals_H-1}
\end{equation}

\noindent which is equivalent to the desired result.\end{proof}
\begin{thm*}
[Mean Field Updates]A local maximum of Eq. \ref{eq:mu-TRW} can be
reached by iterating the updates

\[
\mu(x_{j})\leftarrow\frac{1}{Z}\exp\bigl(\theta(x_{j})+\sum_{c:j\in c}\sum_{{\bf x}_{c\backslash j}}\theta({\bf x}_{c})\prod_{i\in c\backslash j}\mu(x_{i})\bigr),
\]

\noindent where $Z$ is a normalizing factor ensuring that $\sum_{x_{j}}\mu(x_{j})=1$. \end{thm*}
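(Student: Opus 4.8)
The plan is to recognize the stated update as the \emph{exact} solution of the single-block maximization that defines block-coordinate ascent on the mean field objective (Eq. \ref{eq:A-meanfield}), and then to invoke the standard monotone-ascent argument for convergence. First I would write the objective explicitly in terms of the univariate marginals. Since $\mathcal{F}$ consists of fully-factorized distributions, every clique marginal factorizes as $\mu(\mathbf{x}_c)=\prod_{i\in c}\mu(x_i)$, so using the mean-field entropy of Eq. \ref{eq:meanfield-entropy} the objective becomes
\[
\boldsymbol{\theta}\cdot\boldsymbol{\mu}+H(\boldsymbol{\mu})=\sum_i\sum_{x_i}\theta(x_i)\mu(x_i)+\sum_c\sum_{\mathbf{x}_c}\theta(\mathbf{x}_c)\prod_{i\in c}\mu(x_i)-\sum_i\sum_{x_i}\mu(x_i)\log\mu(x_i).
\]

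Next I would fix all blocks $\mu_i$ with $i\neq j$ and optimize over the single block $\mu_j$ subject to $\sum_{x_j}\mu(x_j)=1$. Introducing a Lagrange multiplier $\lambda$ for the normalization constraint and differentiating with respect to $\mu(x_j)$, the univariate linear term contributes $\theta(x_j)$, each clique term containing $j$ contributes $\sum_{\mathbf{x}_{c\backslash j}}\theta(\mathbf{x}_c)\prod_{i\in c\backslash j}\mu(x_i)$ (the product over the held-fixed neighbors), and the entropy contributes $-\log\mu(x_j)-1$. Setting the stationarity condition to zero and solving for $\mu(x_j)$ yields
\[
\mu(x_j)\propto\exp\bigl(\theta(x_j)+\sum_{c:j\in c}\sum_{\mathbf{x}_{c\backslash j}}\theta(\mathbf{x}_c)\prod_{i\in c\backslash j}\mu(x_i)\bigr),
\]
with the multiplier absorbed into the normalizer $Z=e^{1+\lambda}$, which is exactly the stated update.

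I would then argue that this stationary point is in fact the block maximizer: with the other blocks held fixed, the restricted objective is concave in $\mu_j$ (linear terms plus the strictly concave univariate entropy $-\sum_{x_j}\mu(x_j)\log\mu(x_j)$), so the unique stationary point is the global maximizer over that block and the update is therefore a (weak) ascent step. Convergence to a local maximum then follows from the standard block-coordinate ascent argument: the objective is bounded above (by $A(\boldsymbol{\theta})$, since $\mathcal{F}\subset\mathcal{M}$ gives $\tilde{A}(\boldsymbol{\theta})\leq A(\boldsymbol{\theta})$), each sweep of updates is non-decreasing so the objective values converge, and the fixed points of the updates are precisely the points satisfying first-order stationarity in every block.

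The main obstacle I anticipate is the bookkeeping in the derivative of the clique-energy term: one must correctly identify which factors of $\prod_{i\in c}\mu(x_i)$ survive differentiation with respect to a particular value of $\mu(x_j)$, and keep the residual sum over $\mathbf{x}_{c\backslash j}$ (rather than $\mathbf{x}_c$) straight. A secondary subtlety is that coordinate ascent guarantees only a \emph{local} maximum, consistent with the theorem's phrasing, since $\mathcal{F}$ is non-convex in general graphs; no claim of global optimality should be made.
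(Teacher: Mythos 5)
Your proposal follows essentially the same route as the paper's proof: reduce the objective to the univariate marginals using $\mu(\mathbf{x}_c)=\prod_{i\in c}\mu(x_i)$ and the factorized entropy, form a Lagrangian for the normalization constraint, and solve the stationarity condition for $\mu(x_j)$ to obtain the stated update. Your additional observations---that the single-block subproblem is concave so the stationary point is the true block maximizer, and that boundedness plus monotone ascent gives convergence to a local maximum---are correct and actually make the argument slightly more complete than the paper's, which stops at the stationarity calculation and merely remarks that only a local optimum is guaranteed because the objective is non-concave over $\mathcal{F}$.
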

\begin{proof}
[Proof of Mean Field Updates]The first thing to note is that for
$\boldsymbol{\mu}\in\mathcal{F},$ several simplifying results hold,
which are easy to verify, namely

\begin{align*}
p({\bf x};\boldsymbol{\theta}(\boldsymbol{\mu}))= & \prod_{i}\mu(x_{i})\\
H(\boldsymbol{\mu})= & -\sum_{i}\sum_{x_{i}}\mu(x_{i})\log\mu(x_{i}).\\
\mu({\bf x}_{c})= & \prod_{i\in c}\mu(x_{i}).
\end{align*}

Now, let $\tilde{A}$ denote the approximate partition function that
results from solving Eq. \ref{eq:A-variational} with the marginal
polytope replaced by $\mathcal{F}$. By substitution of previous results,
we can see that this reduces to an optimization over univariate marginals
only.

\begin{eqnarray}
\tilde{A}(\boldsymbol{\theta}) & = & \max_{\{\mu(x_{i})\}}\sum_{i}\sum_{x_{i}}\theta(x_{i})\mu(x_{i})+\sum_{c}\sum_{{\bf x}_{c}}\theta({\bf x}_{c})\mu({\bf x}_{c})\nonumber \\
 &  & -\sum_{i}\sum_{x_{i}}\mu(x_{i})\log\mu(x_{i}).\label{eq:meanfield_opt}
\end{eqnarray}
Now, form a Lagrangian, enforcing that $\sum_{x_{j}}\mu(x_{j})=1.$

\begin{eqnarray*}
\mathbb{L} & = & \sum_{j}\sum_{x_{j}}\theta(x_{j})\mu(x_{j})+\sum_{c}\sum_{{\bf x}_{c}}\theta({\bf x}_{c})\prod_{i\in c}\mu(x_{i})\\
 &  & -\sum_{j}\sum_{x_{j}}\mu(x_{j})\log\mu(x_{j})+\sum_{j}\lambda_{j}(1-\sum_{x_{j}}\mu(x_{j})).
\end{eqnarray*}

Setting $d\mathbb{L}/d\mu(x_{j})=0$, solving for $\mu(x_{j})$, we
obtain
\[
\mu(x_{j})\propto\exp\bigl(\theta(x_{j})+\sum_{c:j\in c}\sum_{{\bf x}_{c\backslash j}}\theta({\bf x}_{c})\prod_{i\in c\backslash j}\mu(x_{i})\bigr),
\]

\noindent meaning this is a local minimum. Normalizing by $Z$ gives
the result.

Note that only a local maximum results since the mean-field objective
is non-concave\cite[ Sec. 5.4]{WainwrightJordanMonster}.
\end{proof}
Two preliminary results are needed to prove the TRW entropy bound.
\begin{lem*}
Let $\boldsymbol{\mu}^{\mathcal{G}}$ be the ``projection'' of $\boldsymbol{\mu}$
onto a subgraph $\mathcal{G}$, defined by
\[
\boldsymbol{\mu}^{\mathcal{G}}=\{\mu(x_{i})\,\forall i\}\cup\{\mu({\bf x}_{c})\,\forall c\in\mathcal{G}\}.
\]

Then, for $\boldsymbol{\mu}\in\mathcal{M},$
\[
H(\boldsymbol{\mu}^{\mathcal{G}})\geq H(\boldsymbol{\mu}).
\]
\end{lem*}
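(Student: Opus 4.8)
The plan is to invoke the maximum-entropy characterization of the entropy functional $H$. Recall that $H(\boldsymbol{\mu})$ was defined as the Shannon entropy of the unique exponential-family distribution $p(\mathbf{x};\boldsymbol{\theta}(\boldsymbol{\mu}))$ whose moments equal $\boldsymbol{\mu}$. The first thing I would establish is the standard fact that this distribution is the \emph{maximum}-entropy distribution among all $q$ with matching moments, i.e. $H(\boldsymbol{\mu}) = \max\{ H(q) : \mathbb{E}_q[\mathbf{f}]=\boldsymbol{\mu}\}$, where $H(q)=-\sum_{\mathbf{x}}q(\mathbf{x})\log q(\mathbf{x})$. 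This follows from a one-line Gibbs'-inequality argument: for any feasible $q$ the cross-entropy $-\sum_{\mathbf{x}} q\log p(\mathbf{x};\boldsymbol{\theta}(\boldsymbol{\mu}))$ collapses to $-\boldsymbol{\theta}(\boldsymbol{\mu})\cdot\boldsymbol{\mu}+A(\boldsymbol{\theta}(\boldsymbol{\mu}))=H(\boldsymbol{\mu})$ because $q$ matches the moments, whence $H(q)-H(\boldsymbol{\mu})=-KL\bigl(q\,\|\,p(\cdot;\boldsymbol{\theta}(\boldsymbol{\mu}))\bigr)\le 0$. This is essentially the dual calculation already carried out in the proof of the exact variational principle. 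The crucial observation is that the same characterization holds verbatim for the \emph{reduced} model whose sufficient statistics are only the univariate indicators together with the clique indicators for $c\in\mathcal{G}$; there $H(\boldsymbol{\mu}^{\mathcal{G}})$ is the maximum Shannon entropy over all distributions whose univariate marginals and $\mathcal{G}$-clique marginals equal those recorded in $\boldsymbol{\mu}^{\mathcal{G}}$.

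With both characterizations in hand the inequality is almost immediate. Let $p^{\star}=p(\mathbf{x};\boldsymbol{\theta}(\boldsymbol{\mu}))$ be the full-model distribution attaining $H(\boldsymbol{\mu})$. Since its moments equal $\boldsymbol{\mu}$, restricting attention to the reduced statistics shows that $p^{\star}$ reproduces exactly the marginals recorded in $\boldsymbol{\mu}^{\mathcal{G}}$: its univariate marginals and its marginals on cliques $c\in\mathcal{G}$ agree with $\boldsymbol{\mu}^{\mathcal{G}}$. This realizability also certifies that $\boldsymbol{\mu}^{\mathcal{G}}$ lies in the reduced marginal polytope, so that $H(\boldsymbol{\mu}^{\mathcal{G}})$ is well defined. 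Consequently $p^{\star}$ is a feasible point of the maximum-entropy program defining $H(\boldsymbol{\mu}^{\mathcal{G}})$, and therefore
\[
H(\boldsymbol{\mu}^{\mathcal{G}})=\max_{q:\,\mathbb{E}_q[\mathbf{f}^{\mathcal{G}}]=\boldsymbol{\mu}^{\mathcal{G}}}H(q)\;\ge\;H(p^{\star})=H(\boldsymbol{\mu}).
\]
Intuitively, passing from $\boldsymbol{\mu}$ to $\boldsymbol{\mu}^{\mathcal{G}}$ simply discards the clique constraints for $c\notin\mathcal{G}$, enlarging the feasible set of the max-entropy program and thus only raising its optimal value.

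I expect the only real subtlety — more bookkeeping than obstacle — to be keeping straight which sufficient statistics index the two models, and confirming that the $H$ appearing on the left is genuinely the \emph{same} functional (maximum Shannon entropy subject to moment matching) as on the right, so that the two variational problems are directly comparable. Once the max-entropy reading of $H$ is pinned down for both models, the content of the argument reduces to ``relaxing constraints cannot decrease the maximum,'' and no computation is required. A minor point worth a sentence is the existence of $\boldsymbol{\theta}(\boldsymbol{\mu}^{\mathcal{G}})$, but feasibility of $p^{\star}$ already guarantees that $\boldsymbol{\mu}^{\mathcal{G}}$ is realizable, which is all that the maximum-entropy formulation needs.
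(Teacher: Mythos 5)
Your proof is correct, but it runs the argument on the primal side where the paper runs it on the dual side. The paper's proof uses the identity $H(\boldsymbol{\mu})=-A^{*}(\boldsymbol{\mu})$ established in the exact variational principle, writes $H(\boldsymbol{\mu}^{\mathcal{G}})$ as the same conjugate expression but with the optimization over $\boldsymbol{\theta}$ restricted to the subspace $\{\theta_{c}=0\ \forall c\notin\mathcal{G}\}$, and concludes because restricting the domain of that optimization (while $\boldsymbol{\mu}$ and $\boldsymbol{\mu}^{\mathcal{G}}$ agree on every component where $\boldsymbol{\theta}$ may be nonzero) can only move the value in the favorable direction. You instead use the Legendre-dual picture: $H(\boldsymbol{\mu})$ is the maximum Shannon entropy over distributions matching the moments, dropping the clique constraints for $c\notin\mathcal{G}$ enlarges the feasible set, and the full-model maximizer $p^{\star}$ remains feasible for the relaxed program. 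The two arguments are conjugate to one another and of comparable length; yours has the modest advantages of being self-contained (Gibbs' inequality rather than properties of $A^{*}$), of making the well-definedness and realizability of $\boldsymbol{\mu}^{\mathcal{G}}$ explicit via the feasibility of $p^{\star}$, and of avoiding a sign subtlety in the paper's statement of the conjugate (which is written with an infimum where the monotonicity argument really requires the supremum form $A^{*}(\boldsymbol{\mu})=\sup_{\boldsymbol{\theta}}\,\boldsymbol{\theta}\cdot\boldsymbol{\mu}-A(\boldsymbol{\theta})$); the paper's version is slightly more economical in that it reuses the conjugate-duality computation already carried out for the exact variational principle.
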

\begin{proof}
First, note that, by Eq. \ref{eq:Astar_equals_H-1}, for $\boldsymbol{\mu}\in\mathcal{M},$
\[
H(\boldsymbol{\mu})=-A^{*}(\boldsymbol{\mu})=-\inf_{\boldsymbol{\theta}}(\boldsymbol{\theta}\cdot\boldsymbol{\mu}-A(\boldsymbol{\theta})).
\]

Now, the entropy of $\boldsymbol{\mu}^{\mathcal{G}}$ could be defined
as an infimum only over the parameters $\boldsymbol{\theta}$ corresponding
to the cliques in $\mathcal{G}$. Equivalently, however, we can define
it as a constrained optimization 
\[
H(\boldsymbol{\mu}^{\mathcal{G}})=-\inf_{\substack{\boldsymbol{\theta}:\theta_{c}=0\\
\forall c\not\in\mathcal{G}
}
}\bigl(\boldsymbol{\theta}\cdot\boldsymbol{\mu}^{\mathcal{G}}-A(\boldsymbol{\theta})\bigr).
\]

Since the infimum for $H(\boldsymbol{\mu}^{\mathcal{G}})$ takes place
over a more constrained set, but $\boldsymbol{\mu}$ and $\boldsymbol{\mu}^{\mathcal{G}}$
are identical on all the components where $\boldsymbol{\theta}$ may
be nonzero, we have the result.
\end{proof}
Our next result is that the approximate entropy considered in Eq.
\ref{eq:TRW-entropy} is exact for tree-structured graphs, when $\rho_{c}=1$.
\begin{lem*}
For $\boldsymbol{\mu}\in\mathcal{M}$ for a marginal polytope $\mathcal{M}$
corresponding to a tree-structured graph,
\[
H(\boldsymbol{\mu})=\sum_{i}H(\mu_{i})-\sum_{c}I(\mu_{c}).
\]
\end{lem*}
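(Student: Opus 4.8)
The plan is to exploit the fact that on a tree the joint distribution is completely determined by its single-node and clique (edge) marginals through an explicit product formula, and then simply read off the entropy of that factored form. Concretely, I would first establish that for $\boldsymbol{\mu}\in\mathcal{M}$ on a tree-structured graph,
\[
p({\bf x};\boldsymbol{\theta}(\boldsymbol{\mu}))=\prod_{i}\mu(x_{i})\prod_{c}\frac{\mu({\bf x}_{c})}{\prod_{i\in c}\mu(x_{i})}.
\]
To justify this, I would verify that the right-hand side defines a legitimate probability distribution that factors according to the tree graph (hence lies in the exponential family associated with this $\mathcal{M}$) and whose single-node and clique marginals are exactly $\mu(x_{i})$ and $\mu({\bf x}_{c})$. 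Since the discussion preceding Section \ref{sub:Exact-Variational-Principle} guarantees that $p({\bf x};\boldsymbol{\theta}(\boldsymbol{\mu}))$ is the unambiguous distribution in the family realizing the marginals $\boldsymbol{\mu}$, the two expressions must coincide.

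Given the product formula, the entropy computation is direct from $H(\boldsymbol{\mu})=-\sum_{\bf x}p({\bf x};\boldsymbol{\theta}(\boldsymbol{\mu}))\log p({\bf x};\boldsymbol{\theta}(\boldsymbol{\mu}))$. Substituting the factorization splits $\log p$ into a sum of per-node terms $\log\mu(x_{i})$ and per-clique terms $\log\frac{\mu({\bf x}_{c})}{\prod_{i\in c}\mu(x_{i})}$. Because each summand depends on only a single variable or a single clique, the expectation over the full joint collapses to an expectation over the relevant marginal. The per-node terms contribute $-\sum_{i}\sum_{x_{i}}\mu(x_{i})\log\mu(x_{i})=\sum_{i}H(\mu_{i})$, while the per-clique terms contribute $-\sum_{c}\sum_{{\bf x}_{c}}\mu({\bf x}_{c})\log\frac{\mu({\bf x}_{c})}{\prod_{i\in c}\mu(x_{i})}=-\sum_{c}I(\mu_{c})$, recognizing the mutual information of Eq. \ref{eq:TRW-mutualinfo}. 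Adding the two groups yields $H(\boldsymbol{\mu})=\sum_{i}H(\mu_{i})-\sum_{c}I(\mu_{c})$, as claimed.

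The main obstacle is establishing the product factorization in the first step, and this is precisely where the tree hypothesis is indispensable. The two facts to check are that the right-hand side sums to one and that marginalizing it reproduces $\boldsymbol{\mu}$; I expect both to follow by induction, repeatedly peeling a leaf from the tree and summing out its variable, since on a tree this leaves the factorization of the reduced graph intact and the consistency constraints of $\mathcal{L}$ (Eq. \ref{eq:local-polytope}), which hold automatically for $\boldsymbol{\mu}\in\mathcal{M}$, make the telescoping exact. On a graph with cycles no such clean peeling exists, the analogous formula is neither normalized nor marginal-consistent, and the decomposition fails—which is exactly why this lemma, and the exactness of the TRW entropy approximation with $\rho_{c}=1$, hold only on trees.
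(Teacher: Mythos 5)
Your proposal is correct and follows essentially the same route as the paper: both rest on the tree factorization $p({\bf x})=\prod_{i}p(x_{i})\prod_{c}\frac{p({\bf x}_{c})}{\prod_{i\in c}p(x_{i})}$ (which the paper also justifies by induction) and then substitute it into the entropy so that the per-node and per-clique terms collapse to $\sum_{i}H(\mu_{i})$ and $-\sum_{c}I(\mu_{c})$. Your extra remarks on leaf-peeling and on why the argument fails on graphs with cycles are sound elaborations of the step the paper dismisses as ``easily shown.''
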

\begin{proof}
First, note that for any any tree structured distribution can be factored
as
\[
p({\bf x})=\prod_{i}p(x_{i})\prod_{c}\frac{p({\bf x}_{c})}{\prod_{i\in c}p(x_{i})}.
\]

\noindent (This is easily shown by induction.) Now, recall our definition
of $H$:
\[
H(\boldsymbol{\mu})=-\sum_{{\bf x}}p({\bf x};\boldsymbol{\theta}(\boldsymbol{\mu}))\log p({\bf x};\boldsymbol{\theta}(\boldsymbol{\mu}))
\]

Substituting the tree-factorized version of $p$ into the equation
yields
\begin{eqnarray*}
H(\boldsymbol{\mu}) & = & -\sum_{{\bf x}}p({\bf x};\boldsymbol{\theta}(\boldsymbol{\mu}))\log p({\bf x};\boldsymbol{\theta}(\boldsymbol{\mu}))\\
 & = & -\sum_{{\bf x}}\sum_{i}p({\bf x};\boldsymbol{\theta}(\boldsymbol{\mu}))\log p(x_{i};\boldsymbol{\theta}(\boldsymbol{\mu}))\\
 &  & -\sum_{{\bf x}}\sum_{c}p({\bf x};\boldsymbol{\theta}(\boldsymbol{\mu}))\log\frac{p({\bf x}_{c};\boldsymbol{\theta}(\boldsymbol{\mu}))}{\prod_{i\in c}p(x_{i};\boldsymbol{\theta}(\boldsymbol{\mu}))}\\
 & = & -\sum_{i}\sum_{x_{i}}\mu(x_{i})\log\mu(x_{i})\\
 &  & -\sum_{c}\sum_{{\bf x}_{c}}\mu({\bf x}_{c})\log\frac{\mu({\bf x}_{c})}{\prod_{i\in c}\mu(x_{i})}
\end{eqnarray*}

\end{proof}
Finally, combining these two lemmas, we can show the main result,
that the TRW entropy is an upper bound.
\begin{thm*}
[TRW Entropy Bound]Let $Pr(\mathcal{G})$ be a distribution over
tree structured graphs, and define $\rho_{c}=Pr(c\in\mathcal{G}).$
Then, with $\tilde{H}$ as defined in Eq. \ref{eq:TRW-entropy}, 
\[
\tilde{H}(\boldsymbol{\mu})\geq H(\boldsymbol{\mu}).
\]
\end{thm*}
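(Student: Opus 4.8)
The plan is to recognize the TRW entropy $\tilde{H}(\boldsymbol{\mu})$ as an \emph{expectation}, over the random tree $\mathcal{G}\sim Pr(\mathcal{G})$, of the exact entropy of the projected marginals $\boldsymbol{\mu}^{\mathcal{G}}$, and then to bound each realization of that entropy below by $H(\boldsymbol{\mu})$ using the projection lemma. The two preliminary lemmas do essentially all the work; what remains is to splice them together with a linearity-of-expectation argument. Throughout I assume $\boldsymbol{\mu}\in\mathcal{M}$, which is forced anyway since $H(\boldsymbol{\mu})$ is only defined there.

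First I would fix an arbitrary tree $\mathcal{G}$ in the support of $Pr$ and form the projection $\boldsymbol{\mu}^{\mathcal{G}}$, which by definition keeps \emph{all} univariate marginals $\mu(x_i)$ together with the clique marginals $\mu({\bf x}_c)$ for $c\in\mathcal{G}$. Because $\boldsymbol{\mu}\in\mathcal{M}$ its components are mutually consistent, so $\boldsymbol{\mu}^{\mathcal{G}}$ lies in the marginal polytope of the tree $\mathcal{G}$, and the tree-entropy lemma applies verbatim to give the exact decomposition
\[
H(\boldsymbol{\mu}^{\mathcal{G}})=\sum_i H(\mu_i)-\sum_{c\in\mathcal{G}}I(\mu_c).
\]
Simultaneously, the projection lemma gives $H(\boldsymbol{\mu}^{\mathcal{G}})\geq H(\boldsymbol{\mu})$ for every such $\mathcal{G}$.

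Next I would take the expectation over $\mathcal{G}\sim Pr$. Since the projection-lemma inequality holds for every tree in the support, $\mathbb{E}_{\mathcal{G}}[H(\boldsymbol{\mu}^{\mathcal{G}})]\geq H(\boldsymbol{\mu})$. For the other side I would push the expectation through the finite sums in the decomposition: the univariate terms $\sum_i H(\mu_i)$ do not depend on $\mathcal{G}$ and pass through unchanged, while swapping summation and expectation in the clique term (equivalently, inserting a clique-membership indicator) gives $\mathbb{E}_{\mathcal{G}}\sum_{c\in\mathcal{G}}I(\mu_c)=\sum_c Pr(c\in\mathcal{G})\,I(\mu_c)=\sum_c\rho_c I(\mu_c)$, using precisely the definition $\rho_c=Pr(c\in\mathcal{G})$. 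Hence $\mathbb{E}_{\mathcal{G}}[H(\boldsymbol{\mu}^{\mathcal{G}})]=\sum_i H(\mu_i)-\sum_c\rho_c I(\mu_c)=\tilde{H}(\boldsymbol{\mu})$, and combining with the inequality above yields $\tilde{H}(\boldsymbol{\mu})\geq H(\boldsymbol{\mu})$.

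The one point requiring care—and the closest thing to an obstacle—is the bookkeeping that leaves the univariate entropies with coefficient one rather than a $\rho$-weight. This relies on the projection $\boldsymbol{\mu}^{\mathcal{G}}$ retaining all singleton marginals for every tree (equivalently, each tree spanning all variables, so $Pr(i\in\mathcal{G})=1$), which is exactly how the projection was defined. Everything else is linearity of expectation applied to finite sums, so no convergence or measurability subtleties arise.
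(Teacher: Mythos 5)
Your proposal is correct and follows essentially the same route as the paper: apply the projection lemma $H(\boldsymbol{\mu}^{\mathcal{G}})\geq H(\boldsymbol{\mu})$ for each tree, decompose $H(\boldsymbol{\mu}^{\mathcal{G}})$ via the exact tree-entropy lemma, and average over $\mathcal{G}\sim Pr(\mathcal{G})$ using linearity of expectation and $\rho_{c}=Pr(c\in\mathcal{G})$. Your explicit remark that the univariate terms keep coefficient one because every tree in the support retains all singleton marginals is a point the paper leaves implicit, but the argument is the same.
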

\begin{proof}
The previous Lemma shows that for any specific tree $\mathcal{G}$,
\[
H(\boldsymbol{\mu}^{\mathcal{G}})\geq H(\boldsymbol{\mu}).
\]

Thus, it follows that
\begin{eqnarray*}
H(\boldsymbol{\mu}) & \leq & \sum_{\mathcal{G}}Pr(\mathcal{G})H(\boldsymbol{\mu}^{\mathcal{G}})\\
 & = & \sum_{\mathcal{G}}Pr(\mathcal{G})\bigl(\sum_{i}H(\mu_{i})-\sum_{c\in\mathcal{G}}I(\mu_{c})\bigr)\\
 & = & \sum_{i}H(\mu_{i})-\sum_{c}\rho_{c}I(\mu_{c})
\end{eqnarray*}
\end{proof}
\begin{thm*}
[TRW Updates]Let $\rho_{c}$ be as in the previous theorem. Then,
if the updates in Eq. \ref{eq:TRW-msgs} reach a fixed point, the
marginals defined by 
\begin{eqnarray*}
\mu({\bf x}_{c}) & \propto & e^{\frac{1}{\rho_{c}}\theta({\bf x}_{c})}\prod_{i\in c}e^{\theta(x_{i})}\frac{\prod_{d:i\in d}m_{d}(x_{i})^{\rho_{d}}}{m_{c}(x_{i})},\\
\mu(x_{i}) & \propto & e^{\theta(x_{i})}\prod_{d:i\in d}m_{d}(x_{i})^{\rho_{d}}
\end{eqnarray*}

\end{thm*}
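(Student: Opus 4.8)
The plan is to mirror the Lagrangian argument used above for the mean field updates, but now working over the local polytope $\mathcal{L}$ of Eq.~\ref{eq:local-polytope}, treating the univariate marginals $\mu(x_i)$ and the clique marginals $\mu({\bf x}_c)$ as \emph{independent} variables linked by the marginalization constraints. First I would form the Lagrangian for Eq.~\ref{eq:A-TRW},
\[
\mathbb{L}=\boldsymbol{\theta}\cdot\boldsymbol{\mu}+\tilde{H}(\boldsymbol{\mu})+\sum_{c}\sum_{i\in c}\sum_{x_{i}}\lambda_{ci}(x_{i})\bigl(\mu(x_{i})-\sum_{{\bf x}_{c\backslash i}}\mu({\bf x}_{c})\bigr)+\sum_{i}\gamma_{i}\bigl(1-\sum_{x_{i}}\mu(x_{i})\bigr),
\]
where $\lambda_{ci}(x_i)$ enforces the consistency constraint $\sum_{{\bf x}_{c\backslash i}}\mu({\bf x}_c)=\mu(x_i)$ and $\gamma_i$ enforces normalization. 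Expanding $\tilde{H}$ via Eqs.~\ref{eq:TRW-entropy} and \ref{eq:TRW-mutualinfo}, I would then set $d\mathbb{L}/d\mu({\bf x}_c)=0$ and $d\mathbb{L}/d\mu(x_i)=0$.

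The clique derivative is routine and yields $\mu({\bf x}_c)\propto e^{\theta({\bf x}_c)/\rho_c}\prod_{i\in c}\mu(x_i)\,e^{-\lambda_{ci}(x_i)/\rho_c}$. The univariate derivative is the delicate step: the factor $\prod_{i\in c}\mu(x_i)$ sitting in the denominator of the mutual information contributes a term $\rho_c\sum_{{\bf x}_{c\backslash i}}\mu({\bf x}_c)/\mu(x_i)$, which I would simplify by invoking the consistency constraint $\sum_{{\bf x}_{c\backslash i}}\mu({\bf x}_c)=\mu(x_i)$ holding at any \emph{feasible} stationary point, collapsing this ratio to the constant $\rho_c$. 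This leaves $\mu(x_i)\propto e^{\theta(x_i)}\prod_{c:i\in c}e^{\lambda_{ci}(x_i)}$.

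The key identification is to set $\lambda_{ci}(x_i)=\rho_c\log m_c(x_i)$, i.e.\ $e^{\lambda_{ci}(x_i)}=m_c(x_i)^{\rho_c}$. Substituting this into the two stationarity expressions reproduces the claimed forms exactly: the factor $e^{-\lambda_{ci}(x_i)/\rho_c}$ becomes $1/m_c(x_i)$, giving the univariate form $\mu(x_i)\propto e^{\theta(x_i)}\prod_{d:i\in d}m_d(x_i)^{\rho_d}$, and inserting this univariate form into the clique expression supplies the remaining $e^{\theta(x_j)}$ and $m_d(x_j)^{\rho_d}$ factors of the stated $\mu({\bf x}_c)$. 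It then remains to verify that primal feasibility is equivalent to the message fixed point. Summing the candidate $\mu({\bf x}_c)$ over ${\bf x}_{c\backslash i}$ and factoring out the $x_i$-dependent terms, the residual sum over ${\bf x}_{c\backslash i}$ is, term for term, the right-hand side of Eq.~\ref{eq:TRW-msgs}; hence $\sum_{{\bf x}_{c\backslash i}}\mu({\bf x}_c)\propto e^{\theta(x_i)}\prod_{d:i\in d}m_d(x_i)^{\rho_d}\propto\mu(x_i)$ precisely when the messages are at a fixed point, and normalizing both $\mu(x_i)$ and $\mu({\bf x}_c)$ to sum to one forces the proportionality constant to unity, giving the constraint with equality.

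Finally, to upgrade ``stationary point'' to ``global optimum,'' I would appeal to concavity: by the preceding lemma $\tilde{H}(\boldsymbol{\mu})=\sum_{\mathcal{G}}Pr(\mathcal{G})H(\boldsymbol{\mu}^{\mathcal{G}})$ is a convex combination of tree entropies, each concave in $\boldsymbol{\mu}$, so the TRW objective is concave over the convex set $\mathcal{L}$ and every KKT point is a global maximizer \cite{WainwrightJordanMonster}. I expect the main obstacle to be the univariate derivative: correctly handling the $\mu(x_i)$ buried inside the mutual-information logarithm and recognizing that the consistency constraint is exactly what collapses the awkward $\sum_{{\bf x}_{c\backslash i}}\mu({\bf x}_c)/\mu(x_i)$ term. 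Keeping the $\rho_c$ and $\rho_d$ exponents straight when identifying $\lambda_{ci}$ with the messages is the second place where care is needed, since the exponent $\rho_d$ must appear for \emph{every} factor $d$ touching a variable, not only for the factor $c$ being differentiated.
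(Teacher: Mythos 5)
Your proposal is correct and follows essentially the same route as the paper's proof: a Lagrangian/KKT analysis over the local polytope, identifying the messages with exponentiated (scaled) Lagrange multipliers of the consistency constraints, and recovering Eq.~\ref{eq:TRW-msgs} as the condition for primal feasibility of $\mu(x_i)=\sum_{{\bf x}_{c\backslash i}}\mu({\bf x}_c)$. The only substantive differences are cosmetic (the paper substitutes the marginalization constraint into the mutual-information term before differentiating, where you keep $\mu(x_i)$ independent and collapse the resulting $\sum_{{\bf x}_{c\backslash i}}\mu({\bf x}_c)/\mu(x_i)$ ratio via feasibility) plus one genuine addition: your concavity argument for why a KKT point is the \emph{global} optimum, which the paper asserts but does not justify.
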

\noindent constitute the global optimum of Eq. \ref{eq:A-TRW}.
\begin{proof}
The TRW optimization is defined by
\[
\tilde{A}(\boldsymbol{\theta})=\max_{\boldsymbol{\mu}\in\mathcal{L}}\boldsymbol{\theta}\cdot\boldsymbol{\mu}+\tilde{H}(\boldsymbol{\mu}).
\]

Consider the equivalent optimization 

\begin{eqnarray*}
 & \max_{\boldsymbol{\mu}}\theta\cdot\mu+\tilde{H}(\mu)\\
\text{s.t.} & 1 & =\sum_{x_{i}}\mu(x_{i})\\
 & \mu(x_{i}). & =\sum_{{\bf x}_{c\backslash i}}\mu({\bf x}_{c})
\end{eqnarray*}

\noindent which makes the constraints of the local polytope explicit

First, we form a Lagrangian, and consider derivatives with respect
to $\boldsymbol{\mu}$, for fixed Lagrange multipliers.

\begin{align*}
\mathbb{L}= & \theta\cdot\mu+H(\mu)+\sum_{i}\lambda_{i}(1-\sum_{x_{i}}\mu(x_{i}))\\
 & +\sum_{c}\sum_{x_{i}}\lambda_{c}(x_{i})\bigl(\sum_{{\bf x}_{c\backslash i}}\mu({\bf x}_{c})-\mu(x_{i})\bigr)\\
\frac{d\mathbb{L}}{d\mu({\bf x}_{c})}= & \theta({\bf x}_{c})+\rho_{c}\bigl(\sum_{i\in c}\bigl(1+\log\sum_{{\bf x}'_{i}}\mu(x_{i},{\bf x}'_{-i})\bigr)\\
 & \,\,-1-\log\mu({\bf x}_{c})\bigr)+\sum_{i\in c}\lambda_{c}(x_{i})\\
\frac{d\mathbb{L}}{d\mu(x_{i})}= & \theta(x_{i})-1-\log\mu(x_{i})-\lambda_{i}-\sum_{c:i\in c}\lambda_{c}(x_{i})
\end{align*}

Setting these derivatives equal to zero, we can solve for the log-marginals
in terms of the Lagrange multipliers:

\begin{align*}
\rho_{c}\log\mu({\bf x}_{c})= & \theta({\bf x}_{c})+\rho_{c}\bigl(\sum_{i\in c}\bigl(1+\log\sum_{{\bf x}'_{i}}\mu(x_{i},{\bf x}'_{-i})\bigr)-1\bigr)\\
 & +\sum_{i\in c}\lambda_{c}(x_{i})\\
\log\mu(x_{i})= & \theta(x_{i})-1-\lambda_{i}-\sum_{c:i\in c}\lambda_{c}(x_{i})
\end{align*}

Now, at a solution, we must have that $\mu(x_{i})=\sum_{{\bf x}_{c\backslash i}}\mu({\bf x}_{c})$.
This leads first to the the constraint that

\begin{alignat*}{1}
\log\mu({\bf x}_{c})= & \frac{1}{\rho_{c}}\theta({\bf x}_{c})+\sum_{i\in c}\bigl(1+\log\mu(x_{i})+\frac{1}{\rho_{c}}\lambda_{c}(x_{i})\bigr)-1\\
= & \frac{1}{\rho_{c}}\theta({\bf x}_{c})+\sum_{i\in c}\bigl(1+\theta(x_{i})-1-\lambda_{i}-\sum_{c:i\in c}\lambda_{c}(x_{i})\\
 & +\frac{1}{\rho_{c}}\lambda_{c}(x_{i})\bigr)-1.
\end{alignat*}

Now, define the ``messages'' in terms of the Lagrange multipliers
as 
\[
m_{c}(x_{i})=e^{-\frac{1}{\rho_{c}}\lambda_{c}(x_{i})}.
\]

\noindent If the appropriate values of the messages were known, then
we could solve for the clique-wise marginals as
\begin{eqnarray*}
\mu({\bf x}_{c}) & \propto & e^{\frac{1}{\rho_{c}}\theta({\bf x}_{c})}\prod_{i\in c}e^{\theta(x_{i})}\exp\bigl(\frac{1}{\rho_{c}}\lambda_{c}(x_{i})\bigr)\\
 &  & \times\prod_{d:i\in d}\exp\bigl(-\lambda_{d}(x_{i}))\bigr)\\
 & = & e^{\frac{1}{\rho_{c}}\theta({\bf x}_{c})}\prod_{i\in c}e^{\theta(x_{i})}\frac{\prod_{d:i\in d}m_{d}(x_{i})^{\rho_{d}}}{m_{c}(x_{i})}.
\end{eqnarray*}
 The univiariate marginals are available simply as

\begin{eqnarray*}
\mu(x_{i}) & \propto & \exp\bigl(\theta(x_{i})-\sum_{d:i\in d}\lambda_{d}(x_{i})\bigr)\\
 & = & e^{\theta(x_{i})}\prod_{d:i\in d}\exp(-\lambda_{d}(x_{i})\bigr)\\
 & = & e^{\theta(x_{i})}\prod_{d:i\in d}m_{d}(x_{i})^{\rho_{d}}.
\end{eqnarray*}

We may now derive the actual propagation. At a valid solution, the
Lagrange multipliers (and hence the messages) must be selected so
that the constraints are satisfied. In particular, we must have that
$\mu(x_{i})=\sum_{{\bf x}_{c\backslash i}}\mu({\bf x}_{c}).$ From
the constraint, we can derive constraints on neighboring sets of messages.

\begin{alignat}{1}
\mu(x_{i}) & =\sum_{{\bf x}_{c\backslash i}}\mu({\bf x}_{c})\nonumber \\
e^{\theta(x_{i})}\prod_{d:i\in d}m_{d}(x_{i})^{\rho_{d}} & \propto\sum_{{\bf x}_{c\backslash i}}e^{\frac{1}{\rho_{c}}\theta({\bf x}_{c})}\nonumber \\
 & \times\prod_{i\in c}e^{\theta(x_{i})}\frac{\prod_{d:i\in d}m_{d}(x_{i})^{\rho_{d}}}{m_{c}(x_{i})}\label{eq:TRW_beforecancel}
\end{alignat}

Now, the left hand side of this equation cancels one term from the
product on line \ref{eq:TRW_beforecancel}, except for the denominator
of $m_{c}(x_{i}).$ This leads to the constraint of

\begin{alignat*}{1}
m_{c}(x_{i}) & \propto\sum_{{\bf x}_{c\backslash i}}e^{\frac{1}{\rho_{c}}\theta({\bf x}_{c})}\prod_{j\in c\backslash i}e^{\theta(x_{j})}\frac{\prod_{d:j\in d}m_{d}(x_{j})^{\rho_{d}}}{m_{c}(x_{j})}.
\end{alignat*}
This is exactly the equation used as a fixed-point equation in the
TRW algorithm.
\end{proof}

\section{Appendix B: Implicit Differentiation}
\begin{thm*}
Suppose that
\[
\boldsymbol{\mu}(\boldsymbol{\theta}):=\underset{\boldsymbol{\mu}:B\boldsymbol{\mu}={\bf d}}{\arg\max}\,\,\boldsymbol{\theta}\cdot\boldsymbol{\mu}+H(\boldsymbol{\mu}).
\]
Define $L(\boldsymbol{\theta},{\bf x})=Q(\boldsymbol{\mu}(\boldsymbol{\theta}),{\bf x}).$
Then, letting $D=\frac{d^{2}H}{d\boldsymbol{\mu}d\boldsymbol{\mu}^{T}},$
\[
\frac{dL}{d\boldsymbol{\theta}}=\bigl(D^{-1}B^{T}(BD^{-1}B^{T})^{-1}BD^{-1}-D^{-1}\bigr)\frac{dQ}{d\boldsymbol{\mu}}.
\]
\end{thm*}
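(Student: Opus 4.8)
The plan is to treat $\boldsymbol{\mu}(\boldsymbol{\theta})$ as being defined implicitly by the first-order optimality conditions of the constrained maximization in Eq. \ref{eq:implicit_optimization}, and then differentiate those conditions, in the same spirit as the Lagrangian arguments used for the mean-field and TRW updates in Appendix A. First I would attach Lagrange multipliers $\boldsymbol{\lambda}$ to the equality constraints and form
\[
\mathbb{L}=\boldsymbol{\theta}\cdot\boldsymbol{\mu}+H(\boldsymbol{\mu})+\boldsymbol{\lambda}^{T}\bigl({\bf d}-B\boldsymbol{\mu}\bigr).
\]
Because the maximizer is characterized by stationarity, the pair $\bigl(\boldsymbol{\mu}(\boldsymbol{\theta}),\boldsymbol{\lambda}(\boldsymbol{\theta})\bigr)$ satisfies, as an identity in $\boldsymbol{\theta}$,
\[
\boldsymbol{\theta}+\frac{dH}{d\boldsymbol{\mu}}-B^{T}\boldsymbol{\lambda}=0,\qquad B\boldsymbol{\mu}={\bf d}.
\]

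Next I would differentiate both identities with respect to $\boldsymbol{\theta}^{T}$, regarding $\boldsymbol{\mu}$ and $\boldsymbol{\lambda}$ as functions of $\boldsymbol{\theta}$. Writing $D=\frac{d^{2}H}{d\boldsymbol{\mu}d\boldsymbol{\mu}^{T}}$, the stationarity equation yields $I+D\frac{d\boldsymbol{\mu}}{d\boldsymbol{\theta}^{T}}-B^{T}\frac{d\boldsymbol{\lambda}}{d\boldsymbol{\theta}^{T}}=0$, while the constraint yields $B\frac{d\boldsymbol{\mu}}{d\boldsymbol{\theta}^{T}}=0$. Solving the first for $\frac{d\boldsymbol{\mu}}{d\boldsymbol{\theta}^{T}}=D^{-1}\bigl(B^{T}\frac{d\boldsymbol{\lambda}}{d\boldsymbol{\theta}^{T}}-I\bigr)$, substituting into the second to get $BD^{-1}B^{T}\frac{d\boldsymbol{\lambda}}{d\boldsymbol{\theta}^{T}}=BD^{-1}$, and hence $\frac{d\boldsymbol{\lambda}}{d\boldsymbol{\theta}^{T}}=(BD^{-1}B^{T})^{-1}BD^{-1}$, back-substitution gives
\[
\frac{d\boldsymbol{\mu}}{d\boldsymbol{\theta}^{T}}=D^{-1}B^{T}(BD^{-1}B^{T})^{-1}BD^{-1}-D^{-1}.
\]
Finally I would note that this expression is manifestly symmetric, so $\frac{d\boldsymbol{\mu}^{T}}{d\boldsymbol{\theta}}=\frac{d\boldsymbol{\mu}}{d\boldsymbol{\theta}^{T}}$ (consistent with its being a Hessian of $A$, as observed in the Perturbation section), and conclude via the chain rule $\frac{dL}{d\boldsymbol{\theta}}=\frac{d\boldsymbol{\mu}^{T}}{d\boldsymbol{\theta}}\frac{dQ}{d\boldsymbol{\mu}}$, which is exactly the claimed formula.

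The routine algebra here is just the block elimination of $\frac{d\boldsymbol{\lambda}}{d\boldsymbol{\theta}^{T}}$ from the KKT sensitivity system, essentially a Schur-complement solve. The main obstacle is not the manipulation but the regularity assumptions that make it legitimate: one must invoke the implicit function theorem to know $\boldsymbol{\mu}(\boldsymbol{\theta})$ (and $\boldsymbol{\lambda}(\boldsymbol{\theta})$) is differentiable, which requires the KKT Jacobian to be nonsingular, and one needs $D$ to be invertible (for entropy it is negative definite on the relevant subspace) and $B$ to have full row rank so that $BD^{-1}B^{T}$ is invertible. I would state these as standing hypotheses, pointing to the differentiability conditions already cited via \cite{ImplicitDifferentiationByPerturbation}, rather than re-deriving them, since they hold for the local-polytope constraints and entropy approximations of interest.
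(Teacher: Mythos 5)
Your proof is correct and follows essentially the same route as the paper's: both form the Lagrangian for the constrained maximization, implicitly differentiate the stationarity/feasibility (KKT) conditions with respect to $\boldsymbol{\theta}$, and extract $\frac{d\boldsymbol{\mu}^{T}}{d\boldsymbol{\theta}}$ before applying the chain rule. The only difference is cosmetic---you eliminate $\frac{d\boldsymbol{\lambda}}{d\boldsymbol{\theta}^{T}}$ by hand via a Schur-complement solve where the paper invokes the block-matrix-inverse formula---and your explicit statement of the regularity hypotheses (invertibility of $D$, full row rank of $B$) is a welcome addition the paper leaves implicit.
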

\begin{proof}
First, recall the implicit differentiation theorem. If the relationship
between ${\bf a}$ and ${\bf b}$ is implicitly determined by ${\bf f}({\bf a},{\bf b})={\bf 0}$,
then

\[
\frac{d{\bf b}^{T}}{d{\bf a}}=-\frac{d{\bf f}^{T}}{d{\bf a}}\bigl(\frac{d{\bf f}^{T}}{d{\bf b}}\bigr)^{-1}.
\]

In our case, given the Lagrangian

\[
\mathbb{L}=\boldsymbol{\theta}\cdot\boldsymbol{\mu}+H(\boldsymbol{\mu})+\boldsymbol{\lambda}^{T}(B\boldsymbol{\mu}-{\bf d}),
\]

Our implicit function is determined by the constraints that $\frac{dL}{d\boldsymbol{\mu}}={\bf 0}$
and $\frac{d\mathbb{L}}{d\boldsymbol{\lambda}}={\bf 0}$. That is,
it must be true that

\[
\frac{d\mathbb{L}}{d\boldsymbol{\mu}}=\theta+\frac{dH}{d\boldsymbol{\mu}}+B^{T}\boldsymbol{\lambda}={\bf 0}
\]

\[
\frac{d\mathbb{L}}{d\boldsymbol{\lambda}}=B\boldsymbol{\mu}-{\bf d}={\bf 0}.
\]

Thus, our implicit function is

\[
{\bf f}\left(\left[\begin{array}{c}
\boldsymbol{\mu}\\
\boldsymbol{\lambda}
\end{array}\right]\right)=\left[\begin{array}{c}
\theta+\frac{dH}{d\boldsymbol{\mu}}+B^{T}\boldsymbol{\lambda}\\
B\boldsymbol{\mu}-{\bf d}
\end{array}\right]=\left[\begin{array}{c}
{\bf 0}\\
{\bf 0}
\end{array}\right]
\]

Taking derivatives, we have that
\[
\frac{d\left[\begin{array}{c}
\boldsymbol{\mu}\\
\boldsymbol{\lambda}
\end{array}\right]^{T}}{d\boldsymbol{\theta}}=-\bigl(\frac{d{\bf f}^{T}}{d\boldsymbol{\theta}}\bigr)\Bigl(\frac{d{\bf f}^{T}}{d\left[\begin{array}{c}
\boldsymbol{\mu}\\
\boldsymbol{\lambda}
\end{array}\right]}\Bigr)^{-1}
\]

Taking the terms on the right hand side in turn, we have

\[
\frac{d{\bf f}^{T}}{d\boldsymbol{\theta}}=\frac{d\left[\begin{array}{c}
\theta+\frac{dH}{d\boldsymbol{\mu}}+B^{T}\boldsymbol{\lambda}\\
B\boldsymbol{\mu}-{\bf d}
\end{array}\right]^{T}}{d\boldsymbol{\theta}}=\left[\begin{array}{c}
I\\
0
\end{array}\right]^{T}
\]

\[
\frac{d{\bf f}^{T}}{d\left[\begin{array}{c}
\boldsymbol{\mu}\\
\boldsymbol{\lambda}
\end{array}\right]}=\left[\begin{array}{cc}
\frac{d^{2}H}{d\boldsymbol{\mu}d\boldsymbol{\mu}^{T}} & B^{T}\\
B & 0
\end{array}\right]
\]

\begin{equation}
\frac{d\left[\begin{array}{c}
\boldsymbol{\mu}\\
\boldsymbol{\lambda}
\end{array}\right]^{T}}{d\boldsymbol{\theta}}=-\left[\begin{array}{c}
I\\
0
\end{array}\right]^{T}\left[\begin{array}{cc}
\frac{d^{2}H}{d\boldsymbol{\mu}d\boldsymbol{\mu}^{T}} & B^{T}\\
B & 0
\end{array}\right]^{-1}\label{eq:du_lambda_dtheta}
\end{equation}

This means that $-\frac{d\boldsymbol{\mu}^{T}}{d\boldsymbol{\theta}}$
is the upper-left block of the inverse of the matrix on the right
hand side of Eq. \ref{eq:du_lambda_dtheta}. It is well known that
if
\[
M=\left[\begin{array}{cc}
E & F\\
G & H
\end{array}\right],
\]
then the upper-left block of $M^{-1}$ is 
\[
E^{-1}+E^{-1}F(H\text{\textminus}GE^{-1}F)^{-1}GE^{-1}.
\]

So, we have that 
\begin{equation}
\frac{d\boldsymbol{\mu}^{T}}{d\boldsymbol{\theta}}=D^{-1}B^{T}(BD^{-1}B^{T})BD^{-1}-D^{-1},\label{eq:du/dtheta}
\end{equation}
where $D:=\frac{d^{2}H}{d\boldsymbol{\mu}d\boldsymbol{\mu}^{T}}$.

The result follows simply from substituting Eq. \ref{eq:du/dtheta}
into the chain rule 
\[
\frac{dL}{d\boldsymbol{\theta}}=\frac{d\boldsymbol{\mu}^{T}}{d\boldsymbol{\theta}}\frac{dQ}{d\boldsymbol{\mu}}.
\]

\end{proof}

\section{Appendix C: Truncated Fitting}

Several simple lemmas will be useful below. A first one considers
the case where we have a ``product of powers''.
\begin{lem*}
[Products of Powers]Suppose that $b=\prod_{i}a_{i}^{p_{i}}$. Then
\[
\frac{db}{da_{i}}=\frac{p_{i}}{a_{i}}b.
\]

\end{lem*}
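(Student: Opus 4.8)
The plan is to prove this by elementary differentiation, isolating the single factor that depends on $a_i$. First I would write $b = a_i^{p_i} \prod_{j \neq i} a_j^{p_j}$, observing that the trailing product is constant with respect to $a_i$. Differentiating the isolated power gives $\frac{db}{da_i} = p_i\, a_i^{p_i - 1} \prod_{j \neq i} a_j^{p_j}$, and pulling out a factor of $a_i^{-1}$ recovers the full product $b$ on the right-hand side, so that $\frac{db}{da_i} = \frac{p_i}{a_i}\, b$, as claimed.

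An equivalent and slightly cleaner route uses logarithmic differentiation. Since the quantities to which this lemma is applied (messages $m_c(x_i)$ and marginals $\mu(x_i)$ arising in the TRW and mean field updates) are strictly positive, I can take $\log b = \sum_i p_i \log a_i$ and differentiate with respect to $a_i$, holding the remaining $a_j$ fixed, to obtain $\frac{1}{b}\frac{db}{da_i} = \frac{p_i}{a_i}$. Multiplying through by $b$ then yields the statement directly.

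There is no substantive obstacle here: the result is a one-line consequence of the product and power rules, and either approach above makes it immediate. The only point deserving a word of care is the positivity (or at least the nonvanishing) of $a_i$, which is needed to justify either the logarithm or the division by $a_i$; this holds in every setting where the lemma is invoked in Appendix C, since messages and marginals are positive after normalization. I would therefore simply present the direct factoring argument, noting positivity in passing, and not belabor the calculation.
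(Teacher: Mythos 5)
Your proof is correct; the paper states this lemma without proof, treating it as immediate, and your direct factoring argument (equivalently, logarithmic differentiation) is exactly the elementary calculation the paper implicitly relies on. Your remark about positivity of the $a_i$ in the settings where the lemma is applied (messages and marginals after normalization) is a reasonable point of care and does not conflict with anything in the paper.
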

Next, both mean-field and TRW involve steps where we first take a
product of a set of terms, and then normalize. The following lemma
is useful in dealing with such operations.
\begin{lem*}
[Normalized Products]Suppose that $b_{i}=\prod_{j}a_{ij}$ and $c_{i}=b_{i}/\sum_{j}a_{ij}.$
Then,
\[
\frac{dc_{i}}{da_{jk}}=\bigl(I_{i=j}-c_{i}\bigr)\frac{c_{j}}{a_{jk}}.
\]
\end{lem*}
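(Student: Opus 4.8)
The plan is to compute $dc_i/da_{jk}$ directly by the quotient rule, treating the normalizer as a single scalar and leaning on the Products of Powers lemma just established. I read $c_i = b_i/Z$ with $Z=\sum_{i'} b_{i'}$, i.e. the normalization runs over the product index $i$ (this is the sense in which $c$ is a ``normalized product'': the denominator is the sum of the $b$'s). First I would record how $b_i$ depends on a single entry $a_{jk}$. Since $b_i=\prod_m a_{im}$ involves only the entries in ``row'' $i$, it is constant in $a_{jk}$ unless $i=j$; and when $i=j$ the Products of Powers lemma (with every exponent equal to one) gives $db_i/da_{ik}=b_i/a_{ik}$. Hence $\frac{db_i}{da_{jk}} = I_{i=j}\,\frac{b_i}{a_{jk}}$.

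Next I would differentiate the normalizer. Because $Z=\sum_{i'}b_{i'}$ couples every row, only the $i'=j$ term survives and $\frac{dZ}{da_{jk}} = \sum_{i'}\frac{db_{i'}}{da_{jk}} = \frac{b_j}{a_{jk}}$. Applying the quotient rule to $c_i=b_i/Z$ then yields $\frac{dc_i}{da_{jk}} = \frac{1}{Z}\frac{db_i}{da_{jk}} - \frac{b_i}{Z^2}\frac{dZ}{da_{jk}} = I_{i=j}\frac{1}{Z}\frac{b_i}{a_{jk}} - \frac{b_i}{Z}\frac{b_j}{Z}\frac{1}{a_{jk}}$. Rewriting $b_i/Z=c_i$ and $b_j/Z=c_j$ (and using $c_i=c_j$ on the diagonal $i=j$) collapses this to $\bigl(I_{i=j}-c_i\bigr)\frac{c_j}{a_{jk}}$, which is the claim.

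The calculation is routine, so the only real point of care — and the step I would flag — is the index bookkeeping in the normalizer: even though $b_i$ depends on $a_{jk}$ only for $i=j$, the shared denominator $Z$ makes $c_i$ depend on $a_{jk}$ for every $i$, and this is exactly the source of the $-c_i c_j/a_{jk}$ ``interaction'' term. Keeping the product index distinct from the perturbed index $j$, and not conflating the summation variable in $Z$ with the free index, is what prevents the off-diagonal contribution from being silently dropped.
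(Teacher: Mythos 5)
Your proof is correct. The paper states this lemma in Appendix C without any proof, treating it as a routine computation, and your quotient-rule argument is exactly the natural verification: $\frac{db_i}{da_{jk}}=I_{i=j}\,b_j/a_{jk}$, $\frac{dZ}{da_{jk}}=b_j/a_{jk}$, and the two terms combine to $\bigl(I_{i=j}-c_i\bigr)c_j/a_{jk}$. You also correctly resolved the notational slip in the lemma's statement (the normalizer must be $\sum_{i'}b_{i'}$, not literally $\sum_j a_{ij}$, or else $c_i$ would not depend on $a_{jk}$ for $i\neq j$ and the off-diagonal term could not arise); this reading is the one consistent with the backnorm corollary that follows.
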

\begin{cor*}
Under the same conditions,
\[
\frac{dL}{da_{jk}}=\frac{c_{j}}{a_{jk}}\bigl(\frac{dL}{dc_{j}}-\sum_{i}\frac{dL}{dc_{i}}c_{i}\bigr).
\]

\end{cor*}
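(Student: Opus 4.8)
The plan is to obtain the corollary as an immediate consequence of the preceding Normalized Products lemma, via a single application of the multivariate chain rule. Since the loss $L$ depends on the inputs $a_{jk}$ only through the normalized outputs $c_i$, the total derivative decomposes as
\[
\frac{dL}{da_{jk}}=\sum_i\frac{dL}{dc_i}\,\frac{dc_i}{da_{jk}}.
\]
The first factor in each summand, $dL/dc_i$, is treated as given (it is the ``incoming'' gradient supplied by whatever computation consumes the $c_i$), while the second factor is exactly what the lemma evaluates.

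Next I would substitute the lemma's expression $\frac{dc_i}{da_{jk}}=(I_{i=j}-c_i)\frac{c_j}{a_{jk}}$ into the sum. The factor $\frac{c_j}{a_{jk}}$ carries no dependence on the summation index $i$, so it pulls outside, leaving
\[
\frac{dL}{da_{jk}}=\frac{c_j}{a_{jk}}\sum_i\frac{dL}{dc_i}\bigl(I_{i=j}-c_i\bigr).
\]
The remaining step is to split this into two pieces: the Kronecker-delta term collapses the first sum to the single value $\frac{dL}{dc_j}$, while the second sum is simply $\sum_i\frac{dL}{dc_i}c_i$. Collecting these reproduces the claimed formula.

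There is essentially no obstacle here beyond bookkeeping; the only point requiring care is the justification of the chain-rule step, namely that every path from $a_{jk}$ to $L$ passes through the variables $c_i$ (so no direct dependence is omitted) and that the sum ranges over all output indices $i$, not merely $i=j$. This matches the backnorm operator defined earlier: writing ${\bf g}$ with components $g_i=\frac{dL}{dc_i}$ and ${\bf c}$ with components $c_i$, the right-hand side equals $\frac{1}{a_{jk}}$ times the $j$th component of $\text{backnorm}({\bf g},{\bf c})$, which is precisely why this corollary is the workhorse invoked at each ``product-then-normalize'' step of the back mean field and back TRW algorithms.
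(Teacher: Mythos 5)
Your proposal is correct and is exactly the intended derivation: the paper states this as an immediate corollary of the Normalized Products lemma, obtained by the chain rule $\frac{dL}{da_{jk}}=\sum_i\frac{dL}{dc_i}\frac{dc_i}{da_{jk}}$ followed by substitution of the lemma and splitting off the $I_{i=j}$ term. Your closing remark correctly identifies the connection to the backnorm operator used in the back mean field and back TRW algorithms.
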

Accordingly, we find it useful to define the operator

\[
\text{backnorm}({\bf g},{\bf c})={\bf c}\odot({\bf g}-{\bf g}\cdot{\bf c}).
\]
This can be used as follows. Suppose that we have calculated $\overleftarrow{{\bf c}}=\frac{dL}{d{\bf c}}$.
Then, if we set $\overleftarrow{\boldsymbol{\nu}}=\text{backnorm}(\overleftarrow{{\bf c}},{\bf c})$,
and we have that $\frac{dL}{da_{jk}}=\frac{\overleftarrow{\nu_{j}}}{a_{jk}}$.

An important special case of this is where $a_{jk}=\exp f_{jk}.$
Then, we have simply that $\frac{dL}{df_{jk}}=\overleftarrow{\nu_{j}}.$

Another important special case is where $a_{jk}=f_{jk}^{\rho}.$ Then,
we have that $\frac{da_{jk}}{df_{jk}}=\rho f_{jk}^{\rho-1},$ and
so $\frac{dL}{df_{jk}}=\rho\frac{\overleftarrow{\nu_{j}}}{f_{jk}}$.
\begin{thm*}
After execution of back mean field, 
\[
\overleftarrow{\theta}(x_{i})=\frac{dL}{d\theta(x_{i})}\text{ and }\overleftarrow{\theta}({\bf x}_{c})=\frac{dL}{d\theta({\bf x}_{c})}.
\]
\end{thm*}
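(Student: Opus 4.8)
The plan is to treat the forward pass of back mean field as an explicit computational graph and to verify that the backward pass is precisely reverse-mode automatic differentiation applied to it. I would take as given the three preliminary results of Appendix C — the Products of Powers lemma, the Normalized Products corollary, and its consequence that $\frac{dL}{df_{jk}} = \overleftarrow{\nu_j}$ when $a_{jk} = \exp f_{jk}$ — since these already resolve the two elementary operations, exponentiate-then-normalize and product, out of which each mean field update is built.

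First I would index the forward updates $1, \ldots, T$ in the exact order the algorithm performs them, write $\mu^{(t)}$ for the full marginal vector immediately after update $t$, and let $j$ denote the coordinate rewritten at a given update. Each update sets
\[
\mu(x_{j}) \propto \exp\bigl(\theta(x_{j}) + \textstyle\sum_{c:j \in c}\sum_{{\bf x}_{c\backslash j}} \theta({\bf x}_c)\prod_{i\in c\backslash j}\mu(x_i)\bigr),
\]
so it is the composition of (i) forming the bracketed argument $f(x_j)$, which is linear in the parameters $\theta$ and multilinear in the neighbor marginals $\mu(x_i)$, $i\in c\backslash j$, with (ii) the exponentiate-and-normalize map. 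Since $L$ depends on $\theta$ only through the chain $\theta \to \mu^{(1)} \to \cdots \to \mu^{(T)} \to L$, the task reduces to showing that the backward sweep accumulates the total derivatives along this chain.

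The heart of the argument is an induction over the reverse sweep. I would maintain the invariant that, just before the backward step undoing forward update $t$, the stored adjoint $\overleftarrow{\mu}(x_i)$ equals $\frac{dL}{d\mu^{(t)}(x_i)}$ for every coordinate live at that moment, and that $\overleftarrow{\theta}$ holds the partial sum of $\frac{dL}{d\theta}$ contributed by updates $t+1, \ldots, T$. Undoing update $t$ then consists of three moves, each justified by a preliminary result: applying $\text{backnorm}(\overleftarrow{\mu_j}, \mu_j)$ pushes the adjoint back through the normalization to yield $\overleftarrow{\nu}(x_j) = \frac{dL}{df(x_j)}$; adding $\overleftarrow{\nu}(x_j)$ to $\overleftarrow{\theta}(x_j)$ and $\overleftarrow{\nu}(x_j)\prod_{i\in c\backslash j}\mu(x_i)$ to $\overleftarrow{\theta}({\bf x}_c)$ records the explicit parameter dependence of $f(x_j)$; and the neighbor update $\overleftarrow{\mu}(x_i) \leftarrow \overleftarrow{\mu}(x_i) + \sum_{{\bf x}_{c\backslash i}}\overleftarrow{\nu}(x_j)\theta({\bf x}_c)\prod_{k\in c\backslash\{i,j\}}\mu(x_k)$ propagates the adjoint via the multilinear dependence of $f(x_j)$ on the neighbor marginals. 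Finally, popping $\mu_j$ from the stack rewinds coordinate $j$ to its pre-update value $\mu^{(t-1)}(x_j)$, the operating point needed by the earlier reverse steps, and resetting $\overleftarrow{\mu_j}(x_j) \leftarrow 0$ starts a fresh accumulation for the distinct earlier node $\mu^{(t-1)}(x_j)$.

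The step I expect to be the main obstacle is justifying exactly this last pair of moves, where in-place overwriting meets adjoint accumulation. Because the value $\mu^{(t)}(x_j)$ may be read by several later updates before coordinate $j$ is next overwritten, its adjoint must be the sum over all those reads, and the subtlety is to show that the accumulation pattern together with the reset assigns precisely those reads to the correct node without double counting. I would make this rigorous using the fact that the mean field update for a coordinate never reads its own previous value: hence $\mu^{(t-1)}(x_j)$ influences $L$ only through updates strictly between the previous write of $j$ and update $t$, so the reset zeroes an adjoint that has by then been fully consumed while the reads of $\mu^{(t-1)}(x_j)$ are accumulated by the later reverse steps, and the induction closes. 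Checking the base case $t = T$ — where $\overleftarrow{\mu}$ is initialized directly from $\frac{dL}{d\mu}$, and any clique-marginal loss terms are folded into univariate adjoints through $\mu({\bf x}_c) = \prod_{i\in c}\mu(x_i)$ — then completes the proof.
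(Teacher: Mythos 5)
Your proposal is correct and follows essentially the same route as the paper's Appendix C proof sketch: mechanically reverse-differentiate each mean field update, using $\text{backnorm}$ to pass the adjoint through the exponentiate-and-normalize step and the multilinearity of the bracketed argument to obtain exactly the $\overleftarrow{\theta}(x_j)$, $\overleftarrow{\theta}({\bf x}_c)$, and $\overleftarrow{\mu}(x_i)$ updates of Alg.~\ref{alg:Back-Mean-Field}. The one genuine addition is your explicit induction invariant and the observation that a coordinate's update never reads its own previous value, which rigorously justifies the stack-pop and the reset $\overleftarrow{\mu_j}(x_j)\leftarrow 0$ -- bookkeeping the paper's sketch leaves implicit.
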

\begin{proof}
[Proof sketch]The idea is just to mechanically differentiate each
step of the algorithm, computing the derivative of the computed loss
with respect to each intermediate quantity. First, note that we can
re-write the main mean-field iteration as

\begin{equation}
{\displaystyle \mu(x_{j})\propto\exp\bigl(\theta(x_{j})\bigr)\prod_{c:j\in c}\prod_{{\bf x}_{c\backslash j}}\exp\bigl(\theta({\bf x}_{c})\prod_{i\in c\backslash j}\mu(x_{i})\bigr)\bigr)}.\label{eq:mean-field-iteration}
\end{equation}

Now, suppose we have the derivative of the loss with respect to this
intermediate vector of marginals $\overleftarrow{\boldsymbol{\mu}_{j}}$.
We wish to ``push back'' this derivative on the values affecting
these marginals, namely $\theta(x_{j})$, $\theta({\bf x}_{c})$ (for
all $c$ such that $j\in c$), and $\mu(x_{i})$ (for all $i\not=j$
such that $\exists c:\{i,j\}\in c$). To do this, we take two steps:

1) Calculate the derivative with respect to the value on the righthand
side of Eq. \ref{eq:mean-field-iteration} \emph{before} normalization.

2) Calculate the derivative of this un-normalized quantity with respect
to $\theta(x_{j})$, $\theta({\bf x}_{c})$ and $\mu(x_{i})$.

Now, define $\boldsymbol{\nu}_{j}$ to be the vector of marginals
produced by Eq. \ref{eq:mean-field-iteration} before normalization.
Then, by the discussion above, $\overleftarrow{\boldsymbol{\nu}_{j}}=\text{backnorm}(\overleftarrow{\boldsymbol{\mu}_{j}},\boldsymbol{\mu}_{j}).$
This completes step 1.

Now, with $\overleftarrow{\boldsymbol{\nu}_{j}}$ in hand, we can
immediately calculate the backpropagation of $\overleftarrow{\boldsymbol{\mu}_{j}}$
on $\theta$ as

\[
\overleftarrow{\theta}(x_{j})=\overleftarrow{\nu}(x_{j}).
\]

\noindent This, follows from the fact that $\frac{dL}{da}=\frac{dL}{de^{a}}e^{a}$,
where $\theta(x_{j})$ plays the role of $a$, $ $and $e^{a}$ plays
the role of $\nu(x_{j})$.

Similarly, we can calculate that 
\[
\frac{dL}{d\theta({\bf x}_{c})\prod_{i\in c\backslash j}\mu(x_{i})}=\overleftarrow{\nu}(x_{j}).
\]

\noindent Thus, since
\[
\frac{d\theta({\bf x}_{c})\prod_{i\in c\backslash j}\mu(x_{i})}{d\theta({\bf x}_{c})}=\prod_{i\in c\backslash j}\mu(x_{i}),
\]

\noindent we have that

\[
\overleftarrow{\theta}({\bf x}_{c})=\overleftarrow{\nu}(x_{j})\prod_{i\in c\backslash j}\mu(x_{i}).
\]

Similarly, for any ${\bf x}_{c}$ that ``matches'' $x_{i}$ (in
the sense that the same value $x_{i}$ is present as the appropriate
component of ${\bf x}_{c}$),

\[
\frac{d\theta({\bf x}_{c})\prod_{k\in c\backslash j}\mu(x_{k})}{d\mu(x_{i})}=\theta({\bf x}_{c})\prod_{k\in c\backslash\{i,j\}}\mu(x_{k}).
\]
From which we have
\[
\overleftarrow{\mu}(x_{i})=\sum_{{\bf x}_{c\backslash i}}\overleftarrow{\nu}(x_{j})\theta({\bf x}_{c})\prod_{k\in c\backslash\{i,j\}}\mu(x_{k}),
\]

\noindent meaning this is a local minimum. Normalizing by $Z$ gives
the result.\end{proof}
\begin{thm*}
After execution of back TRW, 
\[
\overleftarrow{\theta}(x_{i})=\frac{dL}{d\theta(x_{i})}\text{ and }\overleftarrow{\theta}({\bf x}_{c})=\frac{dL}{d\theta({\bf x}_{c})}.
\]
\end{thm*}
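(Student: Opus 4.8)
The plan is to prove this exactly as the back mean field theorem was handled: treat the $N$-iteration TRW algorithm as a composition of elementary differentiable operations and perform reverse-mode differentiation step by step, reusing the three results of Appendix~C (Products of Powers, Normalized Products, and the backnorm corollary). Every forward operation in the algorithm has one of two shapes: a \emph{normalized product of powers} (the two marginal read-outs in steps~3--4 and the message update in step~2b) or a plain accumulation. Each is therefore reversed mechanically by first applying $\text{backnorm}$ to account for the normalization, which produces an intermediate vector $\overleftarrow{\nu}$, and then pushing $\overleftarrow{\nu}$ onto each factor via the Products of Powers lemma, whose exponents read off directly from the forward formula.

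First I would seed the backward pass with $\overleftarrow{\mu}(x_i)=\frac{dL}{d\mu(x_i)}$ and $\overleftarrow{\mu}({\bf x}_c)=\frac{dL}{d\mu({\bf x}_c)}$, obtained by differentiating the chosen loss $L$ (step~5). Next I would back-propagate through the two marginal read-outs. For the clique marginal $\mu({\bf x}_c)\propto e^{\frac{1}{\rho_c}\theta({\bf x}_c)}\prod_{i\in c}e^{\theta(x_i)}\frac{\prod_{d:i\in d}m_d(x_i)^{\rho_d}}{m_c(x_i)}$, applying backnorm gives $\overleftarrow{\nu}({\bf x}_c)$; then Products of Powers identifies the exponent of $\theta({\bf x}_c)$ as $\frac{1}{\rho_c}$ and of $\theta(x_i)$ as $1$, yielding $\overleftarrow{\theta}({\bf x}_c)\overset{+}{\leftarrow}\frac{1}{\rho_c}\overleftarrow{\nu}({\bf x}_c)$ and $\overleftarrow{\theta}(x_i)\overset{+}{\leftarrow}\sum_{{\bf x}_{c\backslash i}}\overleftarrow{\nu}({\bf x}_c)$, matching steps~6b--6c. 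The univariate read-out $\mu(x_i)\propto e^{\theta(x_i)}\prod_{d:i\in d}m_d(x_i)^{\rho_d}$ is handled identically (step~7). Finally I would walk the $N$ message updates in reverse order (step~8): at each one recompute the unnormalized summand $s({\bf x}_c)$, apply backnorm to the stored message, distribute the derivative onto $\theta({\bf x}_c)$, onto each $\theta(x_j)$ for $j\in c\backslash i$, and onto the incoming messages, then pop the saved message and zero $\overleftarrow{m_c}(x_i)$ so each iteration's contribution is counted once.

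The hard part will be bookkeeping the \emph{overlapping} appearances of each message. A single message $m_d(x_i)$ enters a clique expression through the numerator factor $m_d(x_i)^{\rho_d}$ (whenever $d$ contains $i$) and, when $d=c$, also through the explicit denominator $1/m_c(x_i)$; the net exponent is thus $\rho_d - I_{c=d}$, which is exactly the factor appearing in steps~6d and~8e. Getting this exponent (and its sign) right, and ensuring that the $\overset{+}{\leftarrow}$ accumulations aggregate every path from a parameter or message to the loss — a message affects several marginals and, across iterations, several later messages — is the only genuinely delicate point; everything else is a routine application of the two lemmas. Assembling the reversed steps in this order reproduces Alg.~\ref{alg:Back-TRW} line for line, so the accumulated $\overleftarrow{\theta}(x_i)$ and $\overleftarrow{\theta}({\bf x}_c)$ equal the total derivatives $\frac{dL}{d\theta(x_i)}$ and $\frac{dL}{d\theta({\bf x}_c)}$, as claimed.
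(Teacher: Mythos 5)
Your proposal follows essentially the same route as the paper's own proof sketch: mechanically reverse-differentiating each forward operation, using $\text{backnorm}$ for the normalization steps and the Products of Powers lemma to read off the contributions to $\theta({\bf x}_c)$, $\theta(x_i)$, and the incoming messages, including the key $\rho_d - I_{c=d}$ exponent arising from a message appearing in both numerator and denominator, and the stack pop with re-zeroing of $\overleftarrow{m_c}(x_i)$. The argument is correct and matches the paper's treatment line for line.
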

\begin{proof}
[Proof sketch]Again, the idea is just to mechanically differentiate
each step of the algorithm. Since the marginals are derived in terms
of the messages, we must first take derivatives with respect to the
marginal-producing steps. First, consider step 3, where predicted
clique marginals are computed. Defining $\overleftarrow{\nu}({\bf x}_{c})=\text{backnorm}(\overleftarrow{\mu_{c}},\mu_{c}),$
we have that
\begin{eqnarray*}
\overleftarrow{\theta}({\bf x}_{c}) & = & \frac{1}{\rho_{c}}\overleftarrow{\nu}({\bf x}_{c})\\
\overleftarrow{\theta}(x_{i}) & = & \sum_{{\bf x}_{c\backslash i}}\overleftarrow{\nu}({\bf x}_{c})\\
\overleftarrow{m_{d}}(x_{i}) & = & \frac{\rho_{d}-I[c=d]}{m_{d}(x_{i})}\sum_{{\bf x}_{c\backslash i}}\overleftarrow{\nu}
\end{eqnarray*}

Next, consider step 4, where predicted univariate marginals are computed.
Defining, $\overleftarrow{\nu}(x_{i})=\text{backnorm}(\overleftarrow{\mu_{i}},\mu_{i})$,
we have
\begin{eqnarray*}
\overleftarrow{\theta}(x_{i}) & = & \overleftarrow{\nu}(x_{i})\\
\overleftarrow{m_{d}}(x_{i}) & = & \rho_{d}\frac{\overleftarrow{\nu}(x_{i})}{m_{d}(x_{i})}.
\end{eqnarray*}

Finally, we consider the main propagation, in step 2. Here, we recompute
the intermediate quantity

\[
s({\bf x}_{c})=e^{\frac{1}{\rho_{c}}\theta({\bf x}_{c})}\prod_{j\in c\backslash i}e^{\theta(x_{j})}\frac{\underset{d:j\in d}{\prod}m_{d}(x_{j})^{\rho_{d}}}{m_{c}(x_{j})}.
\]

After this, consider the step where when pair $(c,i)$ is being updated.
We first compute 
\[
\frac{dL}{dm_{c}^{0}(c_{i})}=\frac{\overleftarrow{\nu}(x_{i})}{m_{c}(x_{i})},
\]
where $m_{c}^{0}(x_{i})$ is defined as the value of the marginal
before normalization, and
\[
\overleftarrow{\nu}(x_{i})=\text{backnorm}(\overleftarrow{m_{ci}},m_{ci}).
\]

\noindent (See the Normalized Products Lemma above.) Given this, we
can consider the updates required to gradients of $\theta({\bf x}_{c})$,
$\theta(x_{i})$ and $m_{d}(x_{j})$ in turn.

\noindent First, we have that the update to $\overleftarrow{\theta}({\bf x}_{c})$
should be
\begin{eqnarray*}
 &  & \frac{dL}{dm_{c}^{0}(x_{i})}\frac{dm_{c}^{0}({\bf x}_{i})}{d\theta({\bf x}_{c})}\\
 & = & \frac{\overleftarrow{\nu}(x_{i})}{m_{c}(x_{i})}\frac{1}{\rho_{c}}s({\bf x}_{c}),
\end{eqnarray*}
which is the update present in the algorithm.

Next, the update to $\overleftarrow{\theta}(x_{j})$ should be 
\begin{eqnarray*}
 &  & \sum_{x_{i}}\frac{dL}{dm_{c}^{0}(x_{i})}\frac{dm_{c}^{0}({\bf x}_{i})}{d\theta(x_{j})}\\
 & = & \sum_{x_{i}}\frac{\overleftarrow{\nu}(x_{i})}{m_{c}(x_{i})}\sum_{x_{c\backslash\{i,j\}}}s({\bf x}_{c})\\
 & = & \sum_{{\bf x}_{c\backslash j}}\frac{\overleftarrow{\nu}(x_{i})}{m_{c}(x_{i})}s({\bf x}_{c}).
\end{eqnarray*}

In terms of the incoming messages, consider the update to $\overleftarrow{m_{d}}(x_{j})$,
where $j\not=i$, $j\in d$, and $d\not=c$. This will be
\begin{eqnarray*}
 &  & \sum_{x_{i}}\frac{dL}{dm_{c}^{0}(x_{i})}\frac{dm_{c}^{0}(x_{i})}{dm_{d}(x_{j})}\\
 & = & \sum_{x_{i}}\frac{\overleftarrow{\nu}(x_{i})}{m_{c}(x_{i})}\sum_{{\bf x}_{c\backslash\{i,j\}}}\frac{\rho_{d}}{m_{d}(x_{j})}s({\bf x}_{c})\\
 & = & \frac{\rho_{d}}{m_{d}(x_{j})}\sum_{{\bf x}_{c\backslash j}}s({\bf x}_{c})\frac{\overleftarrow{\nu}(x_{i})}{m_{c}(x_{i})}.
\end{eqnarray*}

Finally, consider the update to $\overleftarrow{m_{c}}(x_{j})$, where
$j\not=i$. This will have the previous update, plus the additional
term, considering the presence of $m_{c}(x_{j})$ in the denominator
of the main TRW update, of 
\begin{eqnarray*}
 &  & \sum_{x_{i}}\frac{\overleftarrow{\nu}(x_{i})}{m_{c}(x_{i})}\sum_{{\bf x}_{c\backslash\{i,j\}}}\frac{\rho_{d}}{m_{d}(x_{j})}s({\bf x}_{c})\\
 & = & \frac{\rho_{d}}{m_{d}(x_{j})}\sum_{{\bf x}_{c\backslash j}}\frac{\overleftarrow{\nu}(x_{i})}{m_{c}(x_{i})}s({\bf x}_{c}).
\end{eqnarray*}

Now, after the update has taken place, the messages $m_{c}(x_{i})$
are reverted to their previous values. As these values have not (yet)
influenced any other variables, they are initialized with $\overleftarrow{m_{c}}(x_{i})=0$.
\end{proof}
\clearpage{}

\newpage{}

\begin{figure}[p]
\begin{centering}
\vspace{-10pt}
\renewcommand{\tabcolsep}{1pt}%
\begin{tabular}{>{\centering}m{0.57in}>{\centering}m{0.9in}>{\centering}m{0.9in}>{\centering}m{0.9in}}
 & $n=1.25$ & $n=1.5$ & $n=5$\tabularnewline
{\small input} & \includegraphics[width=1\linewidth]{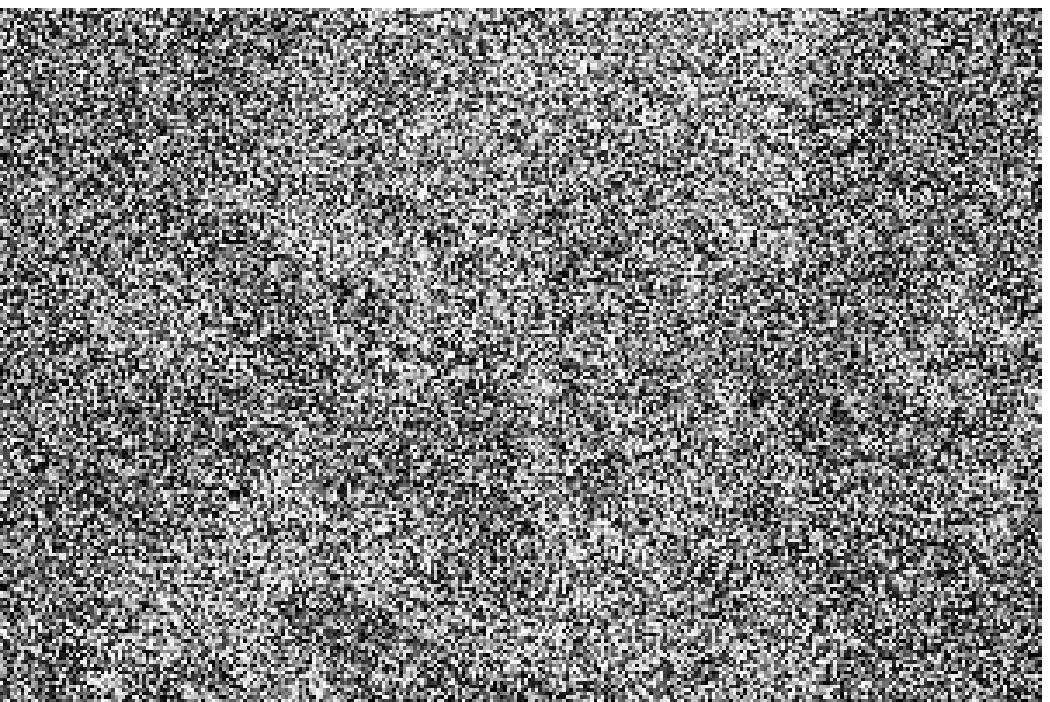} & \includegraphics[width=1\linewidth]{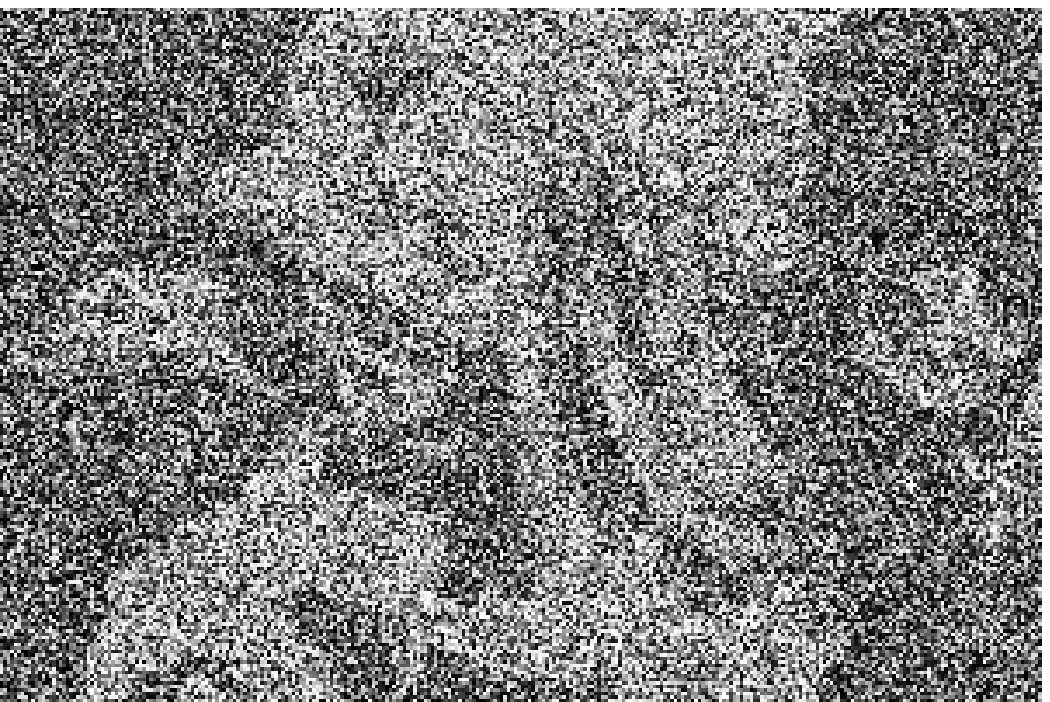} & \includegraphics[width=1\linewidth]{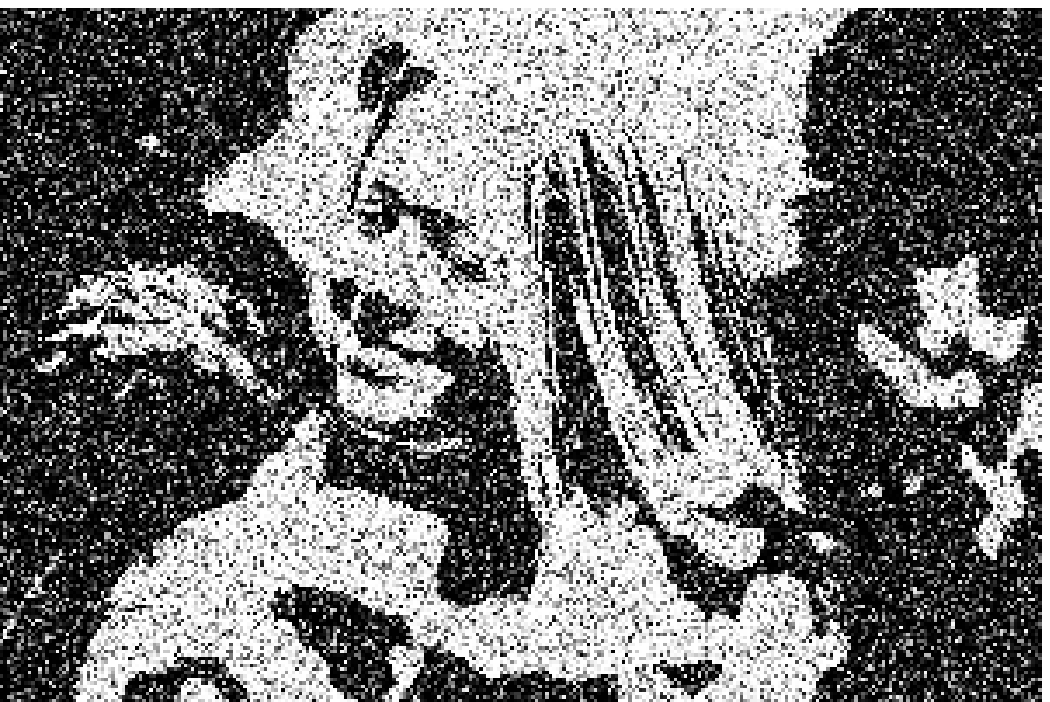}\tabularnewline
{\small surrogate likelihood} & \includegraphics[width=1\linewidth]{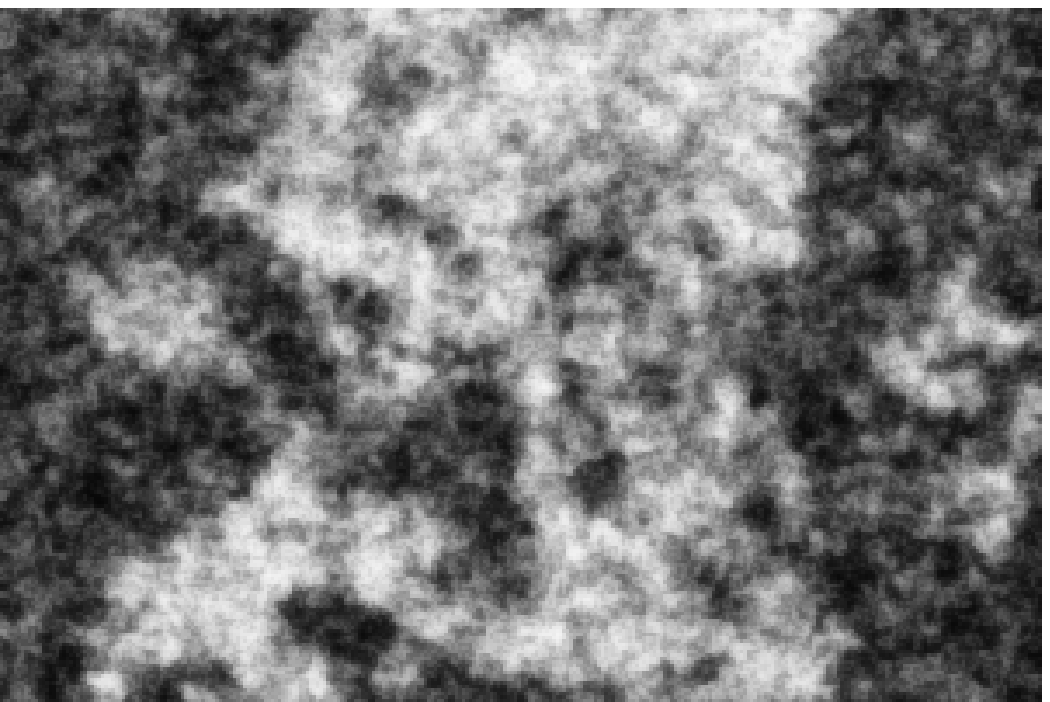} & \includegraphics[width=1\linewidth]{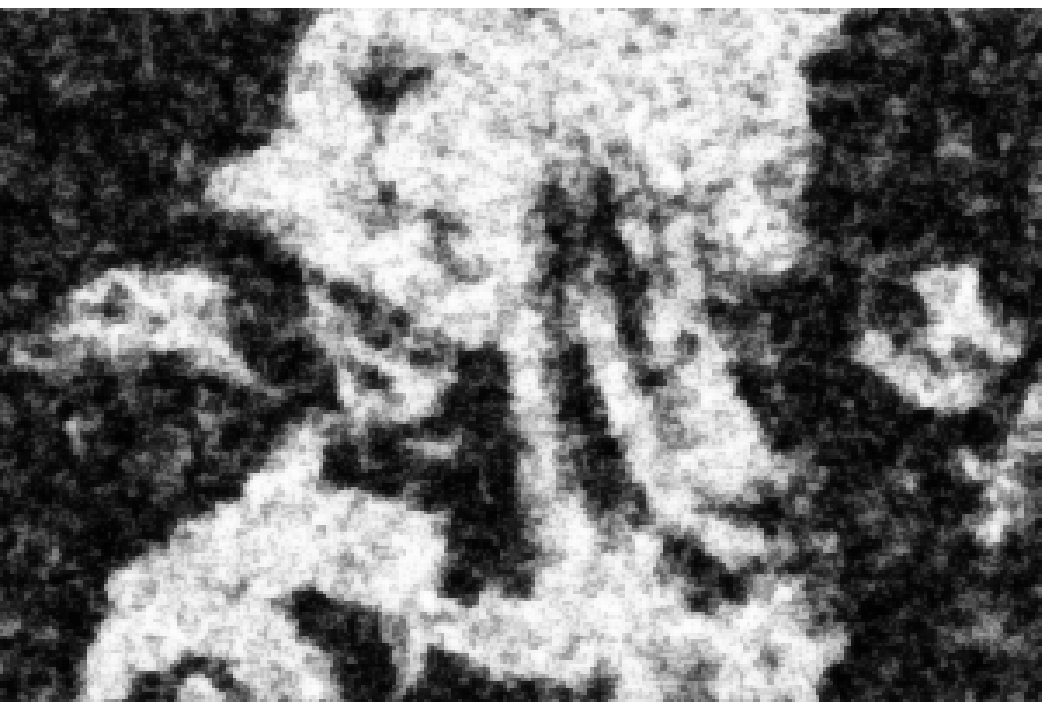} & \includegraphics[width=1\linewidth]{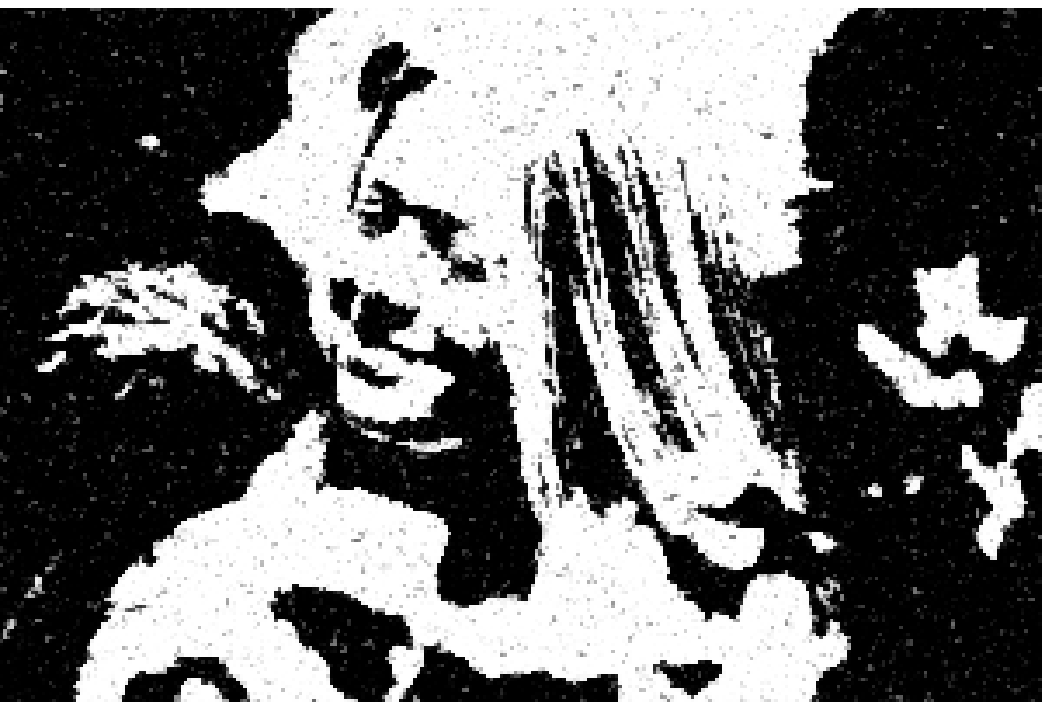}\tabularnewline
{\small univariate logistic} & \includegraphics[width=1\linewidth]{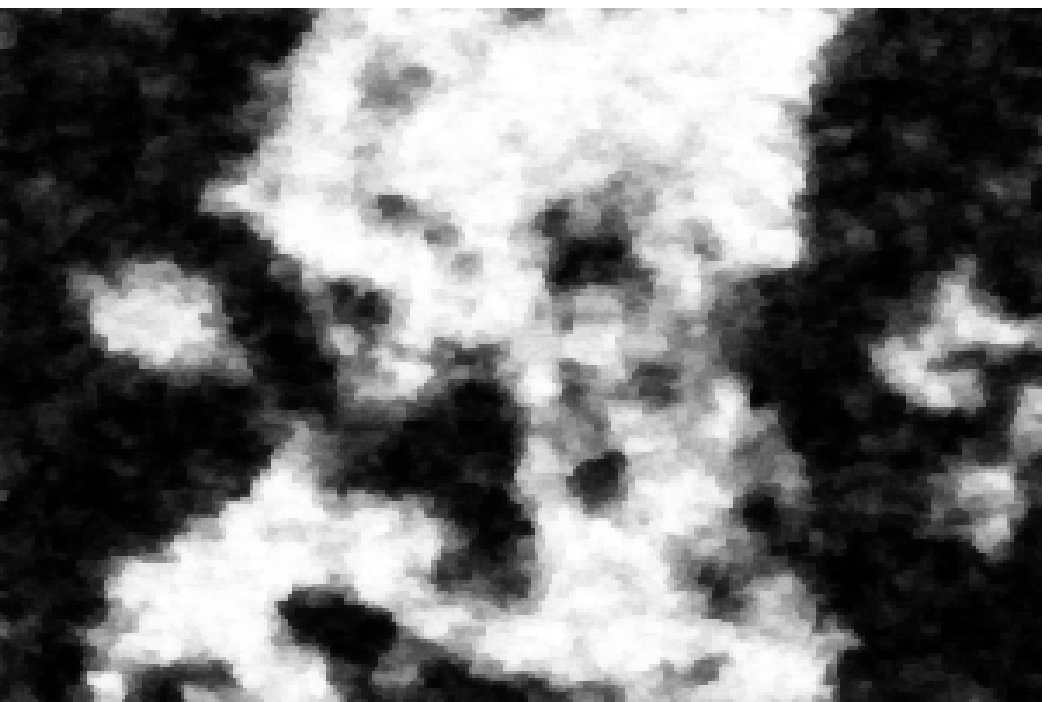} & \includegraphics[width=1\linewidth]{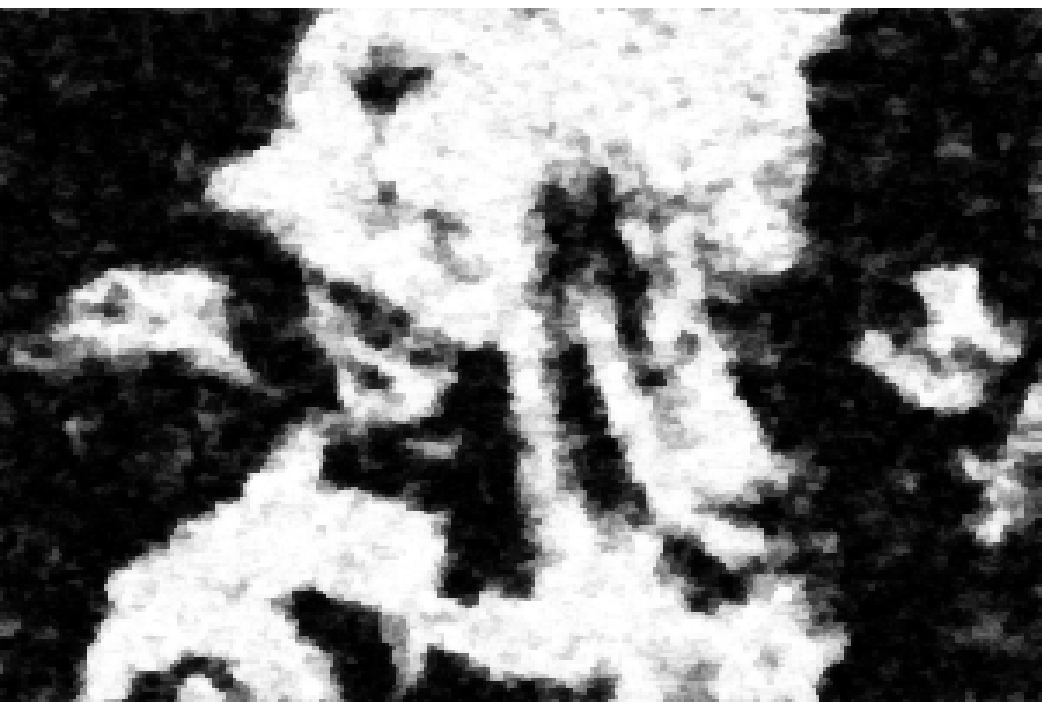} & \includegraphics[width=1\linewidth]{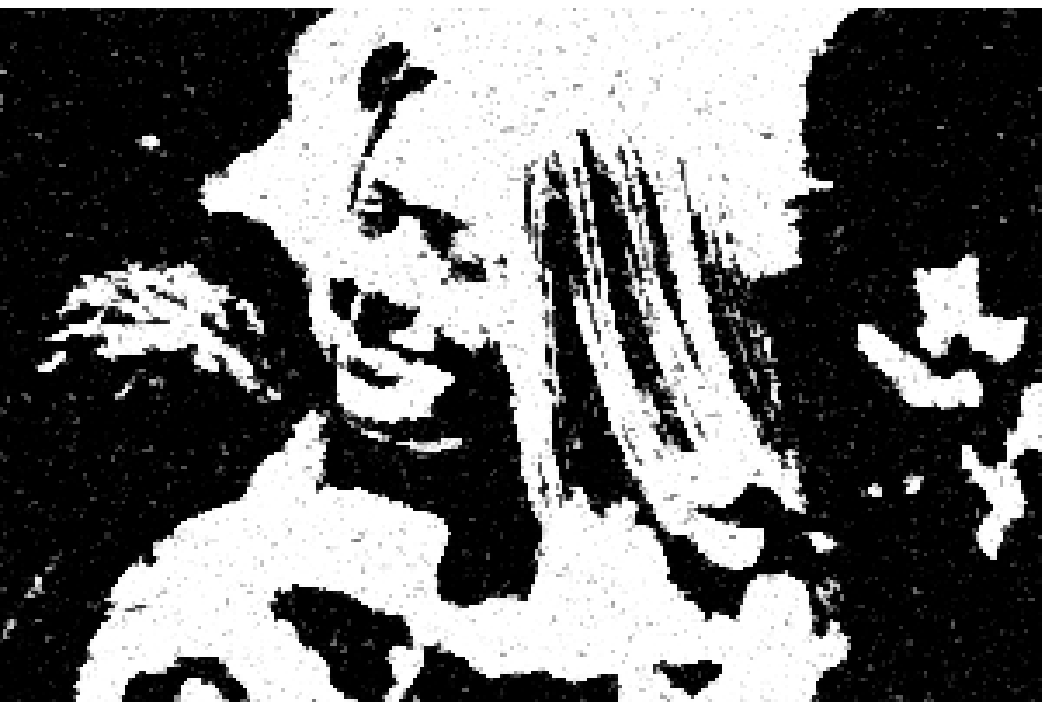}\tabularnewline
{\small clique logistic} & \includegraphics[width=1\linewidth]{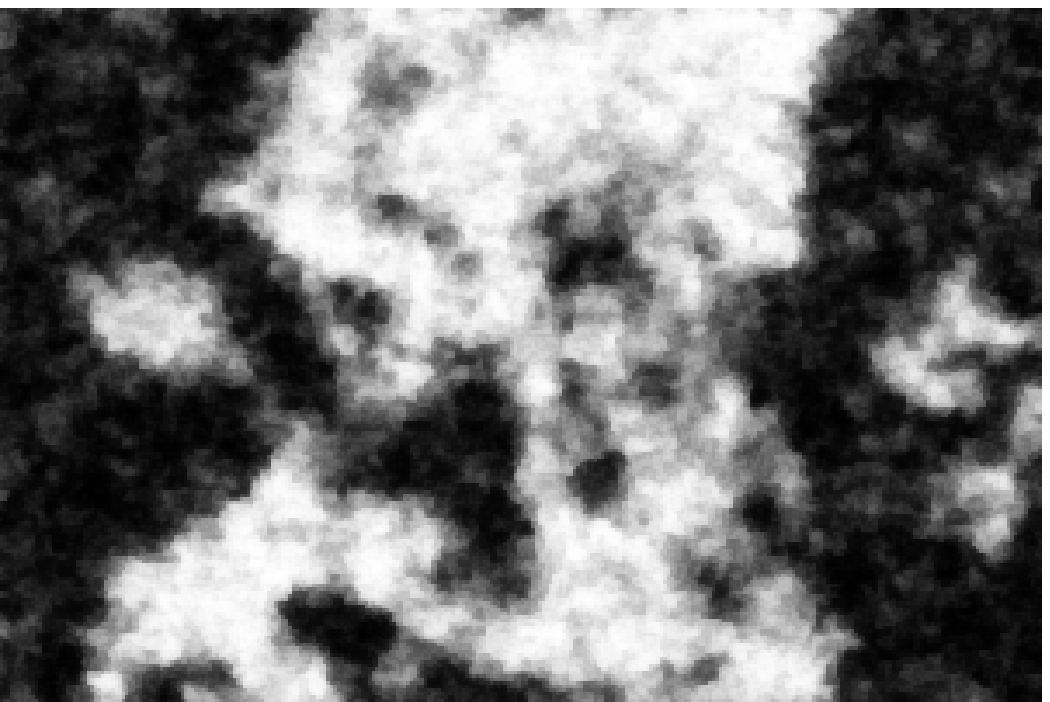} & \includegraphics[width=1\linewidth]{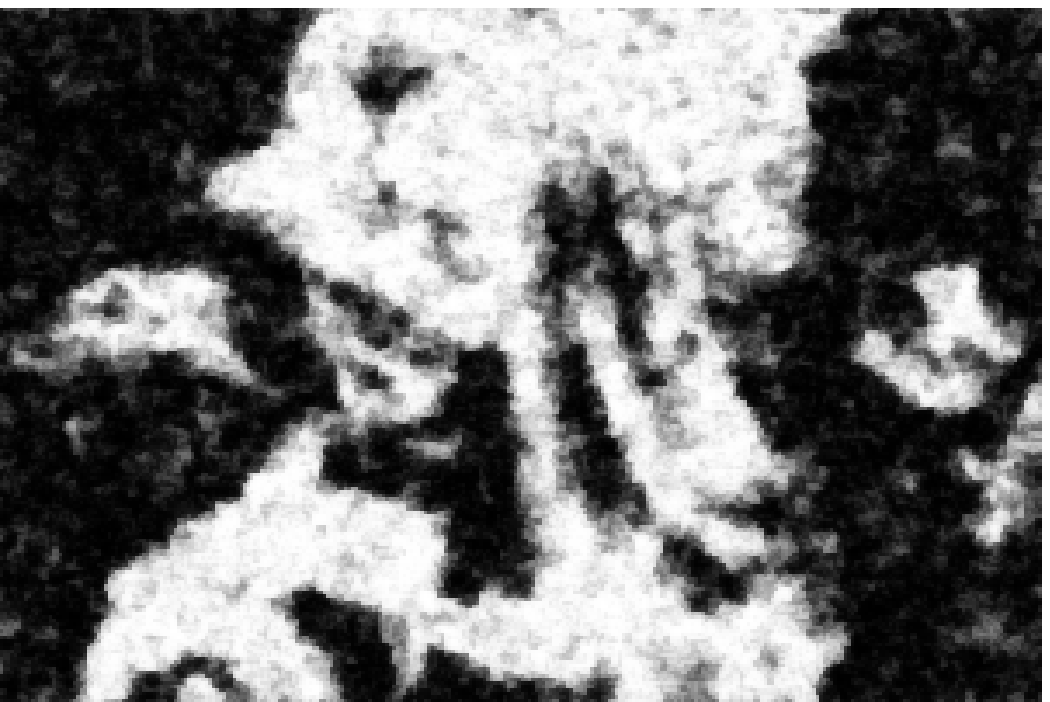} & \includegraphics[width=1\linewidth]{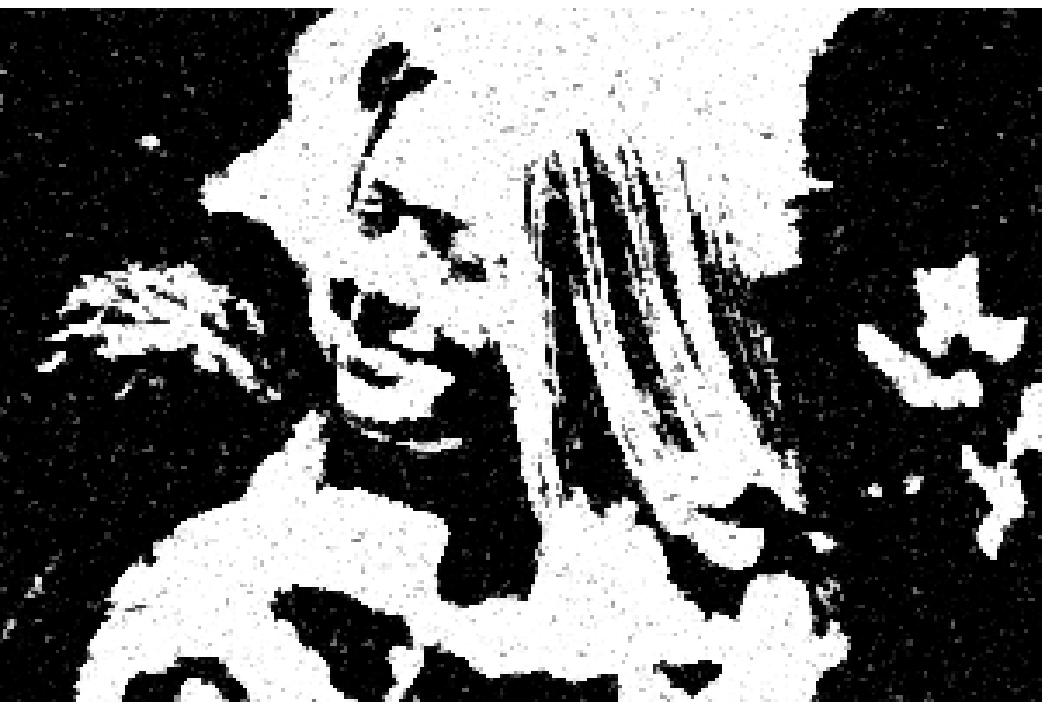}\tabularnewline
sm. class $\lambda=5$ & \includegraphics[width=1\linewidth]{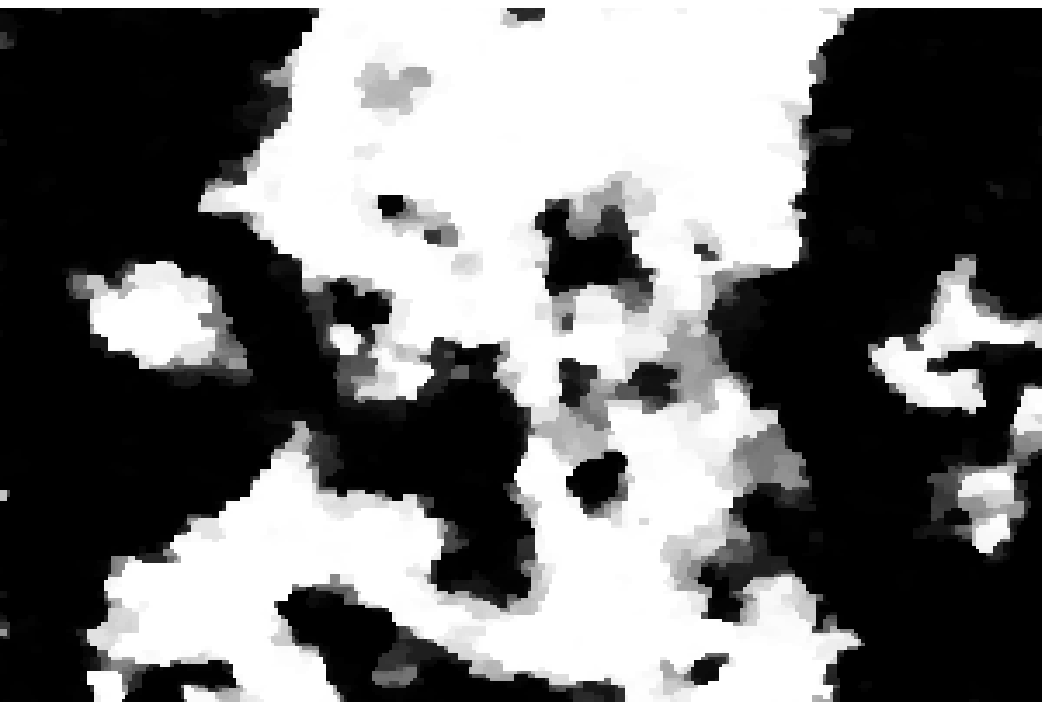} & \includegraphics[width=1\linewidth]{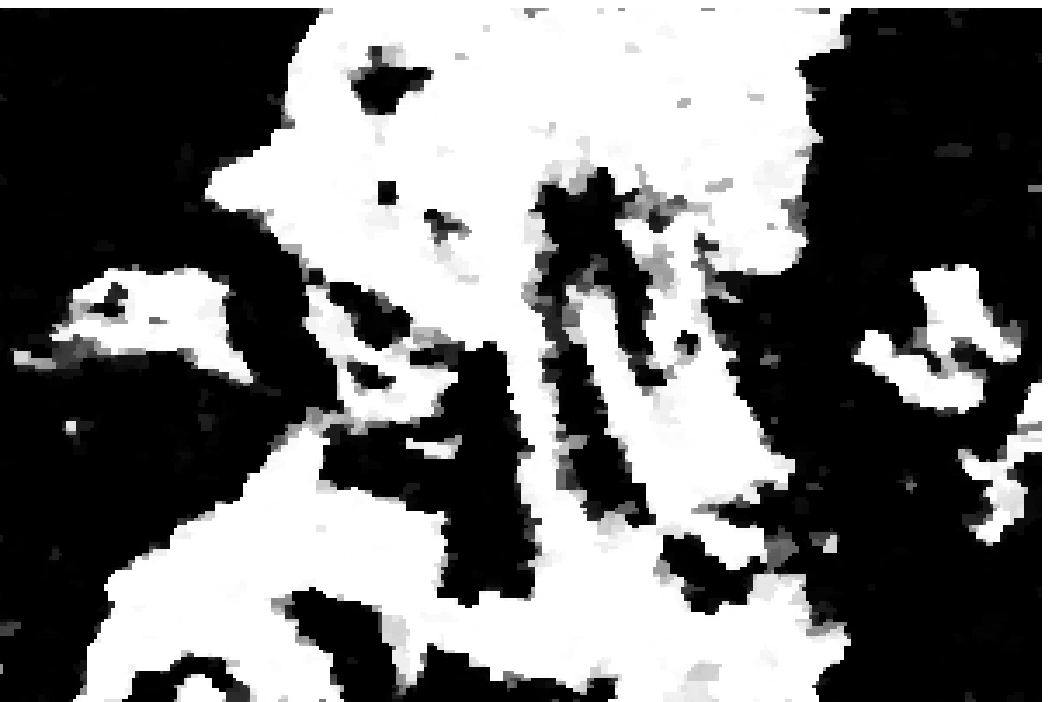} & \includegraphics[width=1\linewidth]{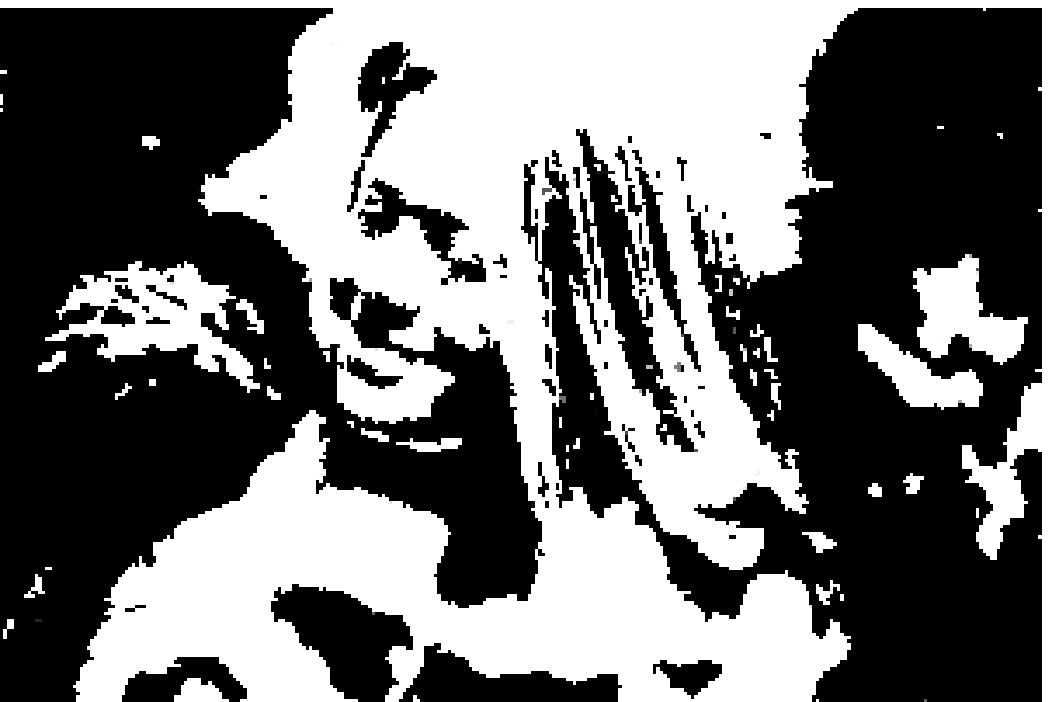}\tabularnewline
sm. class $\lambda=15$ & \includegraphics[width=1\linewidth]{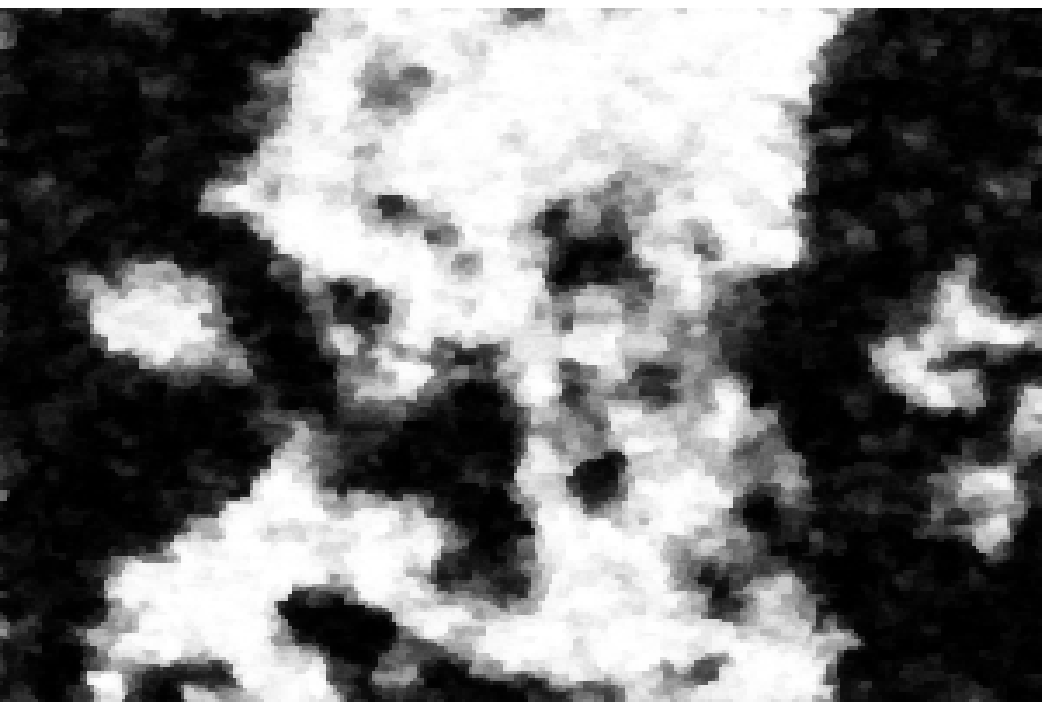} & \includegraphics[width=1\linewidth]{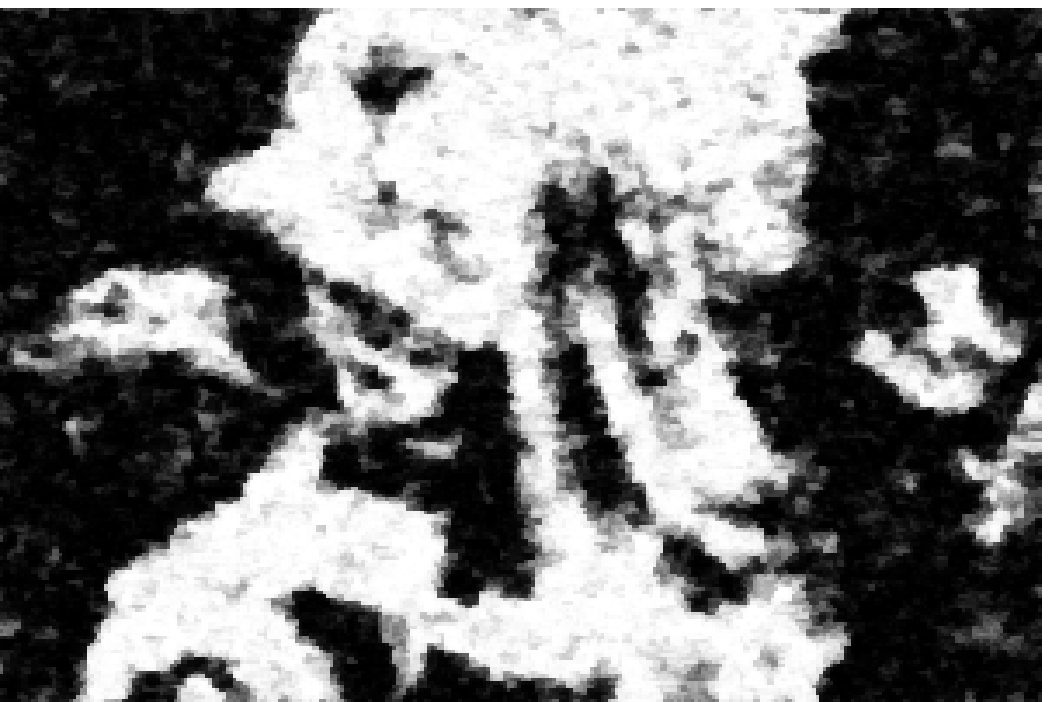} & \includegraphics[width=1\linewidth]{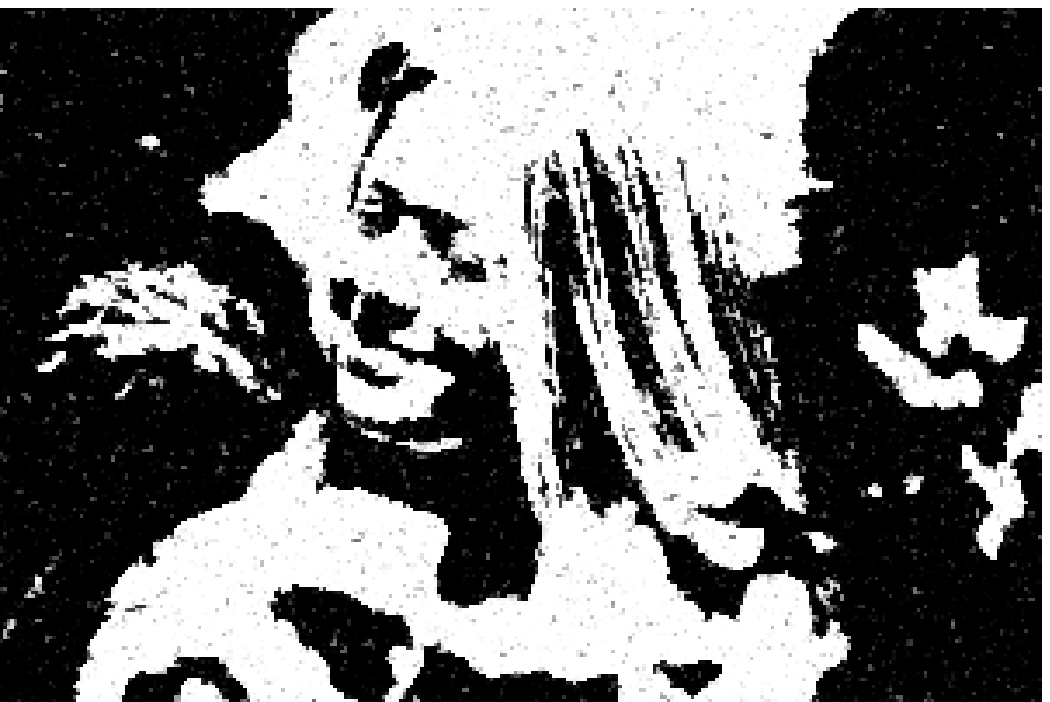}\tabularnewline
sm. class $\lambda=50$ & \includegraphics[width=1\linewidth]{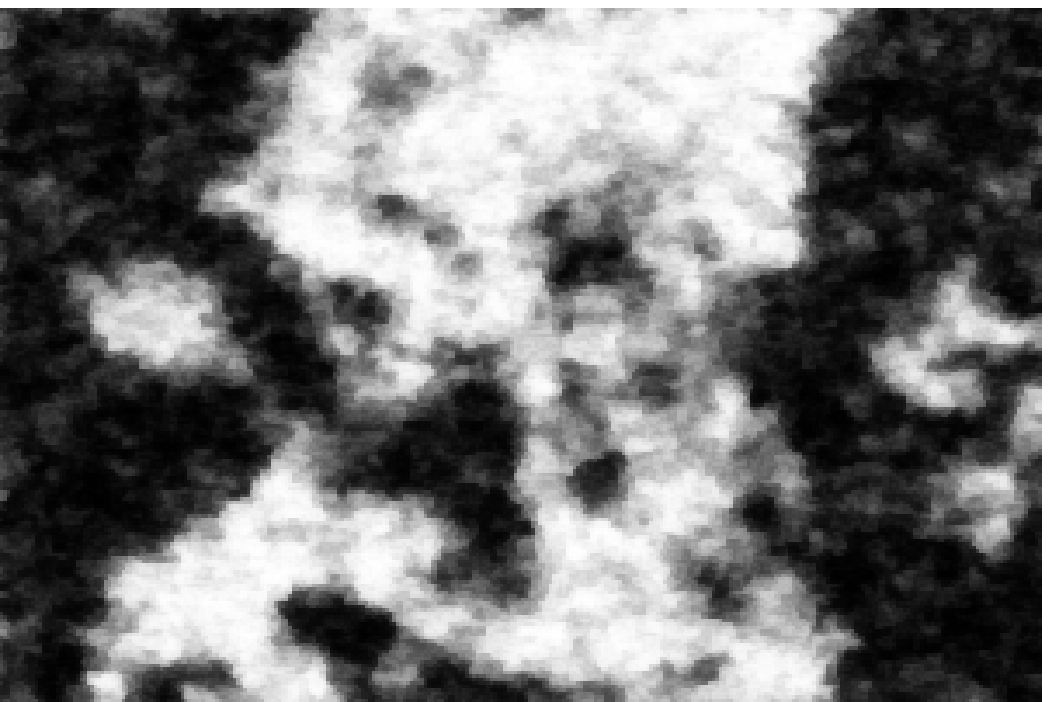} & \includegraphics[width=1\linewidth]{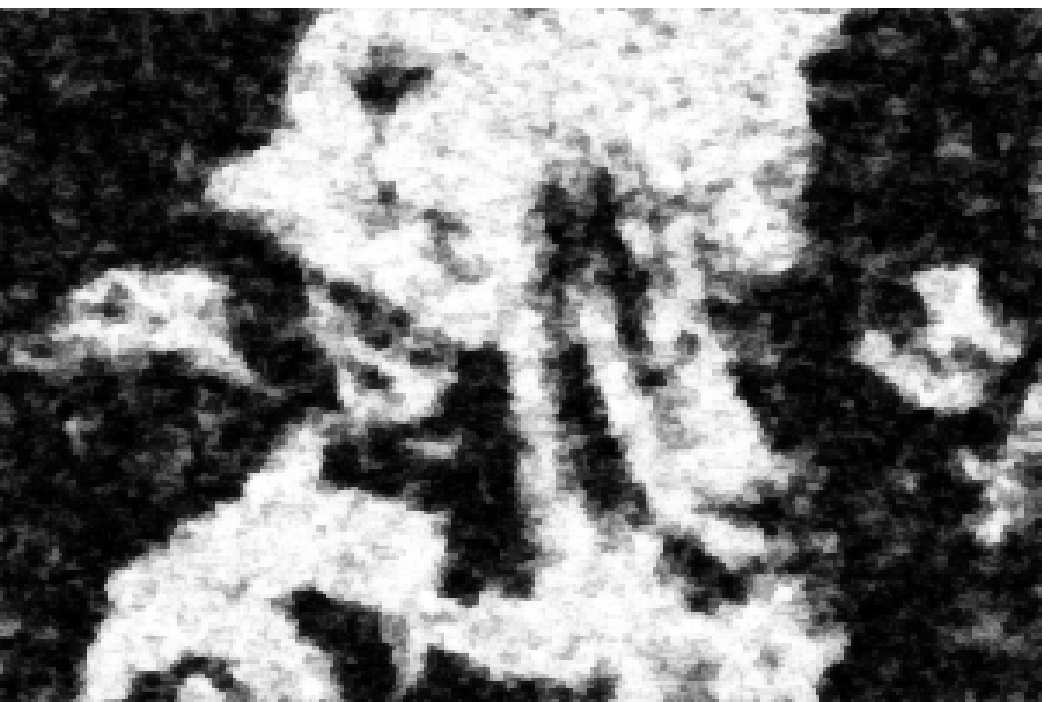} & \includegraphics[width=1\linewidth]{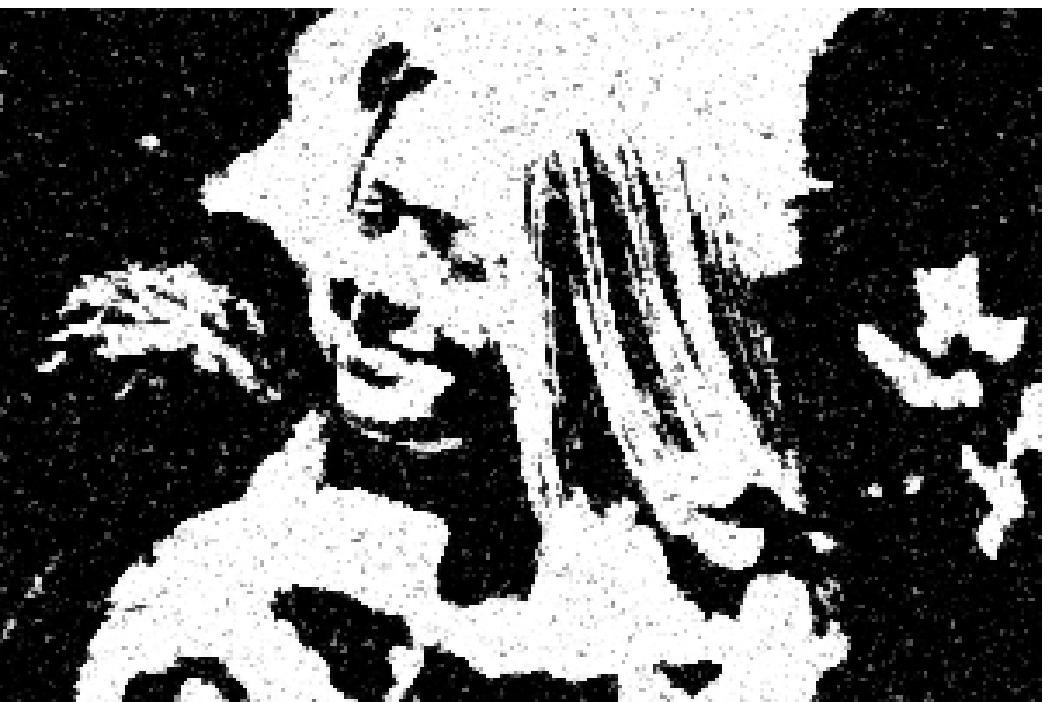}\tabularnewline
{\small pseudo-likelihood} & \includegraphics[width=1\linewidth]{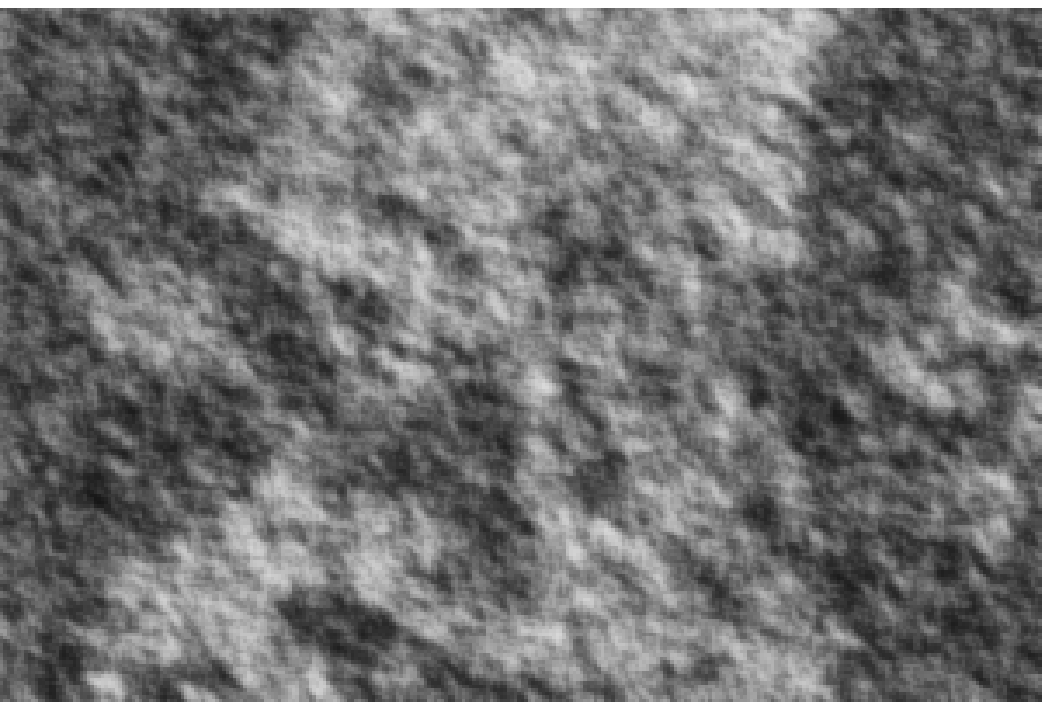} & \includegraphics[width=1\linewidth]{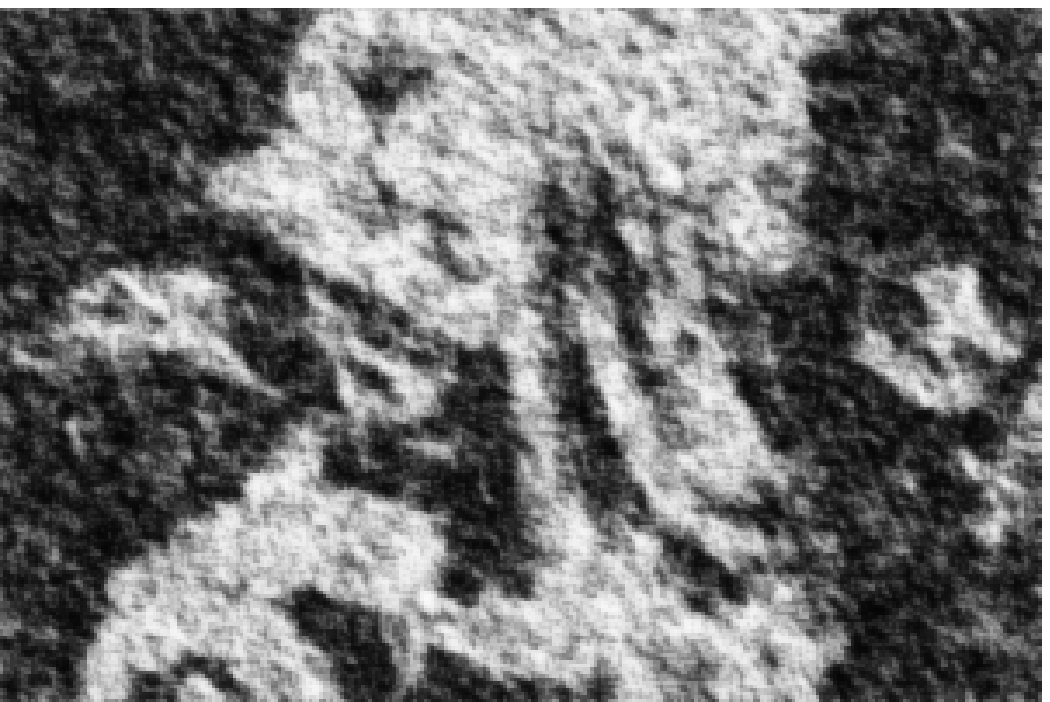} & \includegraphics[width=1\linewidth]{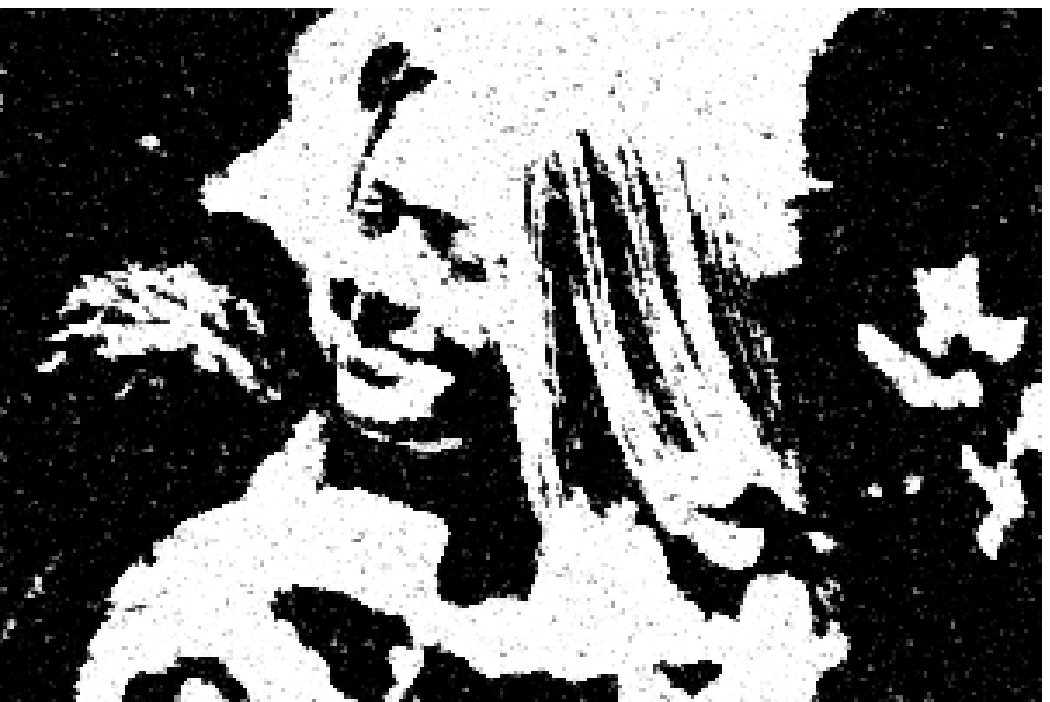}\tabularnewline
{\small piecewise} & \includegraphics[width=1\linewidth]{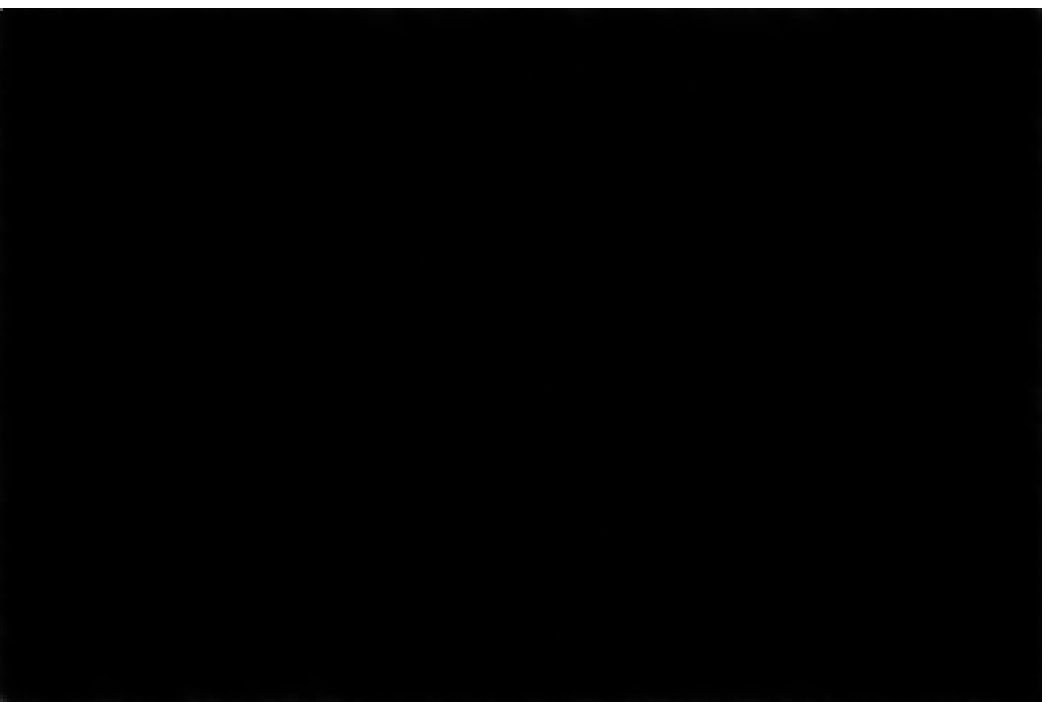} & \includegraphics[width=1\linewidth]{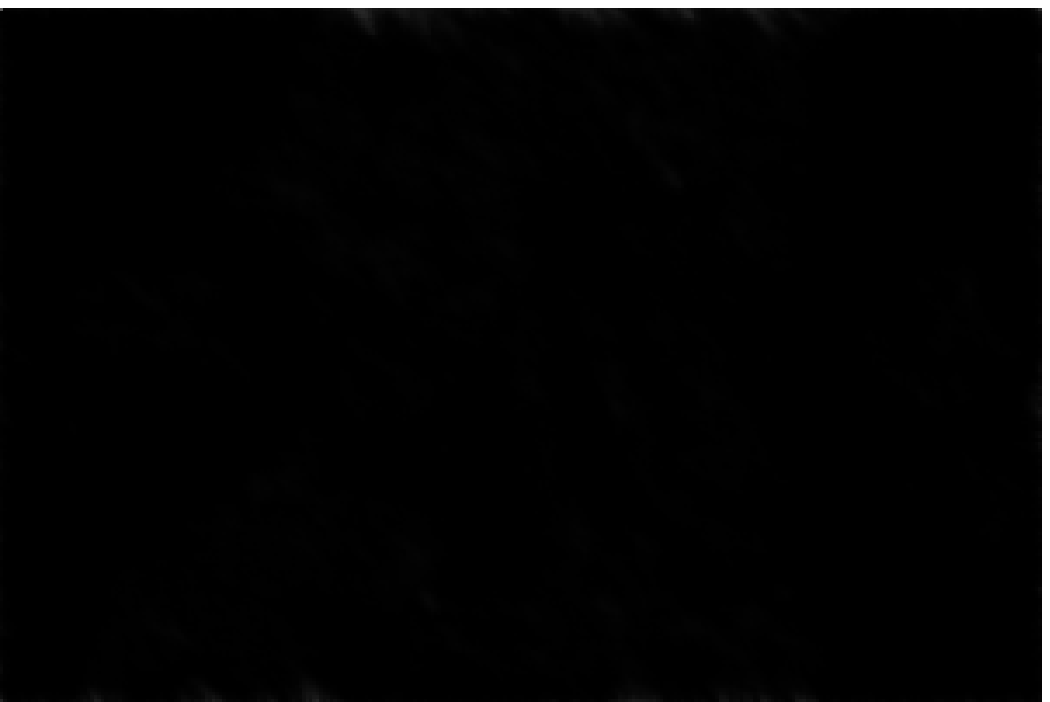} & \includegraphics[width=1\linewidth]{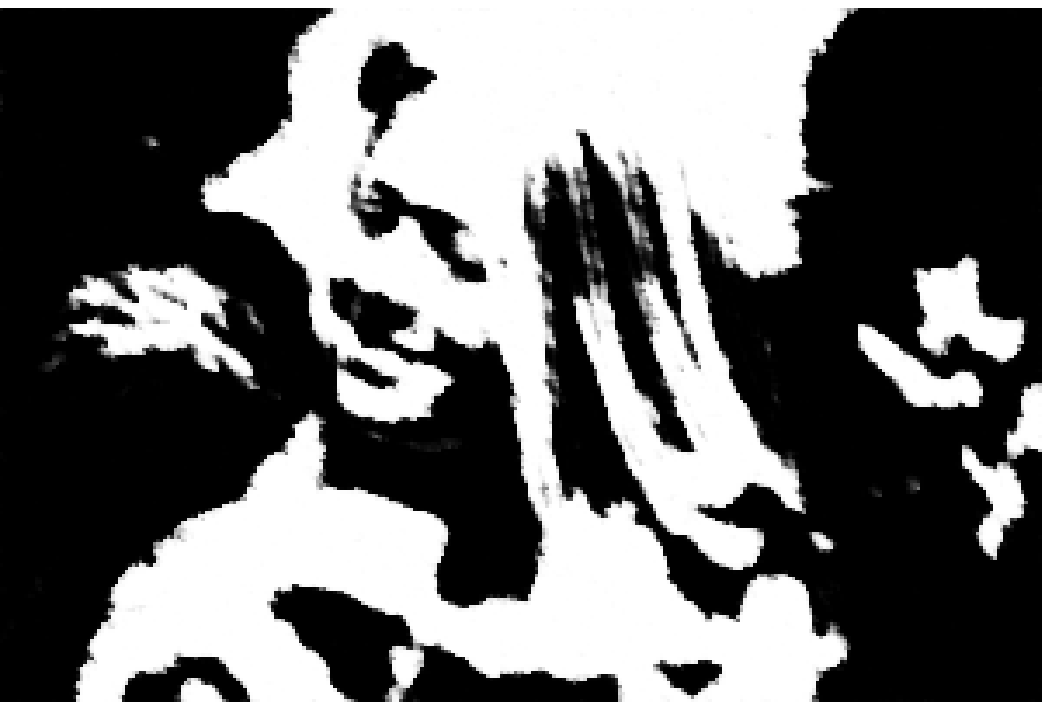}\tabularnewline
{\small inde-pendent} & \includegraphics[width=1\linewidth]{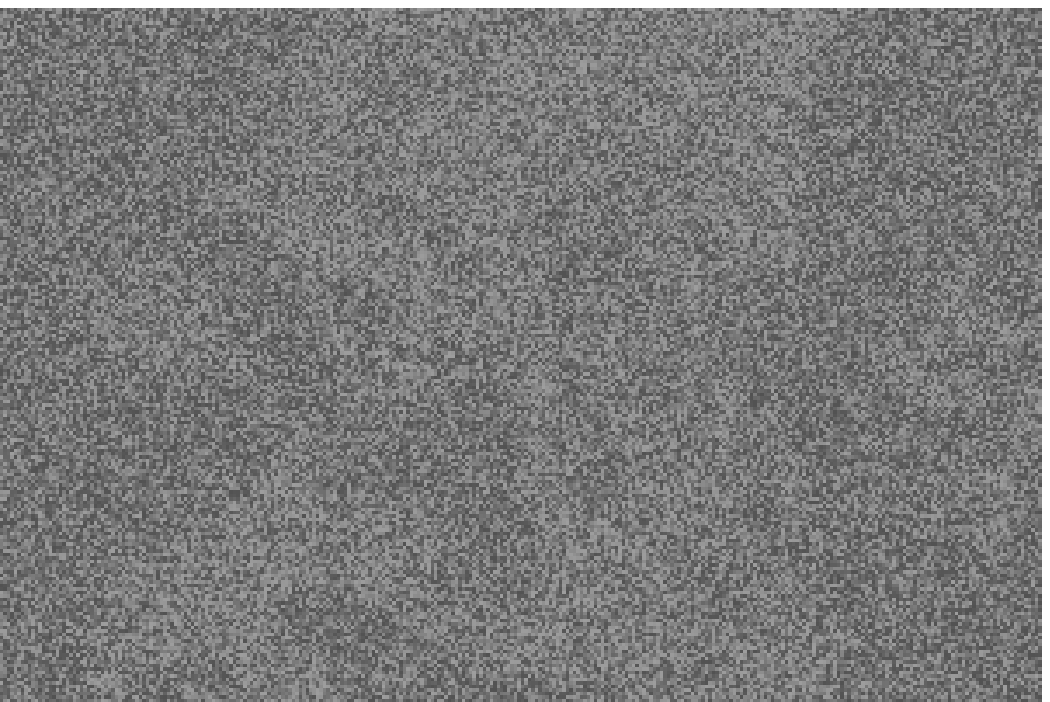} & \includegraphics[width=1\linewidth]{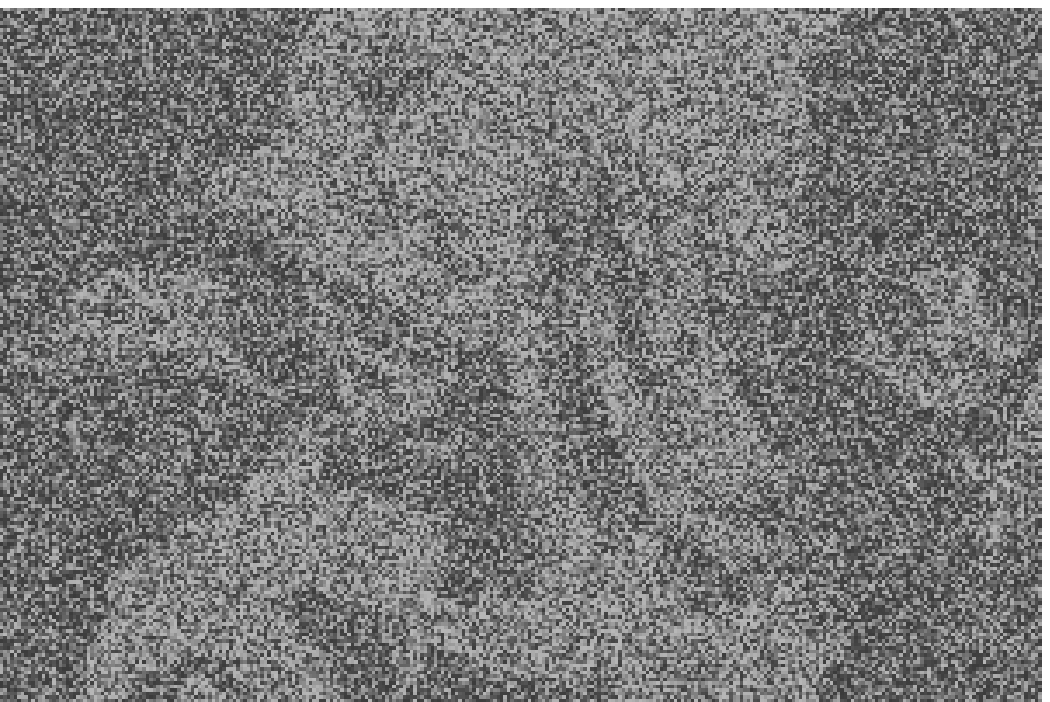} & \includegraphics[width=1\linewidth]{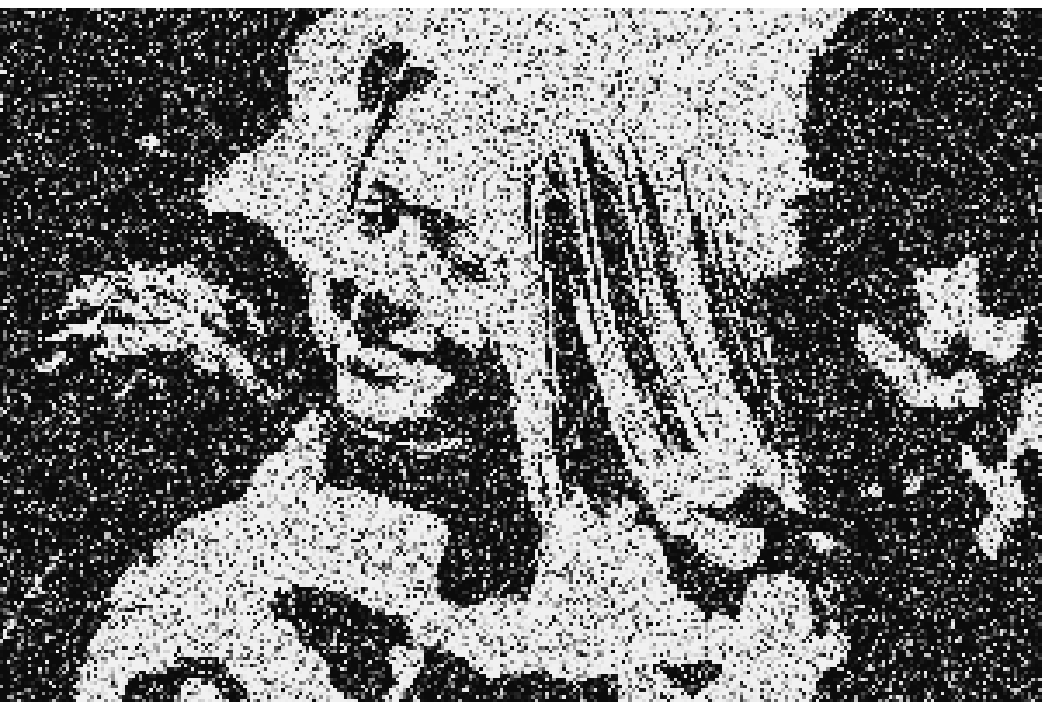}\tabularnewline
\end{tabular}
\par\end{centering}

\caption{Predicted marginals for an example binary denoising test image with
different noise levels $n$.}
\end{figure}
\begin{figure}[p]
\begin{centering}
\vspace{-10pt}
\renewcommand{\tabcolsep}{1pt}%
\begin{tabular}{>{\centering}m{0.57in}>{\centering}m{0.9in}>{\centering}m{0.9in}>{\centering}m{0.9in}}
 & $n=1.25$ & $n=1.5$ & $n=5$\tabularnewline
{\small input} & \includegraphics[width=1\linewidth]{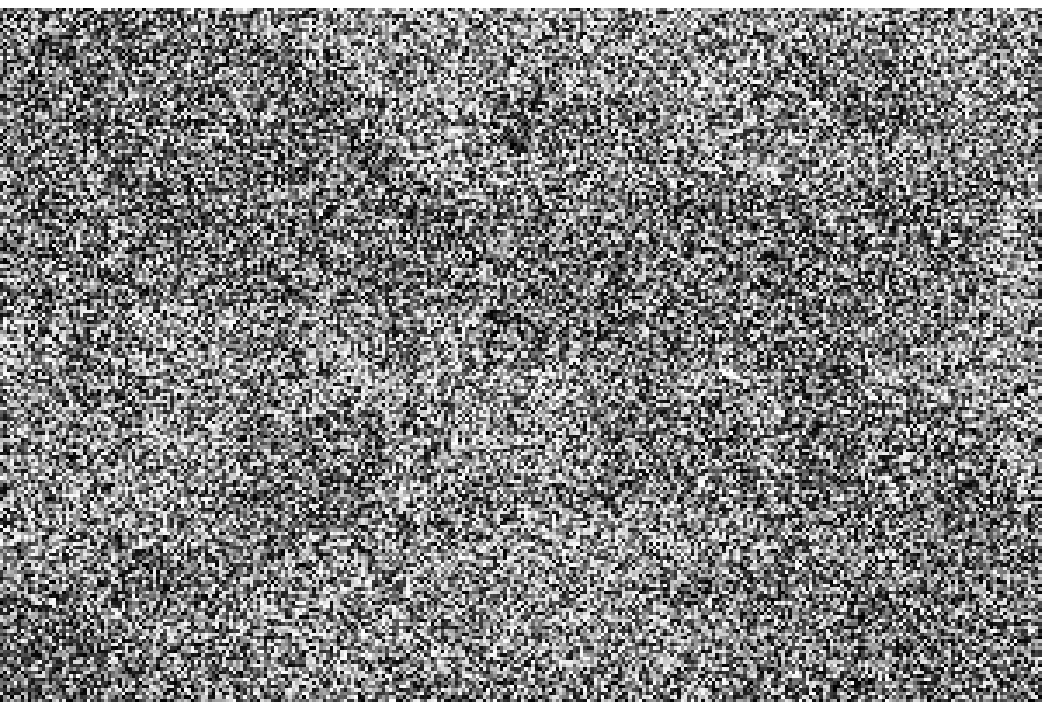} & \includegraphics[width=1\linewidth]{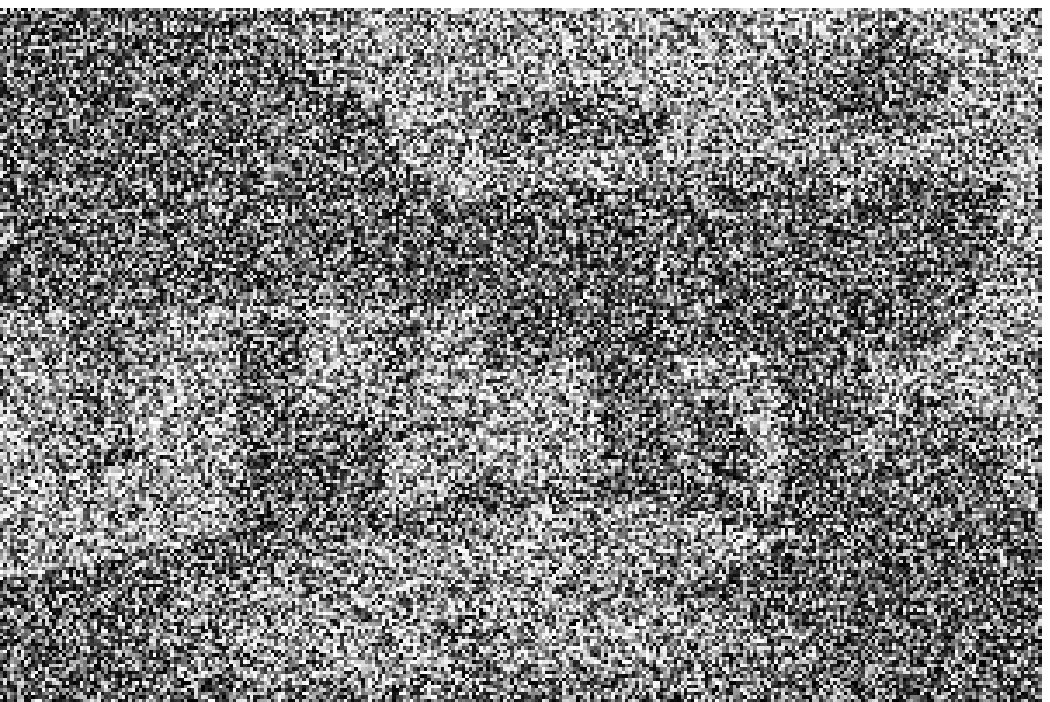} & \includegraphics[width=1\linewidth]{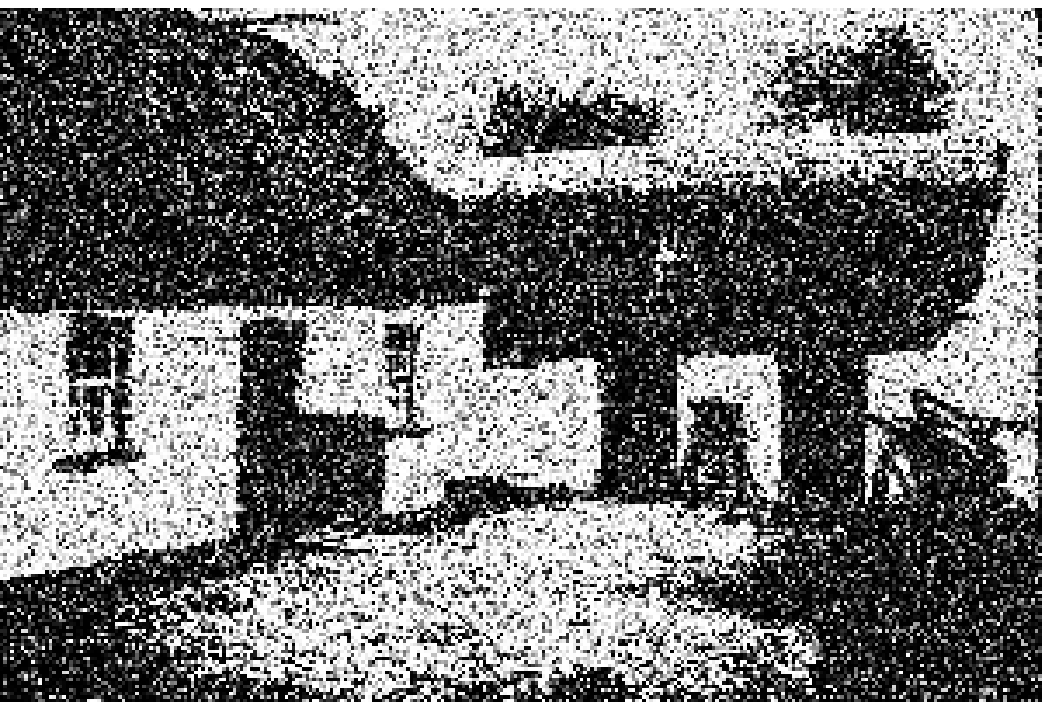}\tabularnewline
{\small surrogate likelihood} & \includegraphics[width=1\linewidth]{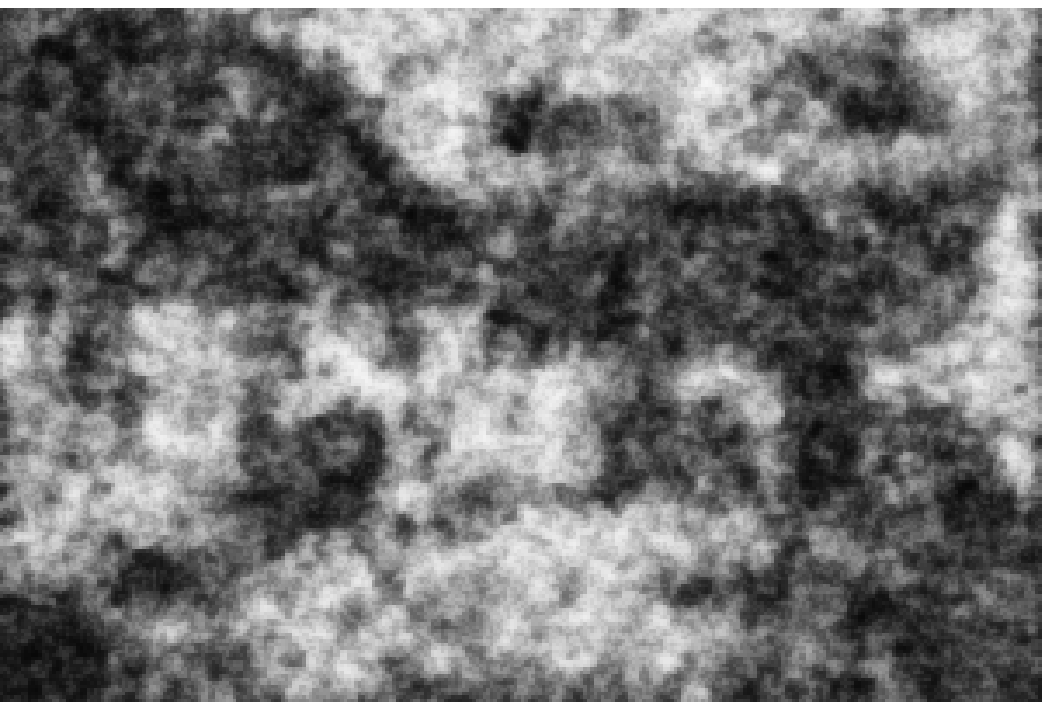} & \includegraphics[width=1\linewidth]{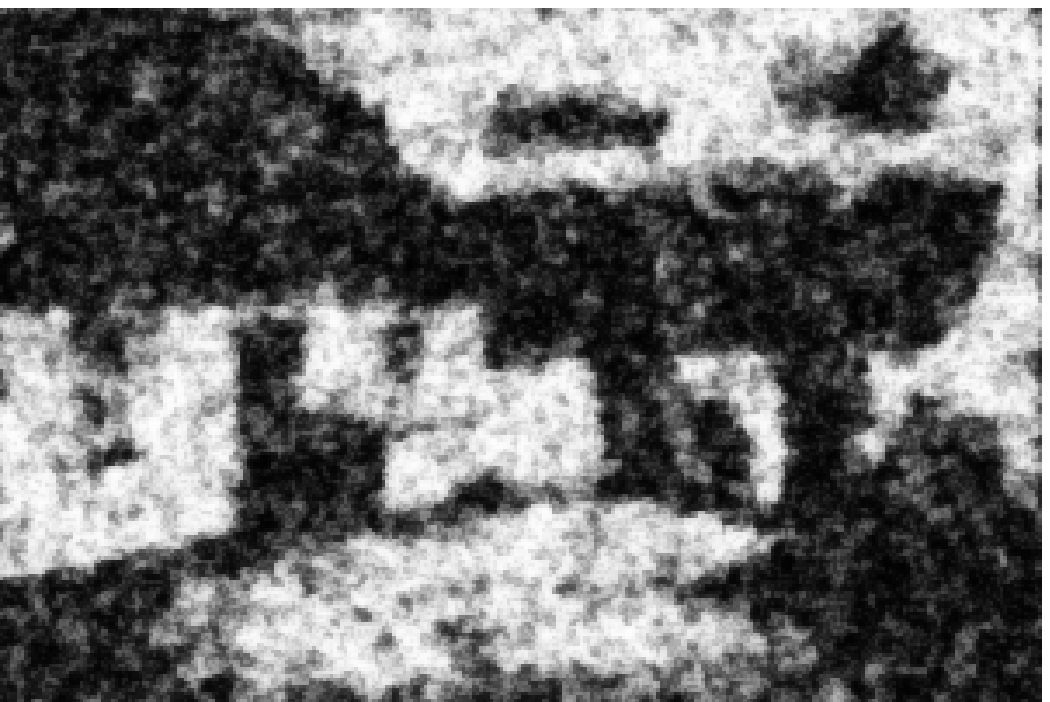} & \includegraphics[width=1\linewidth]{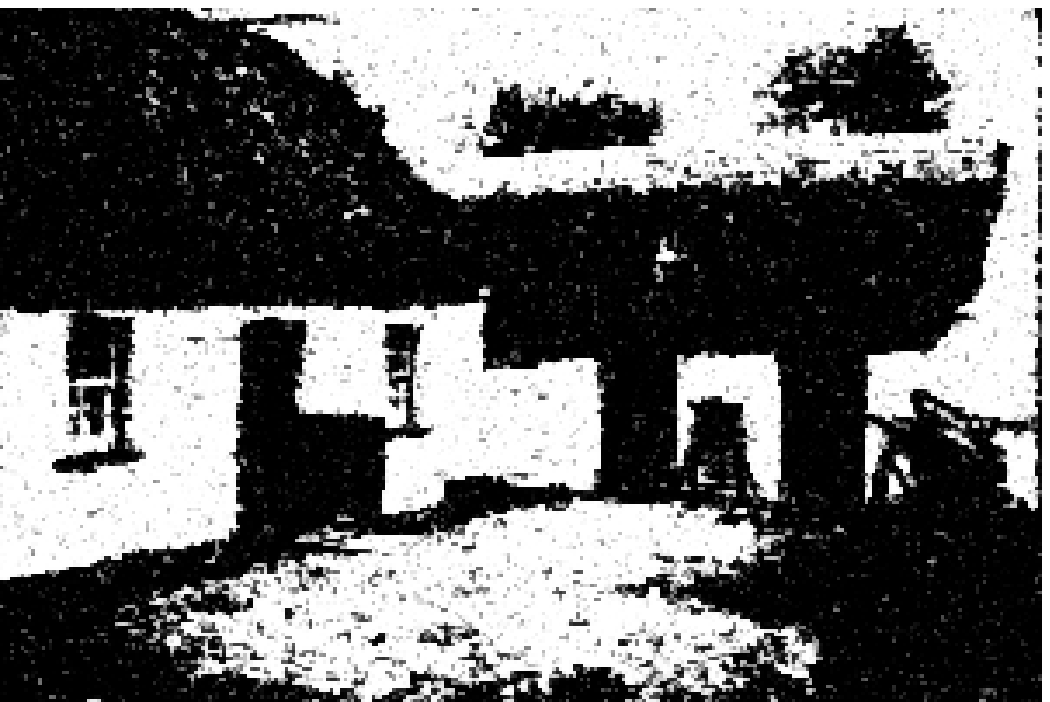}\tabularnewline
{\small univariate logistic} & \includegraphics[width=1\linewidth]{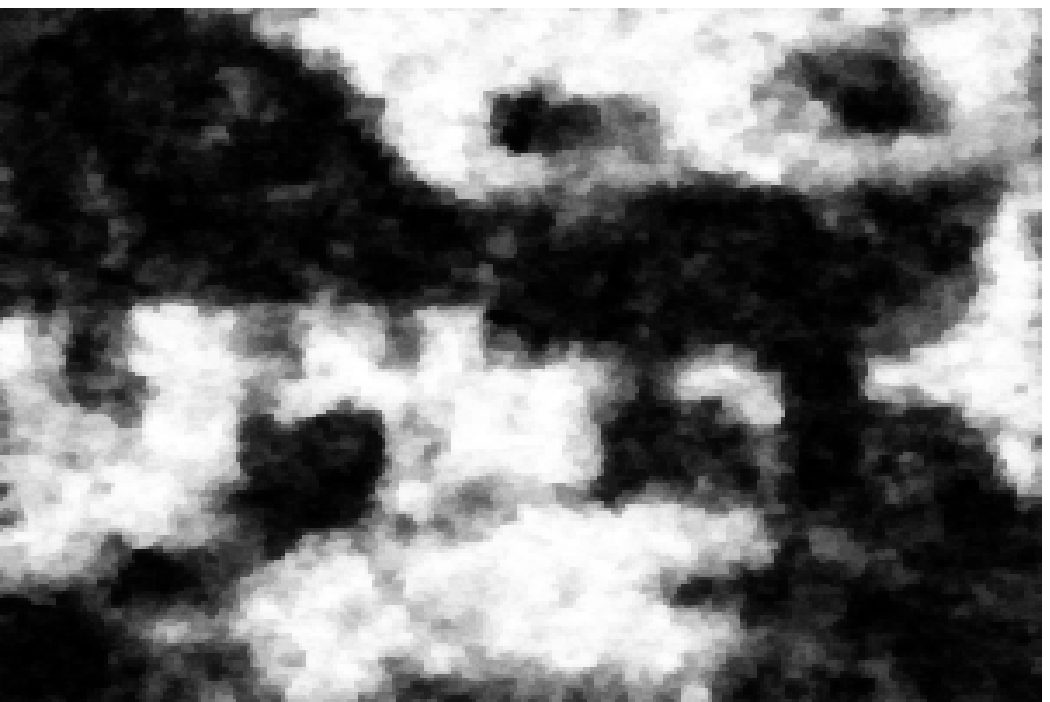} & \includegraphics[width=1\linewidth]{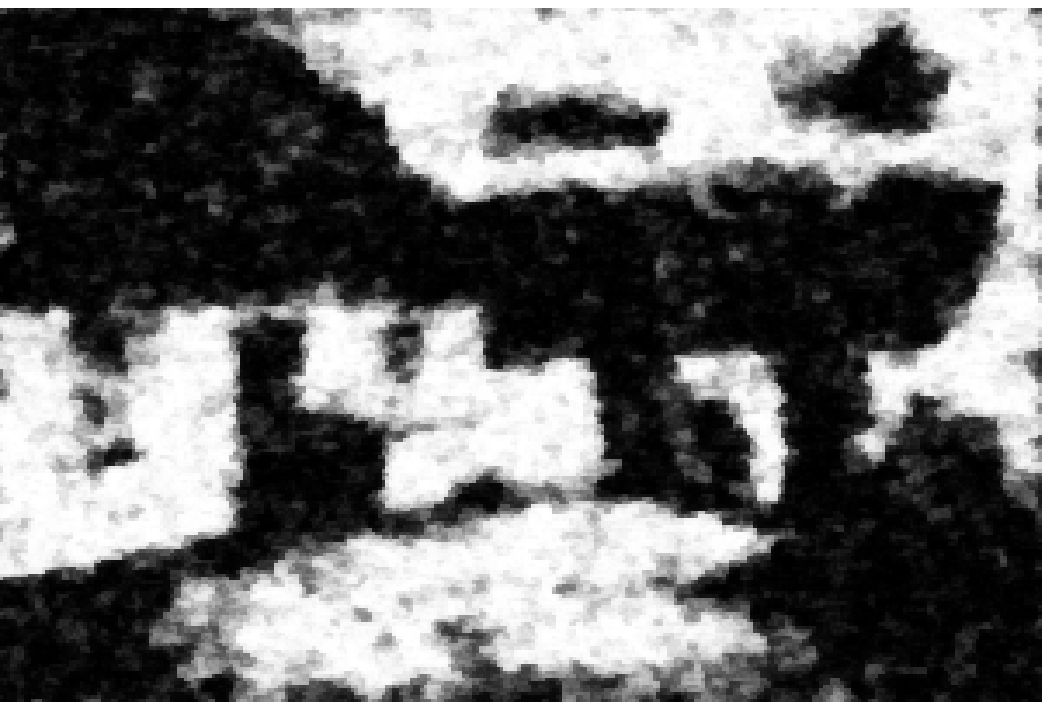} & \includegraphics[width=1\linewidth]{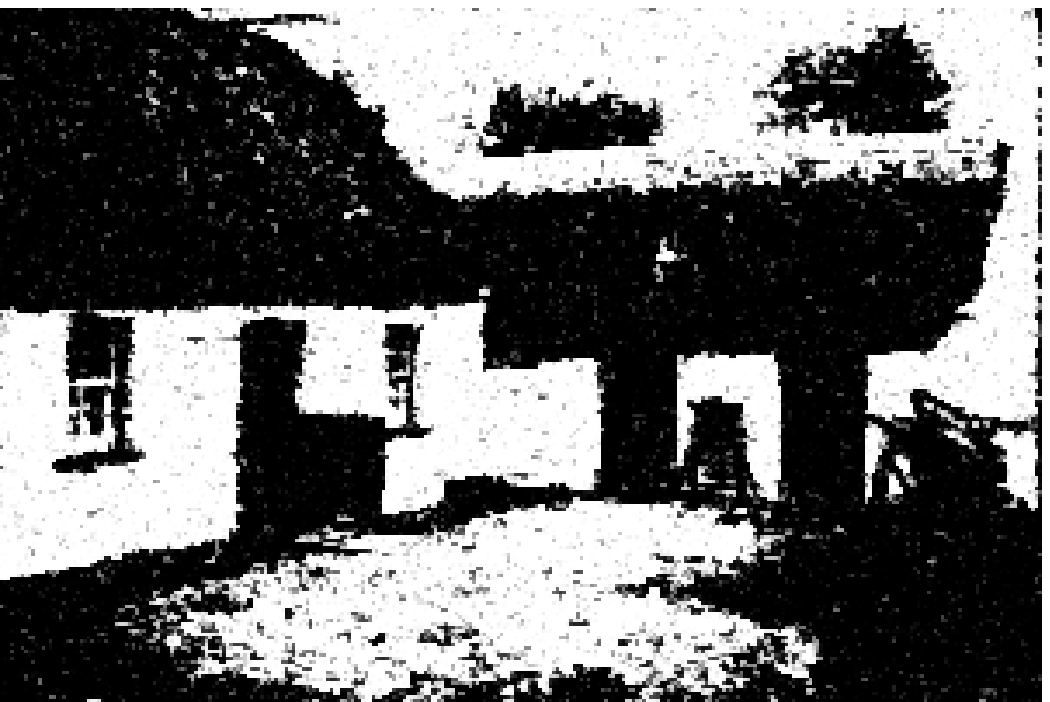}\tabularnewline
{\small clique logistic} & \includegraphics[width=1\linewidth]{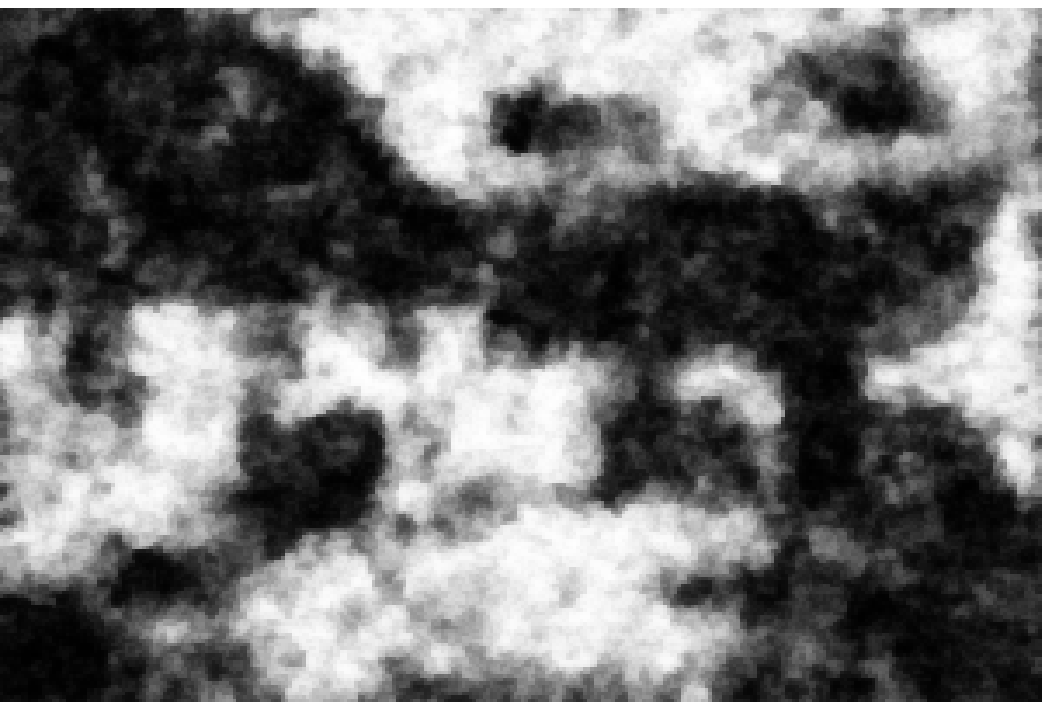} & \includegraphics[width=1\linewidth]{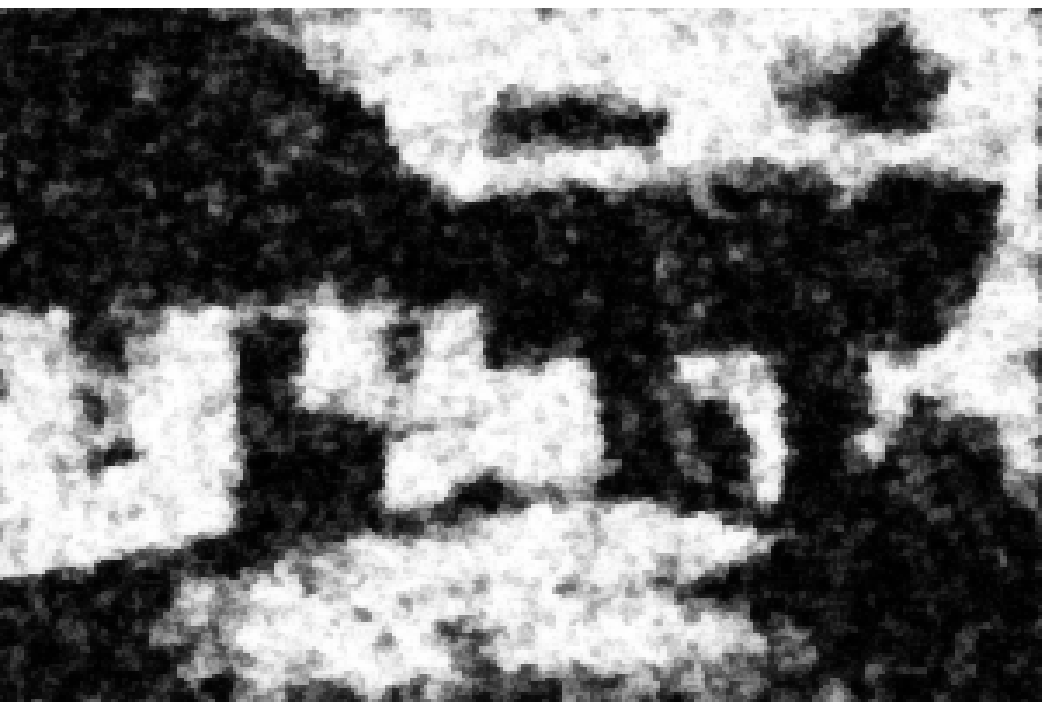} & \includegraphics[width=1\linewidth]{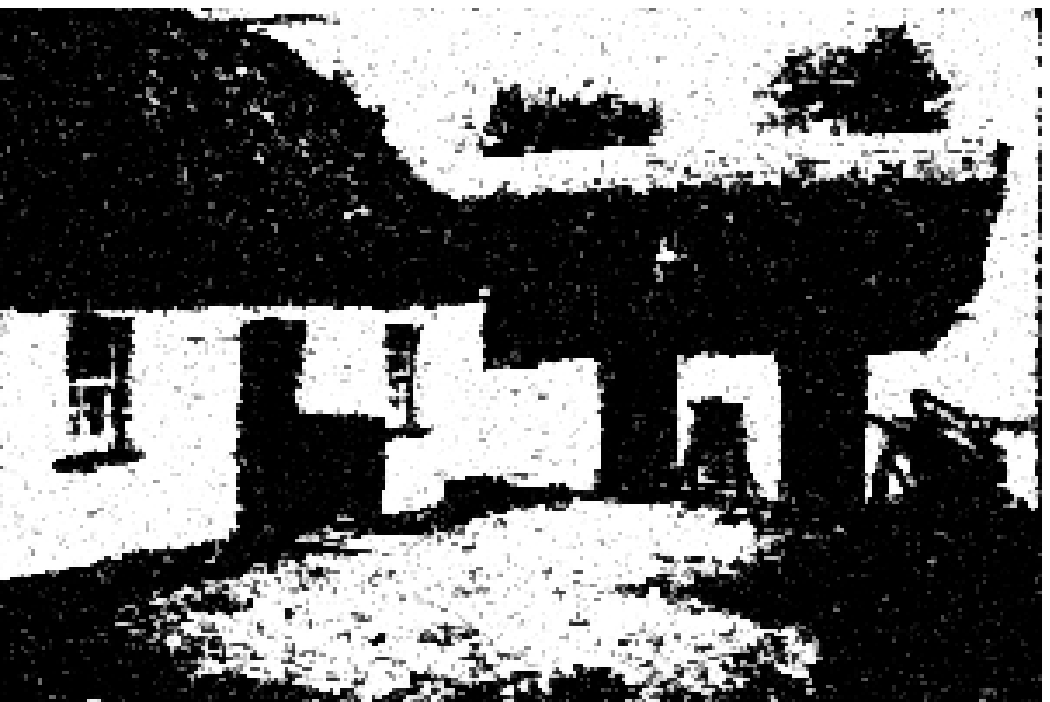}\tabularnewline
sm. class $\lambda=5$ & \includegraphics[width=1\linewidth]{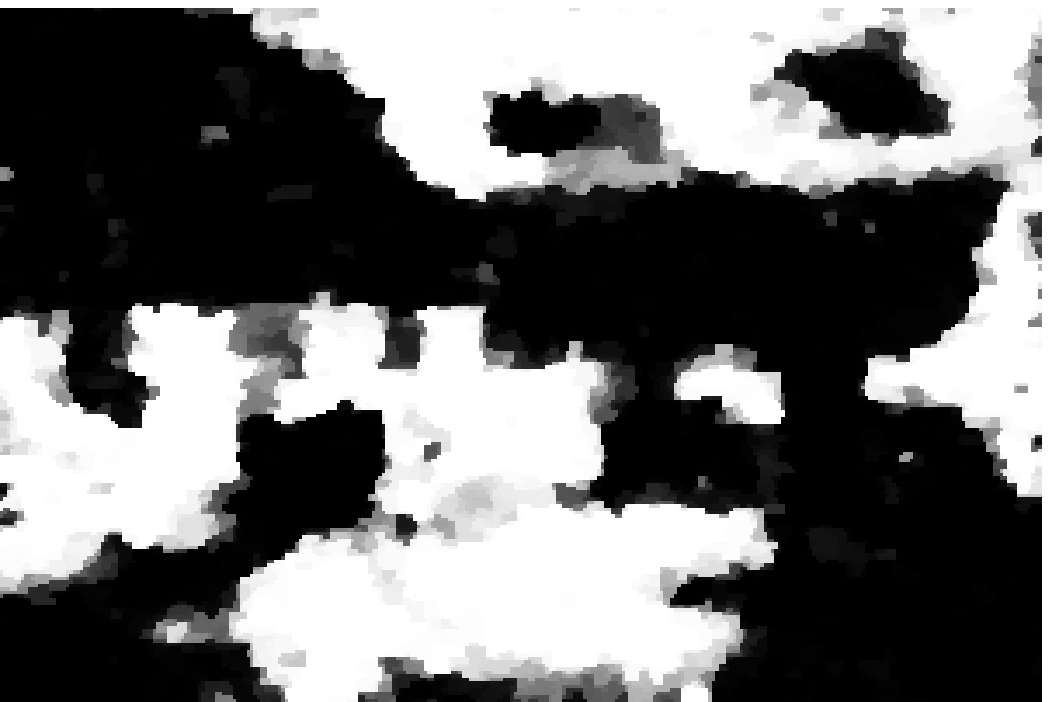} & \includegraphics[width=1\linewidth]{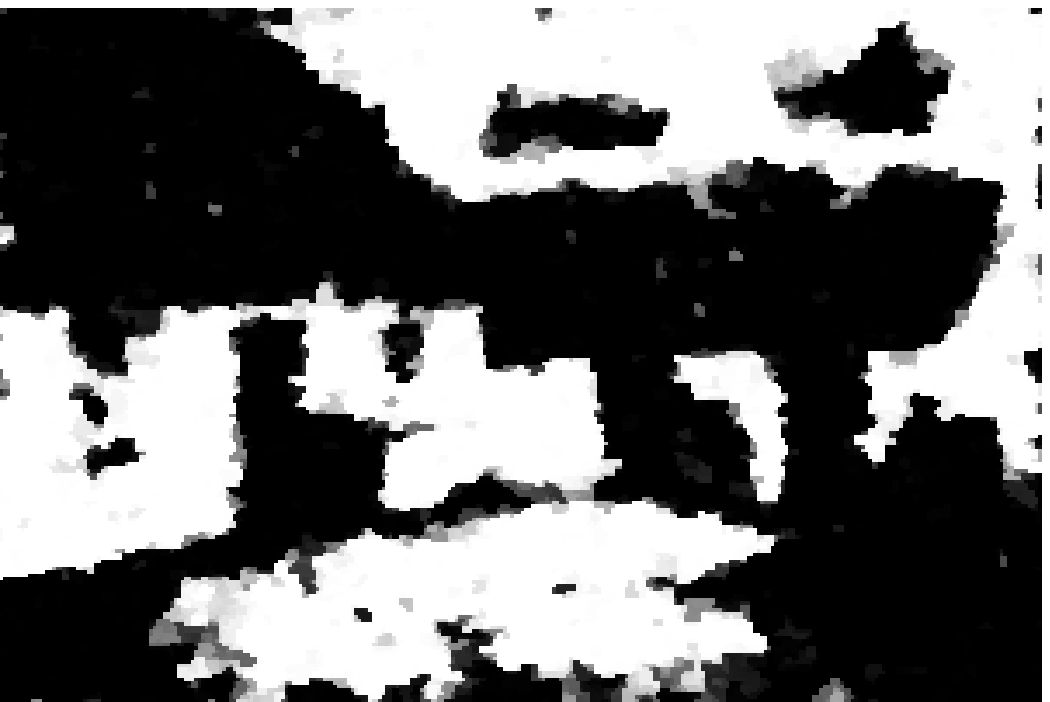} & \includegraphics[width=1\linewidth]{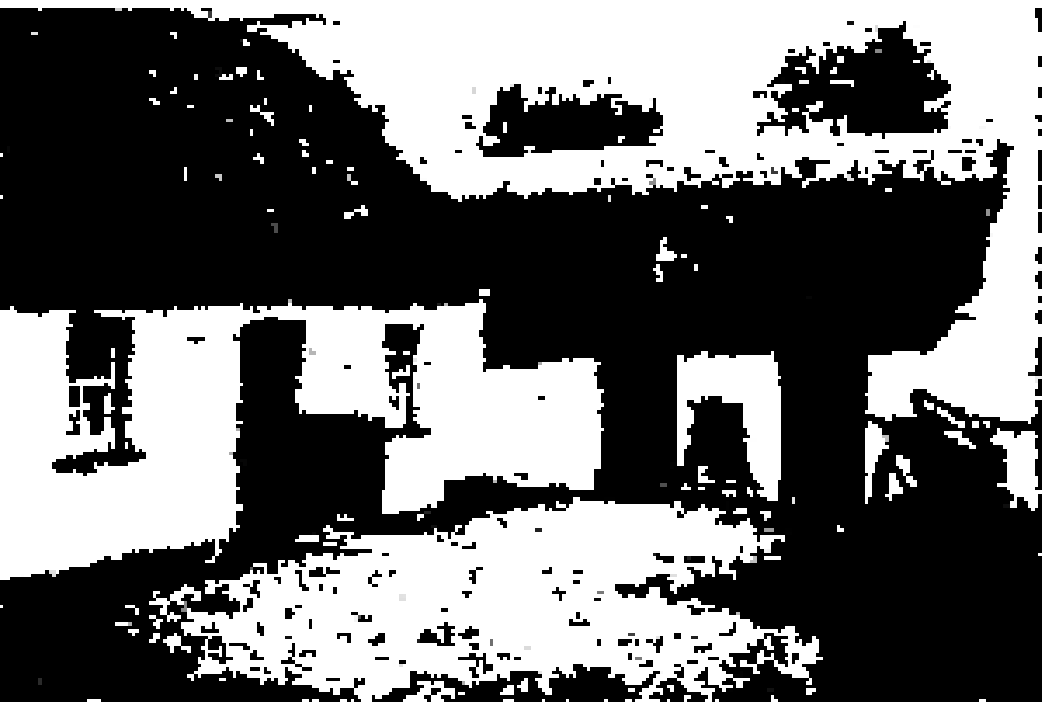}\tabularnewline
sm. class $\lambda=15$ & \includegraphics[width=1\linewidth]{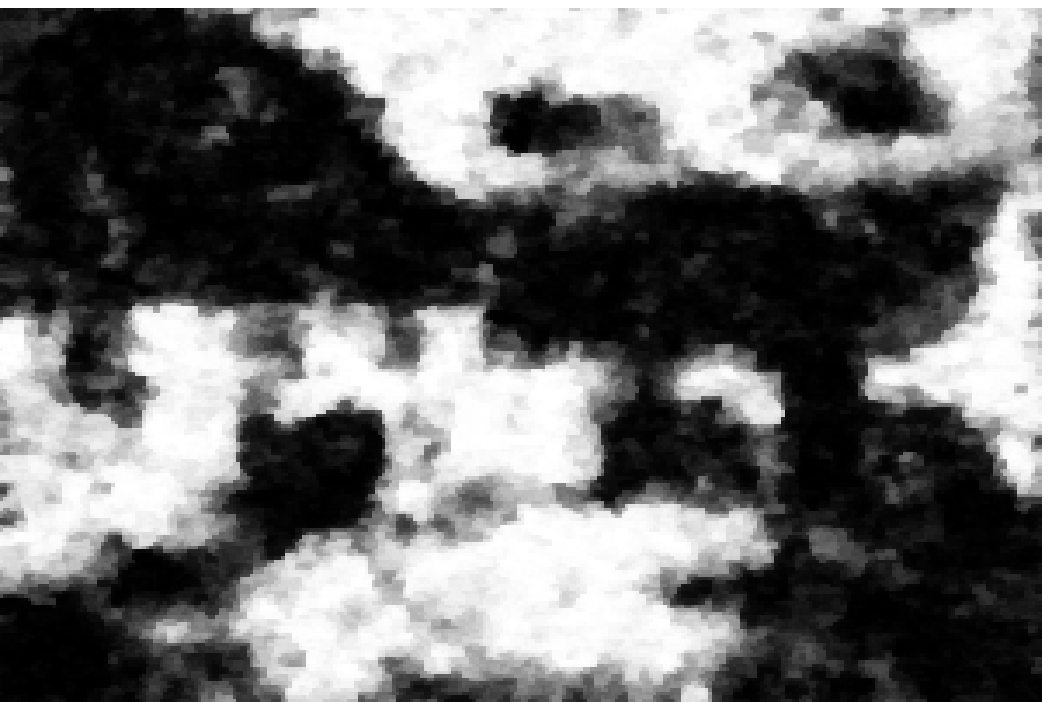} & \includegraphics[width=1\linewidth]{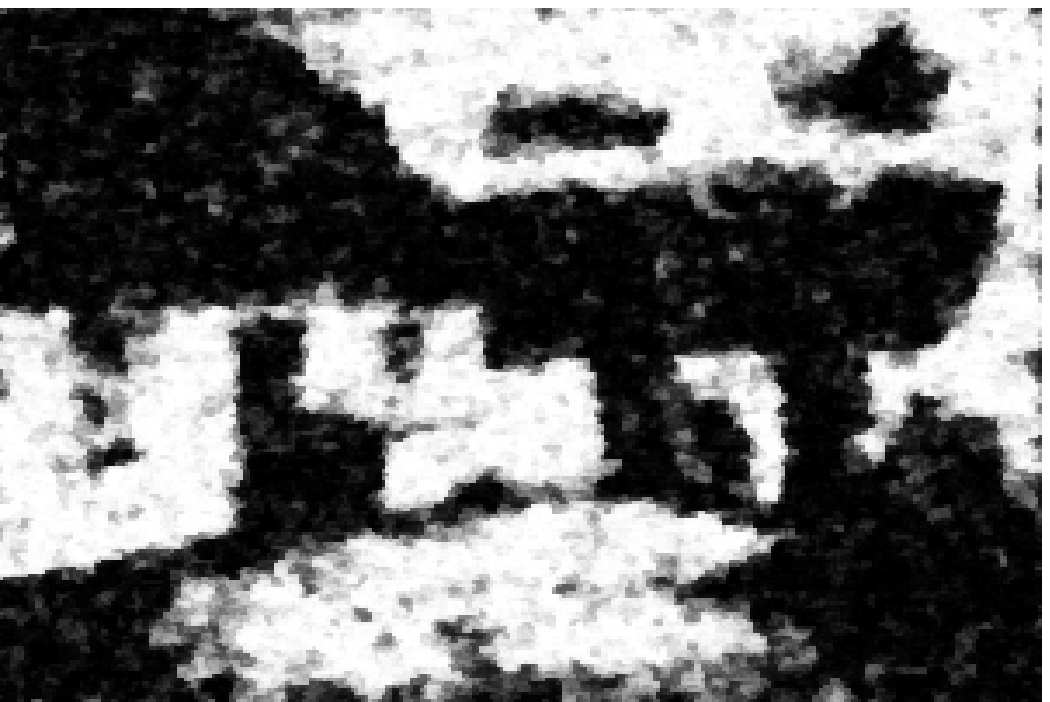} & \includegraphics[width=1\linewidth]{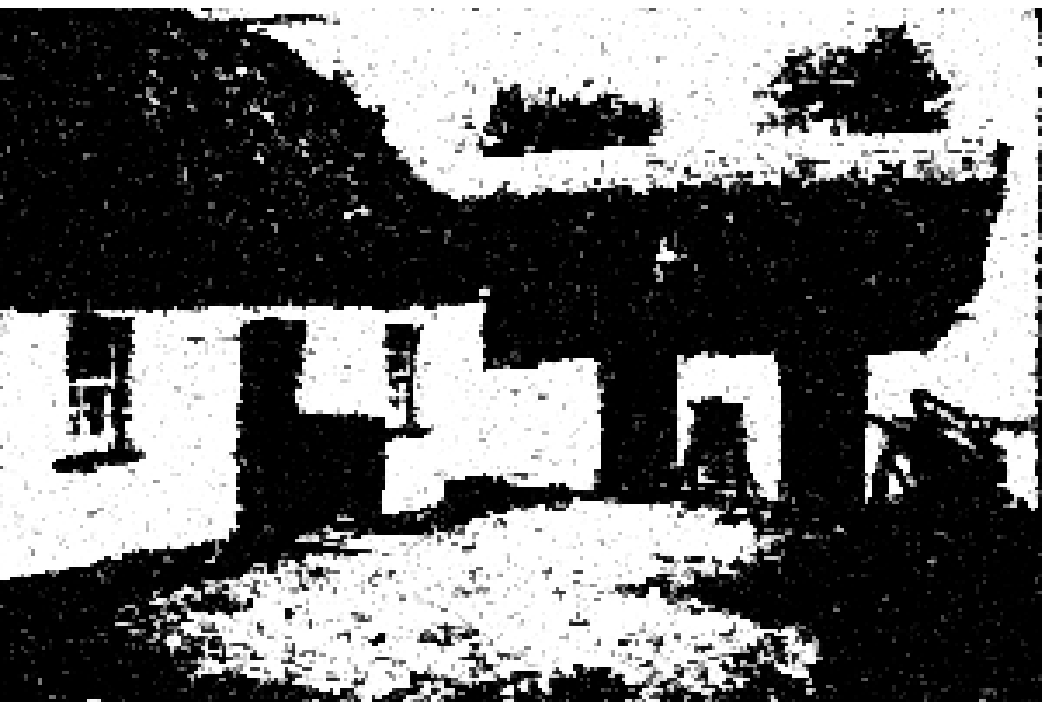}\tabularnewline
sm. class $\lambda=50$ & \includegraphics[width=1\linewidth]{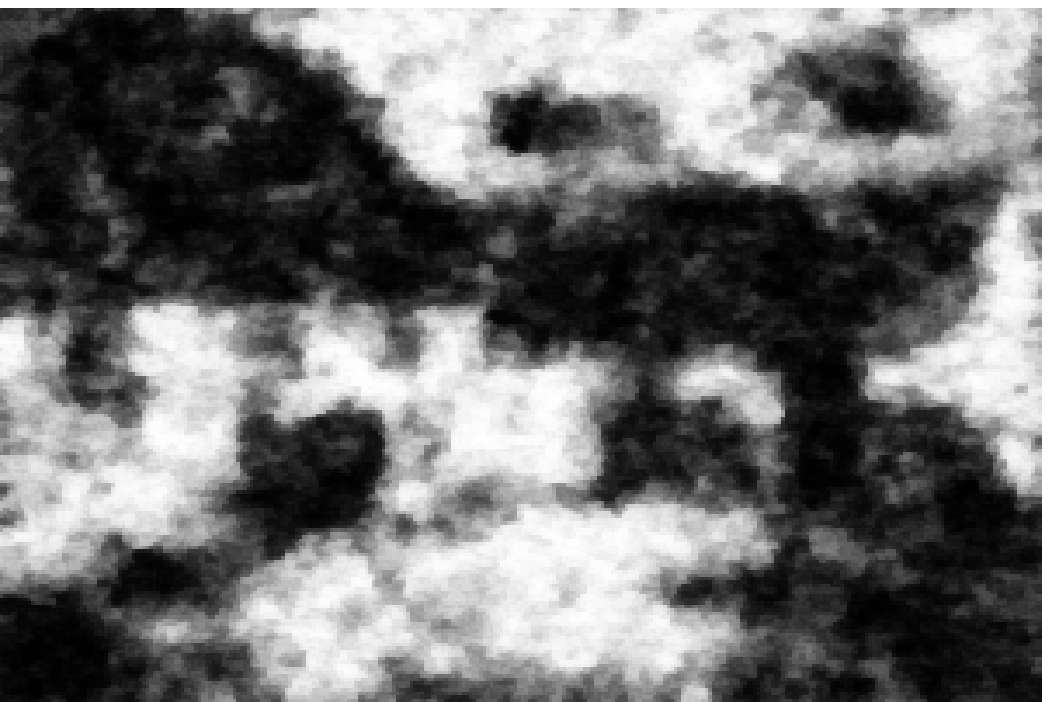} & \includegraphics[width=1\linewidth]{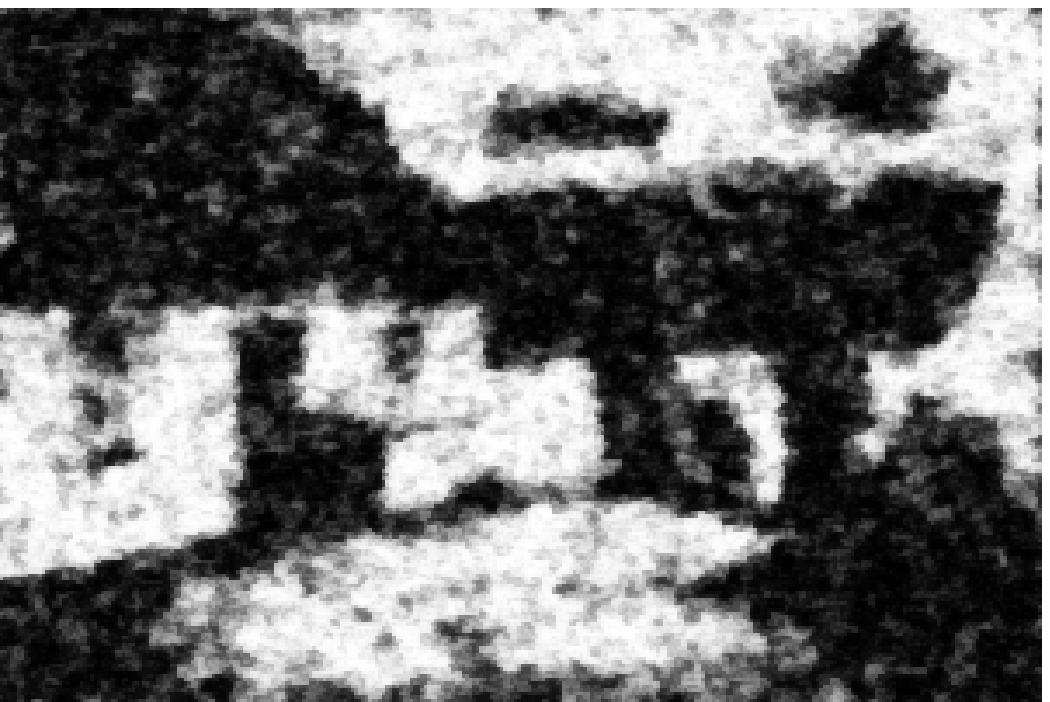} & \includegraphics[width=1\linewidth]{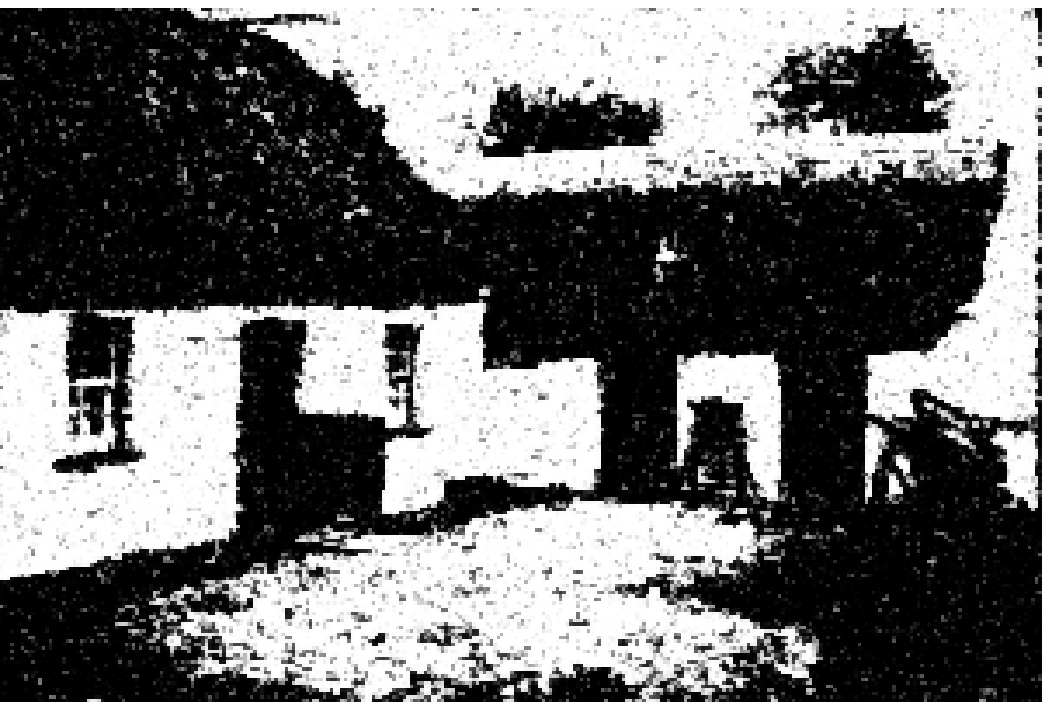}\tabularnewline
{\small pseudo-likelihood} & \includegraphics[width=1\linewidth]{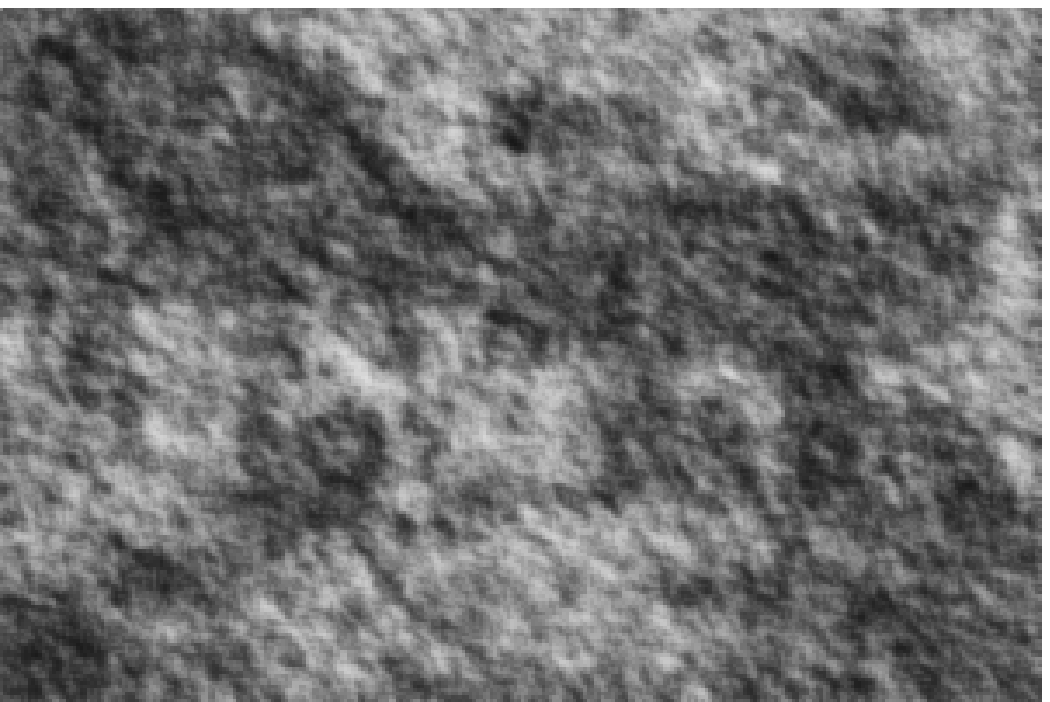} & \includegraphics[width=1\linewidth]{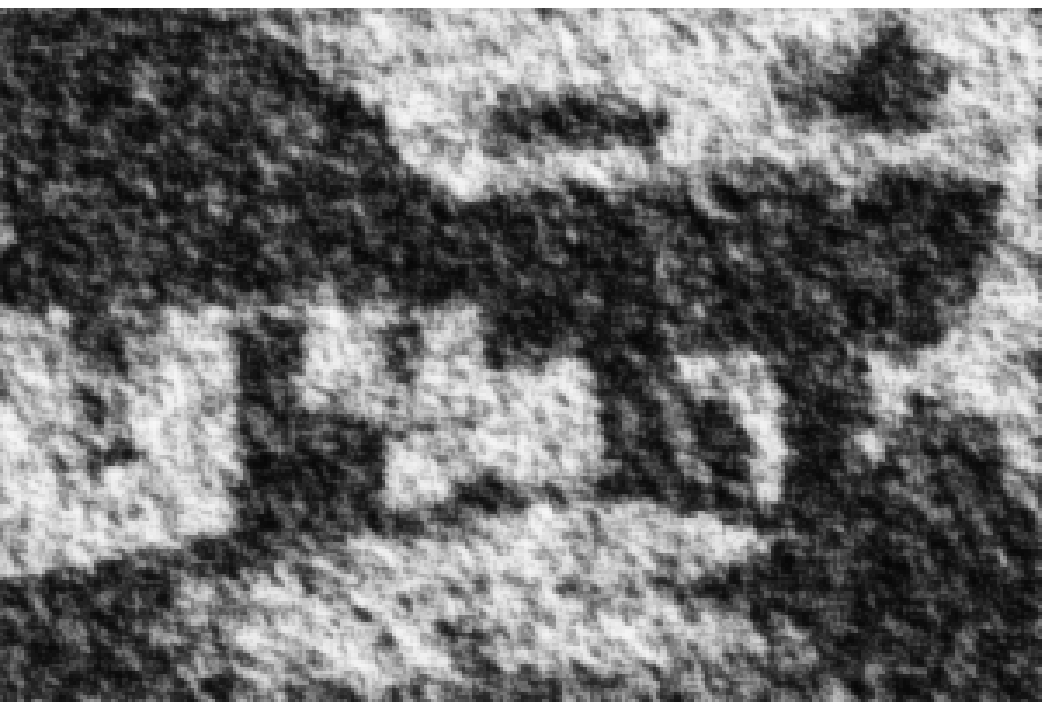} & \includegraphics[width=1\linewidth]{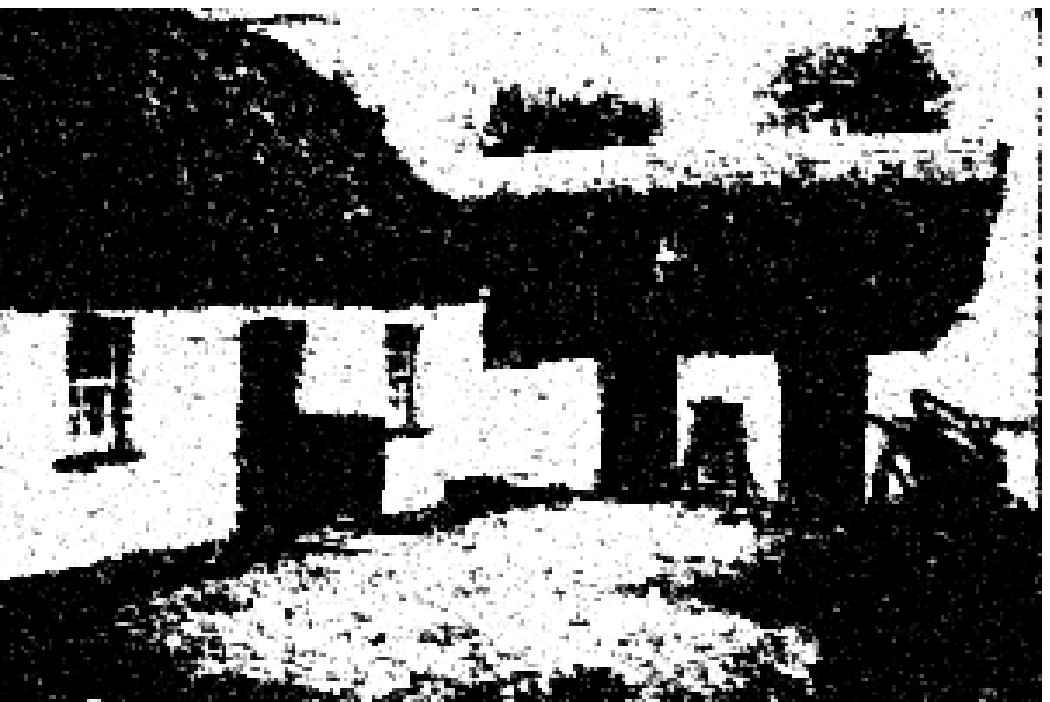}\tabularnewline
{\small piecewise} & \includegraphics[width=1\linewidth]{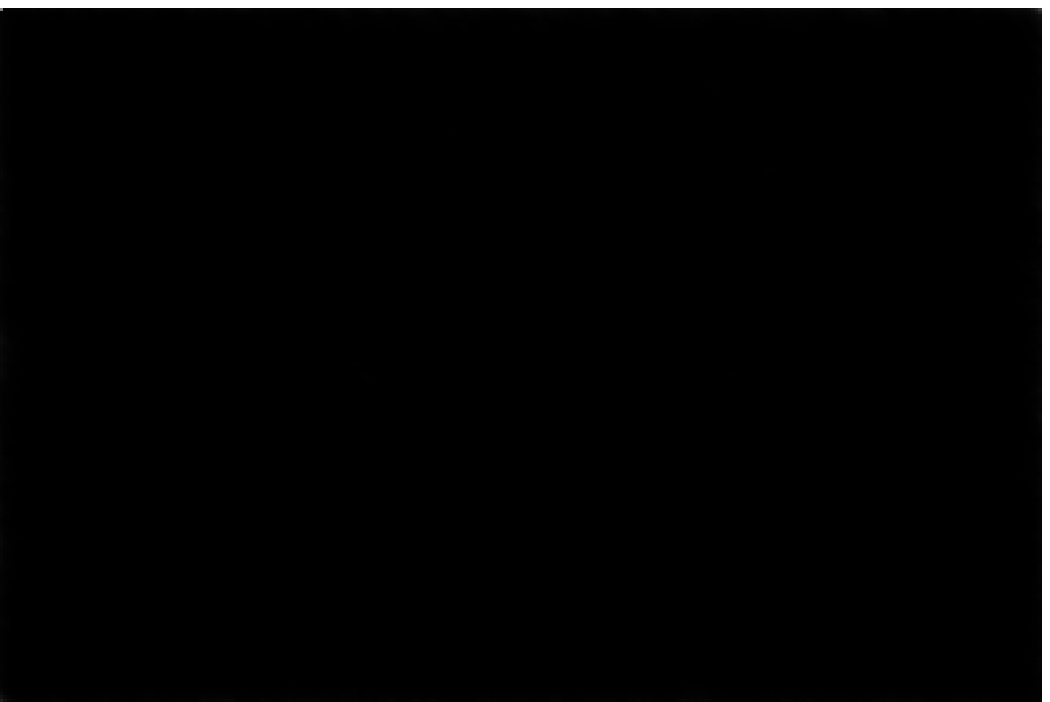} & \includegraphics[width=1\linewidth]{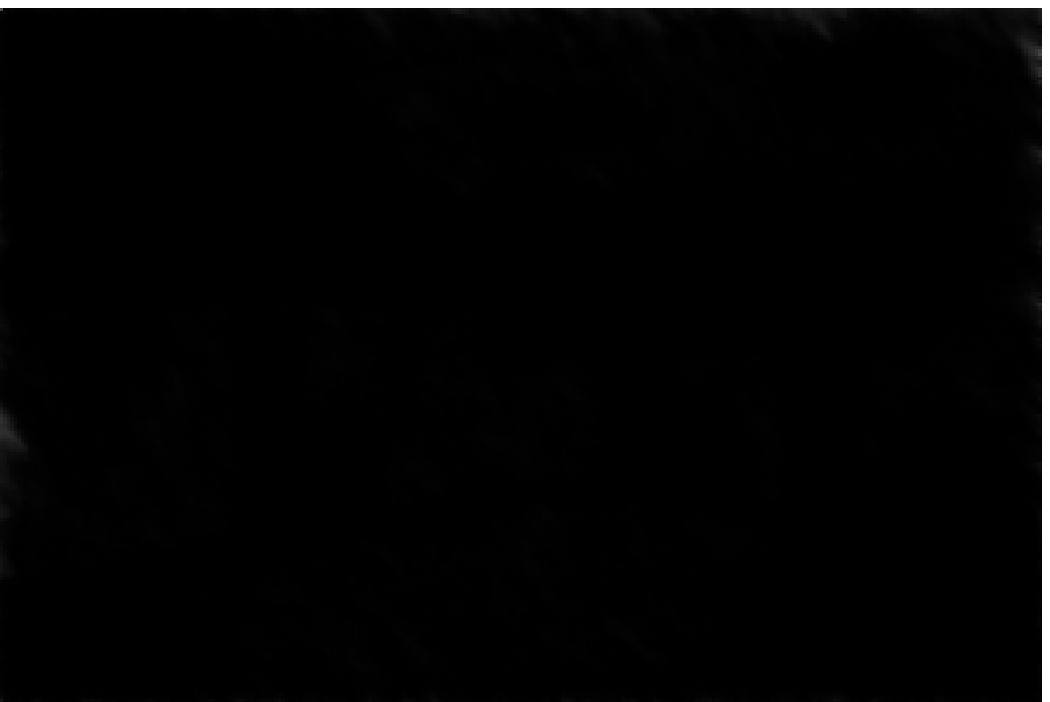} & \includegraphics[width=1\linewidth]{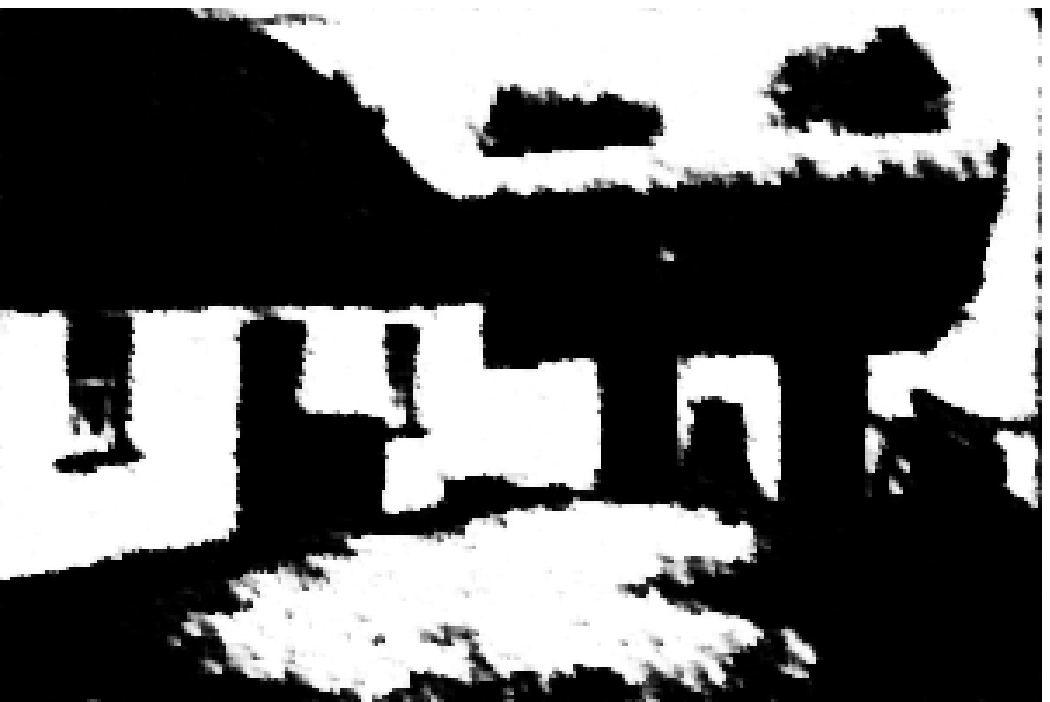}\tabularnewline
{\small inde-pendent} & \includegraphics[width=1\linewidth]{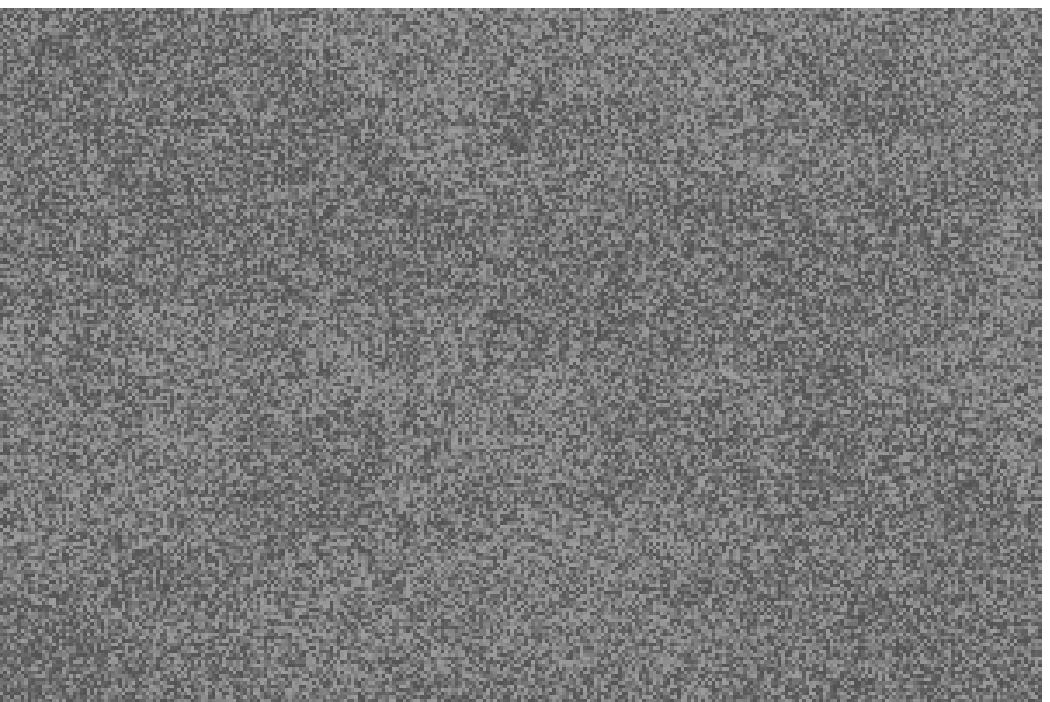} & \includegraphics[width=1\linewidth]{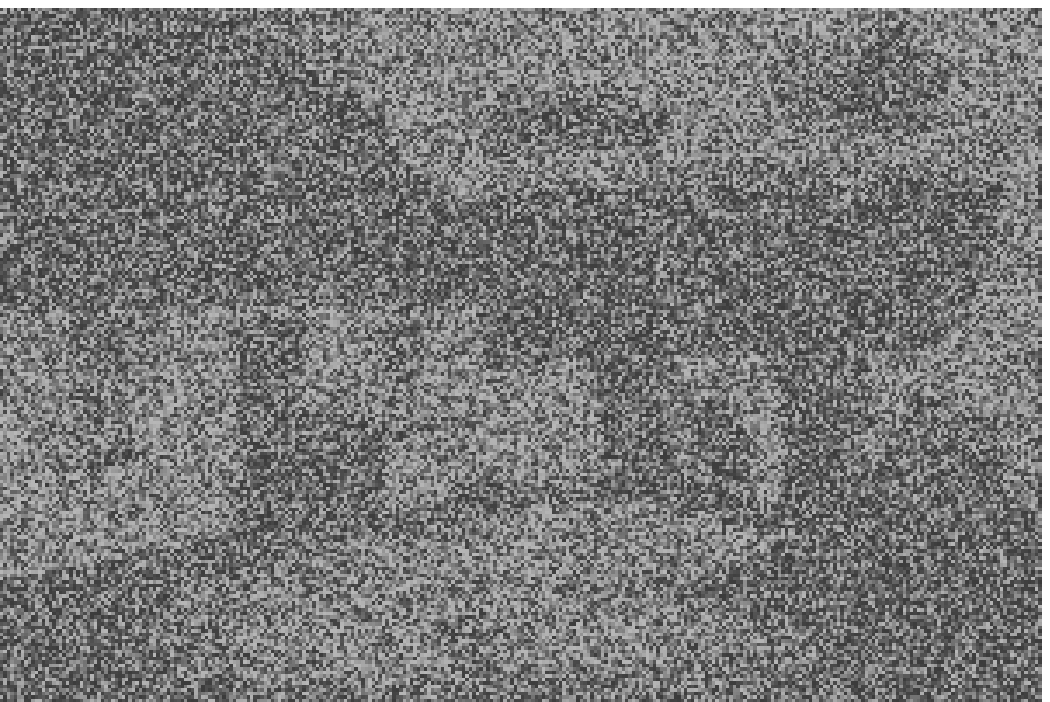} & \includegraphics[width=1\linewidth]{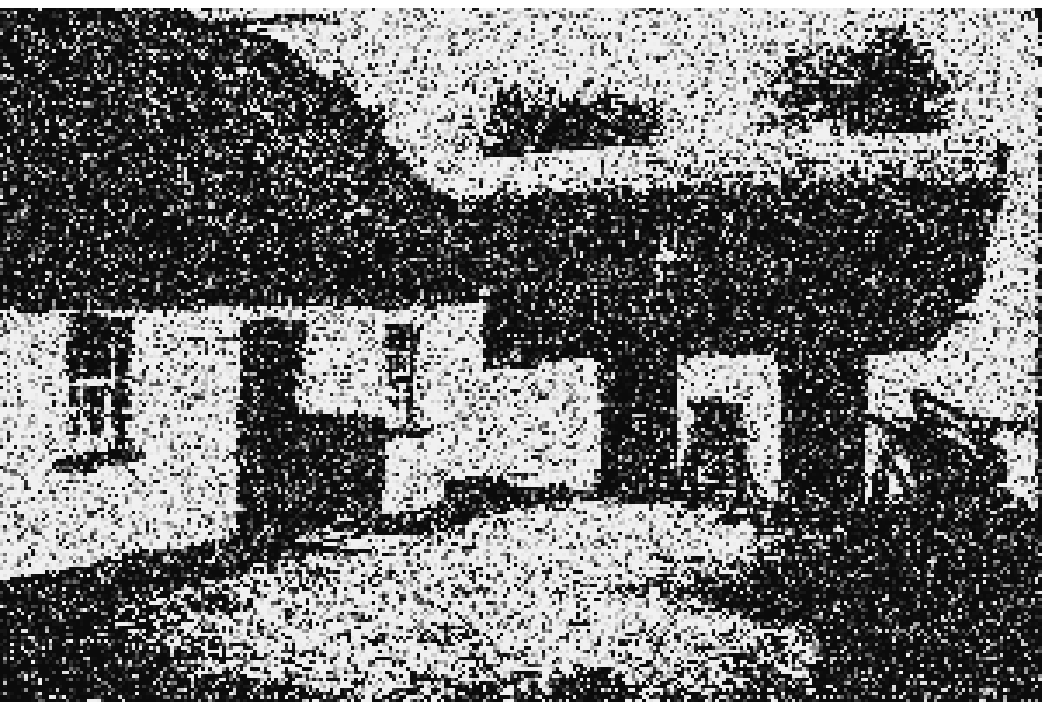}\tabularnewline
\end{tabular}
\par\end{centering}

\caption{Predicted marginals for an example binary denoising test image with
different noise levels $n$.}
\end{figure}
\begin{figure}[p]
\begin{centering}
\vspace{-10pt}
\renewcommand{\tabcolsep}{1pt}%
\begin{tabular}{>{\centering}m{0.57in}>{\centering}m{0.9in}>{\centering}m{0.9in}>{\centering}m{0.9in}}
 & $n=1.25$ & $n=1.5$ & $n=5$\tabularnewline
{\small input} & \includegraphics[width=1\linewidth]{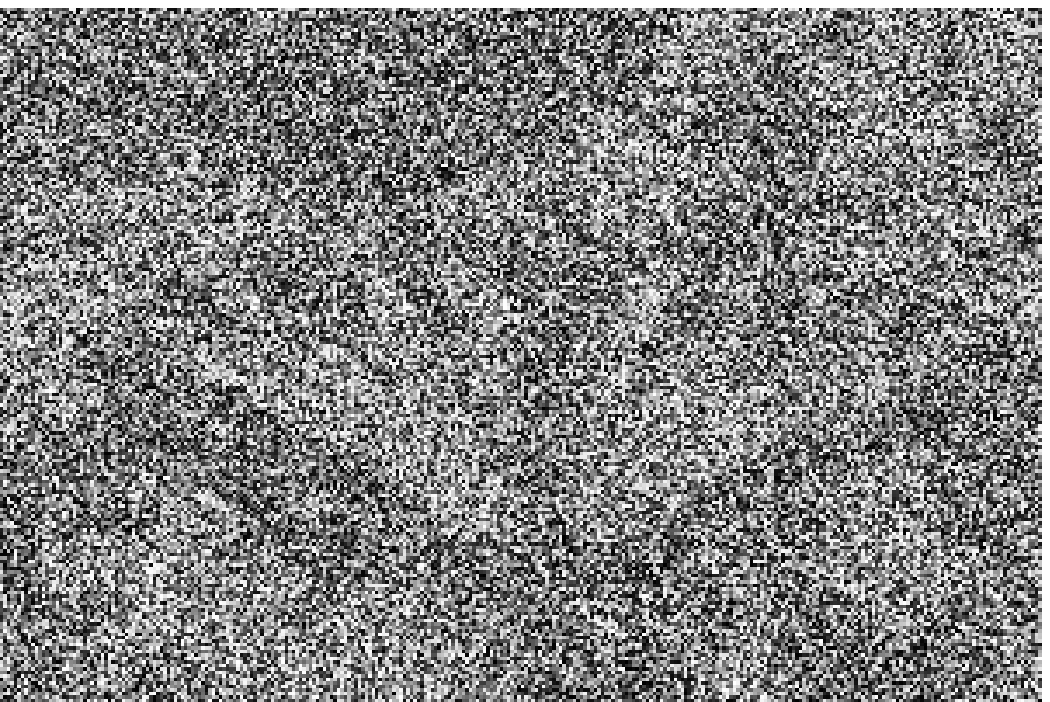} & \includegraphics[width=1\linewidth]{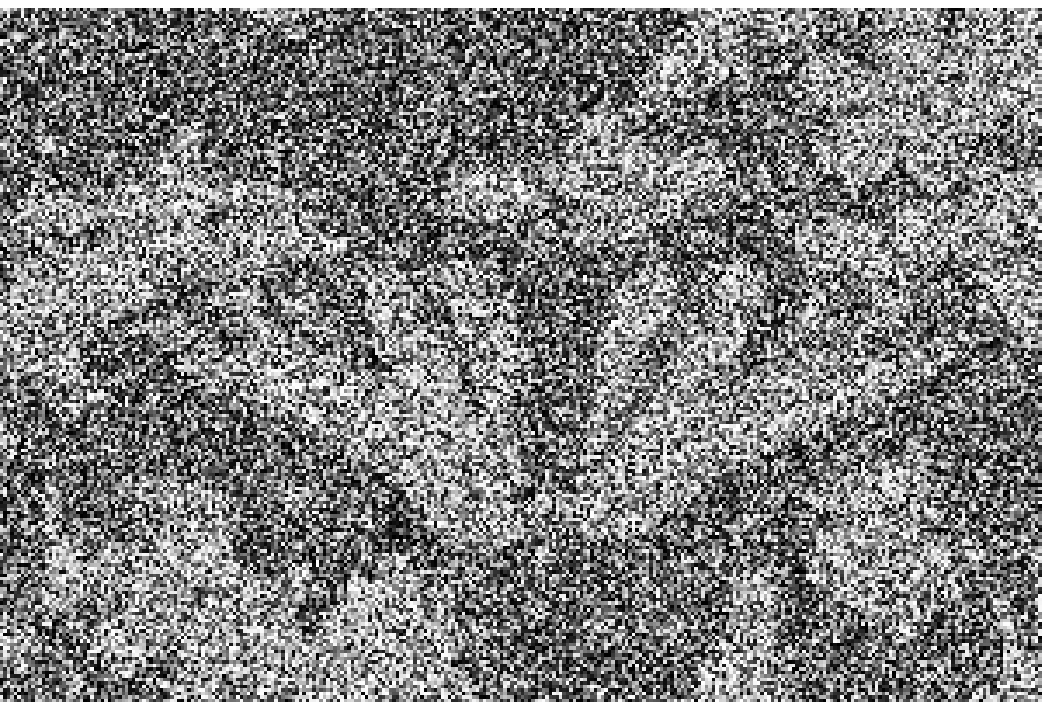} & \includegraphics[width=1\linewidth]{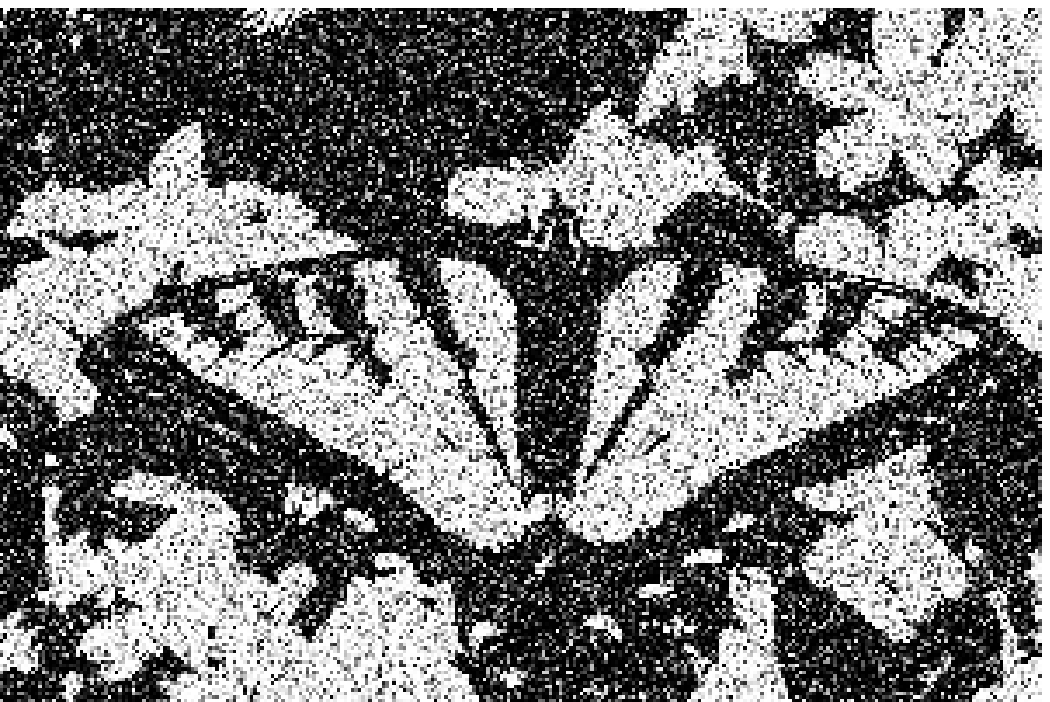}\tabularnewline
{\small surrogate likelihood} & \includegraphics[width=1\linewidth]{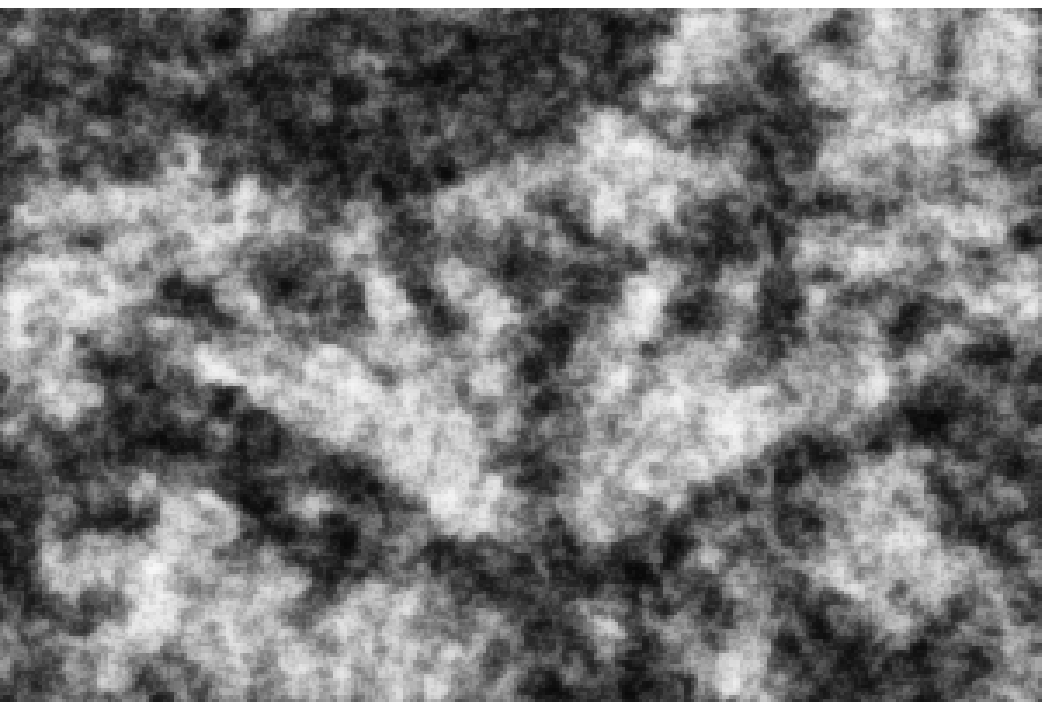} & \includegraphics[width=1\linewidth]{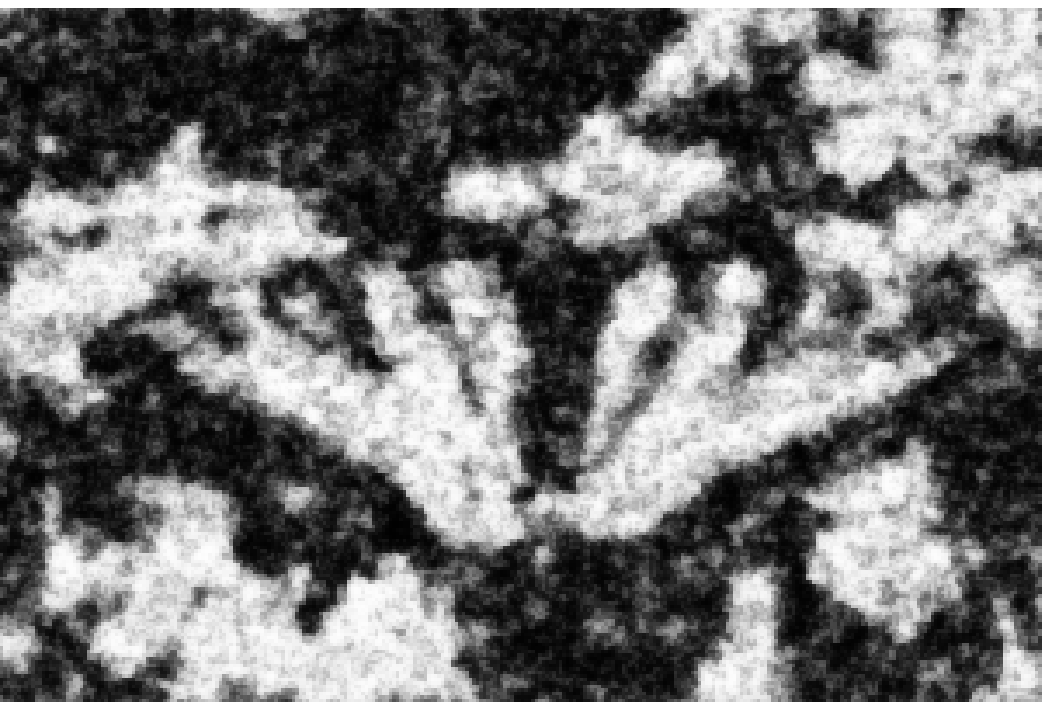} & \includegraphics[width=1\linewidth]{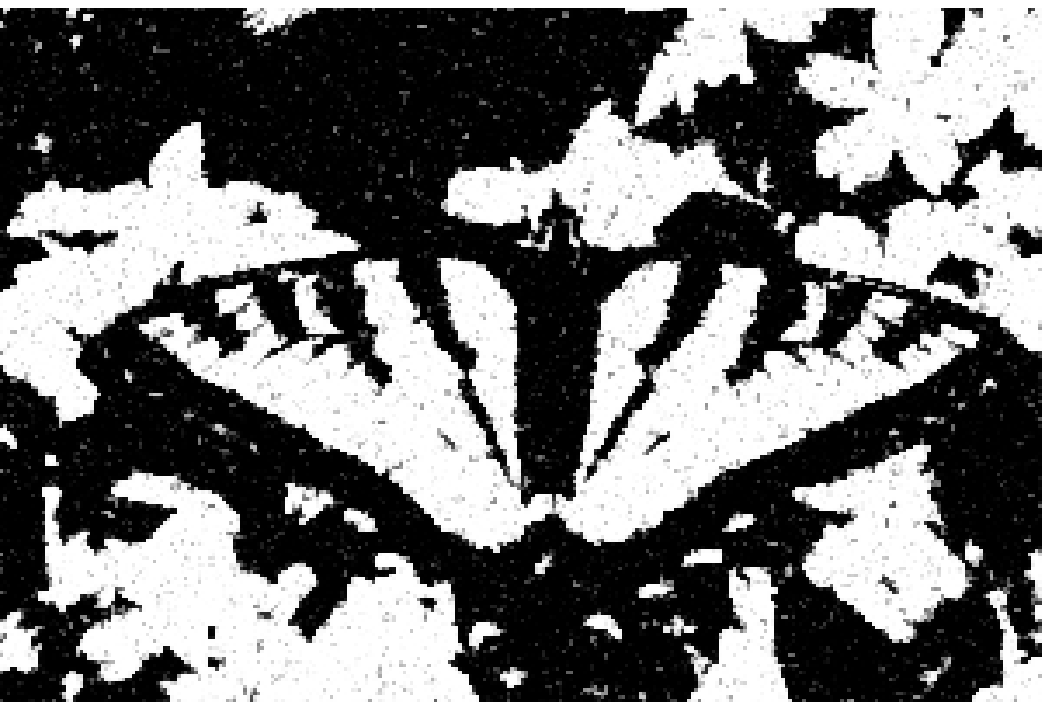}\tabularnewline
{\small univariate logistic} & \includegraphics[width=1\linewidth]{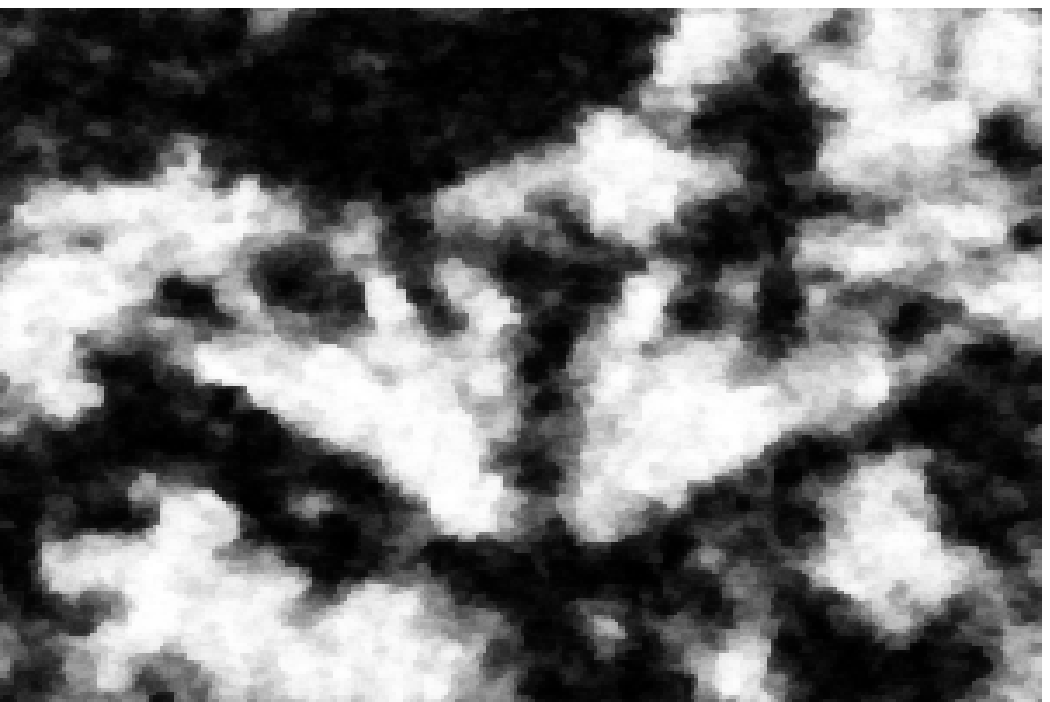} & \includegraphics[width=1\linewidth]{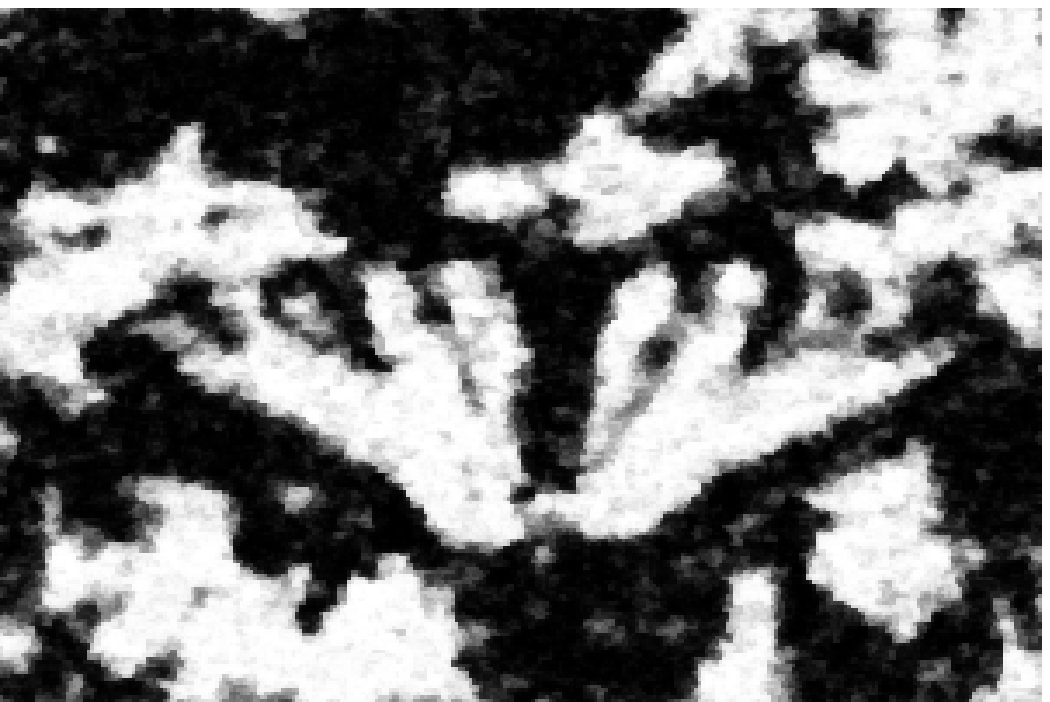} & \includegraphics[width=1\linewidth]{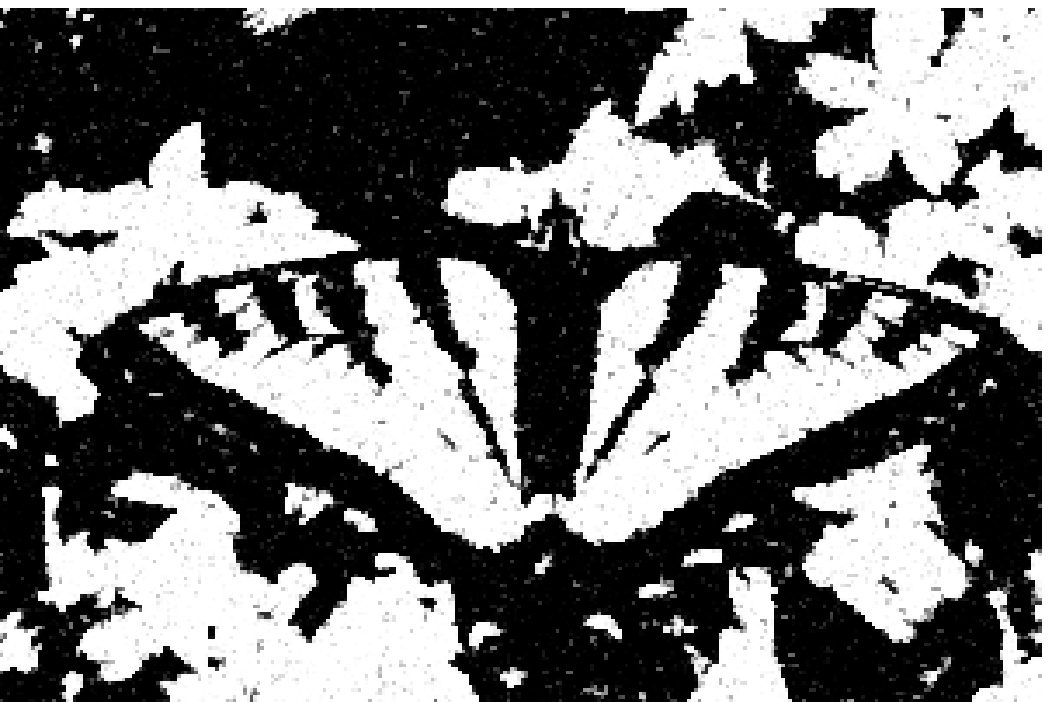}\tabularnewline
{\small clique logistic} & \includegraphics[width=1\linewidth]{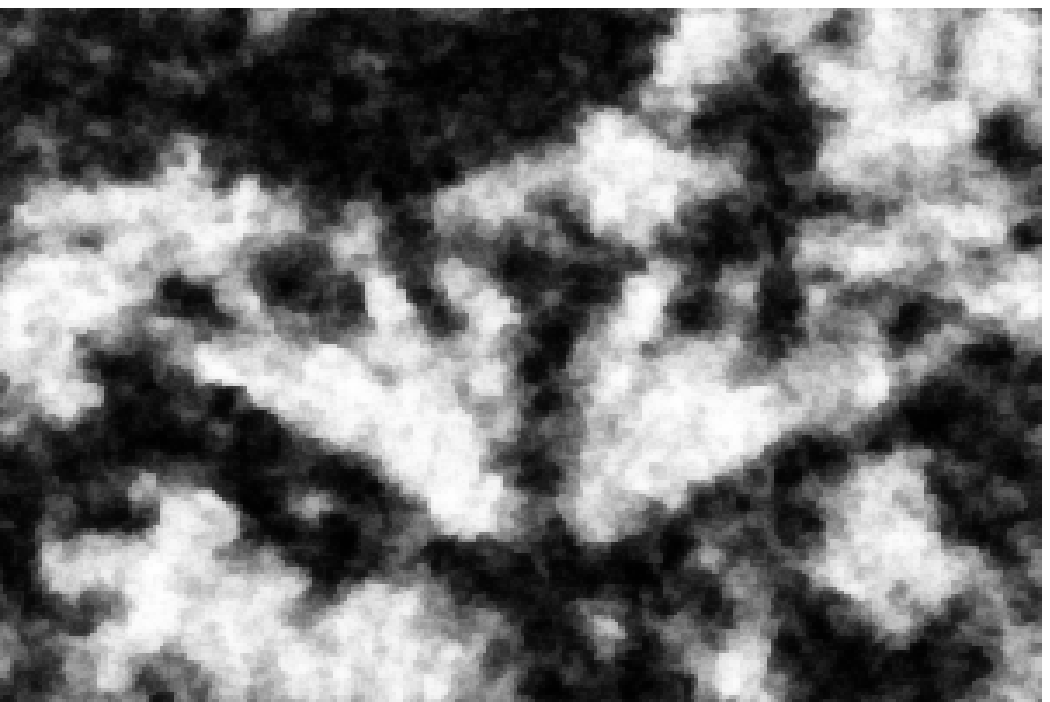} & \includegraphics[width=1\linewidth]{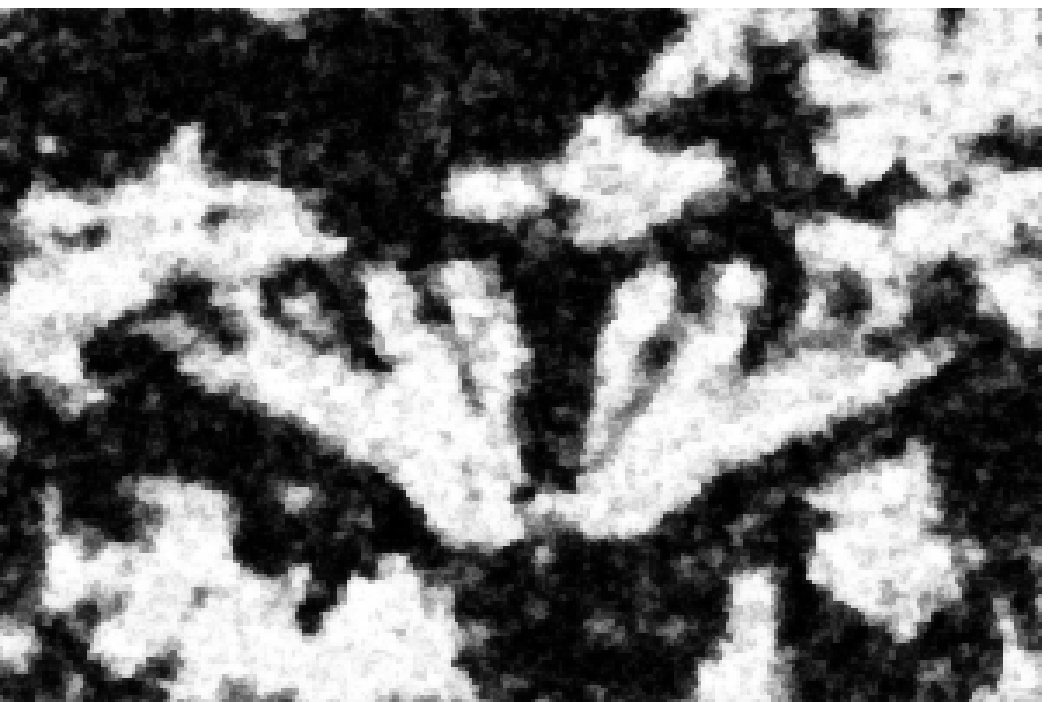} & \includegraphics[width=1\linewidth]{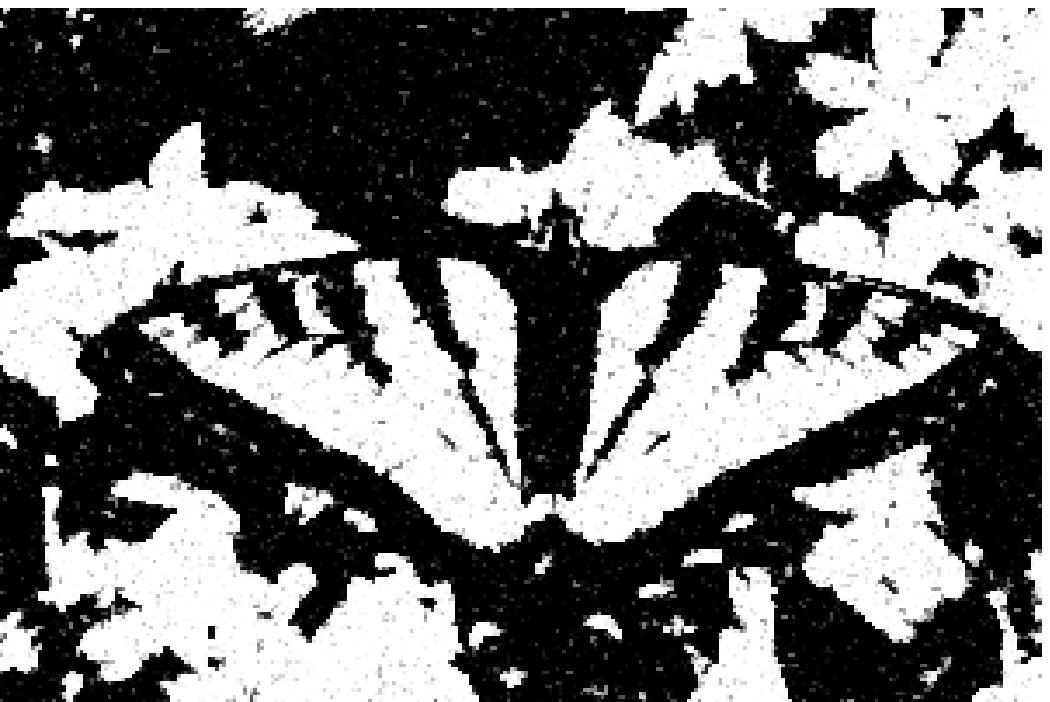}\tabularnewline
sm. class $\lambda=5$ & \includegraphics[width=1\linewidth]{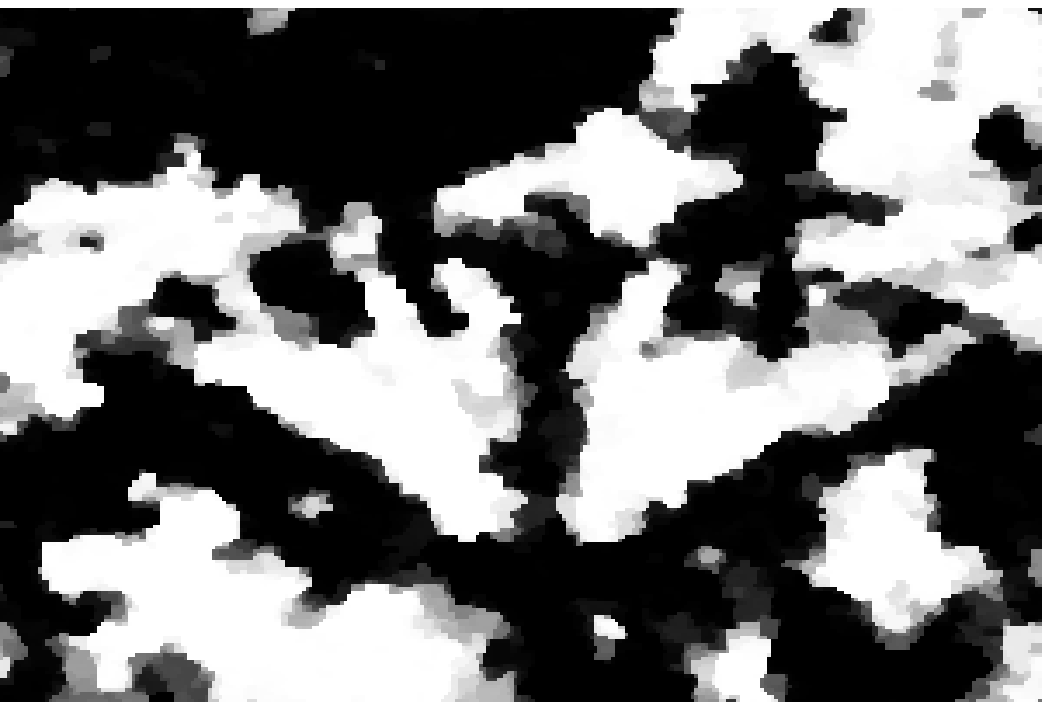} & \includegraphics[width=1\linewidth]{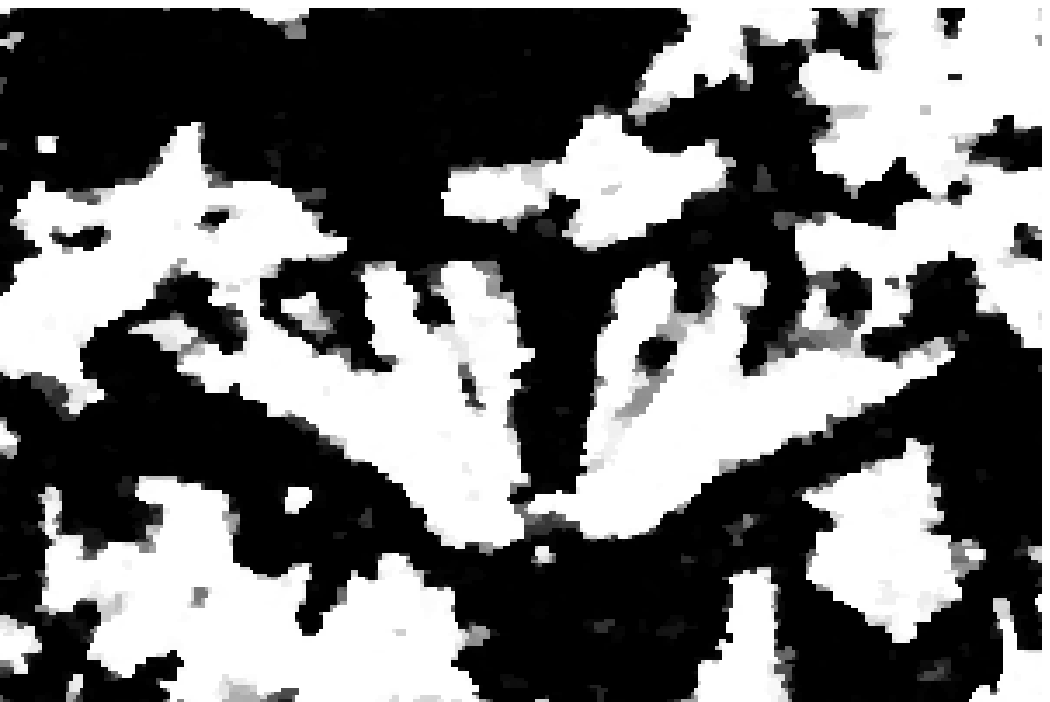} & \includegraphics[width=1\linewidth]{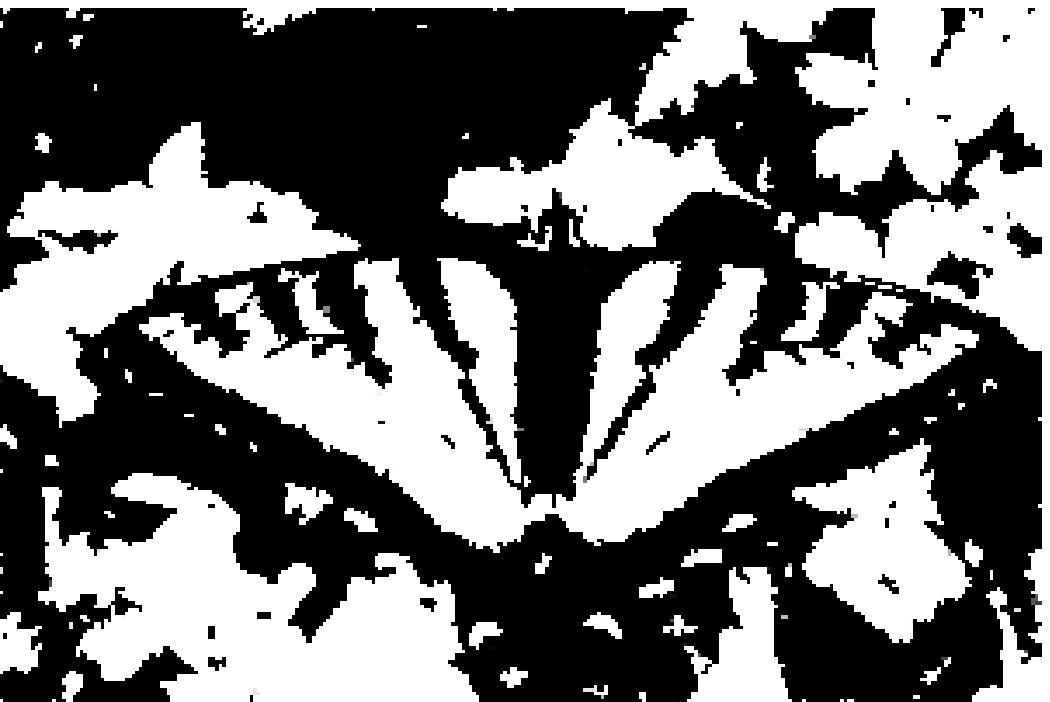}\tabularnewline
sm. class $\lambda=15$ & \includegraphics[width=1\linewidth]{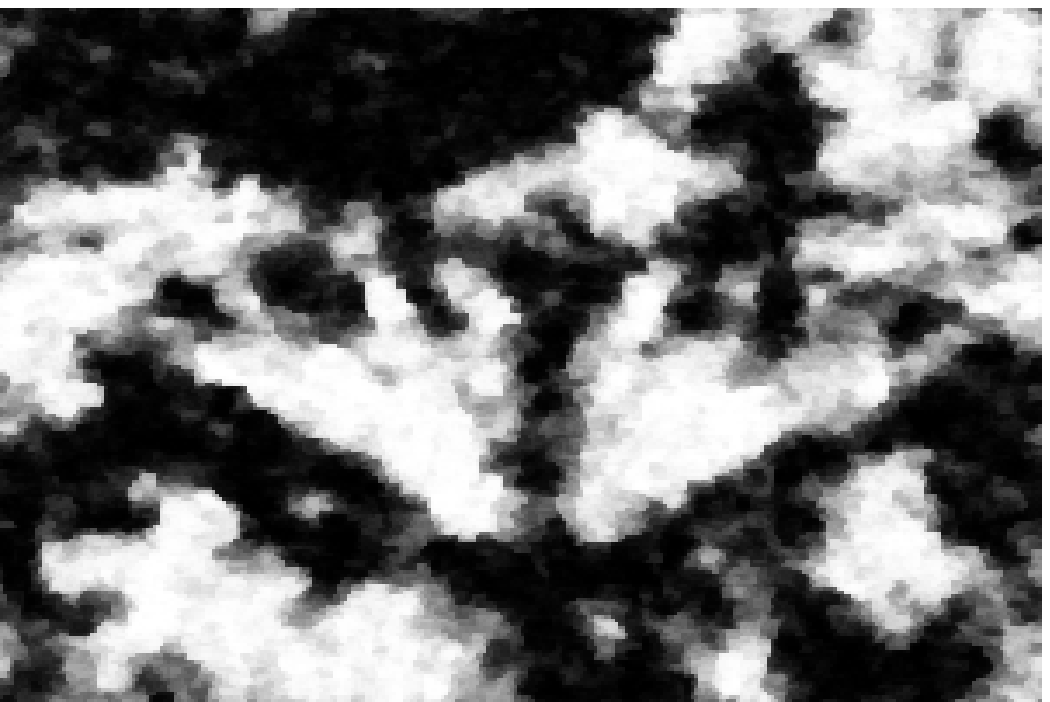} & \includegraphics[width=1\linewidth]{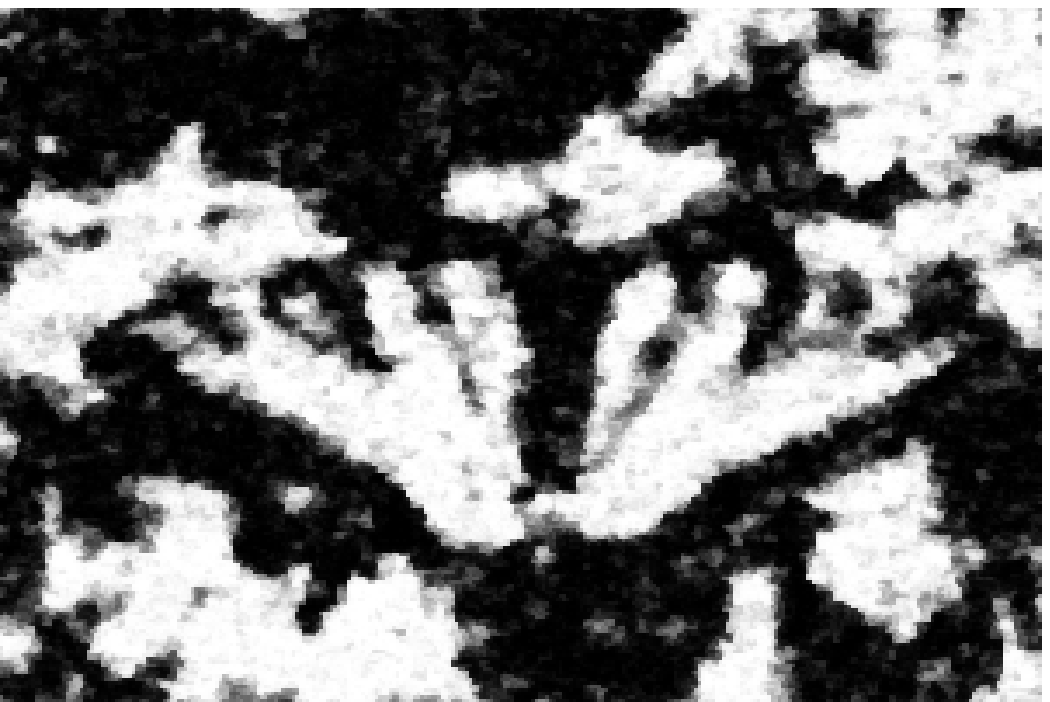} & \includegraphics[width=1\linewidth]{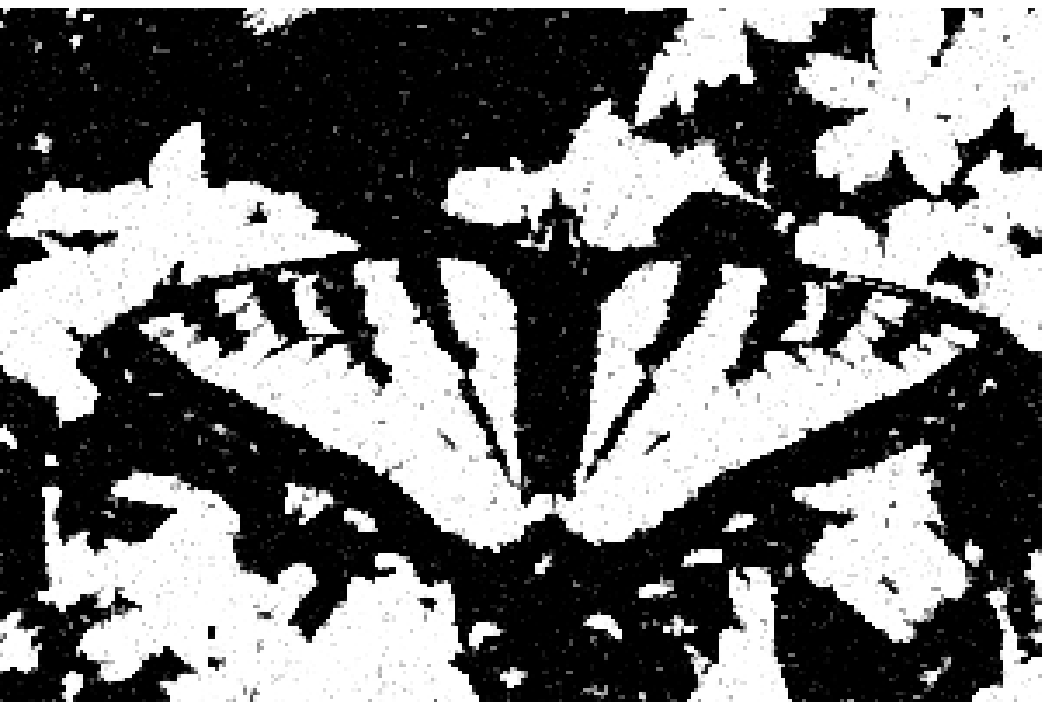}\tabularnewline
sm. class $\lambda=50$ & \includegraphics[width=1\linewidth]{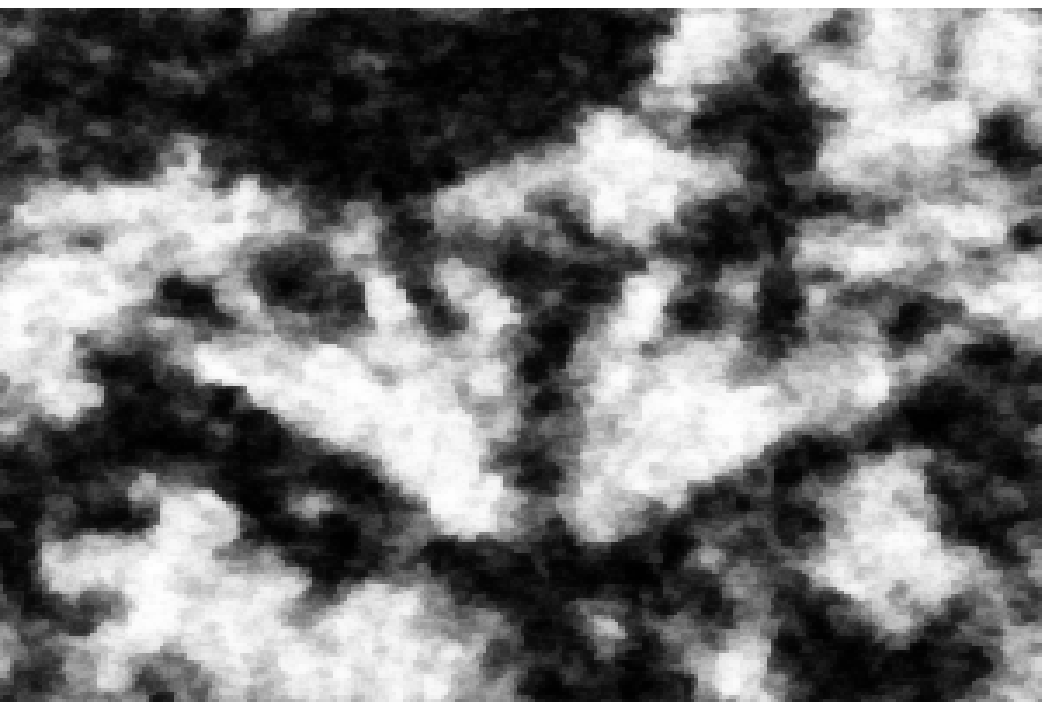} & \includegraphics[width=1\linewidth]{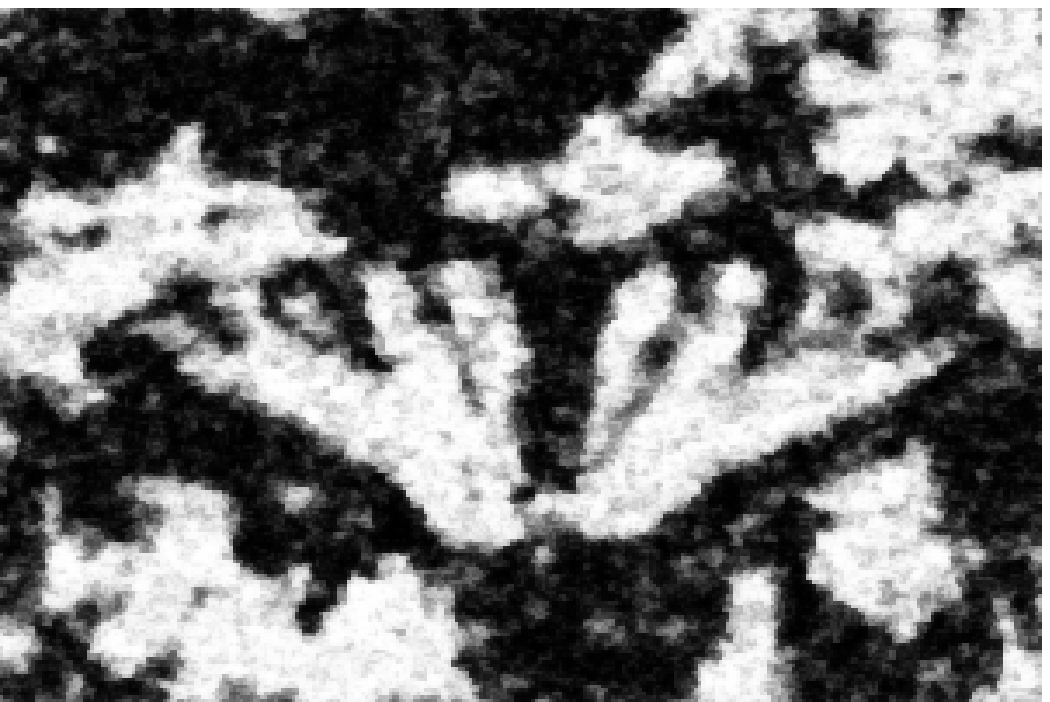} & \includegraphics[width=1\linewidth]{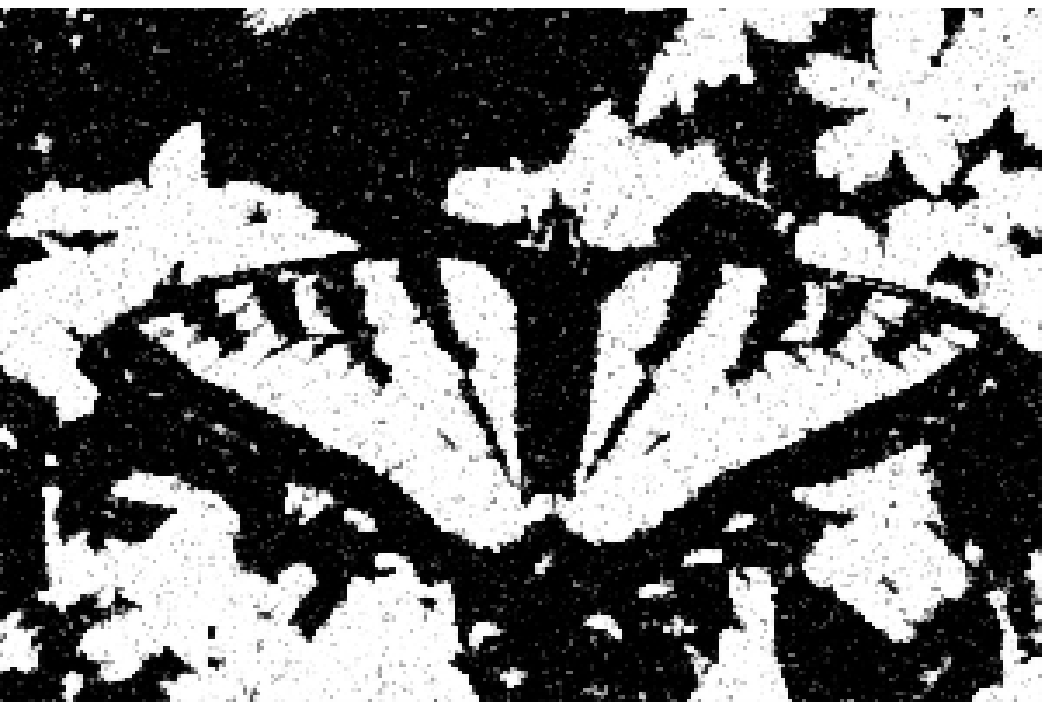}\tabularnewline
{\small pseudo-likelihood} & \includegraphics[width=1\linewidth]{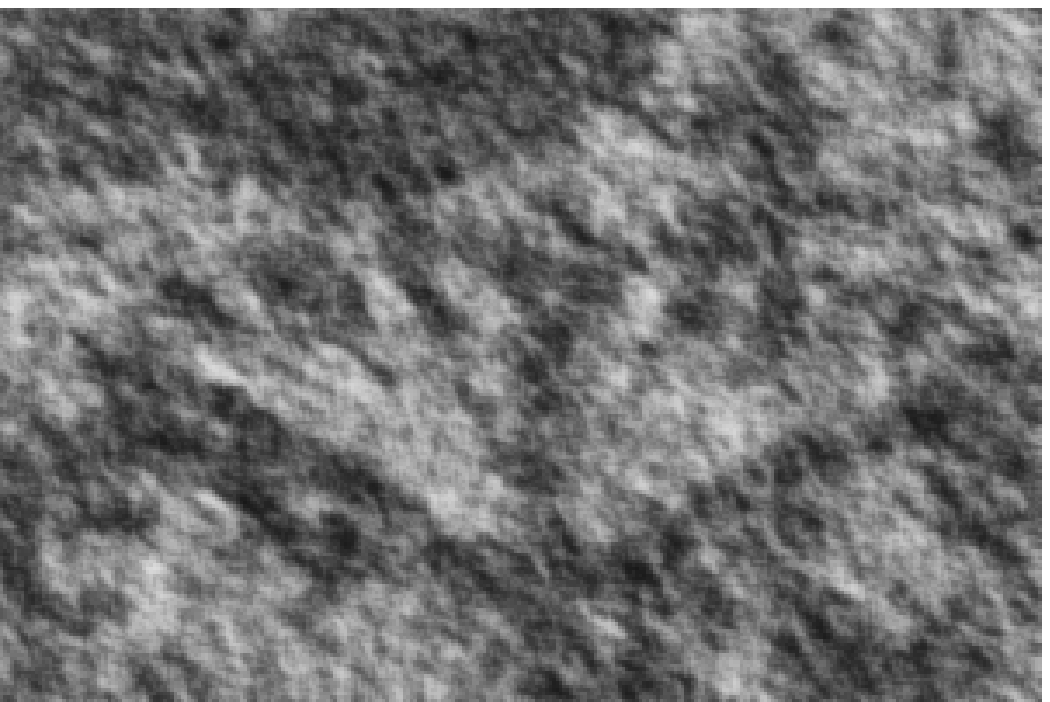} & \includegraphics[width=1\linewidth]{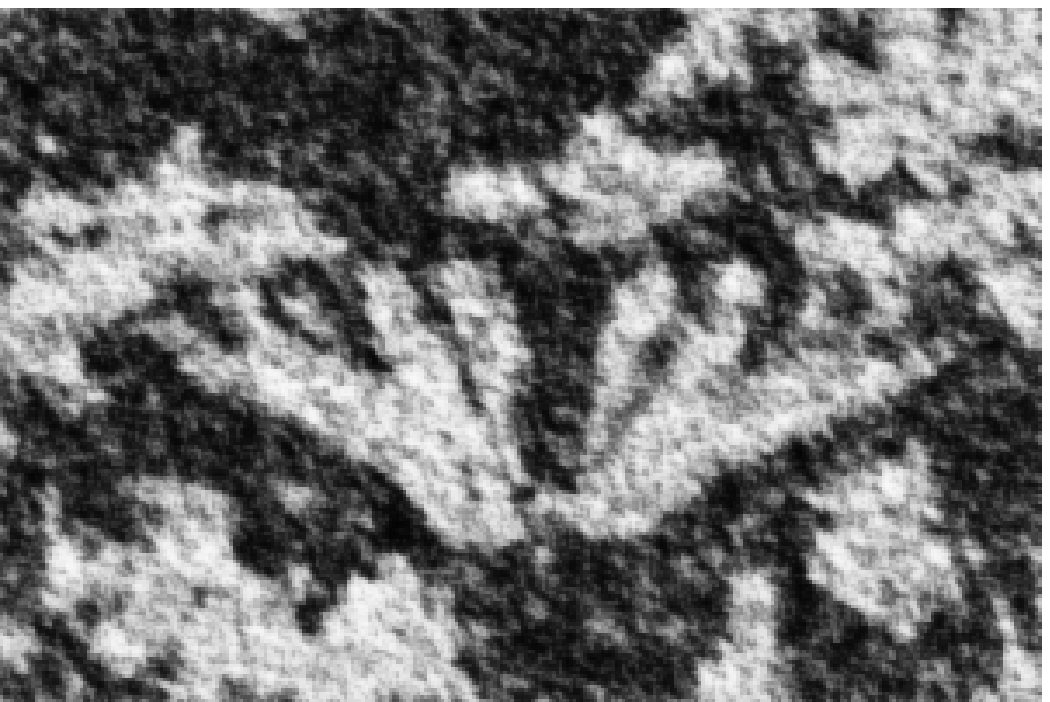} & \includegraphics[width=1\linewidth]{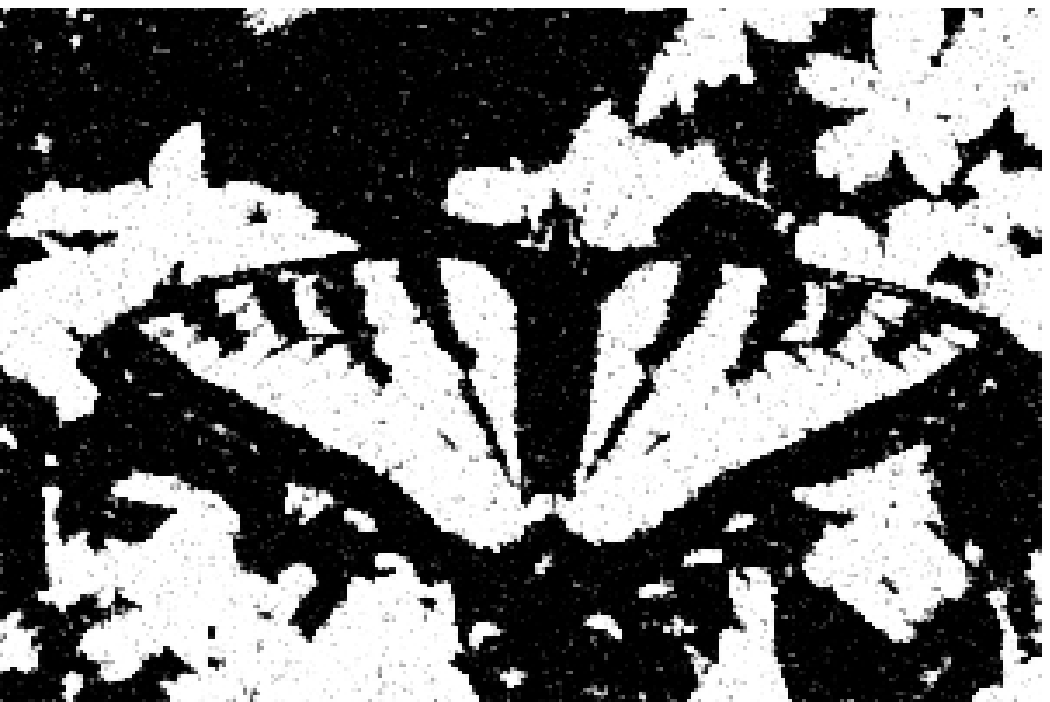}\tabularnewline
{\small piecewise} & \includegraphics[width=1\linewidth]{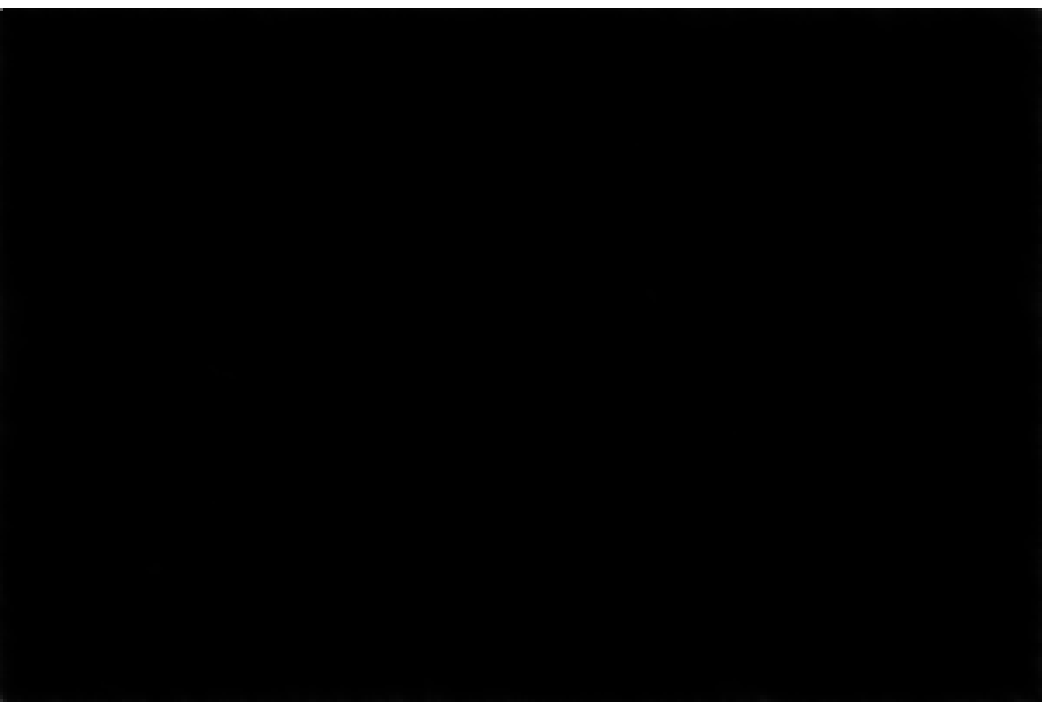} & \includegraphics[width=1\linewidth]{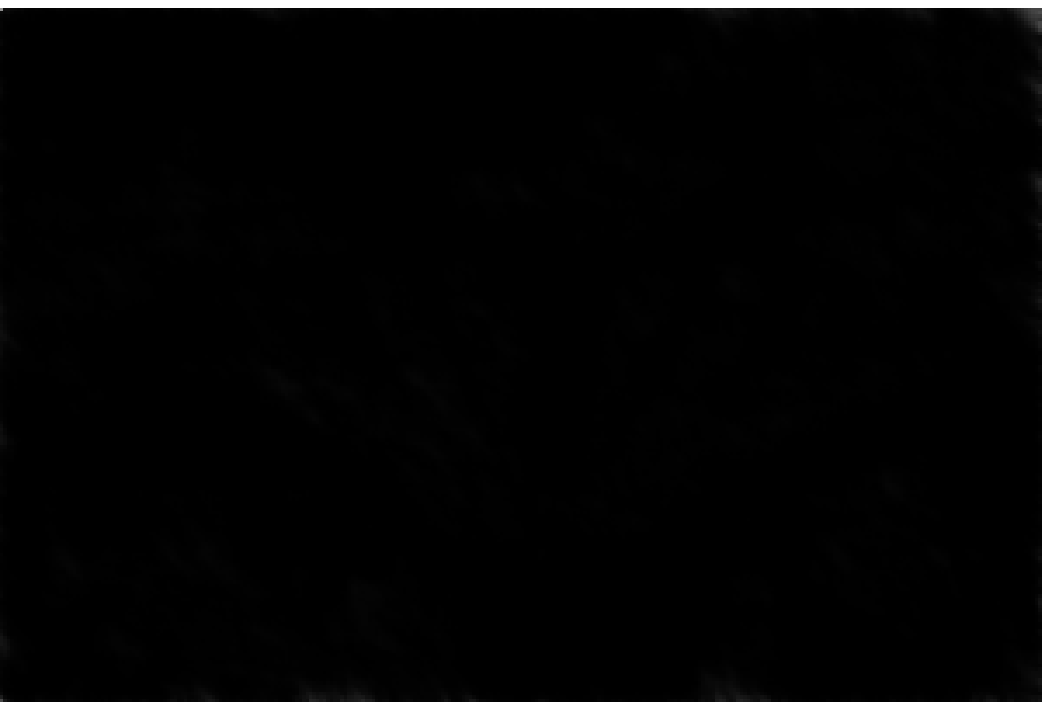} & \includegraphics[width=1\linewidth]{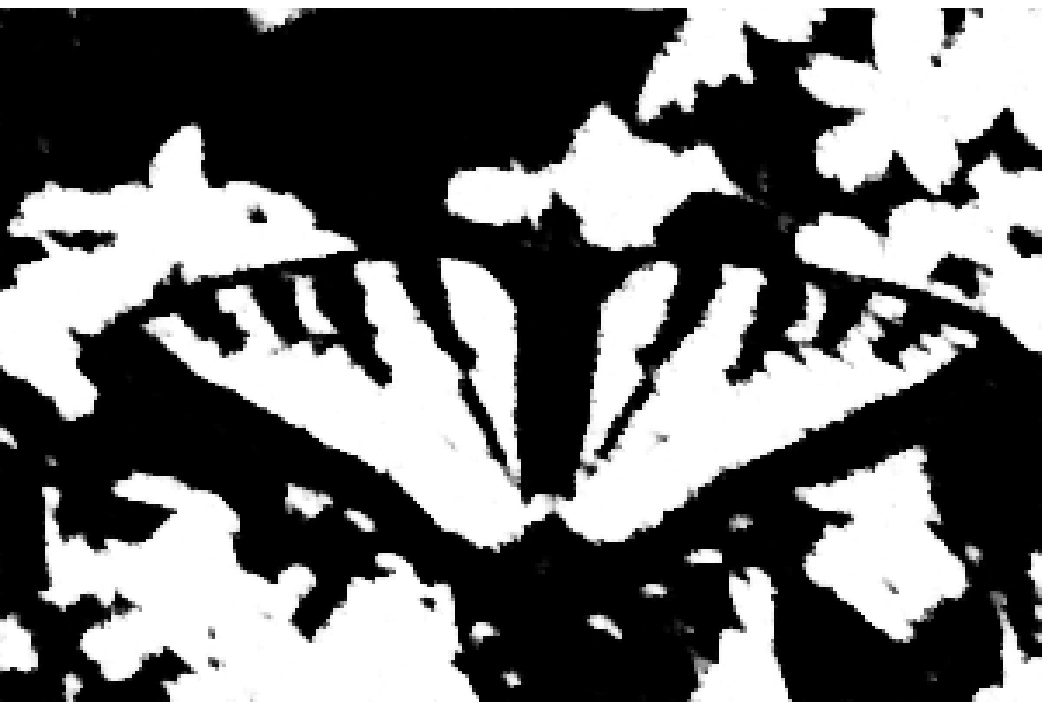}\tabularnewline
{\small inde-pendent} & \includegraphics[width=1\linewidth]{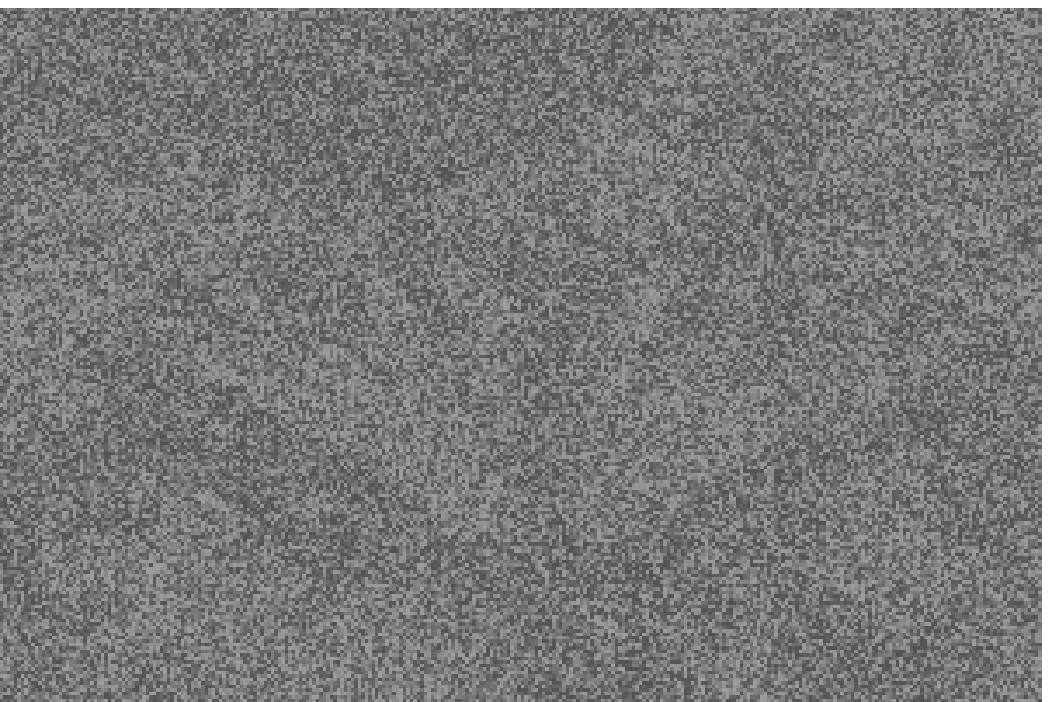} & \includegraphics[width=1\linewidth]{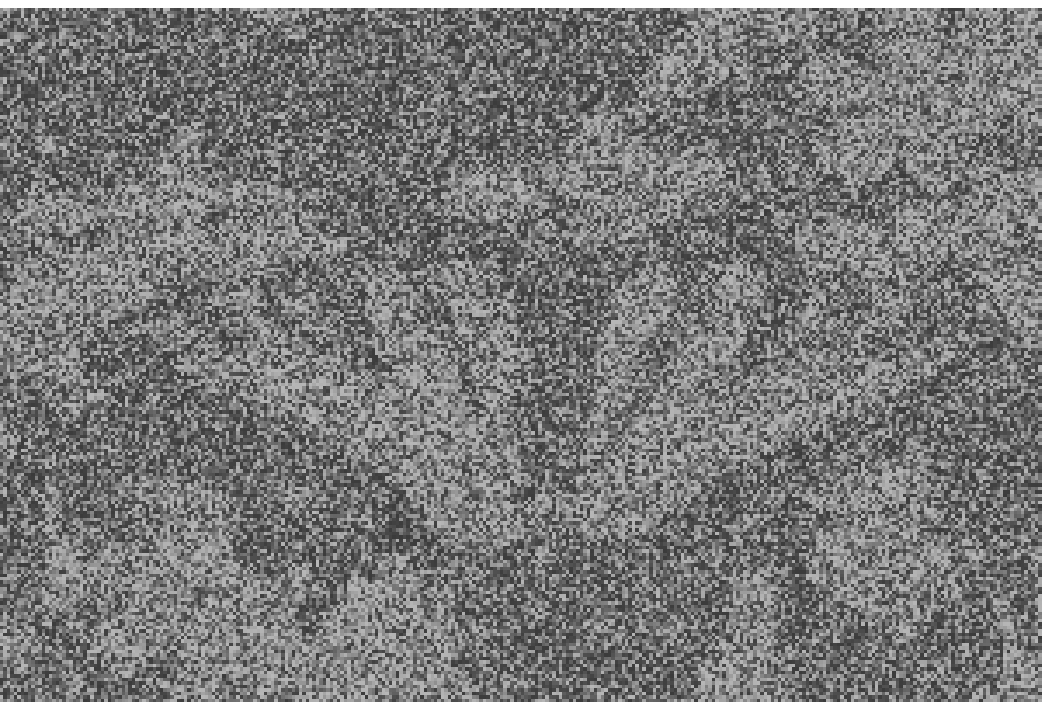} & \includegraphics[width=1\linewidth]{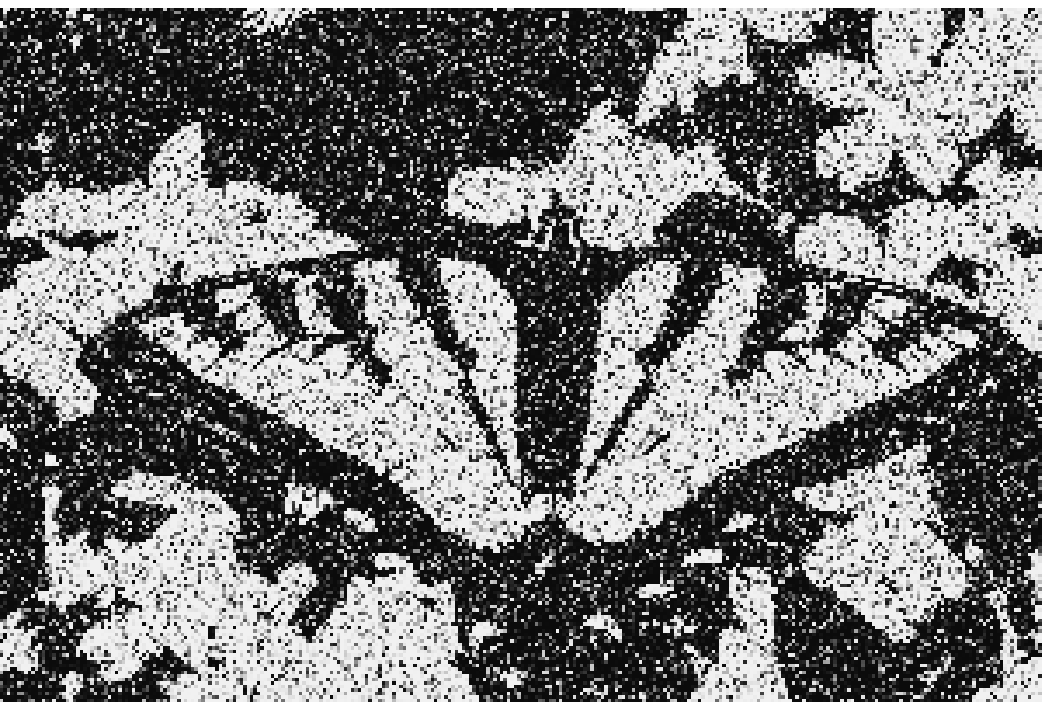}\tabularnewline
\end{tabular}
\par\end{centering}

\caption{Predicted marginals for an example binary denoising test image with
different noise levels $n$.}
\end{figure}
\begin{figure}[p]
\begin{raggedright}
\vspace{-15pt}

\par\end{raggedright}

\begin{raggedright}
\subfloat[Input]{\includegraphics[width=0.32\columnwidth]{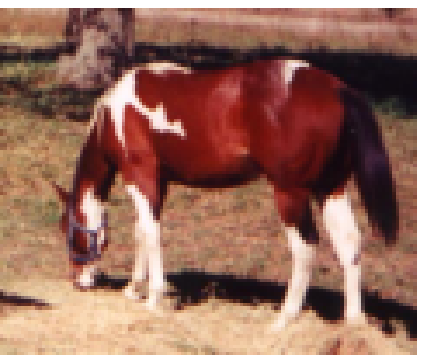}

}\subfloat[True Labels]{\includegraphics[width=0.32\columnwidth]{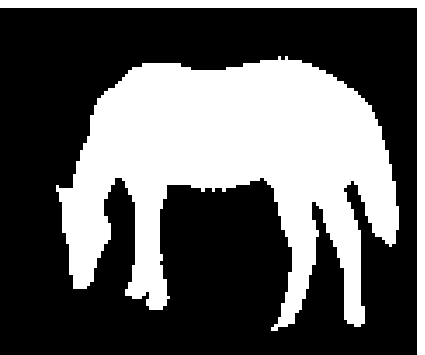}

}
\par\end{raggedright}

\begin{raggedright}
\subfloat[Surr. Like. {\scriptsize TRW}]{\includegraphics[width=0.32\columnwidth]{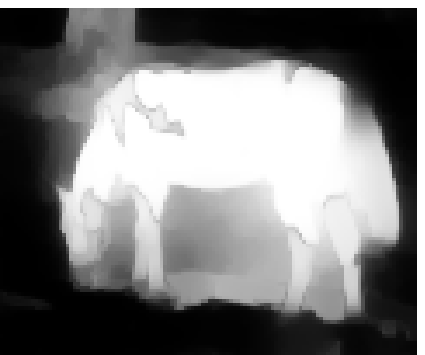}

}\subfloat[U. Logistic {\scriptsize TRW}]{\includegraphics[width=0.32\columnwidth]{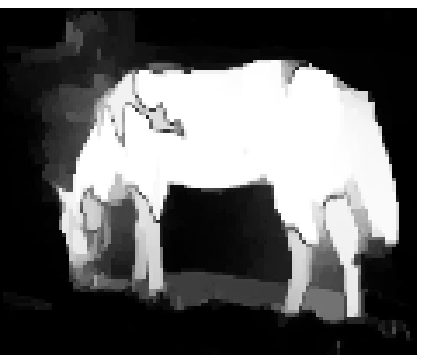}

}\subfloat[Sm. Class {\scriptsize $\lambda$$=$$50$} {\scriptsize TRW}]{\includegraphics[width=0.32\columnwidth]{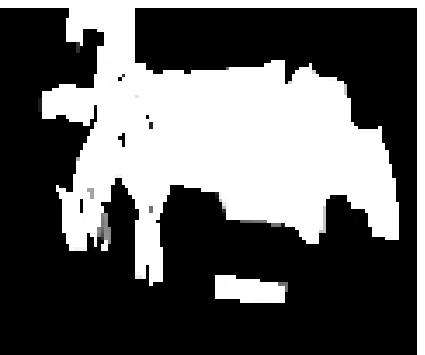}

}
\par\end{raggedright}

\begin{raggedright}
\subfloat[Surr. Like. {\scriptsize MNF}]{\includegraphics[width=0.32\columnwidth]{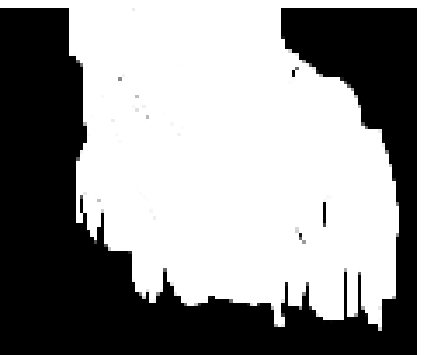}

}\subfloat[U. Logistic {\scriptsize MNF}]{\includegraphics[width=0.32\columnwidth]{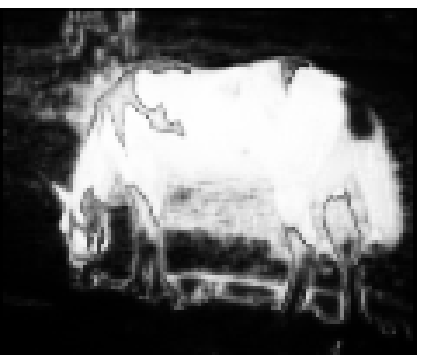}

}\subfloat[Independent]{\includegraphics[width=0.32\columnwidth]{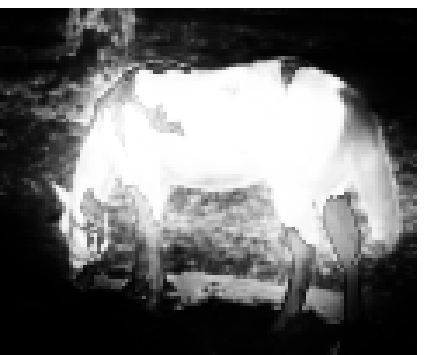}

}
\par\end{raggedright}

\caption{Predicted marginals for a test image from the horses dataset. Truncated
learning uses 40 iterations.}
\end{figure}
\begin{figure}[p]
\begin{raggedright}
\vspace{-15pt}

\par\end{raggedright}

\begin{raggedright}
\subfloat[Input]{\includegraphics[width=0.32\columnwidth]{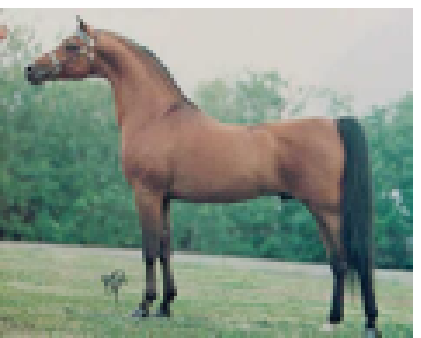}

}\subfloat[True Labels]{\includegraphics[width=0.32\columnwidth]{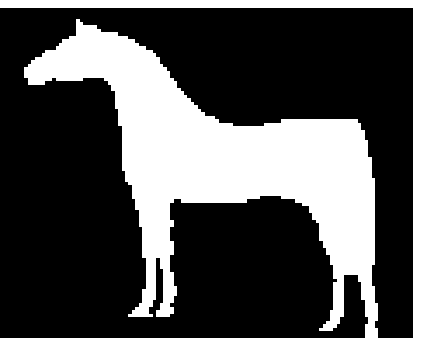}

}
\par\end{raggedright}

\begin{raggedright}
\subfloat[Surr. Like. {\scriptsize TRW}]{\includegraphics[width=0.32\columnwidth]{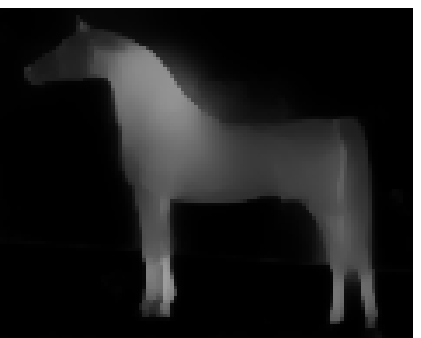}

}\subfloat[U. Logistic {\scriptsize TRW}]{\includegraphics[width=0.32\columnwidth]{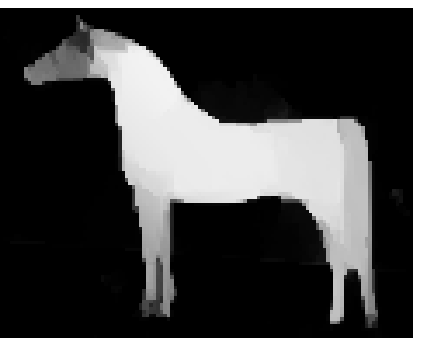}

}\subfloat[Sm. Class {\scriptsize $\lambda$$=$$50$} {\scriptsize TRW}]{\includegraphics[width=0.32\columnwidth]{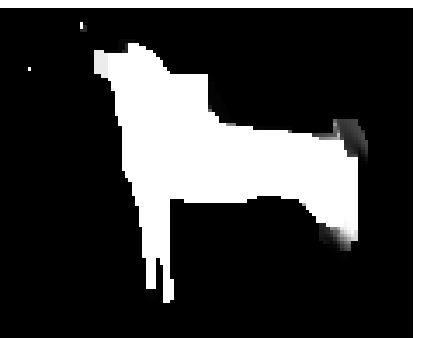}

}
\par\end{raggedright}

\begin{raggedright}
\subfloat[Surr. Like. {\scriptsize MNF}]{\includegraphics[width=0.32\columnwidth]{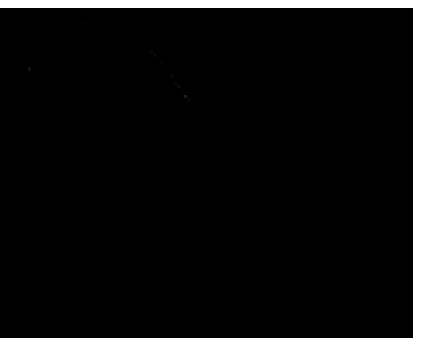}

}\subfloat[U. Logistic {\scriptsize MNF}]{\includegraphics[width=0.32\columnwidth]{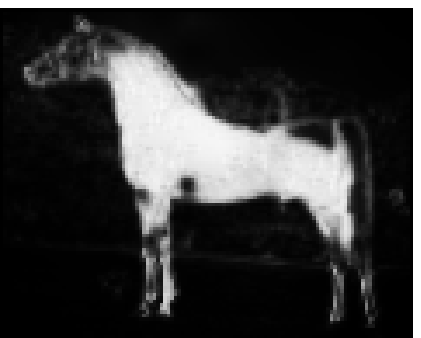}

}\subfloat[Independent]{\includegraphics[width=0.32\columnwidth]{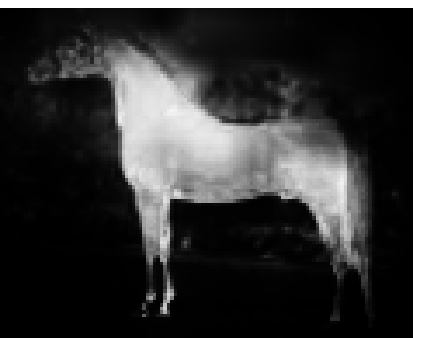}

}
\par\end{raggedright}

\caption{Predicted marginals for a test image from the horses dataset. Truncated
learning uses 40 iterations.}
\end{figure}
\begin{figure}[p]
\vspace{-10pt}
\subfloat[Input Image]{\includegraphics[width=0.32\columnwidth]{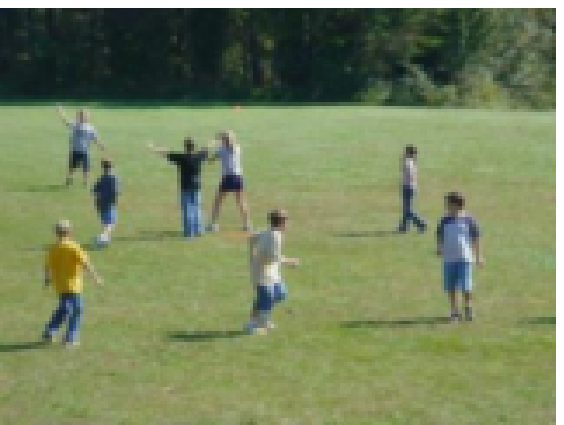}

}\subfloat[True Labels]{\includegraphics[width=0.32\columnwidth]{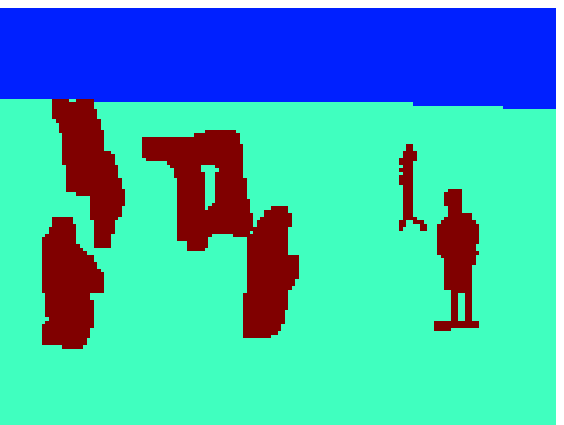}

}

\subfloat[Surrogate EM]{\includegraphics[width=0.32\columnwidth]{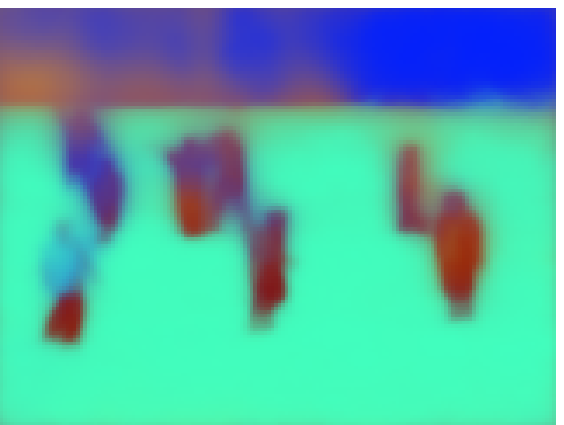}

}\subfloat[Univ. Logistic]{\includegraphics[width=0.32\columnwidth]{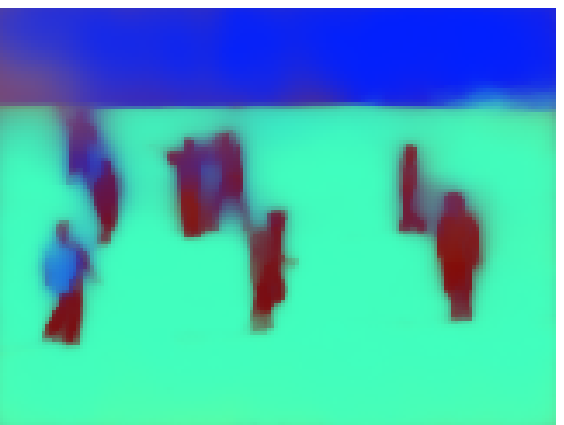}

}\subfloat[Clique Logistic]{\includegraphics[width=0.32\columnwidth]{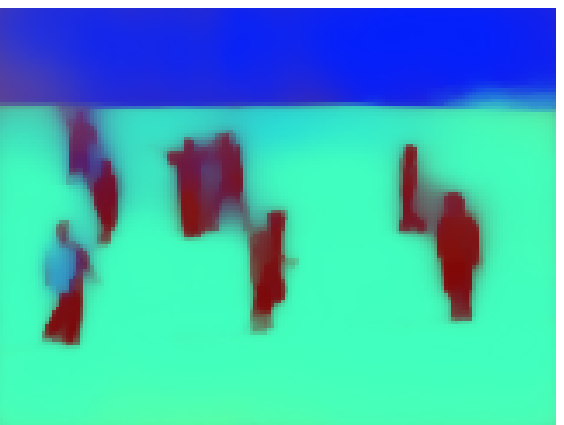}

}

\subfloat[Pseudolikelihood]{\includegraphics[width=0.32\columnwidth]{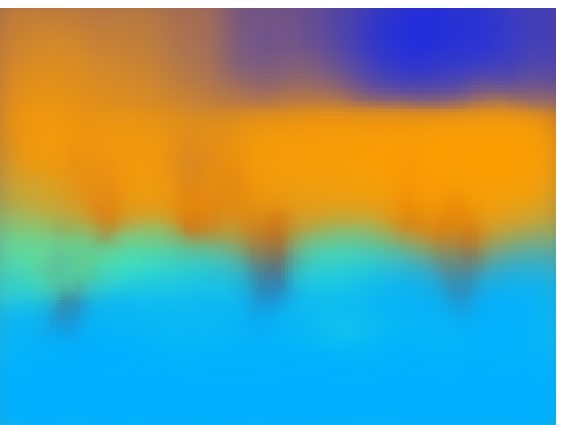}

}\subfloat[Piecewise]{\includegraphics[width=0.32\columnwidth]{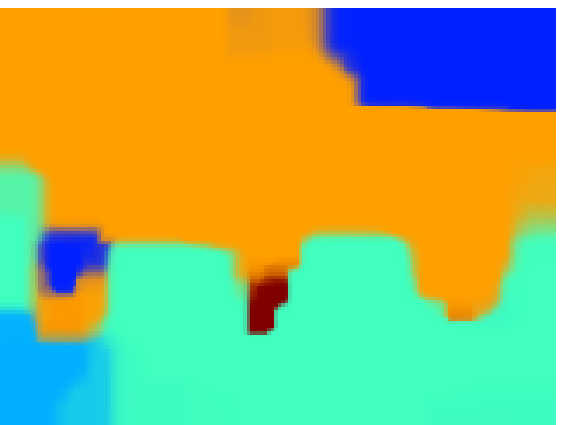}

}\subfloat[Independent]{\includegraphics[width=0.32\columnwidth]{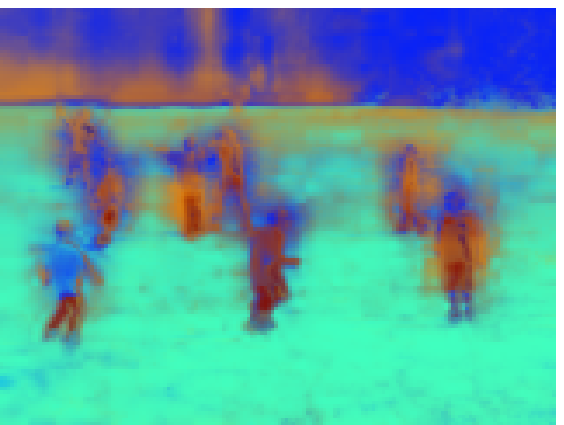}

}

\caption{Example marginals from the backgrounds dataset using $20$ iterations
for truncated fitting.}
\end{figure}
\begin{figure}[p]
\vspace{-10pt}
\subfloat[Input Image]{\includegraphics[width=0.32\columnwidth]{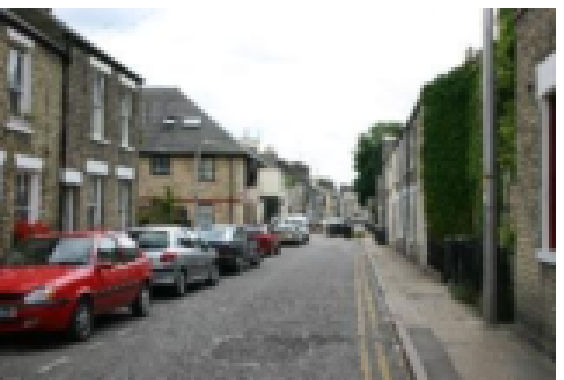}

}\subfloat[True Labels]{\includegraphics[width=0.32\columnwidth]{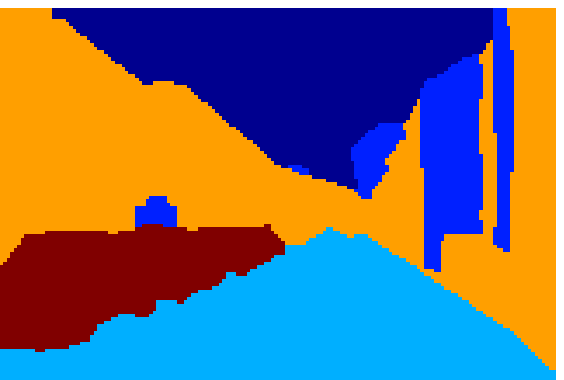}

}

\subfloat[Surrogate EM]{\includegraphics[width=0.32\columnwidth]{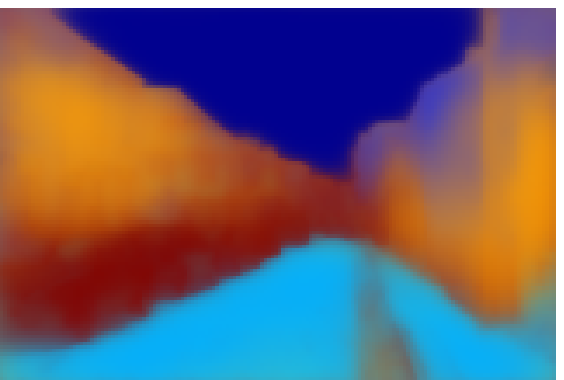}

}\subfloat[Univ. Logistic]{\includegraphics[width=0.32\columnwidth]{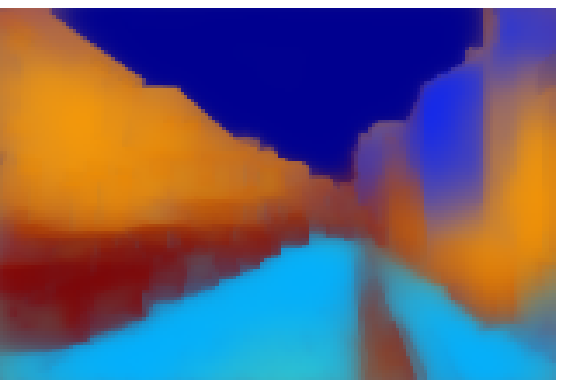}

}\subfloat[Clique Logistic]{\includegraphics[width=0.32\columnwidth]{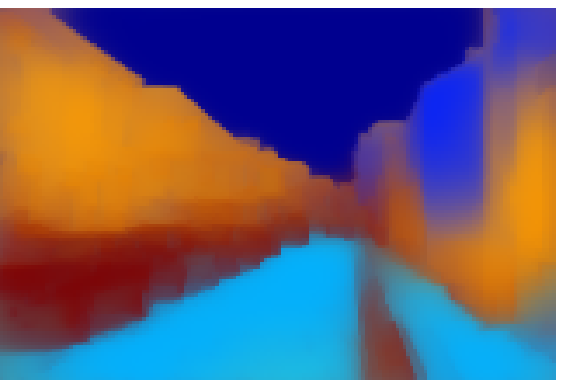}

}

\subfloat[Pseudolikelihood]{\includegraphics[width=0.32\columnwidth]{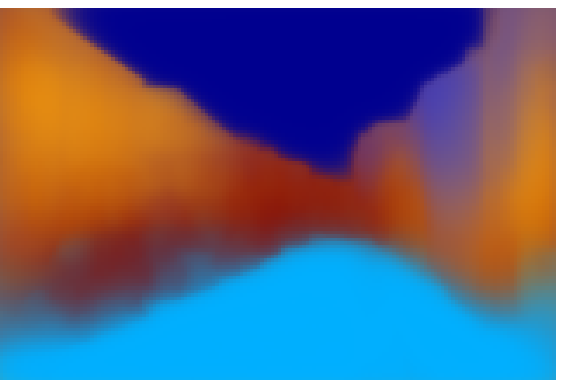}

}\subfloat[Piecewise]{\includegraphics[width=0.32\columnwidth]{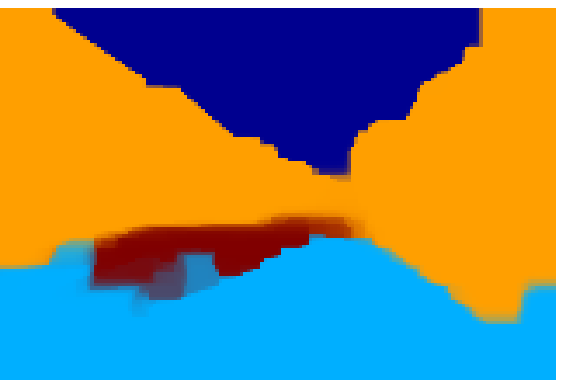}

}\subfloat[Independent]{\includegraphics[width=0.32\columnwidth]{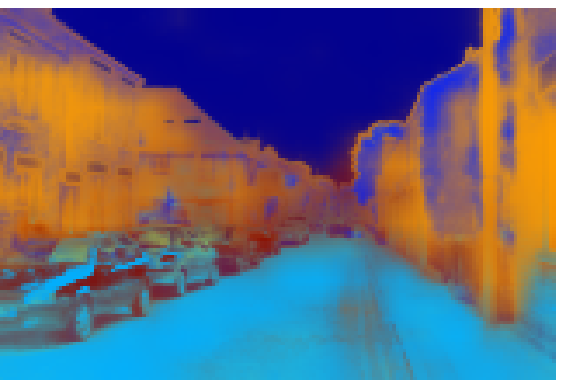}

}

\caption{Example marginals from the backgrounds dataset using $20$ iterations
for truncated fitting.}
\end{figure}
\begin{figure}[p]
\vspace{-10pt}
\subfloat[Input Image]{\includegraphics[width=0.32\columnwidth]{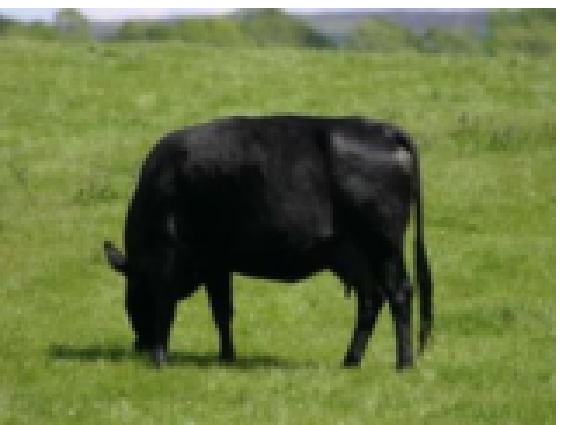}

}\subfloat[True Labels]{\includegraphics[width=0.32\columnwidth]{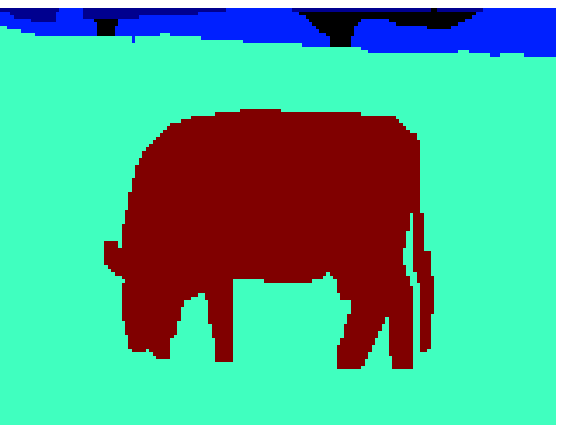}

}

\subfloat[Surrogate EM]{\includegraphics[width=0.32\columnwidth]{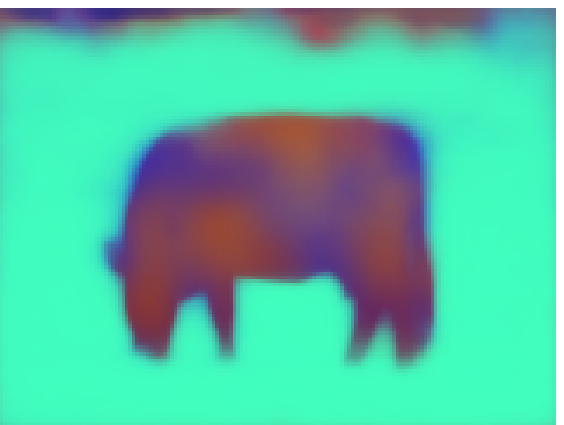}

}\subfloat[Univ. Logistic]{\includegraphics[width=0.32\columnwidth]{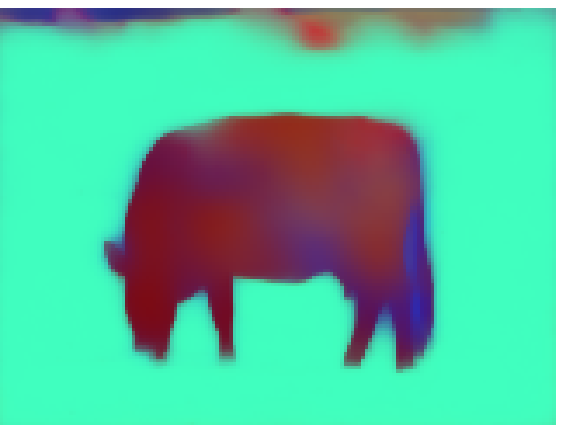}

}\subfloat[Clique Logistic]{\includegraphics[width=0.32\columnwidth]{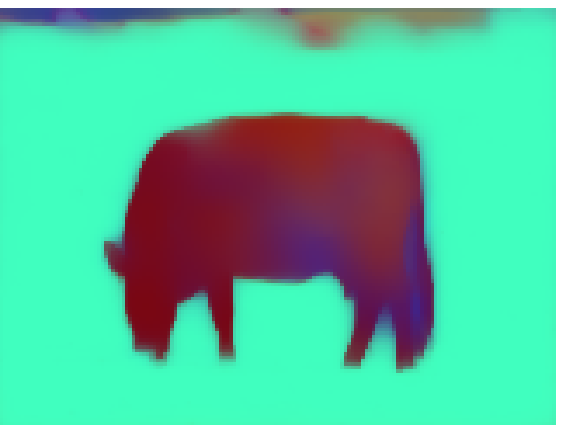}

}

\subfloat[Pseudolikelihood]{\includegraphics[width=0.32\columnwidth]{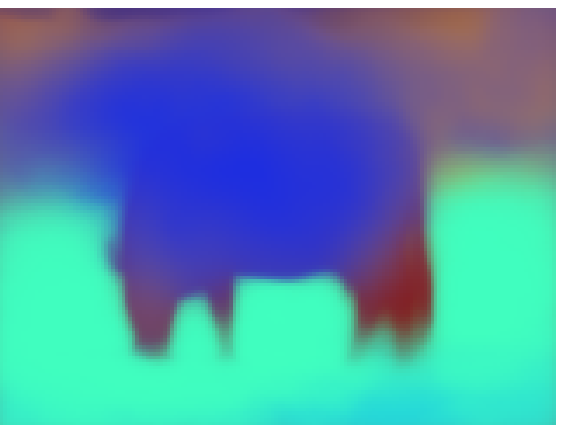}

}\subfloat[Piecewise]{\includegraphics[width=0.32\columnwidth]{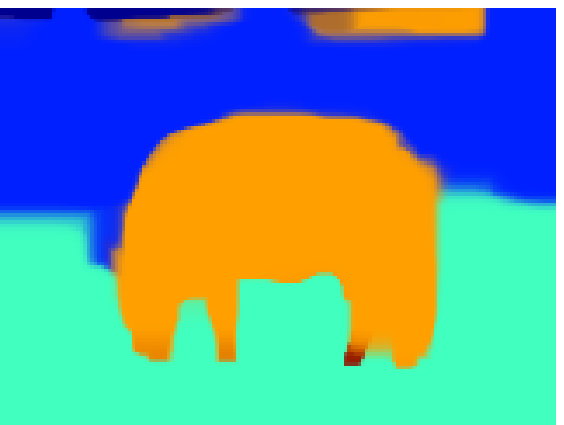}

}\subfloat[Independent]{\includegraphics[width=0.32\columnwidth]{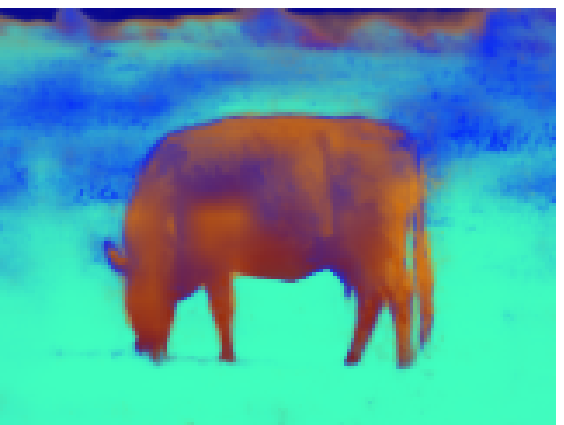}

}

\caption{Example marginals from the backgrounds dataset using $20$ iterations
for truncated fitting.}
\end{figure}

\end{document}